\title{Reasoning about Causality in Games\footnote{Published in \emph{Artificial Intelligence}. DOI: \href{https://doi.org/10.1016/j.artint.2023.103919}{\texttt{10.1016/j.artint.2023.103919}}. Correspondence to \href{mailto:lewis.hammond@cs.ox.ac.uk}{\texttt{lewis.hammond@cs.ox.ac.uk}} and \href{mailto:james.fox@cs.ox.ac.uk}{\texttt{james.fox@cs.ox.ac.uk}}.}}
\author[1]{Lewis Hammond}
\author[1]{James Fox}
\author[2]{Tom Everitt}
\author[1]{Ryan Carey}
\author[1]{\\Alessandro Abate}
\author[1]{Michael Wooldridge}
\affil[1]{University of Oxford}
\affil[2]{DeepMind}
\date{}
\begin{document}

\maketitle

\begin{center}
    {\large \bfseries Abstract}
\end{center}
\begin{changemargin}{0.5in}{0.5in} 
    Causal reasoning and game-theoretic reasoning are fundamental topics in artificial intelligence, among many other disciplines: this paper is concerned with their intersection. Despite their importance, a formal framework that supports both these forms of reasoning has, until now, been lacking.
    We offer a solution in the form of \emph{(structural) causal games}, which can be seen as extending 
    Pearl's causal hierarchy to the game-theoretic domain, or as extending Koller and Milch's multi-agent influence diagrams 
    to the causal domain.
    {We then consider} three key questions:
    \begin{itemize}
        \item[i)] How can the (causal) dependencies in games -- either between variables, or between strategies -- be modelled in a uniform, principled manner?
        \item[ii)] {How may causal queries be computed in causal games, and what assumptions does this require?}
        \item[iii)] How do causal games {compare} to existing formalisms?
    \end{itemize}
    {To address question i), we introduce \emph{mechanised games}, which encode dependencies between agents' decision rules and the distributions governing the game.
    In response to question ii), we present definitions of predictions, interventions, and counterfactuals, and discuss the assumptions required for each.
    Regarding question iii), we describe correspondences between causal games and other formalisms, and explain how causal games can be used to answer queries that other causal or game-theoretic models do not support.}
    Finally, we highlight possible applications of causal games, {aided} by an extensive open-source Python library.
\end{changemargin}

\restoregeometry

\newpage

\tableofcontents

\newpage

\section*{Notation}
\begin{longtable}{p{0.2\textwidth - 2\tabcolsep} p{0.7\textwidth - 2\tabcolsep} >{\raggedleft\arraybackslash}p{0.1\textwidth - 2\tabcolsep}}
    \label{tab:notation}\\
    \toprule
    \textbf{Symbol} & \textbf{Object} & \textbf{Page} \\
    \midrule
    $\Anc_V$ & Ancestors of $V$ & \pageref{def:graph_not}\\
    $\conr \graph$ & Condensed $\R$-Relevance Graph & \pageref{def:con-graph}\\
    $\Ch_V$ & Children of $V$ & \pageref{def:graph_not}\\
    $\Desc_V$ & Descendants of $V$ & \pageref{def:graph_not}\\
    $\Do$ & Do Operator & \pageref{def:do}\\
    $\dom(V)$ & Domain of $V$ & \pageref{def:randomvar}\\
    $\exovar_V$ & Exogenous Variable for $V$ & \pageref{def:exo}\\
    $\mathscr{E}$ & Edges & \pageref{def:BN}\\
    $\efg$ & Extensive-Form Game & \pageref{def:EFG}\\
    $\Fa_V$ & Family of $V$& \pageref{def:graph_not}\\
    $\graph$ & Graph & \pageref{def:BN}\\
    $\I$ & Intervention & \pageref{def:softint}\\
    $J$ & Intervention Set & \pageref{def:interset} \\
    $\model$ & Model & \pageref{def:BN}\\
    $\model(\zeta_k)$ & Perturbed Model & \pageref{def:pertMAID}\\
    $\mecvar_V$ & Mechanism Variable for $V$ & \pageref{def:mecvar}\\
    $\mec{\graph}$ & Mechanised Graph & \pageref{def:mechanised_maid}\\
    $\mec{\model}$ & Mechanised Model & \pageref{def:mechanised_maid}\\
    $N$ & Agents & \pageref{def:EFG}\\
    $\Pa_V$ & Parents of $V$ & \pageref{def:graph_not}\\
    $\Pr$ or $P$ & Probability Distribution & \pageref{def:BN}, \pageref{def:EFG}\\
    $\Pr^{\bm{\pi}}$ or $P^\sigma$ & Probability Distribution Combining $\Pr$ with ${\bm{\pi}}$ or $P$ with $\sigma$ & \pageref{def:jointEFG}, \pageref{def:jointMAID} \\
    $\mathcal{Q}$ & Queries & \pageref{def:queries}\\
    $\R$ & Rationality Relations & \pageref{def:ratrelation} \\
    $\R(\mec{\model})$ & $\R$-Rational Outcomes of $\model$ & \pageref{def:ratoutcomes}\\
    $r_D$ & Rationality Relation for $D$ & \pageref{def:ratrelation}\\
    $\relr \graph$ & $\R$-Relevance Graph & \pageref{def:rel-graph}\\
    $V$ & Variable & \pageref{def:graph_not}\\
    $v$ & Value of $V$ & \pageref{def:graph_not}\\
    $\bm{V}$ & Variables & \pageref{def:graph_not}\\
    $\bm{v}$ & Values of $\bm{V}$ & \pageref{def:graph_not}\\
    $\delta$ & Kronecker Delta Function & \pageref{def:Kronecker}\\
    $\Delta$ & Probability Simplex & \pageref{def:graph_not}\\
    $\Theta_V$ & Parameter Variable for $V$ & \pageref{def:paramvar}\\
    $\bm{\theta}$ & Parameters & \pageref{def:BN}\\
    $\mu^i$ & Mixed Policy for Agent $i$ & \pageref{def:policies}\\
    ${\bm{\pi}}^i$ & (Behavioural) Policy for Agent $i$ & \pageref{def:policy}\\
    $\dot{{\bm{\pi}}}^i$ & Pure (Behavioural) Policy for Agent $i$ & \pageref{def:policies}\\
    ${\bm{\pi}}$ & (Behavioural) Policy Profile & \pageref{def:policy}\\
    $\Pi_D$ & Decision Rule Variable for $D$ & \pageref{def:decrulevar}\\
    $\sigma$ & (Behavioural) Strategy Profile & \pageref{def:strategy}\\
    $\perp_\graph$ & d-Separated in $\graph$ & \pageref{def:dsep}\\
    $\indep$ & Independent & \pageref{def:dsep}\\
    $\prec$ & Topological Ordering & \pageref{topo}\\
    \bottomrule
\end{longtable}

\newpage

\section{Introduction}
\label{sec:introduction}

Causal reasoning and game-theoretic reasoning are core capabilities for intelligent systems, and as such, they are fundamental research topics in artificial intelligence (AI).
Causal reasoning is concerned with identifying causal relationships and estimating the effects of interventions. Game-theoretic reasoning is concerned with strategic behaviour: how rational decision-makers interact, taking into account others' incentives. Whilst formal treatments of causality \cite{Spirtes1993, Rubin2005, pearl2009causality, peters2017elements} and the game-theoretic foundations of multi-agent systems \cite{wooldridge2009introduction,Nisan2007,Shoham2008} have individually led to many recent applications in AI, our present concern is with  techniques that \emph{combine} causal and strategic reasoning.

Models that support both these kinds of reasoning offer a wide range of possible applications including analysing incentives \cite{Everitt2021,Langlois2021}, fairness \cite{Kilbertus2017,Kusner2017,Zhang2018a,Nabi2018}, and blame and intention \cite{Halpern2018,Friedenberg2019}. As systems of multi-agent systems become increasingly ubiquitous and sophisticated, the problem of how to formally reason about these notions becomes increasingly acute. %
A framework that supports causal analysis of systems containing multiple self-interested agents would therefore appear to be of great importance \cite{pfeffer2007reasoning,Friedenberg2019,Everitt2021}.

Causal questions are often more challenging to answer in multi-agent settings; one must consider not only the causal dependencies present in the environment, but also the dependencies between agents' strategies. 
Similarly, reasoning strategically about what the effects of an action \emph{would be}, or what other agents \emph{would have done} under different circumstances, naturally leads one to consider both interventions and counterfactual possibilities when analysing games. These causal concepts, however, are typically left implicit in game-theoretic models.

The central purpose of this paper is to introduce a unifying framework for modelling games that supports both causal and game-theoretic reasoning. 
This framework -- \emph{(structural) causal games} -- can be interpreted in two different ways. In one sense, it lifts Koller and Milch's (henceforth, K\&M) multi-agent influence diagrams (MAIDs) \cite{koller2003multi} from the probabilistic models at level one of Pearl's `causal hierarchy' \cite{pearl2009causality} to causal models that support both interventions and counterfactuals, corresponding to levels two and three of the hierarchy, respectively. Building on K\&M's graphical representation also means we may employ existing game-theoretic concepts such as \emph{strategic relevance} \cite{koller2003multi} and \emph{sufficient recall} \cite{milch2008ignorable}, which
can be elicited purely from the structure of the game.
Alternatively, causal games can be interpreted as generalising the models of the causal hierarchy to the game-theoretic domain by introducing decision variables that lack a distribution until chosen by the corresponding agent, and a set of real-valued utility variables representing the payoff of each agent. {Causal games thus support both game-theoretic and causal queries, as well as combinations of the two. It is our hope that this framework serves as a foundation for further work at the intersection of causality and game theory.}

\subsection{{Contributions}}
{In answering the three key questions introduced above, we make the following contributions.}

\noindent {i) How can the (causal) dependencies in games -- either between variables, or between strategies -- be modelled in a uniform, principled manner?
\begin{itemize}
    \item We introduce \emph{mechanised MAIDs} in Section \ref{sec:mechanised_maids_relevance}, which allow us to cleanly model the dependencies of decision rules on each other and on the parameters of the game.
    \item We also generalise K\&M's notion of \emph{strategic relevance} to $\R$-\emph{relevance} to enable modelling many different decision-making principles. 
    \item Furthermore, we derive sound and complete graphical criteria for detecting relevant variables when agents are playing best responses to one another.
\end{itemize}
\noindent ii) How may causal queries be computed in such models, and what assumptions does this require?
\begin{itemize}
    \item We generalise Pearl's causal hierarchy of models to the game-theoretic domain by introducing (structural) causal games in Section \ref{sec:causality_in_games}.
    \item We then describe how these models can be used to answer conditional, interventional, and counterfactual queries. By quantifying over the equilibria in the game (leading to \emph{first order} queries such as ``is it the case that in every Nash equilibrium, setting variable $X$ to value $x$ would increase agent $i$'s expected payoff?'') and taking into account causal effects due to rational agents who adapt their strategies to changes in the environment, such queries strictly generalise those in other causal models.
    \item In essence, the assumptions required to model a given problem as a causal game are the same as for their non-game-theoretic counterparts. To mechanise the causal game we further require assumptions about the decision-making principles that agents use play the game.
\end{itemize}
\noindent iii) How do the models we propose compare to existing formalisms?
\begin{itemize}
    \item We introduce subgames and two classic equilibrium refinements -- subgame perfectness and trembling hand perfectness -- to MAIDs in Section \ref{sec:subgames_eqs}, and provide a detailed comparison between MAIDs and EFGs (including several equivalence results) in Section \ref{sec:EFG_connections}.
    \item We also show that often more subgames can be found in a MAID than in the corresponding EFG, ruling out more non-credible threats, and making the computation of equilibria more efficient.
    \item Finally, we discuss a range of applications in which causal games subsume prior work in Section \ref{sec:applications}, and (dis)advantages with respect to other work in Section \ref{sec:discussion}.
\end{itemize}}

{A previous conference paper contained preliminary results on subgames and equilibrium refinements in MAIDs, as well as their connections to EFGs \cite{Hammond2021}, but did not contain any discussion of causality, which is the main focus of this paper. Similarly, a previous tool paper introduced an earlier version of our codebase \cite{pycid}, which implements many of the concepts in this paper though does not contain any theoretical work, which is the emphasis of the present paper.}

{We conclude this section with a review of other related work. Before the primary exposition of our results, we also provide the relevant background material on causal models, EFGs, and MAIDs, in Section \ref{sec:background}.} Proofs, further examples, and details of our codebase are relegated to Appendices \ref{app:proofs}, \ref{app:examples}, and \ref{app:code}, respectively.

{\subsection{Related Work}}
\label{sec:related}

Causal games build on Pearl's hierarchy of causal models \cite{pearl2009causality},
{and work on influence diagrams (IDs) -- a kind of graphical model used to capture single-agent decision-making scenarios \cite{howard2005influence}. Later works have explicitly considered \emph{causal} IDs, or CIDs \cite{dawid2002influence,Everitt2021}.}
Indeed, Heckerman and Shachter's proposal to unify causal modelling and decision analysis via CIDs \cite{Heckerman1994} can be viewed as a precursor to our proposal to unify causal modelling and game theory via causal games.
None of these theories, however, place emphasis on games or strategic interactions between multiple agents, which is our setting of interest.

{In contrast, multi-agent IDs (MAIDs) generalise IDs \cite{koller2003multi,milch2008ignorable}, and were}
originally developed by K\&M as a way to efficiently represent and solve games.
{One useful feature of MAIDs (which causal games inherit) is that their graphical structure encodes what information is, or is not, relevant for making an optimal decision. Thus in order to find an equilibrium, it is often possible to remove some of the edges (i.e. the \emph{ignorable information}) prior to solving the game \cite{milch2008ignorable}.}
Many other graphical models of games, often inspired by MAIDs, have been introduced \cite{Kearns2007}{, including networks of influence diagrams (NIDs) \cite{Gal2008}, interactive dynamic influence diagrams (I-DIDs) \cite{Doshi2008,Polich2007}, and temporal action graph games \cite{Jiang2009}.} Like MAIDs however, most of these works are motivated by computational concerns, and none incorporate rigorous causal reasoning.

{Modelling causal relationships in game-theoretic scenarios often leads to cyclic dependencies, such as when one agent's best response depends on the other's, and vice versa. This can result in multiple solutions, a feature that is also captured by generalised or cyclic causal models \cite{Halpern1998,Bongers2016}, chain graphs \cite{Lauritzen2002}, probabilistic relational models \cite{Getoor2007a,Maier2013}, and credal networks \cite{Cozman2000}, among others. In mechanised (structural) causal games, these} solutions correspond to different equilibria induced by {(serial) \emph{relations}} representing the potentially non-deterministic decision-making processes of agents in the game (such as when an agent selects a decision rule using an $\argmax$ operation, for example).
{This is essential for modelling equilibria in games, where agents may be indifferent between decision rules.}
The result is a relational causal model with cycles -- a formalism first explored concurrently with this work \cite{Ahsan2022}, but without any reference to the game-theoretic scenarios that are our focus.

Perhaps the most similar work to this paper is on \emph{settable systems},\footnote{We note that Gonzalez-Soto et al. also define a kind of causal game \cite{GonzalezSoto2019}, though theirs is essentially a Bayesian game where the outcomes are computed using an unknown causal model. This is a simpler and more restricted setup than both settable systems and our causal games.} {which are partly inspired by generalised structural causal models and} can be used to {capture} optimisation, equilibria, and learning \cite{White2009,White2014}. In order to deal with {cycles}, settable systems duplicate intervenable variables into a `response' and `setting' variable, so as to indicate which side of the structural function each occurs on. 
In these models, the emphasis is on the causal analysis of optimisation procedures at a relatively low level of abstraction, meaning that the algorithms used by agents to select actions, or the {procedures} used to select one equilibrium from many, are explicitly instantiated. 
In contrast, causal games represent the causal dependencies arising from the fact that agents select their decision rules rationally and non-deterministically, leading us to ask \emph{first-order} causal queries. Settable systems concentrate on problems such as how to capture the data and attributes of a machine learning process, whereas we focus on problems such as identifying subgames and equilibrium refinements.
{Despite these differences, settable systems are nonetheless a useful comparator for causal games.}

{The concurrency and semantics community is another that has produced work at the intersection of games and causality. Much of the most influential recent work on the foundations of denotational semantics uses distributed games that are based on \emph{event structures} -- a partially ordered model of discrete events \cite{Nielsen1979} -- for deterministic \cite{Rideau2011} and probabilistic \cite{Winskel2013} concurrent games.}
Other related approaches to concurrent games use simpler mathematical formalisms \cite{Bradfield2015,Zahoransky2021}.
Though containing the same primitive concepts, these works are motivated primarily by the problem of deriving formal, low-level semantics for programs or probabilistic systems, whereas we are interested in more high-level models of strategic interactions that can be applied to a wide range of scenarios and disciplines.

{Indeed, closely related causal models have been used (often in the context of analysing AI systems) to} define notions of blame and intent {(}both in the single-agent and multi-agent settings{)} \cite{Halpern2018,Friedenberg2019}{, harm \cite{Richens2022}, incentives to control or respond to certain variables \cite{Everitt2021,Langlois2021}, \cite{Kilbertus2017,Kusner2017,Zhang2018a,Nabi2018}, social influence \cite{Jaques2019}, and}
reasoning patterns such as manipulation and signalling \cite{pfeffer2007reasoning}.
{We show in Sections \ref{sec:causality_in_games} and \ref{sec:applications} that causal games subsume the models in these works and allow for even richer concepts.}
{Aside from analysing AI systems, one other relevant domain of application is in economic analysis and mechanism design.
For example, Toulis and Parkes use a behavioural causal model to determine long-term effects of policy interventions on multi-agent economies \cite{Toulis2016} -- though their emphasis is on dynamical systems and behavioural analysis -- and some regulators such as the UK's Financial Conduct Authority include an informal `causal chain' in their cost-benefit analyses when proposing policy analyses \cite{Authority2020}. We provide a more formal case study of this second example in  Section \ref{sec:applications}.}

\section{Background}
\label{sec:background}

We assume a basic familiarity with both probabilistic graphical models and game theory, though for completeness, we briefly review Pearl's causal hierarchy \cite{pearl2009causality} and two game-theoretic models: extensive form games (EFGs) \cite{vonNeumann1928,Kuhn1953} and multi-agent influence diagrams (MAIDs) \cite{koller2003multi}. Readers familiar with these models may safely skip these sections.

Throughout this paper, we use capital letters $V$ for random variables%
\label{def:randomvar}, lowercase letters $v$ for their instantiations, and bold letters $\bm{V}$ and $\bm{v}$ respectively for sets of variables and their instantiations.
We let $\dom(V)$ denote the domain of $V$ (where by default we assume that $\dom(V)$ is finite) and abuse notation somewhat by writing $\dom(\bm{V}) \coloneqq \bigtimes_{V \in \bm{V}}\dom(V)$. $\Pa_V$ denotes the parents of a variable $V$ in a graphical representation and $\pa_V$ the instantiation of $\Pa_V$.\label{def:graph_not} We also define $\Ch_V$, $\Anc_V$, $\Desc_V$, and $\Fa_V \coloneqq \Pa_V \cup \{V\}$ as the children, ancestors, descendants, and family of $V$, respectively (where note that neither $\Anc_V$ nor $\Desc_V$ contain $V$, by convention). As with $\pa_V$, their instantiations are written in lowercase.
We use $\Delta(\dom(\bm{V}))$ to denote the set of all probability distributions over the values of $\bm{V}$, and therefore write $\Delta(\dom(\bm{A}) \mid \dom(\bm{B})) \coloneqq \bigtimes_{\bm{b} \in \dom(\bm{B})} \Delta(\dom(\bm{A}))$ to express the set of all conditional probability distributions (CPDs) over $\bm{A}$ given the values of $\bm{B}$. 
Unless otherwise indicated, we use superscripts to indicate an agent \label{def:agent} $i \in N = \{1, \dots, n\}$ and subscripts to index the elements of a set; for example, the decision variables belonging to agent $i$ are denoted $\bm{D}^i = \{D^i_1,\ldots,D^i_m\}$.

\subsection{Causal Models}
\label{sec:causal_models}

Pearl's \emph{causal hierarchy} consists of three kinds of model: associational, interventional, and counterfactual \cite{pearl2009causality}. Each step up the hierarchy demands stricter assumptions. The lowest level, \textit{association}, pertains to correlations between variables that allow for \textit{predictions} about a system. For this, it is sufficient to use observational data to construct a joint probability distribution over all of the variables in that system, which can then be represented graphically as a \textit{Bayesian network} (BN). On the second level, we wish to reason about the effects of \emph{interventions} -- deliberate alterations made to the variables from outside the system. This requires the edges of the BN to reflect causal, not just associational, relationships, giving rise to a \textit{causal Bayesian network} (CBN). The final level of the hierarchy is concerned with counterfactual questions -- asking what would have happened had something been different, given that we made a certain observation -- which corresponds to conditioning (as in associational queries), then intervening. Answering such questions requires knowledge of the underlying deterministic relationships between the variables, typically characterised using a \textit{structural causal model} (SCM).

\subsubsection*{Association}

\begin{definition}[\citenum{pearl2009causality}]
\label{def:BN}
    A \textbf{Bayesian network (BN)} over a set of random variables $\bm{V}$ with joint distribution $\Pr(\bm{V})$ is a structure $\model = (\graph, \bm{\theta})$\label{def:model} where $\graph = (\bm{V}, \mathscr{E})$ is a directed acyclic graph (DAG) with vertices $\bm{V}$ and edges $\mathscr{E}$ that is \textbf{Markov compatible} with $\Pr$, meaning that $\Pr(\bm{v}; \bm{\theta}) = \prod_{V \in \bm{V}} \Pr(v \mid \pa_{V}; \theta_{V})$. We drop the parameters $\bm{\theta} = \{\theta_{V}\}_{V \in \bm{V}}$\label{def:params} of the CPDs from our notation where unambiguous.
\end{definition}

We can use the \textit{d-separation} graphical criterion to identify the set of conditional independencies that any Markov compatible joint distribution over a DAG $\graph$ must satisfy \cite{pearl1988probabilistic}.%

\begin{definition}[\citenum{pearl2009causality}]
    \label{def:dsep}
    A \textbf{path} $p$ in a DAG $\graph = (\bm{V}, \mathscr{E})$ is a sequence of unrepeated adjacent variables in $\bm{V}$. A path $p$ is said to be \textbf{blocked} by a set of variables $\bm{Y} \subset \bm{V}$ if and only if {$p$ contains either}:
    \begin{itemize}
        \item A \emph{chain} $X \rightarrow W \rightarrow Z$ or $X \gets W \gets Z$, or a \emph{fork} $X \leftarrow W \rightarrow Z$, and $W \in \bm{Y}${; or}
        \item A \emph{collider} $X \rightarrow W \leftarrow Z$ and $W \notin \Anc_{\bm{Y}} \cup \{Y\}$.
    \end{itemize}
    For disjoint sets $\bm{X},\bm{Y},\bm{Z}$, the set $\bm{Y}$ \textbf{d-separates} $\bm{X}$ from $\bm{Z}$, denoted $\bm{X} \perp_\graph \bm{Z} \mid \bm{Y}$, if 
    {every path in $\graph$ from a variable in $\bm{X}$ to a variable in $\bm{Z}$ is blocked by a variable in $\bm{Y}$.
    Otherwise, $\bm{X}$ is said to be \textbf{d-connected} to $\bm{Z}$ given $\bm{Y}$, denoted $\bm{X} \not\perp_\graph \bm{Z} \mid \bm{Y}$.}
\end{definition}

For example, in the graph $\graph$ shown in in Figure \ref{fig:BN_SCM:BN}, we have that $A \not\perp_\graph B \mid C$ due to the active path $A \gets D \to B$, but that $A \perp_\graph B \mid D$, as conditioning on $D$ blocks the connection along the aforementioned path as well as along the path $A \gets C \to D \to B$.

If $\bm{X} \perp_\graph \bm{Z} \mid \bm{Y}$ in $\graph$, then $\bm{X}$ and $\bm{Z}$ are probabilistically independent conditional on $\bm{Y}$ in the sense that $\Pr(\bm{x} \mid \bm{y}, \bm{z}) = \Pr(\bm{x} \mid \bm{y})$, written $\bm{X} \indep \bm{Z} \mid \bm{Y}$\label{def:probind}, in every distribution $\Pr$ that is Markov compatible with $\graph$ and for which $\Pr(\bm{y}, \bm{z})>0$. Conversely, if $\bm{X} \not\perp_\graph \bm{Z} \mid \bm{Y}$, then $\bm{X}$ and $\bm{Z}$ are dependent conditional on $\bm{Y}$ in at least one distribution Markov compatible with $\graph$ \cite{Verma1990}.

\begin{figure}[h]
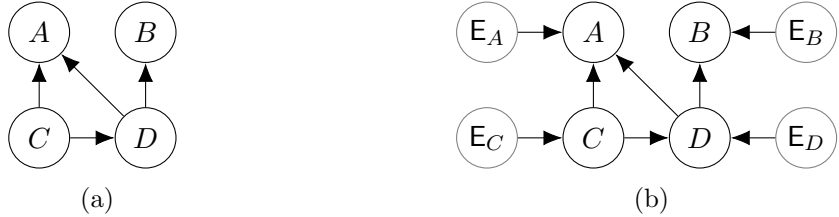

    \begin{subfigure}[b]{0.45\linewidth}
    \centering
    \begin{influence-diagram}
        \node (A) [] {$A$};
        \node (B) [right = of A] {$B$};
        \node (C) [below = of A] {$C$};
        \node (D) [below = of B] {$D$};
        \edge {C} {A, D};
        \edge {D} {A, B};
    \end{influence-diagram}
    \caption{}
    \label{fig:BN_SCM:BN}
    \end{subfigure}
    \begin{subfigure}[b]{0.45\linewidth}
    \centering
    \begin{influence-diagram}
        \node (A) [] {$A$};
        \node (B) [right = of A] {$B$};
        \node (C) [below = of A] {$C$};
        \node (D) [below = of B] {$D$};
        \edge {C} {A, D};
        \edge {D} {A, B};
        \node (A_exo) [exogenous, left = of A] {$\exovarv{A}$};
        \node (B_exo) [exogenous, right = of B] {$\exovarv{B}$};
        \node (C_exo) [exogenous, left = of C] {$\exovarv{C}$};
        \node (D_exo) [exogenous, right = of D] {$\exovarv{D}$};
        \edge {A_exo} {A};
        \edge {B_exo} {B};
        \edge {C_exo} {C};
        \edge {D_exo} {D};
    \end{influence-diagram}
    \caption{}
    \label{fig:BN_SCM:SCM}
    \end{subfigure}
 \caption{(a) A (C)BN with a Markov compatible joint distribution $\Pr(a,b,c,d) = \Pr(a \mid c,d)\Pr(b \mid d)\Pr(d \mid c)\Pr(c)$. (b) An SCM representing the same system. Each variable $V$ is now associated with an exogenous parent $\exovar_V \in \exovars$.}
 \label{fig:BN_SCM}
\end{figure}
A second well-established graphical criterion will also be a useful auxiliary result in Section \ref{sec:mechanised_maids_relevance}.
\begin{definition}[\citenum{shachter2013bayes}]
    \label{def:requisite}
    Given a DAG $\graph$, a variable $V$ is a \textbf{requisite probability node} for $\Pr( \bm{x} \mid \bm{y} )$ if there exist two parameterisations $\bm{\theta} \neq \bm{\theta}'$ of $\graph$ for BNs $\model$ and $\model'$ differing only on $\theta_V$ such that $\Pr( \bm{x} \mid \bm{y} ; \bm{\theta} ) \neq \Pr( \bm{x} \mid \bm{y} ; \bm{\theta}')$.
\end{definition}
\begin{lemma}[\citenum{Geiger1990}]
    \label{lem:reachability}
    Given a BN $\model$, a variable $V$ is a requisite probability node for the query $\Pr( \bm{X} \mid \bm{Y})$ if and only if $\mecvar_{V} \not\perp_{\meczero{\graph}} \bm{X} \mid \bm{Y}$.
\end{lemma}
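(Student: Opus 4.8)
The plan is to recognise this as (essentially) the Geiger--Verma--Pearl characterisation of requisite probability nodes and to derive it from the soundness and completeness of d-separation (the two facts recorded immediately after Definition \ref{def:dsep}). Throughout, think of $\meczero{\graph}$ as $\graph$ with one fresh root $\mecvar_V$ adjoined as the unique new parent of each $V \in \bm{V}$ (and no edges among the $\mecvar_V$'s), where a value of $\mecvar_V$ is read as a choice of CPD $\theta_V(\cdot \mid \pa_V)$. Two preliminary observations make the rest routine. First, since each $\mecvar_W$ has a single child, it can never be an interior vertex of a path, so for d-separation purposes $\meczero{\graph}$ is interchangeable with the smaller graph $\graph' \coloneqq (\bm{V} \cup \{\mecvar_V\}, \mathscr{E} \cup \{\mecvar_V \to V\})$; in particular $\mecvar_V \perp_{\meczero{\graph}} \bm{X} \mid \bm{Y}$ iff $\mecvar_V \perp_{\graph'} \bm{X} \mid \bm{Y}$. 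Second, a BN over $\graph'$ with full support on $\mecvar_V$, once conditioned on $\mecvar_V = \theta_V$, is exactly the BN $(\graph, \bm{\theta})$ with that $\theta_V$ and the other $\theta_W$ read off its CPDs; conversely every parameterisation of $\graph$ arises this way.

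For the ``only if'' direction I would argue the contrapositive. Assume $\mecvar_V \perp_{\meczero{\graph}} \bm{X} \mid \bm{Y}$, hence $\mecvar_V \perp_{\graph'} \bm{X} \mid \bm{Y}$. Given any two parameterisations $\bm{\theta}, \bm{\theta}'$ of $\graph$ agreeing off $\theta_V$, build the BN over $\graph'$ with a full-support binary $\mecvar_V$, CPD of $V$ given $(\pa_V, \mecvar_V = 0)$ resp. $(\pa_V, \mecvar_V = 1)$ equal to $\theta_V$ resp. $\theta_V'$, and all remaining CPDs as in $\bm{\theta}$. By soundness of d-separation, $\mecvar_V \indep \bm{X} \mid \bm{Y}$ in this distribution, so $\Pr(\bm{x} \mid \bm{y}, \mecvar_V = 0) = \Pr(\bm{x} \mid \bm{y}, \mecvar_V = 1)$; by the second observation these are $\Pr(\bm{x} \mid \bm{y}; \bm{\theta})$ and $\Pr(\bm{x} \mid \bm{y}; \bm{\theta}')$. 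Since $\bm{\theta}, \bm{\theta}'$ were arbitrary, $V$ is not a requisite probability node.

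For the ``if'' direction, assume $\mecvar_V \not\perp_{\graph'} \bm{X} \mid \bm{Y}$. By completeness of d-separation (\cite{Verma1990}, as invoked after Definition \ref{def:dsep}) there is a distribution Markov compatible with $\graph'$ in which $\mecvar_V \not\indep \bm{X} \mid \bm{Y}$, i.e. there are values $\theta_V \neq \theta_V'$ of $\mecvar_V$ and $\bm{x}, \bm{y}$ of positive probability with $\Pr(\bm{x} \mid \bm{y}, \theta_V) \neq \Pr(\bm{x} \mid \bm{y}, \theta_V')$. Reading off the CPDs gives a parameterisation $\bm{\theta}$ of $\graph$, and swapping $\theta_V$ for $\theta_V'$ gives $\bm{\theta}'$ differing only there; by the second observation the two query values equal the two sides of the inequality, so $V$ is requisite.

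The only genuinely delicate point is the bridge between ``$\mecvar_V$ d-connected to $\bm{X}$ given $\bm{Y}$'' and ``some single choice of $V$'s CPD, all else fixed, changes the answer'': a generic Markov-compatible distribution over $\meczero{\graph}$ leaves the other mechanism variables random when one conditions on $\mecvar_V$, turning the conditional query into a \emph{mixture} of BNs rather than a single one. This is exactly what the reduction to $\graph'$ (equivalently: taking every $\mecvar_W$, $W \neq V$, degenerate) handles, so the proof should be careful to (a) justify that this reduction preserves d-separation facts, which it does since the deleted vertices are single-child roots, and (b) apply the completeness step directly to $\graph'$ rather than to $\meczero{\graph}$. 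Degenerate cases where $\Pr(\bm{y}) = 0$ under one parameterisation render the query undefined and are excluded by Definition \ref{def:requisite}; such boundary conditions are worth flagging but do not affect the argument.
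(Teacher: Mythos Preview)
The paper does not prove this lemma; it is quoted as a known result from \cite{Geiger1990} (note the citation attached to the lemma header) and is used as a black box in the derivation of $\R$-reachability criteria. So there is no ``paper's own proof'' to compare against.

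Your argument is essentially the standard Geiger--Verma--Pearl one and is correct in outline. The reduction from $\meczero{\graph}$ to $\graph'$ is sound for exactly the reason you give: every $\mecvar_W$ with $W\neq V$ is a single-child root and therefore cannot lie on any $\mecvar_V$--$\bm X$ path, so removing them neither creates nor destroys active paths. The soundness direction (d-separation $\Rightarrow$ not requisite) is clean. For the completeness direction you correctly apply the Verma--Pearl completeness theorem to $\graph'$ rather than to $\meczero{\graph}$, which is the crucial move; the resulting witness distribution yields, upon conditioning on the two relevant values of $\mecvar_V$, the two parameterisations of $\graph$ required by Definition~\ref{def:requisite}. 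One small point worth tightening: you should make explicit that in the completeness construction $\mecvar_V$ can be taken to have a finite domain (binary suffices), since the standard completeness theorem is stated for finite-domain BNs, and then identify each value of $\mecvar_V$ with the CPD $\Pr(V\mid \pa_V,\mecvar_V{=}m)$ it induces; this is implicit in your ``reading off the CPDs'' step but deserves a sentence.
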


\subsubsection*{Intervention}
Associational models such as BNs are, in general, insufficient to answer questions about interventions as they do not describe how the joint distribution changes in response; a causal model, or level two model, such as a causal Bayesian network (CBN), is required.
The graph underlying a CBN differs from that of a BN only in its causal interpretation: the directed edges now represent the fact that intervening on a variable cannot affect those causally `upstream' of it. The simplest form of intervention{, a \emph{hard intervention} $\Do(\bm{Y} = \bm{y})$,\label{def:do}
sets the values of variables $\bm{Y}$ to some $\bm{y}$. We denote the resulting joint distribution by $\Pr_{\bm{y}}(\bm{V})$ or, equivalently, $\Pr(\bm{V}_{\bm{y}})$ \cite{pearl2009causality}.}\footnote{This is also known as an atomic \cite{Correa2020}, structural \cite{Eberhardt2007}, surgical \cite{pearl2009causality}, or independent \cite{Korb2004} intervention.}

\begin{definition}[\citenum{pearl2009causality}]
    \label{def:CBN}
        A \textbf{causal Bayesian network (CBN)} is a BN $\model = (\graph, \bm{\theta})$ such that $\graph$ is Markov compatible with $\Pr_{\bm{y}}$ for every $\bm{Y} \subseteq \bm{V}$ and $\bm{y} \in \dom(\bm{Y})$, and that:
        $$\Pr_{\bm{y}}(v \mid \pa_V) =
        \begin{cases}
            1 & \text{ when $V \in \bm{Y}$ and $v$ is consistent with $\bm{y}$,}\\
            \Pr(v \mid \pa_V) & \text{ when $V \notin \bm{Y}$ and $\pa_V$ is consistent with $\bm{y}$.}
        \end{cases}$$
\end{definition}

{
By the Law of Total Probability, $\Pr_{\bm{y}}(v \mid \pa_V) = 0$ when $v$ is inconsistent with $\bm{y}$. When $\pa_V$ is inconsistent with $\bm{y}$, then conditioning on a zero-probability event means that $\Pr_{\bm{y}}(v \mid \pa_V)$ is undefined.
More generally, a \emph{soft intervention}, specified using a partial distribution $\I$\label{def:softint} over variables $\bm{Y}$ replaces each CPD $\Pr(Y \mid \Pa_Y)$ with a new CPD $\I(Y \mid \Pa^*_Y; \theta^*_Y)$ for each $Y \in \bm{Y}$, where $\Pa^*_Y$ may differ from $\Pa_Y$.}\footnote{A soft intervention is also known as a parametric \cite{Eberhardt2007} or dependent \cite{Korb2004} intervention, and is referred to as conditional or stochastic when deterministic or stochastic, respectively \cite{Correa2020}.
Hard interventions are a special case in which each $\I(Y \mid \Pa^*_Y) = \delta(Y {,} y)$ for some $y \in \dom(Y)$, where $\delta$ is the Kronecker delta function\label{def:Kronecker}.}
Any intervention $\I$ on the set of variables $\bm{Y}$ leads to a new joint distribution: 
$$\Pr_\I(\bm{v}) \coloneqq \prod_{Y \in \bm{Y}} \I(y \mid \pa^*_{Y}) \cdot \prod_{V \in \bm{V} \setminus \bm{Y}} \Pr(v \mid \pa_{V}).$$
We represent an intervention $\I$ on $\bm{Y}$ graphically by outlining each variable $Y \in \bm{Y}$, replacing {each} variable name $Y$ with $Y_{\I}$, and {removing or adding edges from parent variables as necessary, if $\Pa^*_Y \neq \Pa_Y$}. {More generally, we use $\Pr(\bm{V}_{\I})$ and $\Pr_{\I}(\bm{V})$ interchangeably, and denote the new graph and model as $\graph_{\I}$ and $\model_{\I}$ respectively.} When $\I$ is a hard intervention, we {simply sever all incoming edges to $\bm{Y}$, setting their values to $\bm{y}$, and write $\bm{V}_{\bm{y}}$, $\graph_{\bm{y}}$, and $\model_{\bm{y}}$ respectively}.
Examples of hard and soft interventions are shown in Figures \ref{fig:CBNintervention:1} and \ref{fig:CBNintervention:2} respectively.

\begin{figure}[h]
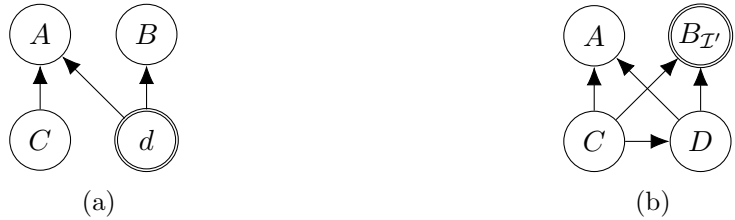

    \centering
    \begin{subfigure}[b]{0.45\linewidth}
        \centering
        \begin{influence-diagram}
        \node (A) [] {$A$};
        \node (B) [right = of A] {$B$};
        \node (C) [below = of A] {$C$};
        \node (D) [intervened, below = of B] {$d$};
        \edge {C} {A};
        \edge {D} {A, B};
    \end{influence-diagram}
        \caption{}
        \label{fig:CBNintervention:1}
    \end{subfigure}
    \begin{subfigure}[b]{0.45\linewidth}
        \centering
        \begin{influence-diagram}
        \node (A) [] {$A$};
        \node (B) [intervened, right = of A] {$B_{\I'}$};
        \node (C) [below = of A] {$C$};
        \node (D) [below = of B] {$D$};
        \edge {C} {A, D, B};
        \edge {D} {A, B};
    \end{influence-diagram}
        \caption{}
        \label{fig:CBNintervention:2}
    \end{subfigure}
    \caption{(a) The graph $\graph_{d}$ representing the CBN in Figure \ref{fig:BN_SCM:BN} after a hard intervention $\Do(D = d)$. (b) A similar graph $\graph_{\I}$ where $\I(B \mid \Pa^*_{B})$ is a soft intervention with $\Pa^*_{B} = \Pa_{B} \cup \{C\}$.}
    \label{fig:CBNintervention}
\end{figure}

\subsubsection*{Counterfactuals}

In counterfactual queries, evidence about the actual state of the world informs us about a hypothetical scenario in which some variables have been modified. For instance, we might be interested in the probability of $\bm{x}$ in the scenario in which $\bm{y}$, given that (in fact) we observed $\bm{z}$, written $\Pr(\bm{x}_{\bm{y}} \mid \bm{z})$. 
To answer such questions in general, one must appeal to level three of the causal hierarchy, such as by using a structural causal model (SCM). In SCMs, variables are partitioned into exogenous and endogenous sets $\exovars$\label{def:exo} and $\bm{V}$ respectively, where each endogenous variable $V \in \bm{V}$ is deterministically related to its parents via a structural function $f_V : \dom(\bm{V} \setminus \{V\}) \times \dom(\exovars) \to \dom(V)$ that specifies the mechanism governing the values of the variable, and where all stochasticity is relegated to the distribution $\Pr(\exovars ; \bm{\theta})$ over the exogenous variables.

    In this paper, we make the simplifying assumption that all SCMs are \emph{Markovian}, meaning that each variable $V$ has exactly one exogenous parent $\exovar_V$ and the exogenous variables are independent. We also depart from convention by describing SCMs as a particular form of CBN (which, in turn, are a particular form of BN), using deterministic distributions $\Pr({V} \mid {\Pa_V}) = \delta\big({V,} f_V({\Pa_V})\big)$ for each endogenous variable $V$. This equivalent formulation will prove useful for avoiding unnecessary repetition and notation when introducing causal models of games in Section \ref{sec:causality_in_games}.

\begin{definition}[\citenum{pearl2009causality}]
    \label{def:SCM}
    A (Markovian) \textbf{structural causal model (SCM)} is a CBN $\model = (\graph, \bm{\theta})$ where $\graph = (\exovars \cup \bm{V}, \mathscr{E})$ is a DAG over exogenous variables $\exovars = \{  \exovar_V\}_{V \in \bm{V}}$ and endogenous variables $\bm{V}$, where $\Pa_V \cap \exovars = \{ \exovar_V \}$. The parameters $\bm{\theta}$ assign deterministic distributions $\Pr(v \mid \pa_V; \theta_V)$ to each endogenous variable and a stochastic distribution $\Pr(\exovars ; \bm{\theta}) = \prod_{\exovar \in \exovars} \Pr(\exovar; \theta_{\exovar})$ to the exogenous variables.
\end{definition}

Using such a model, we can evaluate a general counterfactual query $\Pr(\bm{x}_{\I} \mid \bm{z})$ by following three steps \cite{pearl2009causality}: 
\begin{enumerate}
    \item Update $\Pr(\exovals)$ to $\Pr(\exovals \mid \bm{z})$ by conditioning on observation $\bm{z}$ (`abduction');
    \item Apply the intervention $\I(\bm{Y} \mid \Pa^*_{\bm{Y}})$ to the variables $\bm{Y}$ (`action');
    \item Return the marginal distribution $\Pr(\bm{x})$ in this modified model (`prediction').
\end{enumerate}
By convention, exogenous variables are viewed as beyond the realm of observation and intervention. Further, we assume that each $\I(Y \mid \Pa^*_{Y})$ is a deterministic function of its parents. Soft, stochastic interventions may be modelled by adding a new exogenous variable $\exovar^*_Y \in \Pa^*_Y$.

Note that by marginalising out the exogenous variables in an SCM, we can form a standard (C)BN with a joint distribution $\Pr(\bm{v}) = \sum_{\exovals} \Pr(\bm{v}, \exovals)$. 
This CBN is Markov compatible with the SCM's graph $\graph$ when restricted to variables in $\bm{V}$. Moreover, %
any CBN is also a BN. Thus, each model in the causal hierarchy can be seen to generate those on lower levels; every query that can be answered in a lower level model can also be answered in a higher level model.

\subsection{Game-Theoretic Models}
\label{sec:game_models}

We now review two important formalisms for representing sequential strategic decision-making scenarios, extensive form games (EFGs) and multi-agent influence diagram (MAIDs).
Both of these models are illustrated using the following example of a signalling game \cite{spence1978job}.

\begin{example}[Job Market]
    \label{ex:job_market}
    A worker, who is either hard-working or lazy, is hoping to be hired by a firm. They have the option of pursuing a university education, but know that they will suffer from three years of studying, especially if they are lazy. 
    The firm prefers hard-workers, but is using an automated hiring system that can only observe the worker's education, not their temperament.
\end{example}

\begin{figure}[h]
    \centering
    \begin{subfigure}[b]{0.6\linewidth}
        \centering
        \begin{istgame}[scale=1.0]
        \xtdistance{20mm}{20mm}
        \istroot(0)[chance node]{$V^0$}
        \istb<grow=left>{h}[a]
        \istb<grow=right>{\neg h}[a]
        \istb<grow=left>{\text{\footnotesize $p$}}[b]
        \istb<grow=right>{\text{\footnotesize $1-p$}}[b]
        \endist
        \xtdistance{10mm}{20mm}
        \istroot(1)(0-1)<180, red!70>{$V^1_1$}
        \istb<grow=north>{g}[l]
        \istb<grow=south>{\neg g}[l]
        \endist
        \istroot(2)(0-2)<0, red!70>{$V^1_2$}
        \istb<grow=north>{g}[r]
        \istb<grow=south>{\neg g}[r]
        \endist
        \istroot'[north](a1)(1-1)<315, blue!70>{$V^2_1$}
        \istb{j}[bl]{(4,3)}
        \istb{\neg j}[br]{(-1,-1)}
        \endist
        \istroot(b1)(1-2)<45, blue!70>{$V^2_2$}
        \istb{j}[al]{(5,3)}
        \istb{\neg j}[ar]{(0,-1)}
        \endist
        \istroot(a2)(2-2)<100, blue!70>{$V^2_3$}
        \istb{j}[al]{(5,-2)}
        \istb{\neg j}[ar]{(0,0)}
        \endist
        \istroot'[north](b2)(2-1)<225, blue!70>{$V^2_4$}
        \istb{j}[bl]{(3,-2)}
        \istb{\neg j}[br]{(-2,0)}
        \endist
        \xtInfoset(a1)(b2){\textcolor{blue!70}{$I^2_1$}}
        \xtInfoset(b1)(a2){\textcolor{blue!70}{$I^2_2$}}[below]
        \xtSubgameBox(1){(1-1)(1-2)(2-1)(2-2)(a1-1)(a1-2)(b1-1)(b1-2)(a2-1)(a2-2)(b2-1)(b2-2)}[black,inner sep = 25pt]
        \end{istgame}
        \caption{}
        \label{fig:job_market_game:EFG}
    \end{subfigure}
    \begin{subfigure}[b]{0.3\linewidth}
        \centering
        \begin{influence-diagram}
            \node (T) [] {$T$};
            \node (U1) [utility, right = of T, player1] {$U^1$};
            \node (U2) [utility, right = of U1, player2] {$U^2$};
            \node (D1) [decision, below right = 1.4cm and 0.7cm of T, player1] {$D^1$};
            \node (D2) [decision, right = of D1, player2] {$D^2$};
            \edge [information] {T} {D1};
            \edge [information] {D1} {D2};
            \path (T) edge[->, bend left=45] (U2);
            \edge {T} {U1};
            \edge {D1} {U1};
            \edge {D2} {U1,U2};
        \end{influence-diagram}
        \caption{}
        \label{fig:job_market_game:MAID}
    \end{subfigure}
    \caption{(a) An EFG representing Example \ref{ex:job_market}. (b) A MAID representing the same game.
    }
    \label{fig:job_market_game}
\end{figure}

\subsubsection*{Extensive Form Games}
\label{sec:EFGs}

The material on EFGs is required only for Section \ref{sec:EFG_connections}, and so may be safely skipped or referred back to later, depending on the reader's preferences.

\begin{definition}[\citenum{Kuhn1953}]
    \label{def:EFG}
    An \textbf{extensive form game (EFG)} is a structure $\efg = (N, T, P, A, \lambda, I, U)$, where:
    \begin{itemize}
        \item $N = \{1,\dots,n\}$ is a set of agents.
        \item $T = (\bm{V}, \mathscr{E})$ is a game tree with nodes $\bm{V}$ 
        that are partitioned into sets $\bm{V}^0, \bm{V}^1, \dots, \bm{V}^n, \bm{L}$ where $R \in \bm{V}$ is the root of $T$, $\bm{L}$ are the leaves of $T$, $\bm{V}^0$ are chance nodes, and $\bm{V}^i$ are the decision nodes controlled by agent $i \in N$. The nodes are connected by edges $\mathscr{E}$.
        \item $P = \{P_1,\dots,P_{\vert\bm{V}^0\vert}\}$ is a set of probability distributions $P_j(\Ch_{V^0_j})$ over the children of each chance node $V^0_j$.
        \item $A$ is a set of actions, where $A^i_j \subseteq A$ denotes the set of actions available at $V^i_j \in \bm{V}^i$.
        \item $\lambda : \mathscr{E} \rightarrow A$ is a labelling function mapping each edge $(V^i_j, V^k_l)$ to an action $a \in A^i_j$.
        \item $I = \{I^1,\dots,I^n\}$ contains a collection of information sets $I^i \subset 2^{\bm{V}^i}$, which partition the decision nodes controlled by agent $i$. Each information set $I_j^i \in I^i$ is defined such that for all $V^i_k, V^i_l \in I_j^i$, the available actions $A^i_j \coloneqq A_{V^i_k} = A_{V^i_l}$ are the same at both nodes.
        \item $U : \bm{L} \rightarrow \mathbb{R}^n$ is a utility function mapping each leaf node to a vector that determines the final payoff for each agent.
    \end{itemize}
\end{definition}

Figure \ref{fig:job_market_game:EFG} shows Example \ref{ex:job_market}'s signalling game in extensive form. Nature, as a chance node $V^0$, flips a biased coin at the root of the tree to decide whether the worker is hard-working $h$ (with probability $p$) or lazy $\neg h$ (with probability $1-p$). 
The worker's decision whether to go $g$ or avoid $\neg g$ university is represented at nodes $V^1_1, V^1_2 \in \bm{V}^1$ and the automated hiring system's decisions are given by $V^2_1, V^2_2, V^2_3, V^2_4 \in \bm{V}^2$, each with the option to offer a job or reject the worker ($j$ or $\neg j$). The two non-singleton information sets (marked by dotted black lines) represent the fact that the hiring system does not receive the worker's temperament as an input.

The payoffs for the worker and the firm are given by the first and second elements at the leaves of the tree respectively. The worker receives a payoff of $5$ if they are given a job offer, but they incur a cost of $1$ for going to university if they are hard-working and a cost of $2$ for going to university if they are lazy. The firm receives a payoff of $3$ if they hire a hard worker. If they offer a job to a lazy worker, they incur a cost of $2$, and if they reject a hard worker, they incur an opportunity cost of $1$.

\begin{definition}
    \label{def:strategy}
    Given an EFG $\efg = (N, T, P, A, \lambda, I, U)$, a (behavioural) \textbf{strategy} $\sigma^i$ for agent $i$ is a set of probability distributions $\sigma_j^i(A^i_j)$ over the actions available to the agent at each of their information sets $I_j^i$. A strategy is \textbf{pure} when each $\sigma_j^i(a) \in \{0,1\}$ and \textbf{fully stochastic} when $\sigma_j^i(a) > 0$, for all $a \in A^i_j$. A \textbf{strategy profile} $\sigma = (\sigma^1, \dots, \sigma^n)$ is a tuple of strategies, and $\sigma^{-i} = (\sigma^1, \dots, \sigma^{i-1}, \sigma^{i+1},\dots, \sigma^n)$ denotes the partial strategy profile of all agents other than $i$, hence $\sigma = (\sigma^i, \sigma^{-i})$. 
\end{definition}

Combining a strategy profile $\sigma$ with the distributions in $P$ defines a probability distribution $P^\sigma$\label{def:jointEFG} over paths $\rho$ in $\efg$. For each path $\rho$ beginning from $R$ and terminating in a leaf node $\rho[\bm{L}] \in \bm{L}$, agent $i$ receives utility $U(\rho[\bm{L}])[i]$ -- the $i$\textsuperscript{th} entry in the corresponding payoff vector. Agent $i$'s expected utility under a strategy profile $\sigma$ is therefore given by $\expect_{\sigma} \big[ U(\rho[\bm{L}])[i] \big]$.

\begin{definition}
    \label{def:EFGsubgame}
    A \textbf{subgame} of an EFG, $\efg = (N, T, P, A, \lambda, I, U)$, is the game $\efg$ restricted to a subtree $T' = (\bm{V}',\mathscr{E}')$ of $T$ such that: for any information set $I^i_j$ in $\efg$, if there exists $V_k \in I^i_j \cap \bm{V}'$, then $I^i_j \subseteq \bm{V}'$; and for any $V_k \in \bm{V}'$, if $(V_k, V_l) \in \mathscr{\mathscr{E}}$, then $V_l \in \bm{V}'$ and $(V_k, V_l) \in \mathscr{E}'$.
\end{definition}

In other words, a subtree of the original game tree forms a subgame if it is closed under information sets and descendants.
Any EFG is trivially a subgame of itself, so a subgame on a strictly smaller subtree is called \textit{proper}. In this paper, we denote subgames in EFGs by enclosing them within dashed boxes. For instance, the EFG in Figure \ref{fig:job_market_game:EFG} has no proper subgames, but the EFG given later in Figure \ref{fig:warehouse:c} has two.

\subsubsection*{Multi-Agent Influence Diagrams}
\label{sec:MAIDs}

Influence diagrams (IDs) generalise BNs to the decision-theoretic setting by adding decision and utility variables \cite{howard2005influence, miller1976development}, and  
multi-agent influence diagrams (MAIDs) generalise IDs by introducing multiple agents \cite{koller2003multi, milch2008ignorable}. MAIDs can therefore be viewed as a BN over a graph without parameters for the decision variables, although technically lie between levels one and two of Pearl's causal hierarchy as the decisions are effectively modelled as causal interventions \citep{Heckerman1994}, but paths between non-decision variables need not encode causal relationships \cite{Everitt2021}.

\begin{definition}[\citenum{koller2003multi}]
    \label{def:MAID}
        A \textbf{multi-agent influence diagram (MAID)} is a structure $\model = (\graph, \bm{\theta})$ where $\graph = (N, \bm{V}, \mathscr{E})$ specifies a set of agents $N = \{1,\dots,n\}$ and a DAG $(\bm{V}, \mathscr{E})$ where $\bm{V}$ is partitioned into chance variables $\bm{X}$, decision variables $\bm{D} = \bigcup_{i \in N} \bm{D}^i$, and utility variables $\bm{U} = \bigcup_{i \in N} \bm{U}^i$. 
        The parameters $\bm{\theta} = \{\theta_{V}\}_{V \in \bm{V} \setminus \bm{D}}$ define the CPDs $\Pr(V \mid \Pa_V ; \theta_V)$ for each non-decision variable such that for \emph{any} parameterisation of the decision variable CPDs, the resulting joint distribution over $\bm{V}$ induces a BN.
\end{definition}

Figure \ref{fig:job_market_game:MAID} shows a MAID representing Example \ref{ex:job_market}. Chance variables, such as whether the worker's temperament is hard-working or lazy ($T$), are denoted by white circles. Decision and utility variables are represented using squares and diamonds respectively. The worker's decision ($D^1$) and utility ($U^1$) variables are displayed in red, and the firm's ($D^2$ and $U^2$) in blue. Instead of information sets, the fact that an agent is unaware of the value of a certain variable when making a decision is represented by a missing edge between the two (e.g., the absence of an edge $T \rightarrow D^2$). The parameters $\bm{\theta}$ define conditional distributions for variables $T$, $U^1$, and $U^2$, in accordance with the values shown in Figure \ref{fig:job_market_game:EFG}.

\begin{definition}
    \label{def:policy}
    Given a MAID $\model = (\graph, \bm{\theta})$, a \textbf{decision rule} $\pi_D$ for $D \in \bm{D}$ is a CPD $\pi_D(D \mid \Pa_D)$ and a \textbf{partial policy profile} $\pi_{\bm{D}'}$ is a set of decision rules $\pi_D$ for each $D \in \bm{D}' \subseteq \bm{D}$, where we write $\pi_{-\bm{D}'}$ for the set of decision rules for each $D \in \bm{D} \setminus \bm{D}'$. 
    A (behavioural) \textbf{policy} ${\bm{\pi}}^i$ refers to ${\bm{\pi}}_{\bm{D^i}}$, and a (full, behavioural) \textbf{policy profile} ${\bm{\pi}} = ({\bm{\pi}}^1,\ldots,{\bm{\pi}}^n)$ is a tuple of policies, where ${\bm{\pi}}^{-i} \coloneqq ({\bm{\pi}}^1, \dots, {\bm{\pi}}^{i-1}, {\bm{\pi}}^{i+1}, \dots, {\bm{\pi}}^n)$. 
    A decision rule is \textbf{pure} if $\pi_D(d \mid \pa_D) \in \{0,1\}$ and \textbf{fully stochastic} if $\pi_D(d \mid \pa_D) > 0$ for all $d \in \dom(D)$ and each  \textbf{decision context} $\pa_D \in \dom(\Pa_D)$; this holds for a policy (profile) if it holds for all decision rules in the policy (profile).
\end{definition}

By combining ${\bm{\pi}}$ with the partial distribution $\Pr$ over the chance and utility variables, we obtain a joint distribution:\label{def:jointMAID}
$$\Pr^{\bm{\pi}}(\bm{x},\bm{d},\bm{u}) \coloneqq \prod_{V \in \bm{V} \setminus \bm{D}} \Pr (v \mid \pa_{V}) \cdot \prod_{D \in \bm{D}} \pi_{D} (d \mid \pa_{D}),$$
over all the variables in $\model$; inducing a BN.
The expected utility for an agent $i$ given a policy profile ${\bm{\pi}}$ is defined as the expected sum of their utility variables in this BN, $\sum_{U \in \bm{U}^i} \expect_{{\bm{\pi}}} [ U ]$ \label{def:EU}. {This allows a Nash equilibrium (NE) \cite{nash1950equilibrium} to be defined, which identifies outcomes of a game where every agent is simultaneously playing a best-response.%
\footnote{In Section \ref{sec:NE}, we build upon what's already known about NE in MAIDs, by explaining the difference between mixed policies and behavioural policies in MAIDs and clarifying when {an} NE is guaranteed to exist.}}
\begin{definition}[\citenum{koller2003multi}] 
    \label{def:NE}
    A policy profile ${\bm{\pi}}$ is a \textbf{Nash equilibrium (NE)} in a MAID if, for every agent $i \in N$, ${\bm{\pi}}^i \in \argmax_{\hat{{\bm{\pi}}}^i \in \dom({\bm{\Pi}}^i)} \sum_{U \in \bm{U}^i} \expect_{(\hat{{\bm{\pi}}}^i, {\bm{\pi}}^{-i})} [ U ]$.
\end{definition}

{K\&M also define \emph{strategic relevance ($s$-relevance)} \citep{koller2003multi} as a concept to infer whether the choice of a decision rule can affect the optimality of another decision rule. They further show how $s$-relevance can be determined using a graphical criterion (\emph{$s$-reachability}). In what follows, we use capital letters for variables $\Pi_D$ and $\bm{\Theta}$ which may take different values $\pi_D$ and $\bm{\theta}$. A more detailed introduction to this notation is provided in Section \ref{sec:mechanised_maids}}.
\begin{definition}[\citenum{koller2003multi}]
    \label{def:strategic-relevance}
    Let $D_k, D_l \in \bm{D}$ be decision nodes in a MAID $\model$. $\Pi_{D_l}$ is \textbf{strategically relevant} (or \emph{$s$-relevant}) to $\Pi_{D_k}$ if there exist two joint distributions over $\bm{V}$ parameterised by $\bm{\Theta}$ and policy profiles ${\bm{\pi}}$ and ${\bm{\pi}}'$ respectively, and a decision rule $\pi_{D_k}$, such that:
    \begin{itemize}
        \item $\pi_{D_k} \in \argmax_{\hat{\pi}_{D} \in \dom(\Pi_{D})} \sum_{U \in \bm{U}^i} \expect_{(\hat{\pi}_{D}, {\bm{\pi}}_{-D})} [ U ]$;
        \item ${\bm{\pi}}$ differs from ${\bm{\pi}}'$ only at $\Pi_{D_l}$;
        \item $\pi_{D_k} \notin \argmax_{\hat{\pi}_{D} \in \dom(\Pi_{D})} \sum_{U \in \bm{U}^i} \expect_{(\hat{\pi}_{D}, {\bm{\pi}}'_{-D})} [ U ]$, and neither does any decision rule $\tilde{\pi}_{D}$ that agrees with $\pi_{D_k}$ on all $\pa_{D_k}$ such that $\Pr^{{\bm{\pi}}'}(\pa_{D_k}) > 0$.\footnote{This final condition is included in order to ensure that $\pi_{D_k}$ doesn't lead to poor decisions in decision contexts that occur with probability zero.}
    \end{itemize}
\end{definition}
{\begin{proposition}[\citenum{koller2003multi}]
    \label{prop:s-reachability}
    {$\Pi_{D'}$} is $s$-relevant to $\Pi_{D}$ if and only if {$\Pi_{D'} \not\perp_{\graph'} \bm{U}^i \cap \Desc_{D} \mid D, \Pa_{D}$ ($\Pi_{D}$ is $s$-reachable from $\Pi_{D'}$), where $\graph'$ is the same as $\graph$ with an additional variable $\Pi_{D'}$ and edge $\Pi_{D'} \to D'$}.
\end{proposition}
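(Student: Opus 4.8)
The plan is to reduce the statement to Lemma~\ref{lem:reachability} by showing that $\Pi_{D'}$ is $s$-relevant to $\Pi_D$ (where $i$ is the agent owning $D$) if and only if $\Pi_{D'}$ is a \emph{requisite probability node} (Definition~\ref{def:requisite}) for the conditional distribution $\Pr(\bm{U}^i \cap \Desc_D \mid D, \Pa_D)$. The structural fact that makes this work is that, for a fixed partial profile ${\bm{\pi}}_{-D}$, the objective $\sum_{U \in \bm{U}^i} \expect_{(\hat{\pi}_D, {\bm{\pi}}_{-D})}[U]$ decomposes over decision contexts as $\sum_{\pa_D} \Pr^{{\bm{\pi}}_{-D}}(\pa_D) \sum_d \hat{\pi}_D(d \mid \pa_D)\, q_{{\bm{\pi}}_{-D}}(d, \pa_D)$, up to an additive term not depending on the decision rule at $D$, where $q_{{\bm{\pi}}_{-D}}(d, \pa_D) = \sum_{U \in \bm{U}^i \cap \Desc_D} \expect_{{\bm{\pi}}_{-D}}[U \mid d, \pa_D]$. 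Here we use that $\graph$ is acyclic: the utilities not downstream of $D$ are insensitive to $\hat{\pi}_D$, and for those downstream, the conditional expectation given $(d, \pa_D)$ depends only on ${\bm{\pi}}_{-D}$ and not on the rule at $D$. Consequently $\pi_D$ is a best response to ${\bm{\pi}}_{-D}$ exactly when, at every context $\pa_D$ with $\Pr^{{\bm{\pi}}_{-D}}(\pa_D) > 0$, it is supported on $\argmax_d q_{{\bm{\pi}}_{-D}}(d, \pa_D)$.

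For the ``only if'' direction I would take a witness $({\bm{\pi}}, {\bm{\pi}}', \pi_D)$, and its parameterisation, to $s$-relevance. Read through the best-response characterisation above, the first and third bullets of Definition~\ref{def:strategic-relevance} say that $\pi_D$ is $\argmax$-supported everywhere reachable under ${\bm{\pi}}$ but not everywhere reachable under ${\bm{\pi}}'$; and the ``neither does any $\tilde{\pi}_D$ that agrees with $\pi_D$'' clause forces the offending decision context $\pa_D$ to be one that cannot be repaired by editing $\pi_D$ on contexts that were already unreachable -- hence a context that is positive-probability under both profiles and at which $\argmax_d q_{{\bm{\pi}}_{-D}}(d, \pa_D) \neq \argmax_d q_{{\bm{\pi}}'_{-D}}(d, \pa_D)$. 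Since ${\bm{\pi}}$ and ${\bm{\pi}}'$ differ only in the rule at $D'$ and each $q$ is a sum of conditional expectations of the variables in $\bm{U}^i \cap \Desc_D$, it follows that $\Pr^{{\bm{\pi}}_{-D}}(\bm{u} \mid d, \pa_D) \neq \Pr^{{\bm{\pi}}'_{-D}}(\bm{u} \mid d, \pa_D)$ for some $\bm{u} \in \dom(\bm{U}^i \cap \Desc_D)$ and some $d$, with both conditionals well-defined. Replacing $\pi_D$ by a fully stochastic rule -- which alters neither these conditionals nor $\Pr(\pa_D)$ -- then exhibits $\Pi_{D'}$ as a requisite probability node for $\Pr(\bm{U}^i \cap \Desc_D \mid D, \Pa_D)$, which by Lemma~\ref{lem:reachability} (applied with $\Pi_{D'}$ as the mechanism variable $\mecvar_{D'}$, and noting that the remaining mechanism variables of $\meczero{\graph}$ are degree-one roots that cannot lie on any path between $\Pi_{D'}$ and $\bm{U}^i \cap \Desc_D$, so do not affect the relevant d-separation) is exactly $s$-reachability.

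For the ``if'' direction I would run this in reverse. From $s$-reachability, Lemma~\ref{lem:reachability} gives that $\Pi_{D'}$ is a requisite probability node for $\Pr(\bm{U}^i \cap \Desc_D \mid D, \Pa_D)$; unpacking Definition~\ref{def:requisite} yields a parameterisation, two rules $\pi_{D'}, \pi'_{D'}$ inducing partial profiles differing only at $D'$, and a context $\pa_D$ positive under both, at which the conditional distribution over $\bm{U}^i \cap \Desc_D$ changes. I would then choose the utility CPDs of the nodes in $\bm{U}^i \cap \Desc_D$ so that this probabilistic difference is realised as a difference -- indeed a difference of maximisers -- of $q(\cdot, \pa_D)$ at that context, with a unique optimal action at every other context under ${\bm{\pi}}_{-D}$. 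Letting $\pi_D$ pick the ${\bm{\pi}}_{-D}$-optimal action everywhere gives the first bullet of Definition~\ref{def:strategic-relevance}; and because $\pa_D$ is positive-probability under ${\bm{\pi}}$, every rule agreeing with $\pi_D$ there still chooses a ${\bm{\pi}}'_{-D}$-suboptimal action, giving the third bullet, so $\Pi_{D'}$ is $s$-relevant to $\Pi_D$.

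The step I expect to be the crux is getting the zero-probability decision contexts exactly right: both the best-response characterisation and the use of the third bullet of Definition~\ref{def:strategic-relevance} turn on carefully separating contexts reachable under ${\bm{\pi}}$, under ${\bm{\pi}}'$, or under both, so that ``$\Pi_{D'}$ changes which decision rule at $D$ is optimal'' ends up being neither stronger nor weaker than ``$\Pi_{D'}$ changes $\Pr(\bm{U}^i \cap \Desc_D \mid D, \Pa_D)$''. A secondary and more routine difficulty is the genericity argument in the ``if'' direction -- verifying that an arbitrary difference between two conditional distributions over $\bm{U}^i \cap \Desc_D$ can always be amplified into a change of maximiser of the expected utility at $D$ by a suitable choice of utility functions -- which is standard but needs a little care because the parents of the utility nodes are fixed by $\graph$.
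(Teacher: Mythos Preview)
The paper does not supply its own proof of this proposition; it is quoted from K\&M, and the closest the paper comes is the framework in Section~\ref{sec:relevance}, where it identifies $\mathcal{Q}^s_D = \{\Pr^{\bm{\pi}}(\bm{u}^i \cap \desc_D \mid d, \pa_D)\}$ as the determining query set and then runs the chain $r_D \leftrightarrow \mathcal{Q}_D \leftrightarrow$ requisite probability node $\leftrightarrow$ Lemma~\ref{lem:reachability}. Your plan is exactly this framework, so the high-level route is the same.

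There is, however, a real gap in your ``only if'' paragraph, precisely at the point you flag as the crux. You assert that the clause ``neither does any $\tilde{\pi}_D$ that agrees with $\pi_D$ on all $\pa_D$ with $\Pr^{{\bm{\pi}}'}(\pa_D)>0$'' forces the offending context to be positive under \emph{both} profiles. It does not: as written, that clause only lets you edit $\pi_D$ on contexts that are null under ${\bm{\pi}}'$, so it pins down positivity under ${\bm{\pi}}'$ alone. A context $\pa_D$ that is null under ${\bm{\pi}}$ but positive under ${\bm{\pi}}'$ slips through your argument: $\pi_D$ may be arbitrary there and still satisfy the first bullet, yet be suboptimal under ${\bm{\pi}}'$ with no repair permitted by the third bullet. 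Concretely, take $D' \to Z$, $Z \to D$, and $Z, D \to U$, with $\pi_{D'}$ deterministically fixing $Z$; then switching $\pi_{D'}$ swaps which context is reachable while $\Pi_{D'} \perp_{\graph'} U \mid D, Z$ continues to hold, so your argument as written would yield $s$-relevance without $s$-reachability. The resolution in K\&M's soundness proof is not the ``positive under both'' shortcut but rather that the \emph{context-wise} objective $q(d,\pa_D)$ is defined from the CPDs independently of reachability, and the zero-probability clause (properly read) is there so that changes in $\Pr(\pa_D)$ alone never witness $s$-relevance---this is exactly what separates $\mathcal{Q}^s_D$ from $\mathcal{Q}^{\BR}_D$. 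Your ``if'' direction and the secondary genericity step are fine.
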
}
{Both $s$-relevance and $s$-reachability only consider which other \emph{decision rules} matter under a particular assumption about the \emph{rationality} of agents (corresponding to a notion of subgame perfectness). In \cref{sec:mechanised_maids_relevance}, we generalise this idea to also consider the parameterisation of non-decision variables, which is key to reasoning about causality in games.
We also generalise the concepts to other assumptions about agents' rationality.
By considering the directed (but not necessarily acyclic) graph over all variables $\bm{\Pi_D} \coloneqq \{\Pi_D\}_{D \in \bm{D}}$ such that there is an edge $\Pi_{D'} \to \Pi_D$ if and only if $\Pi_{D'}$ is $s$-relevant to $\Pi_D$ -- called the ($s$-)relevance graph -- K\&M also introduce a weakening of the perfect recall assumption that is sufficient for the existence of an NE.
}

\begin{definition}[\citenum{koller2003multi}]
    \label{def:perfectrecall}
    Agent $i$ in a MAID $\model$ has \textbf{perfect recall} if there exists a topological ordering $D_1 \prec \cdots \prec D_m$ over $\bm{D}^i$ such that $\Fa_{D_j} \subset \Pa_{D_k}$ for any $1 \leq j < k \leq m$. $\model$ is said to have perfect recall if all agents in $\model$ have perfect recall.
\end{definition}

\begin{definition}[\citenum{milch2008ignorable}]
    \label{def:sufrecall}
    Agent $i$ in a MAID $\model$ has \textbf{sufficient recall} if the $s$-relevance graph restricted to just agent $i$'s decision rules is acyclic. The MAID $\model$ is said to have sufficient recall if all agents have sufficient recall.\footnote{Note that although sufficient recall is defined using the $s$-relevance graph, a similar criterion could be created for any set of rationality relations $\R$ and the resulting $\R$-relevance graph.
    }
\end{definition}

\begin{proposition}[\citenum{koller2003multi}]
    \label{prop:NEexistance}
    Any MAID with sufficient recall has at least one NE in behavioural policies.
\end{proposition}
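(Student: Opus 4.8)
The plan is to obtain a Nash equilibrium in two stages: first, to find a behavioural policy profile in which \emph{every individual decision rule} is optimal given the rest, via a fixed-point argument; and then to invoke sufficient recall to upgrade such a profile to a genuine NE of $\model$, in which each agent's \emph{entire} policy is a best response to the others'. The detour is necessary because an agent's expected utility is multilinear -- but in general neither concave nor quasi-concave -- in its bundle of decision rules, so Kakutani's theorem cannot be applied directly to the best-response correspondence over whole policies.

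For the first stage, I would introduce an auxiliary game $\Gamma$ whose players are the pairs $(i, D)$ for $i \in N$ and $D \in \bm{D}^i$: player $(i,D)$ chooses a decision rule $\pi_D \in \dom(\Pi_D) = \Delta(\dom(D) \mid \dom(\Pa_D))$ and is assigned the payoff $\sum_{U \in \bm{U}^i}\expect_{\bm{\pi}}[U]$ of agent $i$. Each strategy set is a product of probability simplices, hence nonempty, compact and convex in a Euclidean space; and from the factorisation of $\Pr^{\bm{\pi}}$ as a product with exactly one factor $\pi_D(d \mid \pa_D)$, each payoff is jointly continuous in $\bm{\pi}$ and \emph{linear} in $\pi_D$ when the other decision rules are held fixed. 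Consequently each player's best-response correspondence in $\Gamma$ is nonempty-, compact-, and convex-valued and upper hemicontinuous, and Kakutani's fixed-point theorem applied to the product correspondence -- exactly as in the proof of Nash's theorem -- yields an equilibrium $\bm{\pi}^\star = \{\pi^\star_D\}_{D \in \bm{D}}$ of $\Gamma$: for every $i \in N$ and $D \in \bm{D}^i$, $\pi^\star_D \in \argmax_{\hat{\pi}_D \in \dom(\Pi_D)}\sum_{U \in \bm{U}^i}\expect_{(\hat{\pi}_D, \bm{\pi}^\star_{-D})}[U]$.

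The second stage is where sufficient recall (Definition~\ref{def:sufrecall}) is used, and is the heart of the argument. Fix an agent $i$; by sufficient recall the $s$-relevance graph restricted to $\bm{D}^i$ is acyclic, so pick a topological order $D_1 \prec \cdots \prec D_m$ of agent $i$'s decision variables, for which $\Pi_{D_j}$ is $s$-relevant to $\Pi_{D_k}$ only if $j < k$. Against the fixed profile $\bm{\pi}^{\star -i}$, define the backward-induction value functions $V_k(\pi_{D_1}, \ldots, \pi_{D_{k-1}}) \coloneqq \max_{\pi_{D_k}, \ldots, \pi_{D_m}}\sum_{U \in \bm{U}^i}\expect_{(\pi_{D_1}, \ldots, \pi_{D_m}, \bm{\pi}^{\star -i})}[U]$, so that $V_1 = \max_{\hat{\bm{\pi}}^i}\sum_{U \in \bm{U}^i}\expect_{(\hat{\bm{\pi}}^i, \bm{\pi}^{\star -i})}[U]$. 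I would show by induction on $k$, from $m$ down to $1$, that $\pi^\star_{D_k}$ attains $\max_{\pi_{D_k}} V_{k+1}(\pi^\star_{D_1}, \ldots, \pi^\star_{D_{k-1}}, \pi_{D_k})$, and hence $V_k(\pi^\star_{D_1}, \ldots, \pi^\star_{D_{k-1}}) = V_{k+1}(\pi^\star_{D_1}, \ldots, \pi^\star_{D_k})$. The point is that stage one makes $\pi^\star_{D_k}$ optimal against the \emph{specific} continuation $\pi^\star_{D_{k+1}}, \ldots, \pi^\star_{D_m}$, while the topological order ensures that none of $\Pi_{D_{k+1}}, \ldots, \Pi_{D_m}$ is $s$-relevant to $\Pi_{D_k}$; so the definition of $s$-relevance forces a decision rule agreeing with $\pi^\star_{D_k}$ on every positive-probability decision context to remain optimal against \emph{every} continuation, in particular against the one achieving $V_{k+1}(\pi^\star_{<k}, \cdot)$, whence $\pi^\star_{D_k}$ itself maximises $V_{k+1}(\pi^\star_{<k}, \cdot)$. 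Chaining these equalities down to $k=1$ gives $\sum_{U \in \bm{U}^i}\expect_{(\bm{\pi}^{\star i}, \bm{\pi}^{\star -i})}[U] = V_1$, i.e.\ $\bm{\pi}^{\star i}$ is a best response to $\bm{\pi}^{\star -i}$; since $i$ was arbitrary, $\bm{\pi}^\star$ is an NE of $\model$ in behavioural policies, as required.

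The main obstacle I anticipate is the bookkeeping around $s$-relevance in the inductive step, especially the zero-probability-decision-context clause of Definition~\ref{def:strategic-relevance} (and its accompanying footnote): replacing the fixed continuation by an optimal one may shift which decision contexts of $D_k$ have positive probability, so one must verify that $\pi^\star_{D_k}$ and any rule differing from it only on (then) zero-probability contexts induce the same expected utility under every profile in play -- which holds because $D_k$ is not an ancestor of its own parents, but needs to be spelled out, applying the definition one perturbed decision rule at a time as we pass from the starred continuation to an arbitrary one. The graphical criterion of Proposition~\ref{prop:s-reachability} can be invoked to make the dependence structure explicit if desired. A less self-contained alternative would route through the MAID/EFG correspondence of Section~\ref{sec:EFG_connections}: show that under sufficient recall the information needed to complete $\model$ to a perfect-recall MAID is ignorable, convert the completion to an EFG with perfect recall, and then combine Nash's theorem with Kuhn's theorem equating behavioural and mixed strategies.
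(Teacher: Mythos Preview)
Your approach is correct but takes a genuinely different route from the paper's. The paper does not prove this proposition itself; it attributes the result to Koller and Milch, remarking that they ``implicitly prove this result \ldots\ when proving that their Algorithm for finding an NE always converges.'' That argument is \emph{constructive}: one iterates backwards through a topological ordering of the strongly connected components of the $s$-relevance graph (equivalently, through the nested $s$-subdiagrams), optimising decision rules at each stage; sufficient recall guarantees that each stage adds at most one decision per agent, so each local optimisation is a single-decision problem with a behavioural optimum. The same machinery reappears in the paper's proof of Proposition~\ref{prop:SPEexist}. Your proof is instead non-constructive: Kakutani on the agent-form game $\Gamma$ produces a profile in which every \emph{single} decision rule is a best response (this step needs no recall assumption at all), and only afterwards do you invoke the acyclicity of each agent's restricted $s$-relevance graph to promote local to global optimality. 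This cleanly isolates where sufficient recall enters, whereas the K\&M route yields an algorithm and the stronger SPE conclusion essentially for free. Your treatment of the zero-probability clause in Definition~\ref{def:strategic-relevance} is sound: since any $\tilde{\pi}_{D_k}$ agreeing with $\pi^\star_{D_k}$ on all $\bm{\pi}'$-positive contexts attains the same expected utility under $\bm{\pi}'$, the conclusion ``some such $\tilde{\pi}_{D_k}$ is optimal'' already forces $\pi^\star_{D_k}$ itself to be optimal, so the one-coordinate-at-a-time chaining goes through without tracking how the positive-probability sets shift.
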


\section{Mechanised MAIDs and Relevance}
\label{sec:mechanised_maids_relevance}

Although MAIDs allow us to elegantly and concisely represent the dependencies between variables in a game, the edges in the graph only tell part of the story. Indeed, game-theoretic models are traditionally presented as objects to be \emph{solved}, with the procedure used to produce solutions (i.e., policy profiles) left extrinsic to the game representation. However, the fact that agents act strategically {means that agents' policies may be dependent on some of the parameters of the game (as well as other agents' policies) in ways that} are crucial for causal reasoning, yet not explicitly represented by MAIDs.

In this section, we introduce \emph{mechanised MAIDs}, which allow us to model these dependencies alongside the existing dependencies of the MAID. 
This representation will be fundamental for causal reasoning in games (in Section \ref{sec:causality_in_games}), as well as for the introduction of subgames in MAIDs (in Section \ref{sec:MAID_subgames}). In the following subsections, we first define mechanised MAIDs before formally introducing the concept of \emph{relevance} and corresponding graphical criteria.

\subsection{Mechanised MAIDs}
\label{sec:mechanised_maids}

In order to explicitly capture the implicit dependencies between the decision rules and CPDs of a MAID $\model = (\graph, \bm{\theta})$, a \emph{mechanised MAID} $\mec{\model}$ adds two elements: a (graphical) representation of the decision rules and parameters, and a description of how these depend on one another. 

\subsubsection*{Mechanised Graphs}
For the first addition, we extend $\graph$ to form a \emph{mechanised graph} $\mec{\graph}$. For each decision variable $D$, a new parent $\Pi_D$ representing its decision rule\label{def:decrulenode} is added, and for each non-decision variable $V$, a new parent $\Theta_V$ representing the parameters of its CPD\label{def:paramnode}. We call these additional parents \emph{mechanism variables} $\mecvars$\label{def:mecvar}, as they determine the mechanisms by which values of the variables in the game are set,\footnote{This name is partially inspired by the field of mechanism design, in which agents act by reporting their types $t \in T$ and the mechanism $f : T \to O$ determines how this type profile (which we might view as $\bm{d}$) is mapped to an outcome $o \in O$ of the game (which we might view as $\bm{v}$).} and the variables $\bm{V}$ in the original MAID \emph{object-level variables}. To distinguish between different types of mechanism variable, we often refer to those for decisions $\bm{\Pi} = \mecvars_{\bm{D}}$ as \emph{decision rule variables}\label{def:decrulevar} and those for non-decisions $\bm{\Theta} = \mecvars_{\bm{V} \setminus \bm{D}}$ as \emph{parameter variables}\label{def:paramvar}.

In the mechanised graph $\mec{\graph}$, we also add new edges $\mathscr{E}' \subseteq \bigcup_{D \in \bm{D}}\big( (\mecvars \setminus \Pi_D) \times \Pi_D \big)$ from other mechanism variables into decision rule variables. This represents the fact that agents typically select a decision rule $\pi_D$ (i.e., the value of $\Pi_D$) based on both the parameterisation of the game (i.e., the values of $\bm{\Theta}$) and the selection of the other decision rules in the game ${\bm{\pi}}_{-D}$. For example, the worker and the firm's hiring system in Example \ref{ex:job_market} might want to select their decision rules $\pi_{D^1}$ and $\pi_{D^2}$ as a function of the probability $p$ that a worker has a hard-working temperament $T=h$. This would imply edges $\Theta_T \to \Pi_{D^1}$ and $\Theta_T \to \Pi_{D^2}$.

In general, an agent might select each of their decision rules based on the value of any other mechanism variable -- and so $\mathscr{E}'$ would be maximal (i.e., $\Pa_{\Pi_D} = \mecvars_{\bm{V} \setminus \{D\}}$) -- though typically some of these variables will not be relevant and thus $\mathscr{E}'$ will not be maximal, a notion we make precise in Section \ref{sec:relevance}. A mechanised graph $\mec{\graph}$ for Example \ref{ex:job_market} is shown in Figure \ref{fig:mechanised_game:maid}, where decision rule and parameter variables are represented using black and white rounded squares respectively.

\subsubsection*{Mechanised Games}
For the second addition to $\model$ -- a description of how the decision rules and parameters depend on one another -- we must specify how the values of the new mechanism variables are determined and how each of the CPDs of the original object-level variables are defined as a function of their new mechanism variable parent. Beginning with the latter, note that for any parameterisation $\mecvals_{\bm{V} \setminus \bm{D}} = \bm{\theta} \in \dom(\bm{\Theta})$ of the game and any policy profile $\mecvals_{\bm{D}} = {\bm{\pi}} \in \dom({\bm{\Pi}})$, the resulting joint distribution over $\bm{V}$ can be written as:
    $$\Pr^{\bm{\pi}}(\bm{v} ; \bm{\theta}) = \Pr(\bm{v} \mid \mecvals) \coloneqq \prod_{V \in \bm{V}} \Pr(v \mid \pa_{V}, \mecval_{V}),$$
where $\Pr(v \mid \pa_{V}, \mecval_{V})$ is simply the CPD $\Pr(v \mid \pa_{V}; \theta_{V})$ defined by parameters $\theta_{V} = \mecval_{V}$ if $V$ is a non-decision variable, or the decision rule $\pi_D(d \mid \pa_{D})$ defined by $\pi_D = \mecval_{V}$ if $V$ is a decision variable $D$. As such, given the parameters $\bm{\theta}$ of the MAID $\model$ and a policy profile ${\bm{\pi}}$, the distribution over $\bm{V}$ in $\mec{\model}$ is identical to that in the original MAID. 

Finally, we must specify how the values of the mechanism variables are determined. For the parameter variables, we simply set the distribution over each $\Theta_V$ to $\delta(\Theta_V {,} \theta_V)$, and let its domain be given by $\Delta(\dom(V) \mid \dom(\Pa_V))$. Intuitively, this may be viewed as the fact that any particular game induced over the graph corresponds to an instantiation of the parameter variables in the mechanised game, the values of which are known by all agents.\footnote{{This simply amounts to a common prior assumption. Though such an assumption can be relaxed, doing so introduces additional complexities that we do not address in this work.}}

In order to provide values for the decision rule variables, we introduce a set of \emph{rationality relations} $\R = \{r_D\}_{D \in \bm{D}}$ \label{def:ratrelation} that describe assumptions about how the agents choose decision rules. Concretely, each decision rule $\Pi_D$ is governed by a \emph{serial relation} $r_D \subseteq \dom(\Pa_{\Pi_D}) \times \dom(\Pi_D)$\label{def:serialrelation}.\footnote{A serial relation $r \subseteq A \times B$ is one such that for any $a \in A$ there exists at least one $b \in B$ such that $(a,b) \in r$.}
This accounts for the fact an agent may not deterministically choose a \emph{single} decision rule $\pi_D$ in response to some $\pa_{\Pi_D}$. 
In the remainder of the paper we abuse notation by {also} using $r_D(\pa_{\Pi_D})$ as a set-valued function to denote the \emph{set} of all decision rules $\pi_D$ such that $r_D(\pa_{\Pi_D}) = \pi_D$.\footnote{{Whether we refer to some $r_D(\pa_{\Pi_D}) \in \dom(\Pi_D)$ or $r_D(\pa_{\Pi_D}) \subseteq \dom(\Pi_D)$ will typically be unambiguous.}}
 
\begin{definition}
    \label{def:mechanised_maid}
    Given a MAID $\model = (\graph, \bm{\theta})$ and a set of rationality relations $\R$, a \textbf{mechanised MAID} is a structure $\mec{\model} = (\mec{\graph}, \bm{\theta}, \R)$, such that: the \textbf{mechanised graph} $\mec\graph=(N, \bm{V} \cup \mecvars, \mec{\mathscr{E}})$ is a directed (possibly cyclic) graph over $\bm{V}$ and $\mecvars$ with edges
    $\mec{\mathscr{E}} \coloneqq \mathscr{E} \cup \{(\mecvar_{V}, V)\}_{V \in \bm{V}} \cup \mathscr{E}'$, where $\mathscr{E}' \subseteq \bigcup_{D \in \bm{D}}\big( (\mecvars \setminus \Pi_D) \times \Pi_D \big)$; and $\R = \{ r_D \}_{D \in \bm{D}}$ is a set of \textbf{rationality relations}, where each $r_D \subseteq \dom(\Pa_{\Pi_D}) \times \dom(\Pi_D)$ is a serial relation.
\end{definition}

As an example, let us suppose that each agent in Example \ref{ex:job_market} plays a \emph{best response} with respect to the other. In this case, the values of $\Pi_{D^1}$ and $\Pi_{D^2}$ are determined by relations $\R^{\BR} = \{r^{\BR}_{D^1}, r^{\BR}_{D^2}\}$ such that:
\begin{equation}
    \label{eq:BR_relation}
    \pi_{D} \in r^{\BR}_{D}(\pa_{\Pi_{D}}) ~~\Leftrightarrow~~ \pi_{D} \in \argmax_{\hat{\pi}_{D} \in \dom(\Pi_{D})} \sum_{U \in \bm{U}^i} \expect_{(\hat{\pi}_{D}, \bm{\pi}_{-D})} [ U ],
\end{equation} 
for each $D \in \bm{D}^i$, %
where note that the expectation -- despite the fact that the notation above includes object-level variables -- is defined in terms of the \emph{mechanism variables} $\pa_{\Pi_{D}} = \mecvals_{\bm{V} \setminus \{D\}}$ and $\pi_{D}$. {Note that for games in which each agent has only one decision rule, this gives rise to an NE, as defined in \cref{def:NE}. In other words, the values of the mechanism variables determine the CPDs of the object-level variables, which in turn define the joint distribution over the object-level variables, and hence any expected quantities in the game.}
Many other kinds of game-theoretic equilibria can be naturally defined in terms of rationality relations, including subgame perfect equilibria and trembling hand perfect equilibria (introduced in Section \ref{sec:equilibrium_refinements}).\footnote{It would also be possible to define $\R$ such that, for example, every agent randomises their actions at every decision, or chooses the action that minimises their expected utility. Whilst it would be hard to view such behaviour as `rational' in the game-theoretic sense, one may think of $\R$ as defining the \emph{degree(s)} of rationality in a game.}

It will often be visually useful to restrict a mechanised MAID to just the mechanism variables, as can be seen in Figure \ref{fig:mechanised_game:mec_graph}. We can then view a mechanised graph as simply the composition of the original MAID and this graph of mechanism variables, via the addition of edges $\mecvar_{V} \to V$ for each $V \in \bm{V}$. 

\begin{figure}[h]
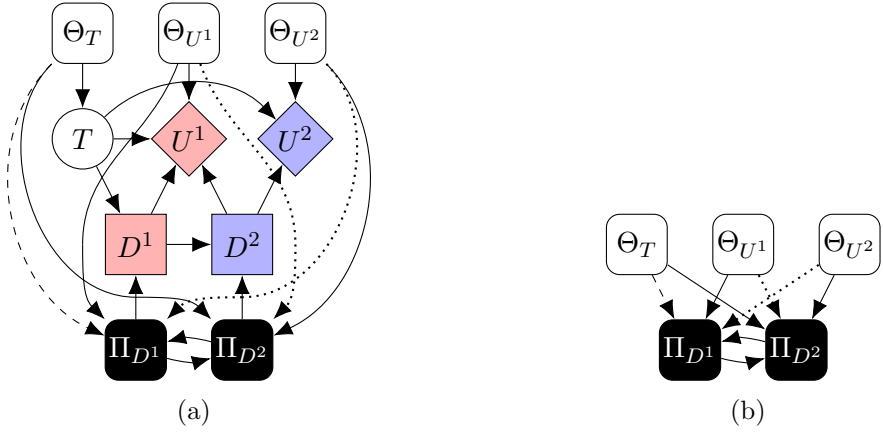

    \centering
    \begin{subfigure}[b]{0.45\linewidth}
        \centering
        \begin{influence-diagram}
        \node (X) [] {$T$};
        \node (U1) [utility, right = of X, player1] {$U^1$};
        \node (U2) [utility, right = of U1, player2] {$U^2$};
        \node (D1) [decision, below right = 1.4cm and 0.7cm of X, player1] {$D^1$};
        \node (D2) [decision, right = of D1, player2] {$D^2$};
        \edge [information] {X} {D1};
        \edge [information] {D1} {D2};
        \path (X) edge[->, bend left=45] (U2);
        \edge {X} {U1};
        \edge {D1} {U1};
        \edge {D2} {U1,U2};
        \node (X_mec) [relevancew, above = of X] {$\Theta_T$};
        \node (U2_mec) [relevancew, above = of U2] {$\Theta_{U^2}$};
        \node (D1_mec) [relevanceb, below = of D1] {$\Pi_{D^1}$};
        \node (U1_mec) [relevancew, above = of U1] {$\Theta_{U^1}$};
        \node (D2_mec) [relevanceb, below = of D2] {$\Pi_{D^2}$};
        \edge {X_mec} {X};
        \edge {D1_mec} {D1};
        \edge {D2_mec} {D2};
        \edge {U1_mec} {U1};
        \edge {U2_mec} {U2};
        \node (space1) [minimum size=0mm, node distance=2mm, below = 0.7cm of D1, draw=none] {};
        \draw (X_mec) edge[,in=180,out=-135] (space1.center)
        (space1.center) edge[,->,out=0,in=135] (D2_mec);
        \node (space2) [minimum size=0mm, node distance=2mm, below = 0.7cm of D2, draw=none] {};
        \draw (U2_mec) edge[thick, dotted, in=0,out=-45] (space2.center)
        (space2.center) edge[thick, dotted, ->,out=180,in=45] (D1_mec);
        \node (space3) [minimum size=0mm, node distance=2mm, left = 0.7cm of D1, draw=none] {};
        \draw (U1_mec) edge[in=90,out=-110] (space3.center)
        (space3.center) edge[->,out=-90,in=135] (D1_mec);
        \node (space4) [minimum size=0mm, node distance=2mm, right = 0.7cm of D2, draw=none] {};
        \draw (U1_mec) edge[thick, dotted, in=90,out=-70] (space4.center)
        (space4.center) edge[thick, dotted, ->,out=-90,in=45] (D2_mec);
        \path (D1_mec) edge[->, bend right=15] (D2_mec);
        \path (D2_mec) edge[->, bend right=15] (D1_mec);
        \draw (X_mec) edge[dashed, ->,out=-135,in=155] (D1_mec);
        \draw (U2_mec) edge[->, out=-45, in=25] (D2_mec);
    \end{influence-diagram}
    \caption{}
    \label{fig:mechanised_game:maid}
    \end{subfigure}
    \begin{subfigure}[b]{0.45\linewidth}
        \centering
        \begin{influence-diagram}
        \node (X_mec) [relevancew] {$\Theta_T$};
        \node (U1_mec) [relevancew, right = of X_mec] {$\Theta_{U^1}$};
        \node (U2_mec) [relevancew, right = of U1_mec] {$\Theta_{U^2}$};
        \node (D1_mec) [relevanceb, below right = 1.4cm and 0.7cm of X_mec] {$\Pi_{D^1}$};
        \node (D2_mec) [relevanceb, right = of D1_mec] {$\Pi_{D^2}$};
        \path (D1_mec) edge[->, bend right=15] (D2_mec);
        \path (D2_mec) edge[->, bend right=15] (D1_mec);
        \edge {X_mec} {D2_mec};
        \edge [dashed] {X_mec} {D1_mec};
        \edge {U1_mec} {D1_mec};
        \edge {U2_mec} {D2_mec};
        \edge [thick, dotted] {U1_mec} {D2_mec};
        \edge [thick, dotted] {U2_mec} {D1_mec};
        \end{influence-diagram}
        \caption{}
        \label{fig:mechanised_game:mec_graph}
    \end{subfigure}
    \caption{(a) A mechanised graph $\mec{\graph}$ representing Example \ref{ex:job_market}.
    Dotted edges represent those that are present in neither the $\R^{\BR}$-minimal nor the $s$-minimal mechanised graph. Dashed edges represent just those that are not present in the $s$-minimal mechanised graph. (b) The $\R^{\BR}$-relevance graph is formed by removing the dotted edges, and the $s$-relevance graph by further removing the dashed edge.}
    \label{fig:mechanised_game}
\end{figure}

\subsubsection*{Rational Outcomes}

If all of the rationality relations $\R$ in a MAID are satisfied by $\bm{\pi}$, then we say that $\bm{\pi}$ is an $\R$-\emph{rational outcome} of the game. For example, the $\R^{\BR}$-rational outcomes of the MAID $\model$ representing Example \ref{ex:job_market} are simply the {NEs} of $\model$. {Note that $\R$-rationality is not merely a convenience when reasoning about games, rather, it performs a necessary role by fully characterising the process by which decision rules are generated. As we will see later, the nature of the chosen rationality relations also allows us to deduce various facts about the game, purely from its graphical structure. In essence, such results hinge on the links between how agents choose their decision rules, and how the object-level variables depend on one another.}

\begin{definition}
    \label{def:rational_outcome}
    Given a mechanised MAID $\mec{\model}$, we say that any $\pi_D \in r_D(\pa_{\Pi_D})$ is an $\R$-\textbf{rational response} to $\pa_{\Pi_D}$, and that a (partial) policy profile ${\bm{\pi}}_{\bm{D}'}$ is $\R$-\textbf{rational} if $\pi_D \in r_D(\pa_{\Pi_D})$ for every $D \in \bm{D}'$. The set of (full) $\R$-rational policy profiles in $\mec{\model}$ are the $\R$-\textbf{rational outcomes} of the game {(where} we tend to drop $\R$ from the terms above when unambiguous) and is denoted by $\R(\mec{\model})$\label{def:ratoutcomes}.
\end{definition}

In general, the $\R$-rational outcomes in $\mec{\model}$ therefore define a \emph{set} of distributions $\{\Pr^{\bm{\pi}}\}_{{\bm{\pi}} \in \R(\mec{\model})}$ over the variables $\bm{V}$ where $\Pr^{\bm{\pi}}(\bm{v} ; \bm{\theta}) = \Pr(\bm{v} \mid \mecvals)$ for $\mecvals_{\bm{D}} = {\bm{\pi}}$ and $\mecvals_{\bm{V} \setminus \bm{D}} = \bm{\theta}$, which can be extended over the mechanised MAID as $\Pr(\bm{v}, \mecvals) = \Pr(\bm{v} \mid \mecvals)\delta(\mecvars {,} \mecvals)$.
We can thus view a mechanised MAID as a set of BNs induced by the $\R$-rational outcomes.\footnote{We continue, however, to highlight different variable types in our diagrams (rather than drawing each variable as a chance variable) for clarity of exposition.}
The key point here is that rationality relations form a principled, formal, and highly general way to model the inherent non-determinism in a game, in order to render the model suitable for causal reasoning.

Note that Definition \ref{def:rational_outcome} is related to the notion of a \emph{solution} in cyclic causal models \cite{Bongers2016,Ahsan2022}. In both formalisms, a solution corresponds to a joint probability distribution consistent with all cyclic relationships, and in general a model may have many or no solutions. This, in turn, will impact the answers to the causal queries that we might wish to ask (as we explain further in Section \ref{sec:causality_in_games}). Similarly, we can view the rational responses at a decision variable as a \emph{credal set} \cite{Levi1980}, and therefore the resultant model as a form of credal network \cite{Cozman2000}, where the imprecise probabilities arise from the fact that there may be more than one rational outcome in a game. %

\begin{remark}
    Often, agents might only be \emph{boundedly} rational \cite{Simon1957}. For example, due to computational costs, agents might seek only to satisfice (rather than optimise) their expected utility, leading to $\epsilon$-approximate variations of equilibria concepts in which each agent cannot improve their expected utility by more than $\epsilon > 0$ by deviating \cite{daskalakis2006note}. These bounded rationality conditions can also be captured using rationality relations.
\end{remark}

\subsection{Relevance}
\label{sec:relevance}

A natural question to ask is which edges in the mechanised graph $\mec{\graph}$ are necessary. In other words, which elements of $\Pa_{\Pi_D}$ are necessary for computing the rational response $r_D(\pa_{\Pi_D})$? In the mechanised game shown in Figure \ref{fig:mechanised_game:maid}, for example, we don't need to know $\theta_{U^2}$ in order to compute the set of best responses $\pi_{D^1}$ if we are given $\pi_{D^2}$. 
Whenever a mechanism variable $\mecvar_{V}$ is found to be irrelevant to $\Pi_D$ in this sense, we may remove the edge $\mecvar_{V} \to \Pi_D$ to make this independence explicit. Removing all irrelevant edges results in a subgraph $\mecr{\graph}$ of $\mec{\graph}$ that is \emph{minimal} with respect to $\R$. 

In the extreme case where all decision rules are chosen independently from all other mechanism variables, then all edges $\mathscr{E}'$ between mechanism variables can be pruned\label{def:minmec}. We call this subgraph of $\mec{\graph}$ the \emph{independent mechanised graph}\footnote{\citet{dawid2002influence} call this diagram the extended influence diagram, but they do not investigate the dependencies and relationships between mechanisms.} $\meczero{\graph}$\label{def:indextmaid}. Game-theoretic models thereby break the \emph{independent causal mechanism} assumption, which states that mechanisms should be causally and probabilistically independent of each other \cite{peters2017elements}.

Formally, we have the following definitions, which are similar in spirit to those proposed in earlier work \cite{koller2003multi,Nielsen1999,milch2008ignorable}, but differ in that they consider the case of non-deterministic rationality relations that encode the solutions of a game.

\begin{definition}
    \label{def:relevance}
    Given a mechanised MAID $\mec{\model}$ with rationality relations $\R$, $\mecvar_{V} \in \Pa_{\Pi_D}$ is $\R$-\textbf{relevant} to $\Pi_D$ if there exists $\pa_{\Pi_D} \neq \pa_{\Pi_D}'$ such that $r_D(\pa_{\Pi_D}) \neq r_D(\pa_{\Pi_D}')$, where $\pa_{\Pi_D}$ and $\pa_{\Pi_D}'$ differ only on $\mecvar_{V}$.
\end{definition}

\begin{definition}
    \label{def:minimality}
    Given a mechanised MAID $\mec{\model}$ with rationality relations $\R$, we say that its mechanised graph $\mec{\graph}$ is $\R$-\textbf{minimal}, denoted by $\mecr{\graph}$, when it contains an edge $\mecvar_{V} \to \Pi_D$ if and only if $\mecvar_{V}$ is $\R$-relevant to $\Pi_D$. When $\mecr{\graph}$ is restricted to the mechanism variables $\mecvars$, we refer to it as the $\R$-\textbf{relevance graph}, denoted $\relr{\graph}$.\label{def:rel-graph}
\end{definition}

Definition \ref{def:relevance} is a property of a \emph{game} after mechanising it via rationality relations. Therefore, an immediate follow-up question is whether given some choice of $\R$, we can identify $\R$-relevance (and hence prune edges in order to find the $\R$-minimal graph) for any MAID $\model = (\graph, \bm{\theta})$ simply by appealing to its underlying \emph{graph} $\graph$, and the form of $\R$. For many natural choices of $\R$ this is indeed the case, and we can derive sound and complete graphical criteria for identifying $\R$-relevance. For a given $\R$, we refer to these criteria as $\R$\emph{-reachability}. For example, the graphical criteria defining $\R^{\BR}$-reachability in Proposition \ref{prop:R^BR-reachability} enable us to remove the dotted edges $\Theta_{U^1} \to \Pi_{D^2}$ and $\Theta_{U^2} \to \Pi_{D^1}$ in Figure \ref{fig:mechanised_game:maid}.

\begin{proposition}
    \label{prop:R^BR-reachability}
    $\mecvar_{V}$ is $\R^{\BR}$-relevant to $\Pi_{D}$ if and only if $\mecvar_{V} \not\perp_{\meczero{\graph}} \bm{U}^i \cap \Desc_{D} \mid D, \Pa_{D}$ or $\mecvar_{V} \not\perp_{\meczero{\graph}} \Pa_D$, where if $D \in \bm{D}^i$, then $\bm{U}^i \cap \Desc_{D} \neq \varnothing$.
\end{proposition}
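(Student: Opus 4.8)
The plan is to reduce $\R^{\BR}$-relevance of a mechanism variable $\mecvar_V$ to $\Pi_D$ to two statements about requisite probability nodes in $\meczero{\graph}$ — one about which decision contexts of $D$ are reachable, the other about the conditional expected utility at $D$ — and then discharge each via Lemma~\ref{lem:reachability}. To set this up, I would first unpack the best-response relation. Write $\Pr^{\pa_{\Pi_D}}$ for the object-level distribution induced by setting the mechanism variables other than $\Pi_D$ to $\pa_{\Pi_D}$ (the quantities below do not depend on how $\Pi_D$ is completed, so I suppress it). The objective maximised in \eqref{eq:BR_relation} is $\sum_{\pa_D}\Pr^{\pa_{\Pi_D}}(\pa_D)\sum_d \pi_D(d\mid\pa_D)\,Q(\pa_D,d)$ with $Q(\pa_D,d)\coloneqq\sum_{U\in\bm{U}^i}\expect^{\pa_{\Pi_D}}[U\mid\pa_D,d]$, where $Q$ is well-defined and independent of $\pi_D$ for every $\pa_D$ with $\Pr^{\pa_{\Pi_D}}(\pa_D)>0$ (since $\Pi_D$ has $D$ as its only object-level child, $\Pi_D$ is not an ancestor of $\Pa_D$, and conditioning on $D$ screens $\Pi_D$ off from $\Desc_D$). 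Since this objective is linear in each row $\pi_D(\cdot\mid\pa_D)$,
\[
    r^{\BR}_D(\pa_{\Pi_D}) = \Big\{\pi_D : \mathrm{supp}\,\pi_D(\cdot\mid\pa_D)\subseteq{\textstyle\argmax_d}\,Q(\pa_D,d)\ \text{for all}\ \pa_D\ \text{with}\ \Pr^{\pa_{\Pi_D}}(\pa_D)>0\Big\},
\]
and, because $D\perp_{\meczero{\graph}}\bm{V}\setminus(\Desc_D\cup\{D\})\mid\Pa_D$ makes $\expect[U\mid\pa_D,d]$ constant in $d$ for $U\in\bm{U}^i\setminus\Desc_D$, we get $\argmax_d Q(\pa_D,d)=\argmax_d\expect^{\pa_{\Pi_D}}[\sum_{U\in\bm{U}^i\cap\Desc_D}U\mid\pa_D,d]$, which is determined by $\Pr^{\pa_{\Pi_D}}(\bm{U}^i\cap\Desc_D\mid D,\Pa_D)$. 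Hence $r^{\BR}_D(\pa_{\Pi_D})$ depends on $\pa_{\Pi_D}$ only through the set $\{\pa_D : \Pr^{\pa_{\Pi_D}}(\pa_D)>0\}$ and the distribution $\Pr^{\pa_{\Pi_D}}(\bm{U}^i\cap\Desc_D\mid D,\Pa_D)$.

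For the ``only if'' direction I would argue by contraposition. If $\mecvar_V\perp_{\meczero{\graph}}\Pa_D$ and $\mecvar_V\perp_{\meczero{\graph}}\bm{U}^i\cap\Desc_D\mid D,\Pa_D$, then by Lemma~\ref{lem:reachability} (together with Definition~\ref{def:requisite}) $V$ is a requisite probability node for neither $\Pr(\Pa_D)$ nor $\Pr(\bm{U}^i\cap\Desc_D\mid D,\Pa_D)$; therefore varying the $\mecvar_V$-coordinate of $\pa_{\Pi_D}$ changes neither of the two quantities just identified, hence leaves $r^{\BR}_D(\pa_{\Pi_D})$ unchanged, and so $\mecvar_V$ is not $\R^{\BR}$-relevant to $\Pi_D$ by Definition~\ref{def:relevance}.

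For the ``if'' direction, suppose a disjunct holds. If $\mecvar_V\not\perp_{\meczero{\graph}}\Pa_D$, then — the conditioning set being empty — the active path is directed, of the form $\mecvar_V\to V\to\cdots\to P^\ast$ with $P^\ast\in\Pa_D$; I would witness relevance by making this path a deterministic copy channel so that $P^\ast$ is a non-constant deterministic function of $V$, choosing the two admissible values of $\mecvar_V$ (two CPDs for $V$, or two decision rules if $V$ is a decision) to force $P^\ast$ to two distinct values, fixing the remaining mechanisms so that a chosen context $\pa_D^\ast$ has positive probability under one value and probability zero under the other, and — using $\bm{U}^i\cap\Desc_D\neq\varnothing$ — routing some utility in $\bm{U}^i\cap\Desc_D$ deterministically out of $D$ so that an action at $\pa_D^\ast$ is strictly suboptimal; two decision rules that agree off $\pa_D^\ast$ then lie in $r^{\BR}_D$ under exactly one of the two assignments. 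If instead $\mecvar_V\not\perp_{\meczero{\graph}}\bm{U}^i\cap\Desc_D\mid D,\Pa_D$, this is precisely the generalisation of the ``if'' direction of Proposition~\ref{prop:s-reachability}: K\&M's construction for a decision-rule variable $\Pi_{D'}$ carries over, because (a) $\mecvar_V$ is, like $\Pi_{D'}$, a root of $\meczero{\graph}$ with sole object-level child $V$, whose two admissible values can always be taken to be two CPDs/decision rules that force $V$ to prescribed values, and (b) every other mechanism variable is a root with a single child and so never lies in the interior of a path, so the relevant d-separations in $\meczero{\graph}$ coincide with those in K\&M's graph $\graph'$. Either way $\mecvar_V$ is $\R^{\BR}$-relevant to $\Pi_D$.

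The step I expect to be the main obstacle is this second case of the ``if'' direction — turning sensitivity of $\Pr(\bm{U}^i\cap\Desc_D\mid D,\Pa_D)$ to $\mecvar_V$ into sensitivity of the $\argmax$, and hence of the best-response set — which requires a careful simultaneous choice of the parameterisation along and off the d-connecting path, of the utility functions, and of the decision context, so that the dependence is both propagated to and made strict at $D$ while the relevant context stays reachable. Exactly this is carried out in K\&M's proof of $s$-reachability, so I would invoke it once observations (a) and (b) are in place; the genuinely new content is then the treatment of parameter variables $\Theta_V$ and of the second disjunct $\mecvar_V\not\perp_{\meczero{\graph}}\Pa_D$ (reachability of decision contexts), which $s$-reachability does not capture, together with the $\R^{\BR}$-rationality bookkeeping of the first paragraph.
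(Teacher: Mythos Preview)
Your proposal is correct and follows essentially the same approach as the paper: both unpack the best-response relation into dependence on $\Pr^{\bm{\pi}}(\bm{U}^i\cap\Desc_D\mid D,\Pa_D)$ and $\Pr^{\bm{\pi}}(\Pa_D)$, derive soundness from d-separation via Lemma~\ref{lem:reachability}, and for completeness split into the two disjuncts, invoking K\&M's $s$-reachability construction for the first and building a copy-channel witness along the directed path $\mecvar_V\pathto Z\in\Pa_D$ (together with a path $D\pathto U$) for the second. Your observation that only the \emph{support} of $\Pr^{\bm{\pi}}(\Pa_D)$ matters is a slight sharpening of what the paper states but is exactly what its completeness construction exploits.
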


By applying $\R$-reachability to $\mec\graph$, we can find the $\R$-minimal mechanised graph $\mecr{\graph}$ and thus the $\R$-relevance graph $\relr{\graph}$. This latter object will be one that we make repeated use of, and can be viewed as a generalisation of K\&M's concept of a relevance graph simpliciter, to include all mechanism variables (as opposed to just those for decision variables) and for use with any $\R$.\footnote{Note that the direction of the edges of the relevance graphs in this paper are the same as in \cite{koller2003multi}, but are reversed compared with \cite{koller2001multi} and \cite{Hammond2021}. K\&M also refer to $V$ being relevant to/reachable from $D$, whereas we phrase this in terms of $\mecvar_V$ being relevant to/reachable from $\Pi_D$.} {To avoid confusion, we refer to K\&M's relevance graph as an \emph{$s$-relevance} graph. By generalising $s$-relevance (Definition \ref{def:strategic-relevance}) and $s$-reachability (Proposition \ref{prop:s-reachability}) to all mechanism variables, we can see in Figure \ref{fig:mechanised_game:maid}, for example, that $\Theta_T$ is not $s$-relevant to $\Pi_{D^1}$, though it is $\R^\BR$-relevant.}

Our generalisation of the idea of $s$-relevance to different rationality relations parallels our development of various equilibrium refinements within MAIDs (introduced in Section \ref{sec:equilibrium_refinements}), as opposed to the singular concept introduced by K\&M. For instance, we shall see in Section \ref{sec:equilibrium_refinements} that $s$-relevance corresponds to the concept of subgame perfectness. Moreover, use of the full mechanised graph as opposed to simply its restriction to the decision rule variables is not only important for reasoning about game-theoretic notions such as equilibria and subgames, but will also be critical for reasoning about causality in games where agents may adapt their policies in response to interventions on any mechanism variable (not just those representing decision rules).

\subsubsection*{Generalising the Soundness and Completeness Results}
We conclude this section by noting that it is possible to generalise the proof procedures for Propositions \ref{prop:R^BR-reachability} and \ref{prop:s-reachability} to other choices of $\R$. The key step in doing so is to identify, for any decision variable $D$, a `sufficient' (sound) but `minimal' (complete) set of queries $\mathcal{Q}_D \subseteq \mathcal{Q}$ whose values, given some $\mecvals = ({\bm{\pi}}, \bm{\theta})$, completely determine each $r_{D}$.
Formally, let $\mathcal{Q}(\mecvals)$\label{def:queries} be the set of probabilistic queries $\Pr^{\bm{\pi}}( \bm{x} \mid \bm{y} ; \bm{\theta})$ over $\bm{X}, \bm{Y} \subseteq \bm{V}$ that can be formed in a MAID given $\mecvals = ({\bm{\pi}}, \bm{\theta})$. Note that we can use such queries to express expected utilities, among many other things.
For any decision variable $D$, we are looking for a sufficient but minimal subset of queries $\mathcal{Q}_D \subseteq \mathcal{Q}$ and a truth-valued function $g_D$ such that:
\begin{equation}
    \label{eq:relation_restriction}
    \pi_{D} \in r_{D}(\mecvals_{\bm{V}\setminus \{D\}}) ~~\Leftrightarrow~~ g_D\big(\mathcal{Q}_D(\mecvals), \dom ( \bm{V} ) \big).
\end{equation}
In this setting, asking whether $\mecvar_{V}$ is $\R$-relevant to $\Pi_D$ reduces to asking whether the choice of $\mecval_V$ may affect some $\Pr^{\bm{\pi}}( \bm{x} \mid \bm{y}) \in \mathcal{Q}_D$, which is, in turn, equivalent to asking whether $V$ is a \emph{requisite probability node} for $\Pr^{\bm{\pi}}( \bm{x} \mid \bm{y})$ {\cite{shachter2013bayes}, introduced in Definition \ref{def:requisite}.} Whether a variable is a requisite probability node can be determined using a well-established graphical criterion {\cite{Geiger1990}, given as Lemma \ref{lem:reachability}.}

We can use this criterion to establish graphical criteria for $\R$-reachability, which, given a judicious choice of $\mathcal{Q}_D$, will be sound and complete. For example, the graphical criteria in Propositions \ref{prop:s-reachability} ($s$-reachability) and \ref{prop:R^BR-reachability} ($\R^{\BR}$-reachability) correspond to choices: 
\begin{align*}
    \mathcal{Q}^s_D &= \{ \Pr^{{\bm{\pi}}}(\bm{u}^i \cap \desc_{D} \mid d, \pa_{D}) \},\\
    \mathcal{Q}^\BR_D &= \{ \Pr^{{\bm{\pi}}}(\bm{u}^i \cap \desc_{D} \mid d, \pa_{D}), \Pr^{\bm{\pi}}(\pa_{D}) \},
\end{align*}
respectively. With such a soundness and completeness result in hand, we can identify an $\R$-relevance graph as follows:
\begin{center}
    \begin{tabular}{l l l}
        & $\relr{\graph}$ contains an edge $\mecvar_{V} \to \Pi_D$ & \\
        $\Leftrightarrow$ & $\mecvar_{V}$ is $\R$-relevant to $\Pi_D$ & by Definition \ref{def:minimality}\\
        $\Leftrightarrow$ & $r_D(\pa_{\Pi_D})$ may vary with the value of $\mecvar_{V}$ & by Definition \ref{def:relevance}\\
        $\Leftrightarrow$ & $\mathcal{Q}_D(\pa_{\Pi_D}, \pi_D)$ may vary with the value of $\mecvar_{V}$ & by (\ref{eq:relation_restriction})\\
        $\Leftrightarrow$ & $V$ is a requisite probability node for some $\Pr^{\bm{\pi}}( \bm{x} \mid \bm{y}) \in \mathcal{Q}_D$ & by Definition \ref{def:requisite}\\
        $\Leftrightarrow$ & $\mecvar_{V} \not\perp_{\meczero{\graph}} \bm{X} \mid \bm{Y}$ for some $\Pr^{\bm{\pi}}( \bm{x} \mid \bm{y}) \in \mathcal{Q}_D$ & by Lemma \ref{lem:reachability}
    \end{tabular}
\end{center}

\section{Causality in Games}
\label{sec:causality_in_games}

{Many types of queries may be of interest}  in game-theoretic scenarios. For instance, recalling Example \ref{ex:job_market}, we could ask questions corresponding to:

\begin{itemize}
    \item[1)] Predictions, such as a)\label{query:1a} `Given that the worker went to university, what is their wellbeing?' or b)\label{query:1b} `Given that the worker always decides to go to university, what is their wellbeing?'
    \item[2)] Interventions, such as a)\label{query:2a} `Given that the worker is forced to go to university, what is their wellbeing?' or b)\label{query:2b} `Given that the worker goes to university if and only if they are selected via a lottery system, what is their wellbeing?'
    \item[3)] Counterfactuals, such as a)\label{query:3a} `Given that the worker didn’t go to university, what would be their wellbeing if they had?' or b)\label{query:3b} `Given that the worker never decides to go to university, what would be their wellbeing if they always decided to go to university?'
\end{itemize}

Although these queries are ostensibly similar, they belong to different levels of the causal hierarchy. 
Predictions can be answered using (mechanised) MAIDs, which, as associational models, reside on level one. 
However, in order to reason about interventions and counterfactuals in games, we require models on levels two and three respectively. To this end, we introduce \emph{causal games} (CGs) and \emph{structural causal games} (SCGs). These can be viewed as generalising MAIDs to the causal setting, or CBNs and SCMs to the game-theoretic setting. 

\begin{figure}[h]
    \centering
    \begin{subfigure}[t]{0.4\linewidth}
        \centering
        \renewcommand{\arraystretch}{2}
        \begin{tabular}[t]{c c c}
            \tikzmark{v_end} SCM & SCIM & SCG\\
            CBN & CID & CG \\
            B\tikzmark{start}N & ID & MA\tikzmark{h_end}ID\\
        \end{tabular}
        \renewcommand{\arraystretch}{1}
        \begin{tikzpicture}[overlay,remember picture]
            \draw[->] let \p1=(start), \p2=(v_end) in ($(\x1,\y1)+(-1.25,0)$) -- node[label=left:level] {} ($(\x1,\y2)+(-1.25,0)$);
            \draw[->] let \p1=(start), \p2=(h_end) in ($(\x1,\y1)+(0,-0.75)$) -- node[label=below:agents] {} ($(\x2,\y1)+(0,-0.75)$);
        \end{tikzpicture}
    \end{subfigure}
    \begin{subfigure}[t]{0.3\linewidth}
        \centering
        \vspace{-0.5cm}
        \begin{tabular}[t]{l l}
            B & Bayesian\\
            C & Causal\\
            G & Game\\
            I & Influence\\
            M & Model\\
            MA & Multi-Agent\\
            N & Network\\
            S & Structural
        \end{tabular}
    \end{subfigure}
    \caption{{In this paper, we introduce CGs and SCGs}. The causal hierarchy (associational, interventional, and counterfactual) forms the {vertical} axis and the number of agents ($0$, $1$, and $n$) forms the {horizontal} axis. {Note that all models in this diagram can also be mechanised.}
    }
    \label{fig:names}
\end{figure}
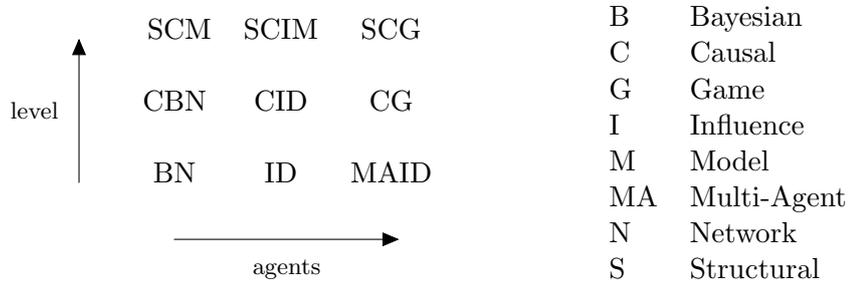

The connections between all of these models and their various acronyms are displayed in Figure \ref{fig:names}. Importantly, the models in each row are a special case of those in the row below (and the models in each column generalise those in the column to its left). As such, everything defined with respect to MAIDs (such as the mechanised games of Definition \ref{def:mechanised_maid} and the equilibrium refinements we introduce in Section \ref{sec:subgames_eqs}) are also well-defined in both CGs and SCGs.

{Note that as well as asking queries regarding the object-level variables -- such as queries \hyperref[query:1a]{1a}, \hyperref[query:2a]{2a}, and \hyperref[query:3a]{3a} -- \emph{after} some policy profile $\bm{\pi}$ has been chosen, mechanised games allow us to also ask queries regarding the mechanism variables -- such as queries \hyperref[query:1b]{1b}, \hyperref[query:2b]{2b}, and \hyperref[query:3b]{3b} -- \emph{before} fixing a policy. While this distinction between `post-policy' and `pre-policy' queries is less significant in the case of predictions, we shall see in Section \ref{sec:interventions} that this difference corresponds to whether agents can adjust their policies in response to an intervention or not.}

{This distinction, which is critical in game-theoretic settings, does not apply to standard causal models, as they do not contain strategic, decision-making agents. Similarly, standard game-theoretic models do not explicitly represent this distinction either, as the dependencies between agents' decisions and other aspects of the game are not explicitly captured. Computationally, pre-policy interventions correspond to altering the original game then calculating the outcomes, and post-policy interventions correspond to calculating the outcomes of the original game, and then altering them. In a sense, these latter queries are a more natural analogue of existing work; in this work we unify both types of query using the same formalism.}

In the remainder of this section, we show how the six types of causal queries represented by the examples above (each of which is written formally in \cref{table:queries}) can be answered using causal games. Doing so requires us to identify which level of the causal hierarchy a query belongs to, and to assess whether the query is pre-policy or post-policy. It {is} also simple to combine pre- and post-policy queries using mechanised games, though for clarity {and brevity} we do not do so here.

\setcounter{table}{0}
\begin{table}[h]
    \centering
    \begin{tabular}{p{3cm}p{3cm}p{3cm}p{3cm}}
        \toprule
            & 1) Prediction & 2) Intervention & 3) Counterfactual\\
        \midrule
        a) Post-policy & $\Pr^{\bm{\pi}}(u^1 \mid g)$ & $\Pr^{\bm{\pi}}(u^1_g)$ & $\Pr^{\bm{\pi}}(u^1_g \mid \neg g)$\\
        b) Pre-policy & $\Pr(u^1 \mid \bar{\pi}_{D^1})$ & $\Pr(u^1_{\hat{\pi}_{D^1}})$ &  $\Pr(u^1_{\bar{\pi}_{D^1}} \mid \tilde{\pi}_{D^1})$\\
        \bottomrule
    \end{tabular}
    \caption{Examples of the queries we may ask in causal models of games, corresponding to those listed at the top of this section.
    Recall that $\dom(D^1) = \{g, \neg g \}$, indicating whether or not the worker goes to university, and that $\bar{\pi}_{D^1}, \hat{\pi}_{D^1}, \tilde{\pi}_{D^1}$ denote possible values of the decision rule variable $\Pi_{D^1}$. {For example, $\bar{\pi}_{D^1}$ in query \hyperref[query:1b]{1b} is the decision rule in which the worker always decides to go to university, i.e., $\bar{\pi}_{D^1}(D_1 \mid T) = \delta(D_1, g)$. As introduced in Section \ref{sec:background}, $\Pr(\bm{x}_{\bm{y}})$ denotes the probability of $\bm{x}$ given a hard intervention $\do(\bm{Y} = \bm{y})$ and $\Pr(\bm{x}_{\bm{y}} \mid \bm{z})$ represents the counterfactual probability of $\bm{x}$ had $\bm{y}$ been true, given that (in fact) $\bm{z}$ was true.} 
    }
    \label{table:queries}
\end{table}

\subsection{Predictions}
\label{sec:predictions}

{Each policy profile ${\bm{\pi}}$ in a MAID $\model$ induces a BN with joint distribution $\Pr^{\bm{\pi}}(\bm{V}; \bm{\theta})$. We can therefore easily compute the probability of $\bm{x}$ given some observation $\bm{z}$, written $\Pr^{\bm{\pi}}(\bm{x} \mid \bm{z})$, under a given policy profile $\bm{\pi}$. Note that the distribution $\Pr^{\bm{\pi}}(\bm{V}; \bm{\theta})$ in the MAID can equivalently be written in the mechanised MAID as $\Pr(\bm{V} \mid \mecvals)$ with $\mecvals=(\bm{\pi}, \bm{\theta})$.}

However, in game-theoretic settings, we typically assume only that a \emph{rational outcome} of the game will be chosen, not some unique ${\bm{\pi}}$. Moreover, we may not have any reason to favour one rational outcome over another, implying that we ought to evaluate queries with respect to a \emph{set} of policy profiles. In the definition below, we therefore consider the distribution $\Pr^{\bm{\pi}}(\bm{x} \mid \bm{z})$ induced by each rational outcome that is consistent with the observation $\bm{z}$.
Note that, as remarked in Section \ref{sec:mechanised_maids}, there may be many or no such rational outcomes.
\begin{definition}
    \label{def:prediction_query}
    Given a mechanised MAID $\mec{\model}$ with rationality relations $\R$, the \textbf{answer to a conditional query} of the probability of $\bm{x}$ given observation $\bm{z}$ is given by the set $\Pr^{\R}(\bm{x} \mid \bm{z}) \coloneqq \big\{ \Pr^{\bm{\pi}}(\bm{x} \mid \bm{z}) \big\}_{{\bm{\pi}} \in \R(\mec{\model} \mid \bm{z})}$ where $\R(\mec{\model} \mid \bm{z}) \coloneqq \{{\bm{\pi}} \in \R(\mec{\model}) : \Pr^{\bm{\pi}}(\bm{z}) > 0\}$ is the set of \textbf{conditional rational outcomes}. In general, $\bm{Z} \subseteq \bm{V} \cup \mecvars$ can include mechanism variables, and so we compute $\Pr^{\bm{\pi}}(\bm{x} \mid \bm{z})$ in $\model$ as $\Pr(\bm{x} \mid \bm{z}, \mecvals')$ in $\mec{\model}$, where $\mecvars' = \mecvars \setminus \bm{Z}$ and $\mecvals_{\bm{D}} = {\bm{\pi}}$.
\end{definition}

    More generally, we can view queries in games as \emph{first-order} queries defined over formulae in which ${\bm{\pi}}$ is a free variable, such as $\varphi({\bm{\pi}}) \equiv \Pr^{\bm{\pi}}(\bm{x} \mid \bm{z})$. The answers to these queries are therefore only well-defined when this free variable becomes bound, or when {considering a set of answers, as in Definition \ref{def:prediction_query}. For example, we can} bind ${\bm{\pi}}$ by quantifying over it as in $\exists {\bm{\pi}} \in \R(\mec{\model} \mid \bm{z}).\, \big( \varphi({\bm{\pi}}) \sim q \big)$ where $\sim \, \in \{<, \leq, =, \geq, > \}$ and $q \in [0,1]$ (as is often done in first-order logics for reasoning about and verifying multi-agent systems \cite{Chatterjee2010,Wooldridge2016,Kwiatkowska2020a}), or by returning bounds such as $\max_{_{{\bm{\pi}} \in \R(\mec{\model} \mid \bm{z})}} \varphi({\bm{\pi}})$ (as is often done in credal networks \cite{Cozman2000}). {Above, we quantify over $\R(\mec{\model} \mid \bm{z})$, but it is also possible to quantify over any desired set of policies, or even to posit a prior distribution $\Pr({\bm{\Pi}})$ over policies. These queries strictly generalise those in BNs and models such as settable systems \cite{White2009,White2014}, which effectively consider only a single instantiation of $\bm{\Pi}$.}

By way of illustration, let us return to Example \ref{ex:job_market} and query \hyperref[query:1a]{1a} from earlier in this section. We can interpret this question as asking about the expected utility of the worker given the observation that they went to university, written $\expect_{\bm{\pi}} [U^1 \mid g]$ for some policy ${\bm{\pi}}$. For the worked examples in this section, we assume that $p = \Pr(T=h) = \frac{1}{2}$ and that the automated hiring system and the worker are playing best responses to one another, i.e., $\R = \R^{\BR}$. The various rational outcomes induced by this choice (i.e., the NEs of the game) are:
{\begin{enumerate}
    \item The worker always chooses $\neg g$. The hiring system chooses $j$ if the worker chose $\neg g$, otherwise they choose $j$ with any probability $q \in [0,1]$.
    \item The worker chooses $\neg g$ if $T = \neg h$, and otherwise chooses $g$ with probability $\frac{1}{2}$. The hiring system chooses $j$ if the worker chose $g$, otherwise they choose $j$ with probability $\frac{4}{5}$.
    \item The worker always chooses $g$. The hiring system chooses $j$ if the worker chose $g$, otherwise they choose $j$ with any probability $q \in [0,\frac{3}{5}]$.
\end{enumerate}}
{The expected utilities for the worker under these rational outcomes are 5, 4, and $\frac{7}{2}$, respectively}. In query \hyperref[query:1a]{1a}, the conditional rational outcomes $\R(\mec{\model} \mid g)$ are the NEs consistent with observing $D^1 = g$. To answer the query, we must therefore compute $\Pr^{\R}(u^1 \mid g)$, which yields $\{ \expect_{\bm{\pi}} [U^1 \mid g] \}_{{\bm{\pi}} \in \R(\mec{\model} \mid g)} = \{\frac{7}{2}, 4\}$.

As noted above, Definition \ref{def:prediction_query} can also be employed when the observations made are of the \emph{mechanism variables}. For example, in query \hyperref[query:1b]{1b} we condition on the observation that the worker's strategy is given by $\delta(D^1 {,} g)$, which we refer to as $\bar{\pi}_{D^1}$, i..e, the worker \emph{always} decides to go to university. Based on this, we can use the mechanised MAID to compute $\Pr^{\R}(u^1 \mid \bar{\pi}_{D^1})$ and thus that $\{ \expect [ U^1 \mid \bar{\pi}_{D^1} ] \}_{{\bm{\pi}} \in \R(\mec{\model} \mid \bar{\pi}_{D^1})} = \{ \frac{7}{2} \}$. Note that this set is distinct from the answer to query \hyperref[query:1a]{1a} as observations of mechanism and object-level variables provide us with different information, i.e.,\ $\R(\mec{\model} \mid \bar{\pi}_{D^1}) \neq \R(\mec{\model} \mid g)$. In particular, observations of mechanism variables serve primarily to rule out certain rational outcomes by conditioning on decision rules (conditioning on \emph{parameter} variables tells us nothing as they have deterministic distributions and no parents).

One advantage of computing predictions in MAIDs (as opposed to in EFGs, for instance) is that we may exploit the conditional independencies in the graph. For example, if we were interested in how likely a worker is to be hard-working given that they went to university and were hired, then $T \perp_{\graph} D^2 \mid D^1$ implies that $\Pr^{\bm{\pi}}(h \mid g, j) = \Pr^{\bm{\pi}}(h \mid g)$ for any policy profile ${\bm{\pi}}$. When answering queries over object-level variables using mechanised MAIDs, we implicitly condition on the values of the mechanism variables to represent the fact the game and policy under consideration are fixed. For example, the query $\Pr^{\bm{\pi}}(h \mid g, j)$ in $\model$ is given by $\Pr(h \mid g, j, \mecvals)$ in $\mec{\model}$, where $\mecvals_{\bm{D}} = {\bm{\pi}}$. Hence, although $T \not\perp_{\mec{\graph}} D^2 \mid D^1$, we do have $T \perp_{\mec{\graph}} D^2 \mid D^1, \mecvars$, as expected.

\subsection{Interventions}
\label{sec:interventions}

Interventional queries concern the effect of causal influences from outside a system. This becomes especially interesting in the case of games, when interventions affect not only the environment but also how the self-interested agents adapt their policies in response. In order to answer such queries, the edges in a MAID must reflect the causal structure of the world.
This gives rise to the following definition, which can be viewed simply as a CBN without parameters for the decision variables. Note that as causal games are a form of MAID, they also support the associational queries introduced in the preceding section (just as CBNs may also be used to compute both interventional and associational queries).

\begin{definition}
    \label{def:cg}
        A \textbf{causal game (CG)} $\model = (\graph, \bm{\theta})$ is a MAID such that for \emph{any} parameterisation of the decision variable CPDs $\bm{\pi}$, the induced model with joint distribution $\Pr^{\bm{\pi}}(\bm{V})$ is a CBN.
\end{definition}

Unlike CBNs, CGs let us ask about the effect of an intervention \emph{before} or \emph{after} a policy profile has been selected, which we refer to as \emph{pre-} and \emph{post-policy} queries respectively. Asking about the effect of an intervention after a particular policy profile ${\bm{\pi}}$ has been selected (as in query \hyperref[query:2a]{2a}) is simply the same as performing an interventional query on the CBN with joint distribution $\Pr^{\bm{\pi}}$. 
Asking about the effect of an intervention \emph{before} a policy profile has been selected (as in query \hyperref[query:2b]{2b}) means that agents are made aware of the intervention before selecting their decision rules, and thus they may react to its effects. In other words, the intervention can be viewed as producing a slightly different game that the agents then (knowingly) play.

Our key observation is that pre-policy interventions can be modelled as interventions on the \emph{mechanism variables} in the mechanised CG, which ensures that the effects are propagated through the processes via which agents select their decision rules. This is because the additional mechanism variables and their outgoing edges in a mechanised CG represent causal (though potentially non-deterministic) processes via which parameterisations for the object-level variables are selected.
Post-policy interventions, in turn, can be modelled as standard interventions on object-level variables.
We write $\mec{\model}_{\I}$ for an intervention $\I$ which may contain both pre-policy and post-policy interventions.

This unification of pre- and post-policy interventions is one of the key benefits of mechanised models.
Indeed, post-policy interventions, and pre-policy interventions on parameter variables, are defined exactly as in CBNs, while a pre-policy intervention on a decision rule variable $\Pi_D$ corresponds to replacing $r_D : \dom(\Pa_{\Pi_D}) \to \dom(\Pi_D)$ by some new $r^*_D : \dom(\Pa^*_{\Pi_D}) \to \dom(\Pi_D)$, where we may have $\Pa^*_{\Pi_D} \neq \Pa_{\Pi_D}$.
As for conditional queries, in our definition (which mirrors that introduced for cyclic causal models \cite{Bongers2016}, but where the cyclic dependencies are governed by relations instead of functions \cite{Ahsan2022}) we quantify over the set of rational outcomes that are consistent with the given intervention. 
\begin{definition}
    \label{def:interquery}
    Given a mechanised CG $\mec\model$ with rationality relations $\R$, the \textbf{answer to an interventional query} of the probability of $\bm{x}$ given intervention $\I$ on variables $\bm{Y}$ is given by the set $\Pr^{\R}(\bm{x}_{\I}) \coloneqq \big\{ \Pr^{\bm{\pi}}(\bm{x}_{\I}) \big\}_{{\bm{\pi}} \in \R(\mec{\model}_{\I})}$ where $\R(\mec{\model}_{\I})$ is the set of \textbf{interventional rational outcomes} in the mechanised MAID $\mec{\model}_{\I}$ with rationality relations $\R^* \coloneqq \{ r^*_D \}_{\Pi_{D} \in \bm{Y}} \cup \{ r_D \}_{\Pi_{D} \notin \bm{Y}}$ and parameters determined by $\Pr(\bm{\Theta}_{\I})$. Note that if $\I$ {is fully post-policy}, then the rational outcomes remain the same, i.e., $\R(\mec{\model}_{\I}) = \R(\mec{\model})$ when $\bm{Y} \subseteq \bm{V}$.
\end{definition}

To illustrate these ideas, let us return to Example \ref{ex:job_market} and queries \hyperref[query:2a]{2a} and \hyperref[query:2b]{2b}. Query \hyperref[query:2a]{2a} concerns a \emph{post-policy} intervention since the worker is forced to go to university unbeknownst to the firm's hiring system; in other words, the hiring system does not observe this fact before computing a decision rule. To compute the worker's expected utility, we must calculate $\Pr^{\R}(u^1_{g})$ and thus perform a hard intervention $\Do(D^1 = g)$ in the mechanised game (shown in Figure \ref{fig:CGintervention:1}). As the set of rational outcomes does not change under a post-policy intervention, we have that $\Pr^{\R}(u^1_{g}) = \big\{ \Pr^{\bm{\pi}}(u^1_{g}) \big\}_{{\bm{\pi}} \in \R(\mec{\model})}$, which results in $\big\{ \expect_{\bm{\pi}} [ U^1_g ] \big\}_{{\bm{\pi}} \in \R(\mec{\model})} = \{-\frac{3}{2}, \frac{7}{2}\}$. Note that unlike query \hyperref[query:1a]{1a}, the fact that $D^1 = g$ tells us nothing about the value of $T$, as it is causally upstream of the intervention on $D^1$. Therefore, the wellbeing of the worker may decrease because they may be sent to university even when they are lazy.

To answer query \hyperref[query:2b]{2b}, concerning a \emph{pre-policy} intervention, we must compute $\Pr(u^1_{\hat{\pi}_{D^1}})$ where $\hat{\pi}_{D^1}(g \mid h) = \hat{\pi}_{D^1}(g \mid \neg h) = \frac{1}{2}$ represents the aforementioned lottery system, which selects students to attend university randomly with probability $\frac{1}{2}$. This time, the firm modifies their hiring system after observing the intervention -- denoted by $\I$ and shown in Figure \ref{fig:CGintervention:2} -- but before the hiring system computes a decision rule, and so the new set of rational outcomes is given by $\R(\mec{\model}_{\I}) = \{ (\hat{\pi}_{D^1}, \pi_{D^2}) : \pi_{D^2} \in r_{D^2}(\pa_{\Pi_{D^2}}) \}$. In other words, we set $\Pi_{D^1} = \hat{\pi}_{D^1}$ using a hard intervention and then allow the hiring system to best respond to this decision rule using $r_{D^2}$. Note that $\R(\mec{\model}_{\I}) \neq \R(\mec{\model} \mid \hat{\pi}_{D^1}) = \varnothing$, as there is no NE in the game that contains decision rule $\hat{\pi}_{D^1}$. The lottery system removes any signalling effect of going to university, resulting in an optimal policy for the hiring system of always offering a job to the worker and expected utility $\big\{ \expect_{\bm{\pi}} [ U^1_{\hat{\pi}_{D^1}} ] \big\}_{{\bm{\pi}} \in \R(\mec{\model}_{\I})} = \{ \frac{17}{4} \}$.

\begin{figure}[h]
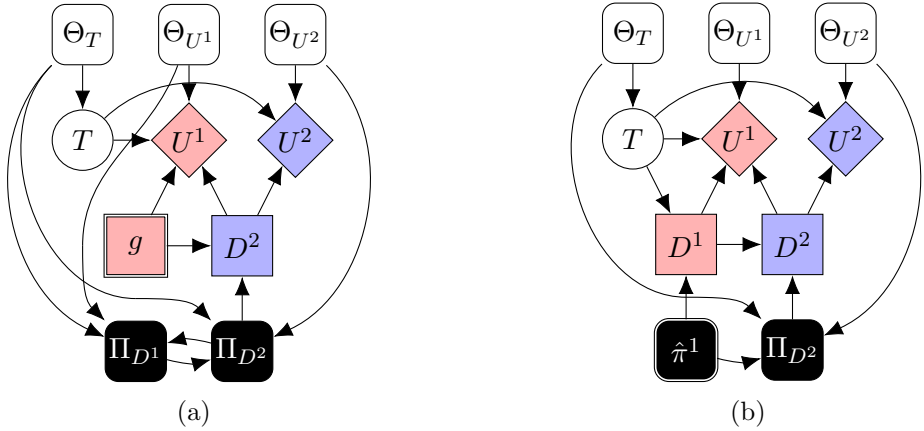

    \centering
    \begin{subfigure}[b]{0.45\linewidth}
        \centering
        \begin{influence-diagram}
        \node (X) [] {$T$};
        \node (U1) [utility, right = of X, player1] {$U^1$};
        \node (U2) [utility, right = of U1, player2] {$U^2$};
        \node (D1) [intervened, decision, below right = 1.4cm and 0.7cm of X, player1] {$g$};
        \node (D2) [decision, right = of D1, player2] {$D^2$};
        \edge [information] {D1} {D2};
        \path (X) edge[->, bend left=45] (U2);
        \edge {X} {U1};
        \edge {D1} {U1};
        \edge {D2} {U1,U2};
        \node (X_mec) [relevancew, above = of X] {$\Theta_T$};
        \node (U2_mec) [relevancew, above = of U2] {$\Theta_{U^2}$};
        \node (D1_mec) [relevanceb, below = of D1] {$\Pi_{D^1}$};
        \node (U1_mec) [relevancew, above = of U1] {$\Theta_{U^1}$};
        \node (D2_mec) [relevanceb, below = of D2] {$\Pi_{D^2}$};
        \edge {X_mec} {X};
        \edge {D2_mec} {D2};
        \edge {U1_mec} {U1};
        \edge {U2_mec} {U2};
        \node (space1) [minimum size=0mm, node distance=2mm, below = 0.7cm of D1, draw=none] {};
        \draw (X_mec) edge[,in=180,out=-135] (space1.center)
        (space1.center) edge[,->,out=0,in=135] (D2_mec);
        \node (space3) [minimum size=0mm, node distance=2mm, left = 0.7cm of D1, draw=none] {};
        \draw (U1_mec) edge[in=90,out=-110] (space3.center)
        (space3.center) edge[->,out=-90,in=135] (D1_mec);
        \path (D1_mec) edge[->, bend right=15] (D2_mec);
        \path (D2_mec) edge[->, bend right=15] (D1_mec);
        \draw (X_mec) edge[->,out=-135,in=155] (D1_mec);
        \draw (U2_mec) edge[->, out=-45, in=25] (D2_mec);
        \end{influence-diagram}
        \caption{}
        \label{fig:CGintervention:1}
    \end{subfigure}
    \begin{subfigure}[b]{0.45\linewidth}
        \centering
        \begin{influence-diagram}
            \node (X) [] {$T$};
            \node (U1) [utility, right = of X, player1] {$U^1$};
            \node (U2) [utility, right = of U1, player2] {$U^2$};
            \node (D1) [decision, below right = 1.4cm and 0.7cm of X, player1] {$D^1$};
            \node (D2) [decision, right = of D1, player2] {$D^2$};
            \edge [information] {X} {D1};
            \edge [information] {D1} {D2};
            \path (X) edge[->, bend left=45] (U2);
            \edge {X} {U1};
            \edge {D1} {U1};
            \edge {D2} {U1,U2};
            \node (X_mec) [relevancew, above = of X] {$\Theta_T$};
            \node (U2_mec) [relevancew, above = of U2] {$\Theta_{U^2}$};
            \node (D1_mec) [intervened, relevanceb, below = of D1] {$\hat{\pi}^1$};
            \node (U1_mec) [relevancew, above = of U1] {$\Theta_{U^1}$};
            \node (D2_mec) [relevanceb, below = of D2] {$\Pi_{D^2}$};
            \edge {X_mec} {X};
            \edge {D1_mec} {D1};
            \edge {D2_mec} {D2};
            \edge {U1_mec} {U1};
            \edge {U2_mec} {U2};
            \node (space1) [minimum size=0mm, node distance=2mm, below = 0.7cm of D1, draw=none] {};
            \draw (X_mec) edge[,in=180,out=-135] (space1.center)
            (space1.center) edge[,->,out=0,in=135] (D2_mec);
            \path (D1_mec) edge[->, bend right=15] (D2_mec);
            \draw (U2_mec) edge[->, out=-45, in=25] (D2_mec);
        \end{influence-diagram}
        \caption{}
        \label{fig:CGintervention:2}
    \end{subfigure}
    \caption{(a) A mechanised game showing the hard post-policy intervention $\Do(D^1 = g)$, where incoming edges to $D^1$ are severed. (b) A mechanised game showing the hard pre-policy intervention $\Do(\Pi_{D^1} = \hat{\pi}^1)$, where incoming edges to $\Pi_{D^1}$ are severed.}
    \label{fig:causality_in_games}
\end{figure}

\begin{remark}
    In previous work, soft interventions on object-level variables $V$ have been modelled as hard interventions on its mechanism variable $\mecvar_{V}$ \cite{dawid2002influence}. While these can be viewed as essentially equivalent in the single-agent case,\footnote{To be precise, this is true when the agent has \emph{sufficient recall}; see Definition \ref{def:sufrecall}.} possible dependencies between mechanism variables in mechanised games means that these two types of intervention may have markedly different effects in the multi-agent setting. The difference between pre- and post-policy interventions results from a difference in the \emph{information} that is available to agents when they make their decisions, rather than to the chronology of play in a game (as is also the case for the structure of EFGs).
\end{remark}

\subsection{Counterfactuals}
\label{sec:counterfactuals}

The final type of question we investigate arises when we combine predictions and interventions, as in query \hyperref[query:3a]{3a}: `Given that the worker didn't go to university, what would be their wellbeing if they had?'. Such questions are \emph{counterfactual}, as they combine observations made in the actual world (in which the worker didn't go to university), with questions pertaining to a counterfactual world (where they did go to university). Answering these queries in games is significantly more nuanced and complex than those of the preceding subsections. {To do so, we must first consign all stochasticity to a set of exogenous variables, one for each variable in the causal game.
Just as in an SCM, each variable $V$ is thus associated with an exogenous variable $\exovar_V$, and is governed by a deterministic CPD $\Pr^{\bm{\pi}}(V \mid \Pa_V)$, where $\exovar_V \in \Pa_V$.
}

\begin{definition}
    \label{def:scg}
    A (Markovian) \textbf{structural causal game (SCG)} $\model = (\graph, \bm{\theta})$ is a causal game over exogenous and endogenous variables $\exovars \cup \bm{V}$ such that for \emph{any} (deterministic) parameterisation of the decision variable CPDs $\dot{\bm{\pi}}$, the induced model with joint distribution $\Pr^{\dot{\bm{\pi}}}(\bm{V}, \exovars)$ is an SCM.
\end{definition}

{An SCG can be seen as an SCM without parameters for the decision variables. Given a policy $\bm{\pi}$, we recover an SCM, as we explain in more detail below. Meanwhile, mechanised SCGs can be viewed as (a special case of) a relational causal model with cycles \cite{Ahsan2022}.}

When we mechanise SCGs, although we introduce mechanism variables for the exogenous variables (as can be seen in Figure \ref{fig:maids:mechanised_maid}), we view them and their object-level exogenous children as beyond the realm of observation and intervention, just as in SCMs. As such, mechanism variables of exogenous variables can largely be ignored.
As in SCMs, interventional distributions $\I(V \mid \Pa^*_V)$ must be deterministic, and soft interventions {may be defined by introducing} a new exogenous variable $\exovar^*_V$ to $\Pa^*_V$ \cite{Correa2020}. With these caveats in place, both pre-policy and post-policy predictions and interventions may be defined in this model as described in \ref{sec:predictions} and \ref{sec:interventions} respectively.

\begin{figure}[h]
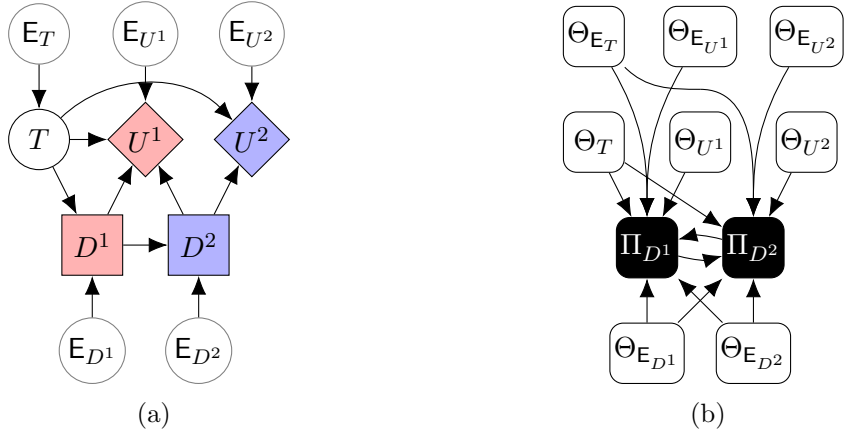

    \centering
    \begin{subfigure}[b]{0.45\linewidth}
        \centering
        \begin{influence-diagram}
        \node (X) [] {$T$};
        \node (U1) [utility, right = of X, player1] {$U^1$};
        \node (U2) [utility, right = of U1, player2] {$U^2$};
        \node (D1) [decision, below right = 1.4cm and 0.7cm of X, player1] {$D^1$};
        \node (D2) [decision, right = of D1, player2] {$D^2$};
        \edge [information] {X} {D1};
        \edge [information] {D1} {D2};
        \path (X) edge[->, bend left=45] (U2);
        \edge {X} {U1};
        \edge {D1} {U1};
        \edge {D2} {U1,U2};
        \node (X_exo) [exogenous, above = of X] {$\exovar_T$};
        \node (U2_exo) [exogenous, above = of U2] {$\exovar_{U^2}$};
        \node (D1_exo) [exogenous, below = of D1] {$\exovar_{D^1}$};
        \node (U1_exo) [exogenous, above = of U1] {$\exovar_{U^1}$};
        \node (D2_exo) [exogenous, below = of D2] {$\exovar_{D^2}$};
        \edge {X_exo} {X};
        \edge {D1_exo} {D1};
        \edge {D2_exo} {D2};
        \edge {U1_exo} {U1};
        \edge {U2_exo} {U2};
    \end{influence-diagram}
    \caption{}
    \label{fig:maids:maid}
    \end{subfigure}
    \begin{subfigure}[b]{0.45\linewidth}
        \centering
        \begin{influence-diagram}
            \node (X_mec) [relevancew] {$\Theta_T$};
            \node (U1_mec) [relevancew, right = of X_mec] {$\Theta_{U^1}$};
            \node (U2_mec) [relevancew, right = of U1_mec] {$\Theta_{U^2}$};
            \node (X_exo_mec) [relevancew, above = of X_mec] {$\Theta_{\exovar_T}$};
            \node (U1_exo_mec) [relevancew, above = of U1_mec] {$\Theta_{\exovar_{U^1}}$};
            \node (U2_exo_mec) [relevancew, above = of U2_mec] {$\Theta_{\exovar_{U^2}}$};
            \node (D1_mec) [relevanceb, below right = 1.4cm and 0.7cm of X_mec] {$\Pi_{D^1}$};
            \node (D2_mec) [relevanceb, right = of D1_mec] {$\Pi_{D^2}$};
            \node (D1_exo_mec) [relevancew, below = of D1_mec] {$\Theta_{\exovar_{D^1}}$};
            \node (D2_exo_mec) [relevancew, below = of D2_mec] {$\Theta_{\exovar_{D^2}}$};
            \path (D1_mec) edge[->, bend right=15] (D2_mec);
            \path (D2_mec) edge[->, bend right=15] (D1_mec);
            \edge {X_mec} {D1_mec,D2_mec};
            \edge {U1_mec} {D1_mec};
            \edge {U2_mec} {D2_mec};
            \edge {D1_exo_mec} {D1_mec, D2_mec};
            \edge {D2_exo_mec} {D1_mec, D2_mec};
            \node (space1) [minimum size=0mm, node distance=2mm, above = 0.7cm of U1_mec, draw=none] {};
            \draw (X_exo_mec) edge[,in=180,out=-45] (space1.center)
            (space1.center) edge[,->,out=0,in=90] (D2_mec);
            \path (X_exo_mec) edge[->, in=90,out=-60] (D1_mec); 
            \path (U1_exo_mec) edge[->, in=90,out=-120] (D1_mec);
            \path (U2_exo_mec) edge[->, in=90,out=-120] (D2_mec);
        \end{influence-diagram}
        \caption{}
        \label{fig:maids:mechanised_maid}
    \end{subfigure}
    \caption{(a) A (Markovian) SCG representing Example \ref{ex:job_market}. Note that we have included exogenous variables for $U^1$ and $U^2$, although as neither is stochastic, this is not strictly necessary. (b) The $\R^{\BR}$-relevance graph of this game.
    }
    \label{fig:maids}
\end{figure}

{While computing predictions and interventions in SCGs is therefore relatively straightforward, there are two main difficulties that arise when computing counterfactuals. The first is the choice of how to represent stochastic decision rules using structural functions and exogenous variables, and the second is the problem of updating our beliefs about the policy profile played in the counterfactual world given our evidence about the policy profile played in the actual world. We resolve each of these difficulties in turn.}

\subsubsection*{Decision Rules as Structural Functions}

We assume that agents play the same kind of game regardless of the level in the causal hierarchy at which we model them. In these games, (behavioural) play equates to selecting decision rules which stochastically sample a decision conditional on the value of some (non-exogenous) parents. In \emph{structural} causal games, we represent these decision rules using structural functions and exogenous variables. {One proposal would therefore be to view each agent as choosing both a `structural decision rule' $\dot{\pi}_D(D \mid \Pa_D)$ and a distribution $\Pr(\exovar_D)$, with a shared mechanism parent $\Pi_D$ for both $D$ and $\exovar_D$. This, however, leads to a different type signature for decision rules, and moreover leads to a formalism in which (pre-policy) interventions can be made upstream of stochastic variables, which are ruled out in SCMs.} We {therefore} propose an equivalent formulation in which each {agent} controls only their decision variables and not their exogenous parents.

{Unfortunately, while} our assumptions about the rationality of the agents tell us what CPDs are assigned to their decision variables, {they} are insufficient for telling us what precise deterministic mechanisms the agents use to implement these CPDs (as a function of some stochastic exogenous variable). In fact, unless we chose to explicitly restrict the form of the mechanism, such as by stipulating that it belongs to some parametric class, there will typically be \emph{infinitely} many deterministic functions that induce a particular distribution over a decision variable \cite{Balke1994}. Without specifying such functions, it will not (in general) be possible to answer counterfactual queries in games, and yet the precise form of these functions may impact the answers to these queries \cite{dawid2002influence}.

In essence, the choice of how to represent a decision rule $\pi_D \in \Delta(\dom(D) \mid \dom(\Pa_D \setminus \{\exovar_D\}))$ using a stochastic exogenous variable $\exovar_D$ and a deterministic mechanism $\dot{\pi}_D \in \Delta(\dom(D) \mid \dom(\Pa_D))$ is the choice of what part of the decision rule we assume remains fixed across counterfactual worlds ($\exovar_D$) and what part may vary ($\dot{\pi}_D$). Assuming that we have no pre-existing knowledge about this representation, {we propose to stay true to the spirit of behavioural policies by viewing} each agent's randomisation as independent between both:
\begin{itemize}
    \item \emph{Decision rules}, in the sense that learning about an agent's random choice under one decision rule $\pi_D$ is uninformative in settings where the agent is using a different decision rule $\pi'_D$;
    \item \emph{Decision contexts}, in the sense that an agent's decision rule $\pi_D$ can naturally be interpreted as independently sampling an action $d$ \emph{after} seeing {an assignment} $\overline{\pa}_D$ of the non-exogenous parents $\overline{\Pa}_D \coloneqq \Pa_D \setminus \{\exovar_D\}$.
\end{itemize}
We formalise this assumption by representing each exogenous variable $\exovar_D$ as a set containing a variable $\exovar^{\pi_D, \overline{\pa}_D}_D$ for each $\pi_D \in \Delta(\dom(D) \mid \dom(\overline{\Pa}_D))$ and $\overline{\pa}_D \in \dom(\overline{\Pa}_D)$, where $\dom(\exovar^{\pi_D, \overline{\pa}_D}_D) = \dom(D)$ (i.e., $\exovar_D$ is a random field with independently distributed elements). Given a stochastic decision rule $\pi_D(D \mid \overline{\Pa}_D)$, we may then define a canonical structural representation by setting:
\begin{align}
    \label{eq:dist_def}
    \begin{split}
        \Pr(\exovar^{\pi_D, \overline{\pa}_D}_D=d) &\coloneqq \pi_D(d \mid \overline{\pa}_D),\\
        \dot{\pi}_D(D = d \mid \overline{\pa}_D, \exoval_D) &\coloneqq \delta(D {,} \exoval^{\pi_D, \overline{\pa}_D}_D),
    \end{split}
\end{align}
where note that $\dot{\pi}_D$ is effectively parameterised by $\pi_D$, i.e., we have $\dot{\pi}_D(D \mid \overline{\Pa}_D, \exovar_D; \pi_D)$. The joint distribution over $\exovar_D$ is simply the \emph{product of probability distributions} over $\exovar^{\pi_D, \overline{\pa}_D}_D$ \cite{Bauer1996}. 
Proposition \ref{prop:structural_equiv} below then follows immediately,\footnote{Though note that there are many constructions that would result in these properties; we merely present one particularly simple example.} and means that we may continue to interpret decision rules, policy profiles, and rationality relations as we do in MAIDs and CGs, where each agent plays some $\pi_D \in r_D(\pa_{\Pi_D}) \subseteq \Delta(\dom(D) \mid \dom(\overline{\Pa}_D))$ and each $\pi_D$ parameterises $\dot{\pi}_D$. Moreover, this additional structure allows us to generalise the definition of counterfactuals in SCMs to counterfactuals in games.
\begin{proposition}
    \label{prop:structural_equiv}
    For distributions over $\exovar_D$ and $D$ as governed by equations (\ref{eq:dist_def}), there is a one-to-one correspondence between the set of stochastic decision rules $\Delta(\dom(D) \mid \dom(\overline{\Pa}_D))$ and the set of deterministic decision rules $\dom(\dot{\Pi}_D) \subset \Delta(\dom(D) \mid \dom(\Pa_D))$. Moreover, given two such corresponding decision rules $\pi_D$ and $\dot{\pi}_D$, then $\int_{\dom(\exovar_D)} \dot{\pi}_D(d \mid \overline{\pa}_D, \exoval_D)\Pr(\exoval_D) \,d \exoval_D = \pi_D(d \mid \overline{\pa}_D)$.
\end{proposition}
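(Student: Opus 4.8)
The plan is to exhibit the correspondence explicitly as a forward map with an explicit left inverse, so that injectivity falls out of the ``Moreover'' clause and surjectivity holds essentially by definition. First I would fix the decision variable $D$ and define the forward map $\Phi$ sending a stochastic decision rule $\pi_D \in \Delta(\dom(D) \mid \dom(\overline{\Pa}_D))$ to the deterministic decision rule $\dot{\pi}_D$ specified by the second line of~(\ref{eq:dist_def}), namely $\dot{\pi}_D(d \mid \overline{\pa}_D, \exoval_D) \coloneqq \delta(d, \exoval^{\pi_D, \overline{\pa}_D}_D)$. This is well-defined: for each fixed $(\overline{\pa}_D, \exoval_D) \in \dom(\Pa_D)$ the right-hand side is a point mass on $\dom(D)$, so $\dot{\pi}_D$ is a genuine (deterministic) CPD, i.e. a member of $\Delta(\dom(D) \mid \dom(\Pa_D))$. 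I would then read $\dom(\dot{\Pi}_D)$ as the image of $\Phi$ --- the collection of structural functions of this canonical form obtained as $\pi_D$ ranges over all stochastic decision rules --- so that $\Phi$ is surjective onto $\dom(\dot{\Pi}_D)$ by construction, and only injectivity together with the integral identity remains.

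Next I would establish the marginalisation identity. Recalling from the construction preceding~(\ref{eq:dist_def}) that $\exovar_D$ is the random field $(\exovar^{\pi_D', \overline{\pa}_D'}_D)$ indexed over all $\pi_D' \in \Delta(\dom(D) \mid \dom(\overline{\Pa}_D))$ and $\overline{\pa}_D' \in \dom(\overline{\Pa}_D)$, and that $\Pr(\exovar_D)$ is the product measure with coordinate marginals $\Pr(\exovar^{\pi_D', \overline{\pa}_D'}_D = d) = \pi_D'(d \mid \overline{\pa}_D')$, I would observe that $\dot{\pi}_D(d \mid \overline{\pa}_D, \exoval_D) = \delta(d, \exoval^{\pi_D, \overline{\pa}_D}_D)$ depends on $\exoval_D$ only through the single coordinate indexed by $(\pi_D, \overline{\pa}_D)$. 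Hence, applying Fubini--Tonelli to integrate out all other coordinates against the product measure leaves
\begin{align*}
    \int_{\dom(\exovar_D)} \dot{\pi}_D(d \mid \overline{\pa}_D, \exoval_D)\,\Pr(\exoval_D)\, d\exoval_D
    &= \sum_{e \in \dom(D)} \delta(d, e)\,\Pr(\exovar^{\pi_D, \overline{\pa}_D}_D = e) \\
    &= \Pr(\exovar^{\pi_D, \overline{\pa}_D}_D = d) = \pi_D(d \mid \overline{\pa}_D),
\end{align*}
the last step using the first line of~(\ref{eq:dist_def}). This is exactly the ``Moreover'' claim, and it shows that the map $\Psi$ given by $\Psi(\dot{\pi}_D)(d \mid \overline{\pa}_D) \coloneqq \int_{\dom(\exovar_D)} \dot{\pi}_D(d \mid \overline{\pa}_D, \exoval_D)\Pr(\exoval_D)\, d\exoval_D$ satisfies $\Psi \circ \Phi = \mathrm{id}$; thus $\Phi$ is injective, since $\Phi(\pi_D) = \Phi(\pi_D')$ forces $\pi_D = \Psi(\Phi(\pi_D)) = \Psi(\Phi(\pi_D')) = \pi_D'$. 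Being injective and surjective onto $\dom(\dot{\Pi}_D)$, $\Phi$ is the desired bijection, with inverse $\Psi$.

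The main obstacle is not conceptual depth but measure-theoretic bookkeeping: because $\Delta(\dom(D) \mid \dom(\overline{\Pa}_D))$ is a continuum, $\exovar_D$ is an uncountable product of copies of the finite set $\dom(D)$, so one must appeal to an infinite-product construction (as the paper does, citing~\cite{Bauer1996}) to make sense of $\Pr(\exovar_D)$, and then justify, via Fubini--Tonelli for product measures, that integrating a single-coordinate function against this measure collapses to integration against one coordinate marginal. One should also be careful to interpret $\dom(\dot{\Pi}_D)$ as the image of $\Phi$ rather than as the set of \emph{all} deterministic CPDs in $\Delta(\dom(D) \mid \dom(\Pa_D))$: the latter also contains structural functions that read several coordinates of $\exovar_D$ at once and arise from no single $\pi_D$, so surjectivity would fail against that larger target. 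With these conventions fixed, the remainder is routine substitution.
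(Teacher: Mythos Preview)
Your proposal is correct and follows essentially the same approach as the paper: both define the forward map $\pi_D \mapsto \dot{\pi}_D$ via the second line of~(\ref{eq:dist_def}), take $\dom(\dot{\Pi}_D)$ to be its image (so surjectivity is by construction), and establish the marginalisation identity by factoring the product measure into the single relevant coordinate and the rest. The only stylistic difference is that the paper proves injectivity directly---observing that $\pi_D \neq \pi_D'$ forces $\dot{\pi}_D$ and $\dot{\pi}_D'$ to read distinct coordinates of $\exoval_D$---whereas you derive injectivity from the integral identity as a left inverse; your route is arguably cleaner since it reuses the ``Moreover'' clause rather than arguing separately.
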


\subsubsection*{Counterfactual Rational Outcomes}
The second difficulty of answering counterfactual queries in games arises due to the possible existence of multiple rational outcomes. If we have evidence that the equilibrium ${\bm{\pi}}$ was played in the actual world, how and to what extent should that inform us of the equilibrium ${\bm{\pi}}'$ played in the counterfactual world where the values of some mechanism variables may have changed? 

{To answer this question, we begin by introducing our approach to answering counterfactual queries in SCGs, which mirrors Pearl's approach for SCMs} (described in Section \ref{sec:causal_models}). {That is, we condition, intervene, and then compute the resulting distribution, though we generalise this to observations and interventions on both object-level and mechanism variables.} In particular, we compute the set $\Pr^{\R}(\bm{x}_{\I} \mid \bm{z})$ as follows:

\begin{enumerate}
    \item For every \emph{actual} rational outcome ${\bm{\pi}} \in \R(\mec{\model} \mid \bm{z})$, update $\Pr(\exovals)$ to $\Pr^{{\bm{\pi}}}(\exovals \mid \bm{z})$ (`abduction');
    \item Apply the intervention $\I$, on variables $\bm{Y}$, recomputing any rational responses to form ${\bm{\pi}}'$ and adding new exogenous variables $\exovars^*$ to form $\exovars' = \exovars \cup \exovars^*$ where required  (`action');
    \item Return each marginal distribution $\int_{\dom(\exovars')} \Pr^{\bm{\pi}'}(\bm{x} \mid \exovals') \Pr(\exovals') \, d\exovals'$ in this modified model for each new \emph{counterfactual} rational outcome ${\bm{\pi}'}$ (`prediction').
\end{enumerate}

In the first step, we update our beliefs about the exogenous and decision rule variables in the actual world under ${\bm{\pi}}$; in the second, we apply an intervention $\I$ to form the counterfactual world (and recompute the rational outcomes based on this change); and in the third, we return the new set of distributions that are consistent with our beliefs from the first step and the results of the intervention made in the second step.

{By reading these steps carefully, one notices a difficulty in SCGs that does not arise in an SCM:
when we condition on $\bm{z}$ in the first step we obtain a set of policies $\R(\mec{\model} \mid \bm{z})$ that are rational in the \emph{actual} world.
Then in step two, we compute new rational responses $\bm{\pi}'$, and 
it is $\bm{\pi}'$ rather than $\R(\mec{\model} \mid \bm{z})$ that features in the \emph{counterfactual} world of the final step.
This raises the question of the extent to which knowledge of the rational policies $\R(\mec{\model} \mid \bm{z})$
should be used to compute ${\bm{\pi}}'$.
}

{Let us first note that if $\I$ is a \emph{post}-policy intervention, then this has no impact on the rational outcomes of the counterfactual world -- they are simply the same as the actual world, given by $\R(\mec{\model} \mid \bm{z})$ -- and hence there is no difficulty. The issue only arises in \emph{pre}-policy counterfactuals, such as query \hyperref[query:3b]{3b} (`Given that the worker never decides to go to university, what would be their wellbeing if they always decided to go to university?'), where the intervention (in query \hyperref[query:3b]{3b}, on the worker's decision rule) means that the set of rational outcomes in the counterfactual world will be different from those in the actual world.}

{Because each policy profile $\bm{\pi}$ is made up of decision rules $\pi_D$, we can formalise this question by asking which decision rule variables $\bm{\Pi}(\I) \subseteq \bm{\Pi}$ are \emph{invariant} to the intervention $\I$. Written in terms of the three-step process above, we must have ${\bm{\pi}}(\I) = {\bm{\pi}}'(\I)$, i.e., for any invariant decision rule variable $\Pi \in \bm{\Pi}(\I)$ then the counterfactual decision rule $\pi'_D$ is equal to the actual decision rule $\pi_D$. Those that are not invariant must have their values recomputed in the new counterfactual model. For instance, as argued above, when $\I$ is post-policy, $\bm{\Pi}(\I) = \bm{\Pi}$. That is, none of the values of the decision rule variables need to be recomputed. How should we choose $\bm{\Pi}(\I)$ when $\I$ is not (fully) post-policy?}

{There are multiple principles that could be invoked in order to make this choice. The simplest -- let us call it the \emph{simplicity principle} -- is to recompute the values of \emph{all} decision rule variables, i.e., $\bm{\Pi}(\I) = \varnothing$. In other words, the intervention means that `all bets are off' after the intervention is made, and the actual rational outcomes $\R(\mec{\model} \mid \bm{z})$ have no bearing on the counterfactual rational outcomes. Under this principle, computing pre-policy counterfactuals is} reminiscent of the approach taken in existing work on cyclic causal models \cite{Bongers2016}, where two sets of solutions are induced by two halves of a twin graph. {The problem with this principle is that it may require us to ignore information gathered from our observation $\bm{z}$. For example, if $\bm{Z} = \bm{z}$ implies that $\Pi_D = \pi_D$ in the actual world and $\I$ is \emph{causally downstream} of $\Pi_D$ (and hence can have no effect on the value of $\Pi_D$), then this means we also know that $\Pi_D = \pi_D$ in the counterfactual world, i.e., $\Pi_D \in \bm{\Pi}(\I)$.}

{To solve this problem, we can instead invoke the \emph{closest possible world principle}, where we retain as much information as possible from our knowledge of the rational policies $\R(\mec{\model} \mid \bm{z})$. While the values of some decision rules may still need to be recomputed, by keeping $\bm{\Pi}(\I)$ as large as possible, we avoid the need for re-solving the entire game, and can provide a more accurate answer to counterfactual queries. The process of computing $\bm{\Pi}(\I)$ under this principle is slightly more complicated, however, as it involves propagating the effects of an intervention through models that contain both cycles and non-determinism. In the remainder of the paper we therefore employ the simplicity principle above, which is also more in keeping with prior work.\footnote{In practice, for queries \hyperref[query:3a]{3a} and \hyperref[query:3b]{3b}, it happens to be the case that both principles lead to the same answer.} An algorithm for computing ${\bm{\Pi}}(\I)$ according to the closest possible world principle is, however, provided in Appendix \ref{app:closest_possible_world} for reference.}

\subsubsection*{Defining Counterfactuals in Games}

{Given a set of invariant decision rule variables ${\bm{\Pi}}(\I)$, the answer produced by the three-step process given above can be written as follows.}

\begin{definition}
    \label{def:counterfactual_query}
    Given a mechanised SCG $\mec\model$ with rationality relations $\R$, the \textbf{answer to a counterfactual query} of the probability of $\bm{x}$ given observation $\bm{z}$ and intervention $\I$ on variables $\bm{Y}$ is given by the set:
    $$\Pr^{\R}(\bm{x}_{\I} \mid \bm{z}) \coloneqq \big\{ {\textstyle \int_{\dom(\exovars')}} \Pr^{\bm{\pi}'}(\bm{x}_{\I} \mid \exovals, \exovals^*) \Pr(\exovals^*) \Pr^{{\bm{\pi}}}(\exovals \mid \bm{z}) \, d\exovals' \big\}_{({\bm{\pi}},{\bm{\pi}}') \in \R(\mec{\model}_{\I} \mid \bm{z})},$$
    where $\exovars^* = \exovars' \setminus \exovars$ are any newly added exogenous variables as a result of a soft intervention,
    $$\R(\mec{\model}_{\I} \mid \bm{z}) \coloneqq \big\{ ({\bm{\pi}},{\bm{\pi}}') \in \R(\mec{\model} \mid \bm{z}) \times \R(\mec{\model}_{\I}) : \bm{\pi}(\I) = {\bm{\pi}}'(\I) \big\},$$
    is the set of \textbf{actual-counterfactual rational outcomes}, and ${\bm{\Pi}}(\I)$ is the set of \textbf{invariant decision rule variables}. 
    Note that if $\I$ {is fully post-policy}, then the rational outcomes remain the same, i.e., ${\bm{\Pi}}(\I) = {\bm{\Pi}}$ and $\R(\mec{\model}_{\I} \mid \bm{z}) = \big\{ ({\bm{\pi}},{\bm{\pi}}) : {\bm{\pi}} \in \R(\mec{\model} \mid \bm{z}) \big\}$ when $\bm{Y} \subseteq \bm{V}$.
\end{definition}

{In the definition of $\Pr^{\R}(\bm{x}_{\I} \mid \bm{z})$, for every actual policy $\bm{\pi}$ and counterfactual policy $\bm{\pi'}$, we compute the product of three quantities. $\Pr^{{\bm{\pi}}}(\exovals \mid \bm{z})$ is the updated distribution over the exogenous variables under $\bm{\pi}$, and corresponds to the first step. If $\I$ is a soft intervention, then we add further variables $\exovars^* = \exovars' \setminus \exovars$ to capture the stochasticity of $\I$, which leads to the term $\Pr(\exovals^*)$.\footnote{Note that the distribution over these `fresh' exogenous variables does not depend on the policy, actual or counterfactual.} We then condition on both $\exovals$ and $\exovals^*$ and compute the value of $\bm{x}_{\I}$ under $\bm{\pi}'$, hence the term $\Pr^{\bm{\pi}'}(\bm{x}_{\I} \mid \exovals, \exovals^*)$. Finally, in the third step, we marginalise over all exogenous variables $\exovars'$. The set $\R(\mec{\model}_{\I} \mid \bm{z})$ defines the pairs of policies that we must consider. Namely, actual rational outcomes $\bm{\pi} \in \R(\mec{\model} \mid \bm{z})$ and counterfactual rational outcomes $\bm{\pi}' \in \R(\mec{\model}_{\I})$ such that the decision rules invariant to $\I$, denoted $\bm{\Pi}(\I)$, remain the same: $\bm{\pi}(\I) = {\bm{\pi}}'(\I)$.}

We briefly demonstrate the three step process above by returning to queries \hyperref[query:3a]{3a} and \hyperref[query:3b]{3b}.
To answer \hyperref[query:3a]{3a} we must compute $\Pr^{\bm{\pi}}(u^1_g \mid \neg g)$. First, we note that as this involves a post-policy intervention then we only need to consider the \emph{actual} rational outcomes, as ${\bm{\Pi}}(\I) = {\bm{\Pi}}$. Thus, for each ${\bm{\pi}} \in \R(\mec{\model} \mid \neg g)$ we begin by updating $\Pr(\exovals)$ to $\Pr^{{\bm{\pi}}}(\exovals \mid \neg g)$. In this case, such an update amounts to changing $\Pr^{{\bm{\pi}}}(\exoval_T, \exoval_{D^1}, \exoval_{D^2})$ to $\Pr^{{\bm{\pi}}}(\exoval_T, \exoval_{D^1}, \exoval_{D^2} \mid \neg g)$, where note that we independently update each $\exovar^{\pi_D, \overline{\pa}_D}_D$ for every such ${\bm{\pi}}$ and endogenous decision context $\overline{\pa}_D$. Following this, we apply the intervention $\Do(D^1 = g)$. The final answer to the query is therefore rather simple in this case, and is given by $\Pr^{\R}(u^1_g \mid \neg g) = \big\{ \Pr^{\bm{\pi}} (u^1_g \mid \neg g) \big\}_{{\bm{\pi}} \in \R(\mec{\model} \mid \neg g)}$. We thus have $\big\{ \expect_{\bm{\pi}} [U^1_g \mid \neg g] \big\}_{{\bm{\pi}} \in \R(\mec\model \mid \neg g)} = \{\frac{7}{2}\}$.

Query \hyperref[query:3b]{3b} involves a hard pre-policy intervention $\I$ that sets $\Pi_{D^1}$ to $\overline{\pi}_{D^1}$. As ${\bm{\Pi}}(\I) = \varnothing$, the answers to the query are given under the interventional outcomes, i.e., $\R(\mec{\model}_{\I} \mid \tilde{\pi}_{D^1}) = \big\{ ({\bm{\pi}}',{\bm{\pi}}') : {\bm{\pi}} \in \R(\mec{\model}_{\I})\big\}$. In this particular game, $\R(\mec{\model}_{\I}) = \R(\mec{\model} \mid \overline{\pi}_{D^1})$ and so the answer is the same as the answer to query \hyperref[query:1b]{1b}.

\begin{remark}
    The reasons for choosing between a CG or an SCG to model a given problem are analogous to the respective reasons for choosing a CBN or an SCM. Using the latter, one can reason about counterfactuals, path-specific effects, and questions of identifiability \cite{avin2005identifiability}. The former, however, is a simpler formalism, and requires less knowledge about the precise functions holding between the variables, making CGs a more attractive choice when one does not require any of the features listed above.
\end{remark}

\section{Solution Concepts and Subgames}
\label{sec:subgames_eqs}

In Section \ref{sec:mechanised_maids_relevance}, we explained how a set of rationality relations can be used to capture the process by which agents choose their decision rules, and thus which mechanisms agents need to consider when doing so. In this section, we build on these ideas in three subsections. Firstly, we detail the distinction between mixed and behavioural policies and their relation to {NEs} in MAIDs. Secondly, we introduce the concept of \emph{subgames} within MAIDs, which, analogously to their EFG counterparts, allow us to analyse and solve parts of the game independently. Finally, we introduce several \emph{equilibrium refinements} for MAIDs, which are discussed in relation to their EFG counterparts in Section \ref{sec:equivalences}. With these contributions, we aim to place causal games on a more equal footing with EFGs as a tool for game-theoretic analysis. Note that as (S)CGs are refinements of MAIDs, the results in this section also apply to these models. Concepts in this section will be explained with the help of the following example, shown in Figure \ref{fig:warehouse:a}.

\begin{example}[Warehouse Robots]
    \label{ex:warehouse}
    Two robots are working together in a warehouse. Robot one is responsible for fulfilling orders; it can decide to move quickly or slowly. It will not break anything if it moves slowly, but might break something if it moves quickly. Robot two is responsible for keeping the warehouse tidy and so must decide whether to patrol or not, but it can only observe what robot one does. If it patrols, robot two can repair broken items, but by doing so it might obstruct robot one and prevent it from completing its order. Robot one is rewarded for fulfilling orders, the quicker the better, and penalised for breaking things. Robot two is rewarded for everything in the warehouse ending up in a state of repair, but incurs a small energy cost for patrolling.
\end{example}

To parameterise this game, let us suppose that robot one breaks something if it moves quickly ($D^1 = q$) with probability $\frac{1}{3}$ 
but will not break anything otherwise, and that robot two obstructs robot one with probability $\frac{1}{2}$ if it patrols ($D^2 = p$) and probability zero otherwise. Finally, we define the utility functions such that robot one receives a reward of $5$ or $2$ for completing an order quickly or slowly respectively, but it also incurs a penalty of $-3$ for breakages. Robot two receives a reward of $6$ for everything being in a state of repair, but incurs a penalty of $-1$ for patrolling. Given this parameterisation, we can easily calculate the expected payoff of each agent for the four possible action combinations. For reference, we show these in Figure \ref{fig:warehouse:c} using an EFG that only bifurcates on the two robots' decisions; the chance variable $B$ has been marginalised out to create a reduced EFG (as detailed in Appendix \ref{sec:transformations}).

\begin{figure}[h]
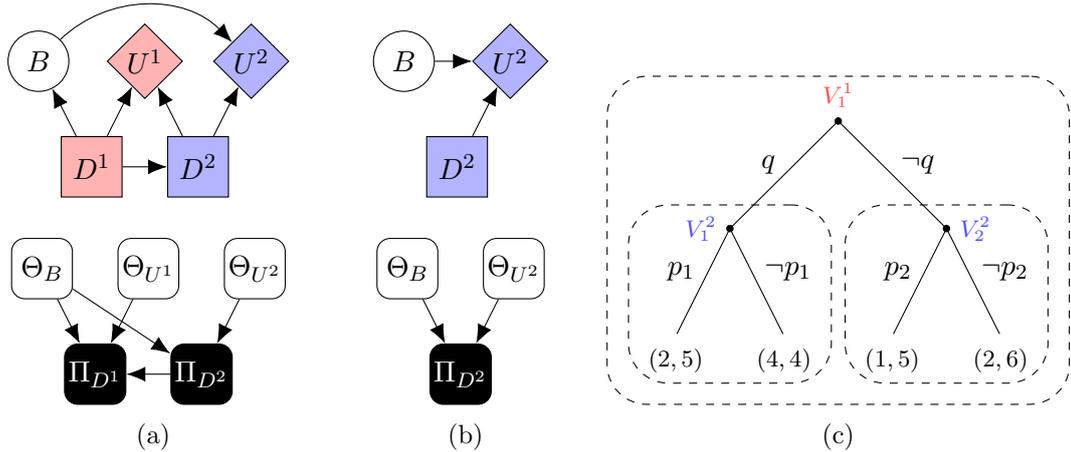

    \centering
    \begin{subfigure}[b]{0.3\linewidth}
        \centering
        \begin{influence-diagram}
            \node (T) [] {$B$};
            \node (U1) [utility, right = of T, player1] {$U^1$};
            \node (U2) [utility, right = of U1, player2] {$U^2$};
            \node (D1) [decision, below right = 1.4cm and 0.7cm of T, player1] {$D^1$};
            \node (D2) [decision, right = of D1, player2] {$D^2$};
            \edge [information] {D1} {D2};
            \path (T) edge[->, bend left=45] (U2);
            \edge {D1} {U1,T};
            \edge {D2} {U1,U2};
        \end{influence-diagram}
        
        \vspace{0.5cm}

        \begin{influence-diagram}
            \node (X_mec) [relevancew] {$\Theta_B$};
            \node (U1_mec) [relevancew, right = of X_mec] {$\Theta_{U^1}$};
            \node (U2_mec) [relevancew, right = of U1_mec] {$\Theta_{U^2}$};
            \node (D1_mec) [relevanceb, below right = 1.4cm and 0.7cm of X_mec] {$\Pi_{D^1}$};
            \node (D2_mec) [relevanceb, right = of D1_mec] {$\Pi_{D^2}$};
            \edge {X_mec} {D1_mec,D2_mec};
            \edge {D2_mec} {D1_mec};
            \edge {U1_mec} {D1_mec};
            \edge {U2_mec} {D2_mec};
        \end{influence-diagram}
        \caption{}
        \label{fig:warehouse:a}
    \end{subfigure}
    \begin{subfigure}[b]{0.2\linewidth}
        \centering
        \begin{influence-diagram}
            \node (X) [] {$B$};
            \node (U2) [utility, right = of X, player2] {$U^2$};
            \node (D2) [decision, below right = 1.4cm and 0.7cm of X, player2] {$D^2$};
            \edge {D2, X} {U2};
        \end{influence-diagram}

        \vspace{0.5cm}

        \begin{influence-diagram}
            \node (X_mec) [relevancew] {$\Theta_B$};
            \node (U2_mec) [relevancew, right = of X_mec] {$\Theta_{U^2}$};
            \node (D2_mec) [relevanceb, below right = 1.4cm and 0.7cm of X_mec] {$\Pi_{D^2}$};
            \edge {X_mec, U2_mec} {D2};
        \end{influence-diagram}
        \caption{}
        \label{fig:warehouse:b}
    \end{subfigure}
    \begin{subfigure}[b]{0.4\linewidth}
        \centering
        \begin{istgame}[scale=0.95]
            \xtdistance{15mm}{30mm}
            \istroot(0)<90,red!70>{$V^1_1$}
            \istb{q}[al]
            \istb{\neg q}[ar] 
            \endist
            \xtdistance{15mm}{15mm}
            \istroot(1)(0-1)<180, blue!70>{$V^2_1$}
            \istb{p_1}[al]{(2,5)}
            \istb{\neg p_1}[ar]{(4,4)} 
            \endist
            \istroot(2)(0-2)<0, blue!70>{$V^2_2$}
            \istb{p_2}[al]{(1,5)}
            \istb{\neg p_2}[ar]{(2,6)} \endist
            \xtSubgameBox(1){(1-1)(1-2)}[black,inner sep = 17pt]
            \xtSubgameBox(2){(2-1)(2-2)}[black,inner sep = 17pt]
            \xtSubgameBox(0){(1-1)(1-2)(2-1)(2-2)}[black,inner sep = 25pt]
        \end{istgame} 
        \caption{}
        \label{fig:warehouse:c}
    \end{subfigure}
    \caption{(a) A MAID $\model = (\graph, \theta)$ representing Example \ref{ex:warehouse} and its $s$-relevance graph. (b) The smallest proper $s$-subdiagram $\graph'$ of $\graph$ and its $s$-relevance graph. (c) A reduced EFG where each of the proper subgames corresponds to the two possible $s$-subgames for $\graph'$, where $D^1=q$ or $D^1=\neg q$ respectively.}
\label{fig:warehouse_maid}
\end{figure}

\subsection{Nash Equilibria}
\label{sec:NE}

A solution concept aims to identify a subset of the possible outcomes of a game that may occur if agents act rationally. In non-cooperative games, the most fundamental solution concept is a Nash equilibrium (NE) \cite{nash1950equilibrium}, a policy profile such that no agent may benefit by unilaterally deviating.
{In Example \ref{ex:warehouse}, for instance, the policy profile ${\bm{\pi}}^{\NE}$ in which robot one chooses $D_1 = q$ and robot two chooses $D_2 = p$ whatever the value of $D_1$ is an NE. 
Previous work introduced this concept to MAIDs, as recalled in Definition \ref{def:NE} \cite{koller2003multi}, but did not fully characterise when an NE is guaranteed to exist in a MAID. This existence depends on which class of policies agents are permitted to choose from.}

So far in this paper, we have viewed agents as employing \emph{behavioural policies}, where each agent selects decision rules for each of their decisions independently. In contrast, a \emph{mixed policy} allows an agent to coordinate their choice of decision rules at different decisions; it is a distribution over pure policies. In what follows, we use a dot $\dot{}$ to denote the determinism of pure policies and $\mu$ to denote mixed policies.

\begin{definition}
    \label{def:policies}
    Let $\dom(\dot{\Pi}_D)$ be the set of all possible pure decision rules for $D$, and recall that we write $\dom(\bm{V}) = \bigtimes_{V \in \bm{V}} \dom(V)$. A \textbf{mixed policy} for agent $i$ is some $\mu^i \in \Delta ( \dom(\dot{\bm{\Pi}}_{\bm{D}^i}) )$, a \textbf{behavioural policy} is some ${\bm{\pi}}^i \in \dom( \bm{\Pi}_{\bm{D}^i} )$, and a \textbf{pure policy} is some $\dot{{\bm{\pi}}}^i \in \dom( \dot{\bm{\Pi}}_{\bm{D}^i} )$.
\end{definition}

\begin{proposition}
    \label{prop:NEexistence}
    {A (behavioural policy) NE is not guaranteed to exist in a MAID.}
\end{proposition}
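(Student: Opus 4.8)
The plan is to exhibit a single MAID that possesses no Nash equilibrium in behavioural policies. By Proposition \ref{prop:NEexistance}, sufficient recall already guarantees such an NE, so the counterexample must violate sufficient recall (Definition \ref{def:sufrecall}): concretely, some agent $i$ will control two decision variables $D^i_1, D^i_2$ for which the $s$-relevance graph restricted to agent $i$ contains a cycle between $\Pi_{D^i_1}$ and $\Pi_{D^i_2}$. The conceptual reason this is the right place to look is that, by Definition \ref{def:policies}, a behavioural policy for agent $i$ factorises as a product of independent decision rules $\pi_{D^i_1}$ and $\pi_{D^i_2}$ -- agent $i$ cannot correlate their two decisions -- whereas a mixed policy (a distribution over pure policies) can. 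Consequently agent $i$'s expected utility is multilinear but not concave in their own behavioural policy, so the best-response correspondence over the product of behavioural-policy spaces need not be convex-valued, and the usual Kakutani-style fixed-point argument that underlies Nash's theorem (and Proposition \ref{prop:NEexistance}) breaks down.

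Concretely, I would construct a two-agent MAID with no chance variables in which agent $1$ owns the two ``unlinked'' binary decisions $D^1_1, D^1_2$ above (each with empty parent set, so that neither is observed by the other yet both influence the utilities, making each $s$-relevant to the other), agent $2$ owns a single binary decision $D^2$, and the utility variables $U^1, U^2$ are children of $D^1_1, D^1_2, D^2$. The parameters $\bm\theta$ -- i.e.\ the utility CPDs -- are then chosen so that, at the level of behavioural policies, the induced interaction is a cyclic, matching-pennies-style game in which the only equilibrium of the ``expanded'' normal form (where agent $1$'s strategies range over $\Delta(\dom(D^1_1) \times \dom(D^1_2))$) requires agent $1$ to play a \emph{correlated} -- hence non-product -- distribution over $(D^1_1, D^1_2)$, while no product-form profile is mutually best-responding. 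This is essentially the MAID rendering of the classical observation that games of imperfect recall can fail to admit behavioural-strategy Nash equilibria, and the payoff numbers can be lifted more or less verbatim from such an example (alternatively, one could build an imperfect-recall EFG with this property and invoke the EFG--MAID correspondence of Section \ref{sec:EFG_connections}).

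With the example fixed, the verification is a finite case analysis exploiting the game's small size: parametrise a candidate behavioural profile by the three probabilities $p = \pi_{D^1_1}(1)$, $q = \pi_{D^1_2}(1)$ and $r = \pi_{D^2}(1 \mid \cdot)$, write each agent's expected utility as an explicit multilinear function of $(p,q,r)$, and show the induced best-response map on $[0,1]^3$ has no fixed point -- e.g.\ by exhibiting it as a fixed-point-free ``rotation'' on the relevant torus, or by checking region-by-region that at every profile some agent strictly gains by deviating (by multilinearity it suffices to test deviations to pure decision rules at a single decision, which keeps the case split finite). Concluding that no behavioural NE exists then yields Proposition \ref{prop:NEexistence}; for contrast with the subsequent development on mixed policies I would also note that the very same MAID does admit a mixed-policy NE, so the failure is specific to the behavioural class and is caused exactly by the lack of sufficient recall.

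The step I expect to be the main obstacle is designing the counterexample and rigorously certifying the absence of a fixed point: non-existence is a global claim over the continuous space of behavioural profiles, and although the underlying mechanism (cyclic best responses that cannot be resolved once correlation across an agent's two decisions is forbidden) is standard, the bookkeeping required to rule out \emph{every} behavioural profile -- showing each admits a profitable unilateral deviation -- is where the real work lies.
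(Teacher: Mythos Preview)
Your proposal is correct and takes essentially the same approach as the paper: the paper's proof likewise points to a two-agent MAID in which agent~$1$ has two unlinked binary decisions (hence insufficient recall), agent~$2$ has a single binary decision, and the utilities are chosen so that the only equilibrium of the induced normal form requires agent~$1$ to correlate their two decisions (the paper uses a specific payoff table adapted from Wichardt~\cite{Wichardt2008}). The verification there is exactly the parametrise-by-$(p,q,r)$ argument you sketch---ruling out pure profiles by case analysis, then using the indifference condition for agent~$2$ to derive $(2p-1)^2<0$---and the paper also remarks that a mixed-policy NE does exist, just as you propose to note.
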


Nash's theorem establishes that any finite game is guaranteed to have an NE, as long as all agents are allowed to choose mixed policies \cite{nash1950equilibrium}. However, in general, there is no such guarantee when agents are only permitted to use behavioural policies (for {an example}, see Appendix \ref{app:non-existence_proofs}). Despite this negative result, behavioural policies are often regarded as more `natural' due to their representation of agents randomising at each decision point instead of once at the beginning of the game \cite{osborne1994course}. Moreover, behavioural policies respect the conditional independencies of the MAID's graph. As such, an interesting question is when an NE in behavioural policies is guaranteed to exist. As a first step, it is relatively straightforward to prove an analogue of Kuhn's theorem: if all agents {in} the game have perfect recall (i.e., agents never forget their past moves nor any of the information they knew previously, as introduced in Definition \ref{def:perfectrecall}), then an NE in behavioural policies is guaranteed to exist \cite{Kuhn1953}.

\begin{lemma}
    \label{lem:prequiv}
    Let ${\bm{\pi}}^{-i} \in \Delta(\dom(\bm{D}^{-i}) \mid \dom(\Pa_{\bm{D}^{-i}}))$ be a partial (behavioural or mixed) policy profile for agents $N \setminus \{i\}$ in a MAID $\model$. If agent $i$ has perfect recall in $\model$, then for every mixed policy $\mu^i$ there exists a behavioural policy ${\bm{\pi}}^i$ such that $\Pr^{(\mu^i, {\bm{\pi}}^{-i})}(\bm{v}) = \Pr^{({\bm{\pi}}^i, {\bm{\pi}}^{-i})}(\bm{v})$.
\end{lemma}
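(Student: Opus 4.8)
The plan is to establish the MAID analogue of Kuhn's theorem by explicitly constructing the behavioural policy $\pi^i$ from $\mu^i$ as a family of conditional probabilities, and then verifying by induction along a perfect-recall ordering of agent $i$'s decisions that it induces the same joint distribution over $\bm{V}$. First, since the distribution $\Pr^{(\mu^i,\pi^{-i})}(\bm{v})$ is affine in the distribution over the other agents' pure policy profiles, it suffices to prove the claim when $\pi^{-i}$ is a pure profile; after this reduction every variable other than agent $i$'s decisions carries a fixed CPD and we are working within a single BN (for each value of $\mu^i$'s pure policy). Using perfect recall, fix a topological ordering $D_1 \prec \cdots \prec D_m$ of $\bm{D}^i$ with $\Fa_{D_j} \subseteq \Pa_{D_k}$ for all $j < k$. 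Writing $\Pr^\ast \coloneqq \Pr^{(\mu^i,\pi^{-i})}$, I would then define, for each $D_j$ and each decision context $\pa_{D_j}$ with $\Pr^\ast(\pa_{D_j}) > 0$,
$$ \pi^i_{D_j}(d_j \mid \pa_{D_j}) \;\coloneqq\; \Pr^\ast(d_j \mid \pa_{D_j}), $$
and arbitrarily (say, uniformly) on contexts of zero probability; this gives a well-defined behavioural policy $\pi^i = (\pi^i_{D_1},\dots,\pi^i_{D_m})$.

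The crux is a \emph{sufficiency lemma}: perfect recall makes the above conditionals capture all of the relevant randomness, in the sense that for every $j$ and every context $\pa_{D_j}$ with $\Pr^\ast(\pa_{D_j}) > 0$,
$$ \Pr^\ast\big(d_j \mid \pa_{D_j},\, \bm{w}\big) \;=\; \Pr^\ast\big(d_j \mid \pa_{D_j}\big) $$
for any assignment $\bm{w}$ to a set of non-descendants of $D_j$ in $\graph$. The intuition is that the only stochasticity feeding into $D_j$ under $\mu^i$ is the $D_j$-component of $\mu^i$, conditioned on being consistent with the actions $d_1,\dots,d_{j-1}$ produced earlier; but because $\Pa_{D_j} \supseteq \bigcup_{k<j}\Fa_{D_k}$, conditioning on $\pa_{D_j}$ already fixes all of $d_1,\dots,d_{j-1}$ together with their own decision contexts, so no further information about the history can change the distribution of $d_j$. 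To make this precise I would pass to the extended graph in which the (correlated) pure-policy draw $\dot{\bm{\pi}}^i \sim \mu^i$ is represented as explicit parents of the $D_j$, observe that the resulting structure is a BN, and read off the required independence from d-separation, using that $\pa_{D_j}$ blocks every path from the history into $D_j$ except through $\dot{\pi}_{D_j}$.

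Given the sufficiency lemma, the result follows by a telescoping argument along a global topological order of $\graph$: I would show inductively that each chain-rule factor agrees under $\Pr^\ast$ and under $\Pr^{(\pi^i,\pi^{-i})}$. For non-decision variables and for the decisions of agents other than $i$ the factors coincide because their CPDs (respectively $\pi^{-i}$) are unchanged; for a decision $D_j$ of agent $i$ reached with positive probability, the sufficiency lemma identifies the conditional of $D_j$ given all preceding variables with $\Pr^\ast(d_j \mid \pa_{D_j}) = \pi^i_{D_j}(d_j \mid \pa_{D_j})$, which is exactly the factor induced by $\pi^i$; and one checks, by the same induction (again using $\Fa_{D_k}\subseteq\Pa_{D_j}$ for $k<j$), that the set of decision contexts reachable with positive probability is identical under $\mu^i$ and under $\pi^i$, so the arbitrary choices on null contexts are harmless on both sides. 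Hence $\Pr^{(\mu^i,\pi^{-i})}(\bm{v}) = \Pr^{(\pi^i,\pi^{-i})}(\bm{v})$ for all $\bm{v}\in\dom(\bm{V})$.

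I expect the main obstacle to be the sufficiency lemma: turning ``conditioning on $\pa_{D_j}$ screens off the history'' into a rigorous d-separation statement in the extended graph where the mixed policy's randomness is made explicit, while simultaneously keeping track of which decision contexts are reachable (so that the arbitrary extension of $\pi^i$ off the support is provably irrelevant). A cleaner but less self-contained alternative would be to invoke the classical Kuhn's theorem via the MAID--EFG correspondence of Section \ref{sec:EFG_connections}, noting that a MAID agent with perfect recall maps to an EFG player with perfect recall; I would nonetheless prefer the direct argument here to avoid any circularity with that later section.
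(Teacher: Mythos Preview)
Your overall plan is sound, but the opening affinity reduction is not valid as stated. The lemma asserts the \emph{existence} of a single $\bm{\pi}^i$ matching $\mu^i$ against the given $\bm{\pi}^{-i}$; affinity of $\Pr^{(\mu^i,\bm{\pi}^{-i})}(\bm{v})$ in $\bm{\pi}^{-i}$ does not let you pass to pure profiles, because an existential claim does not aggregate over a mixture unless the witness is common to all components. Concretely, if you reduce to a pure $\dot{\bm{\pi}}^{-i}$ and only then define $\bm{\pi}^i$ via $\Pr^{(\mu^i,\dot{\bm{\pi}}^{-i})}(d_j\mid\pa_{D_j})$, the resulting $\bm{\pi}^i$ may differ across components (through which decision contexts have positive probability), so you cannot recombine. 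The fix is easy: drop the reduction and run your sufficiency lemma and telescoping argument for the given $\bm{\pi}^{-i}$ directly, adding latent pure-policy nodes for any other mixed agents in the extended graph; nothing else in your outline relies on purity of $\bm{\pi}^{-i}$, and the d-separation still goes through.

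By comparison, the paper sidesteps this entirely by constructing $\bm{\pi}^i$ from $\mu^i$ alone, with no reference to $\bm{\pi}^{-i}$ or the realised distribution: it sets $\pi^i_{D_j}(d_j\mid\pa_{D_j})$ to the $\mu^i$-probability that $\dot{\pi}^i_{D_j}(\pa_{D_j})=d_j$ conditional on consistency with the earlier decisions $d_1,\dots,d_{j-1}$ at their contexts (all of which sit inside $\pa_{D_j}$ by perfect recall), and then verifies by a purely combinatorial telescoping that $\prod_j\pi^i_{D_j}(d_j\mid\pa_{D_j})=\mu^i(\bm{d}^i\mid\pa_{\bm{D}^i})$. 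No d-separation is needed, and the $\bm{\pi}^{-i}$-dependent factor $\Pr^{\bm{\pi}^{-i}}(\bm{v}:\bm{d}^i)$ simply cancels on both sides. On positive-probability contexts your $\Pr^\ast(d_j\mid\pa_{D_j})$ in fact coincides with the paper's formula --- that is essentially what your sufficiency lemma is asserting --- so the two routes agree where it matters; the paper's is just more self-contained and avoids the $\bm{\pi}^{-i}$-dependence from the outset.
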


\begin{proposition}
    \label{prop:kuhn}
    In any MAID $\model$ with perfect recall, there exists a (behavioural) policy profile ${\bm{\pi}}$ that is an NE.
\end{proposition}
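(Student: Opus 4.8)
The plan is to reduce the statement to Nash's theorem and then transport the resulting mixed equilibrium down to a behavioural one using the Kuhn-style result of Lemma \ref{lem:prequiv}. First I would view the MAID $\model$ as a finite normal-form game: the players are the agents $i \in N$, the pure strategies of player $i$ are the pure policies $\dot{\bm{\pi}}^i \in \dom(\dot{\bm{\Pi}}_{\bm{D}^i})$ (a finite set, since there are finitely many decisions, each with a finite domain and finitely many decision contexts), and the payoff of $i$ under a pure profile $\dot{\bm{\pi}}$ is $\sum_{U \in \bm{U}^i} \expect_{\dot{\bm{\pi}}}[U]$, computed in the induced BN. By Nash's theorem this game has an equilibrium in mixed strategies, i.e.\ a profile $\mu = (\mu^1, \dots, \mu^n)$ of mixed policies $\mu^i \in \Delta(\dom(\dot{\bm{\Pi}}_{\bm{D}^i}))$ from which no agent can profitably deviate to another mixed policy.

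Next I would convert $\mu$ into a behavioural profile. For each agent $i$, apply Lemma \ref{lem:prequiv} with the fixed partial profile $\mu^{-i}$ to obtain a behavioural policy ${\bm{\pi}}^i$ inducing the same joint distribution over $\bm{V}$ as $\mu^i$. The crucial point is that the construction underlying this lemma (the standard Kuhn construction, which reads off ${\bm{\pi}}^i$ from the probabilities of reaching each decision context under $\mu^i$, exploiting perfect recall) yields a ${\bm{\pi}}^i$ whose equivalence to $\mu^i$ holds against \emph{every} partial profile of the other agents, not just against $\mu^{-i}$. Using this uniformity, one replaces $\mu^1, \dots, \mu^n$ by ${\bm{\pi}}^1, \dots, {\bm{\pi}}^n$ one agent at a time and concludes $\Pr^{{\bm{\pi}}}(\bm{v}) = \Pr^{\mu}(\bm{v})$ for ${\bm{\pi}} = ({\bm{\pi}}^1, \dots, {\bm{\pi}}^n)$, hence $\sum_{U \in \bm{U}^i}\expect_{{\bm{\pi}}}[U] = \sum_{U \in \bm{U}^i}\expect_{\mu}[U]$ for all $i$.

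Then I would verify that ${\bm{\pi}}$ is an NE in behavioural policies, i.e.\ that no agent $i$ gains from a behavioural deviation $\hat{{\bm{\pi}}}^i$. Here I use the \emph{easy} converse to Kuhn, which needs no recall assumption: any behavioural policy $\hat{{\bm{\pi}}}^i$ induces the same joint distribution as the mixed policy $\hat{\mu}^i$ that randomises independently over pure decision rules at each of agent $i$'s decisions and decision contexts. Chaining equalities — first replace ${\bm{\pi}}^{-i}$ by $\mu^{-i}$ (uniformity of the conversions above), then $\hat{{\bm{\pi}}}^i$ by $\hat{\mu}^i$, then apply the mixed-NE property of $\mu$, and finally use $\Pr^{\mu} = \Pr^{{\bm{\pi}}}$ — gives $\sum_{U \in \bm{U}^i}\expect_{(\hat{{\bm{\pi}}}^i, {\bm{\pi}}^{-i})}[U] \le \sum_{U \in \bm{U}^i}\expect_{{\bm{\pi}}}[U]$, which is exactly the condition of Definition \ref{def:NE}.

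The main obstacle is the second step: Lemma \ref{lem:prequiv} as stated only gives distributional equivalence against the \emph{given} partial profile, whereas the argument needs this equivalence to be uniform over all opponent profiles so that the $n$ conversions compose correctly and so that deviations can be compared between the $\mu^{-i}$ and ${\bm{\pi}}^{-i}$ worlds. This uniformity is the real content of Kuhn's theorem and should be extracted from the proof of Lemma \ref{lem:prequiv}; once it is in hand the remainder is bookkeeping. As a sanity check (and alternative route), one can also note that perfect recall implies sufficient recall — the perfect-recall ordering makes each agent's restricted $s$-relevance graph acyclic, cf.\ Definition \ref{def:sufrecall} — so the claim already follows from Proposition \ref{prop:NEexistance}; but the Kuhn-theorem argument above is more self-contained and explains \emph{why} behavioural play suffices.
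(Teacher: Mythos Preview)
Your proposal is correct and follows essentially the same route as the paper: apply Nash's theorem to obtain a mixed NE $\mu$, convert each $\mu^i$ to a behavioural ${\bm{\pi}}^i$ via Lemma \ref{lem:prequiv}, and then rule out any behavioural deviation $\hat{{\bm{\pi}}}^i$ by exhibiting an equivalent mixed deviation $\hat{\mu}^i$ (the paper constructs this product-form $\hat{\mu}^i$ explicitly) that would contradict the mixed-NE property of $\mu$. Your diagnosis of the one delicate point is also accurate and matches what the paper does implicitly: the Kuhn construction in the proof of Lemma \ref{lem:prequiv} builds ${\bm{\pi}}^i$ from $\mu^i$ alone, independently of the opponents' profile, and the paper relies on exactly this uniformity when it chains the $n$ conversions and later swaps ${\bm{\pi}}^{-i}$ for $\mu^{-i}$ against the deviation $\hat{\mu}^i$.
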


{Going further, because MAIDs reveal conditional independencies between variables, only a weaker criterion of \emph{sufficient recall} is sufficient for the existence of an NE in behavioural policies. K\&M implicitly prove this result -- included as Proposition \ref{prop:NEexistance} -- when proving that their Algorithm for finding an NE always converges under certain conditions (as these correspond with sufficient recall) \cite{koller2003multi}. Appendix \ref{app:non-existence_proofs} provides an example of a MAID in which all agents have sufficient but imperfect recall. In this example, there exists a mixed policy profile with no behavioural policy resulting in the same distribution over outcomes. There is still, however, an NE in behavioural policies. On the other hand, a MAID without sufficient recall may \emph{not} have an NE in behavioural policies ({an example is again} given in Appendix \ref{app:non-existence_proofs}).}

\begin{proposition}
    \label{prop:sr_pr}
    If an agent $i$ in a MAID $\model$ has perfect recall, then they also have sufficient recall. However, if an agent has sufficient recall, then they do not always have perfect recall.
\end{proposition}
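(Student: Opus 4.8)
The plan is to treat the two assertions separately: the implication ``perfect recall $\Rightarrow$ sufficient recall'' is the substantive one and is proved by a $d$-separation argument on the relevance graph, while the converse non-implication is settled by a small counterexample.

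For the first part I would fix a topological ordering $D_1 \prec \cdots \prec D_m$ of $\bm{D}^i$ witnessing perfect recall, so that $\Fa_{D_j} \subseteq \Pa_{D_k}$ whenever $j < k$, and show that no $s$-relevance edge among agent $i$'s decision rule variables can point ``forward'' along $\prec$. Concretely, for $j < k$ I would argue that $\Pi_{D_j}$ is not $s$-relevant to $\Pi_{D_k}$; by Proposition \ref{prop:s-reachability} this reduces to checking $\Pi_{D_j} \perp_{\graph'} \bm{U}^i \cap \Desc_{D_k} \mid D_k, \Pa_{D_k}$, where $\graph'$ extends $\graph$ by the single node $\Pi_{D_j}$ and the edge $\Pi_{D_j} \to D_j$. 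Every path out of $\Pi_{D_j}$ in $\graph'$ begins $\Pi_{D_j} \to D_j$; if it stops there it ends at $D_j$, which is a decision variable and (since $D_j \to D_k$ is an edge of the DAG $\graph$ by perfect recall) not a descendant of $D_k$, so it misses the target set. Otherwise the path has a next node $W$. If it leaves $D_j$ via $D_j \to W$, then $D_j$ is a non-collider on the path and lies in the conditioning set, so the path is blocked. If it leaves via $D_j \gets W$, then $D_j$ is a collider that is conditioned on, so the path is not yet blocked there; but now $W \in \Pa_{D_j} \subseteq \Fa_{D_j} \subseteq \Pa_{D_k}$ is also conditioned on, and $W$ occurs as a non-collider (the edge on its $D_j$ side is $W \to D_j$, so $W$ is a chain or a fork whatever the next edge is), hence the path is blocked at $W$. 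So all paths are blocked, the $d$-separation holds, and $\Pi_{D_j}$ is not $s$-relevant to $\Pi_{D_k}$. The same argument with $k$ in place of $j$ shows $\Pi_{D_k}$ is not $s$-relevant to itself, so there are no self-loops either. Consequently the restriction of the $s$-relevance graph to $\{\Pi_D : D \in \bm{D}^i\}$ contains only edges $\Pi_{D_k} \to \Pi_{D_j}$ with $j < k$; since $\prec$ totally orders the finite set $\bm{D}^i$, this subgraph is acyclic, i.e.\ agent $i$ has sufficient recall.

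For the second part I would exhibit a MAID in which agent $i$ controls two decisions $D_1$ and $D_2$ lying in disjoint, non-communicating parts of the graph --- for instance with $U^i_1$ the only utility descendant of $D_1$, $U^i_2$ the only utility descendant of $D_2$, no edges between the two components, and $D_1 \notin \Pa_{D_2}$, $D_2 \notin \Pa_{D_1}$. No ordering of $\{D_1, D_2\}$ can witness perfect recall, since that would require (say) $D_1 \in \Pa_{D_2}$; yet, reasoning exactly as above, no path out of $\Pi_{D_1}$ ever leaves the $D_1$-component and hence never reaches $U^i_2$, so $\Pi_{D_1} \perp_{\graph'} \bm{U}^i \cap \Desc_{D_2} \mid D_2, \Pa_{D_2}$, and symmetrically, making the restricted $s$-relevance graph edgeless and therefore acyclic. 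A fully worked instance appears in Appendix \ref{app:non-existence_proofs}. The main obstacle is the collider case in the first part: one has to notice that conditioning on $D_j$ is \emph{forced} because $D_j \in \Pa_{D_k}$, so this collider is always open, and that the path is nevertheless re-blocked immediately because perfect recall pushes every parent of $D_j$ into $\Pa_{D_k}$ and hence into the conditioning set, where it necessarily appears as a non-collider; everything else is routine bookkeeping.
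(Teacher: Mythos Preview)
Your proposal is correct and follows essentially the same route as the paper: for the forward implication you use the $s$-reachability criterion together with $\Fa_{D_j}\subseteq\Pa_{D_k}$ to rule out any edge $\Pi_{D_j}\to\Pi_{D_k}$ with $j<k$, exactly as the paper does, only you spell out the $d$-separation case analysis that the paper leaves implicit. One minor gap worth patching: in the collider case $\Pi_{D_j}\to D_j\gets W$, you should also note that the path cannot terminate at $W$ (since $W\in\Pa_{D_j}$ is an ancestor of $D_k$ and hence not in $\Desc_{D_k}$), before arguing that $W$ is a conditioned non-collider when the path continues. For the converse, the paper's counterexample (two decisions $A\to B$ with a parent $X$ of $A$ that $B$ does not observe) is different from your ``two disconnected components'' construction, but both are valid witnesses of sufficient-but-imperfect recall.
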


\subsection{Subgames}
\label{sec:MAID_subgames}

Subgames in EFGs represent parts of the game that can be solved independently from the rest; we now introduce the analogous notion of $\R$-\emph{subgames} in MAIDs. These subgames have three uses: they allow us to introduce further equilibrium refinements (in Section \ref{sec:equilibrium_refinements}); they can reduce the cost of computing equilibria \cite{Hammond2021}; and they allow us to analyse agents' decision-making in isolation from other parts of the game, which may be useful for other forms of analysis (such as those discussed in Section \ref{sec:applications}). {In Appendix \ref{app:subgamecompute}, we demonstrate the second of these facts by empirically showing how subgames in MAIDs can be used to compute NEs much more efficiently than in EFGs.}

However, there are two key differences between subgames in EFGs and those in MAIDs. Firstly, because MAIDs explicitly represent conditional independencies between variables, we can often find more subgames in a MAID than in its corresponding EFG. Secondly, because the ability to solve part of a game independently varies with the solution concept, the $\R$-subgames vary with $\R$. Given a set of graphical criteria, $\R$-reachability, such as those in \cref{sec:relevance}, one can identify the structure of any $\R$-subgames -- which we refer to as an $\R$-subdiagram -- using only the underlying graph. 

\begin{definition}
    Given a mechanised MAID $\mec{\model} = (\mec{\graph}, \bm{\theta}, \R)$ and a set of sound and complete graphical criteria for $\R$-relevance -- ie., $\R$-reachability -- we refer to the subgraph $(\bm{V}',\mathscr{E}')$ of $\graph$, along with the set of agents $N' \subseteq N$ possessing decision variables in that subgraph, as an $\R$-\textbf{subdiagram} $\graph' = (N', \bm{V}',\mathscr{E}')$ if:
    \begin{itemize}
        \item $\bm{V}'$ contains every variable $Z$ such that $\mecvar_Z$ is $\mathcal{R}$-reachable from some $\Pi_{D}$ with $D \in \bm{V}'$;
        \item $\bm{V}'$ contains, for all $X,Y \in \bm{V}'$, every variable that lies on a directed path $X \pathto Y$ in $\graph$.
    \end{itemize}
    As before, we drop $\R$ from our notation where unimportant or unambiguous.
\end{definition}

The first condition on $\bm{V}'$ ensures that for any decision variable $D$ in the subdiagram, any variable whose mechanism may impact the rational response for $D$ is also included in the graph. This means that the rational responses for the decision rules in this part of the game are independent of what happens elsewhere. The second condition says that additional variables may also be included in the subdiagram as long as mediators are included too. This ensures that the CPDs for all the variables in the subgame remain consistent, and allows us to define a correspondence between subgames in MAIDs and subgames in EFGs (in Section \ref{sec:equivalences}). To find the subgames for each subdiagram, we must update the parameterisation of the remaining variables to be consistent with the original game and the structure of the graph.

\begin{definition}
    \label{def:MAIDsubgame}
    Given a mechanised MAID $\mec{\model} = (\mec{\graph}, \bm{\theta}, \R)$, an $\R$-\textbf{subgame} of $\model$ is a new MAID $\model'=(\graph', \bm{\theta}')$ where $\graph'$ is an $\R$-subdiagram of $\graph$ and
    $\bm{\theta}'$ is defined by $\Pr'(\bm{v}' ; \bm{\theta}') \coloneqq \Pr(\bm{v}' \mid \bm{z} ; \bm{\theta})$, where $\bm{z}$ is some instantiation of the variables $\bm{Z} = \bm{V}\setminus \bm{V'}$.\footnote{In fact, it can easily be appreciated that only the setting $\bm{z}$ of the variables that have a child in $\bm{V'}$ will matter.}
    An $\R$-subgame is \textbf{feasible} if there exists a policy profile ${\bm{\pi}}$ where $\Pr^{\bm{\pi}}(\bm{z}) > 0$. 
\end{definition}

\begin{figure}[h]
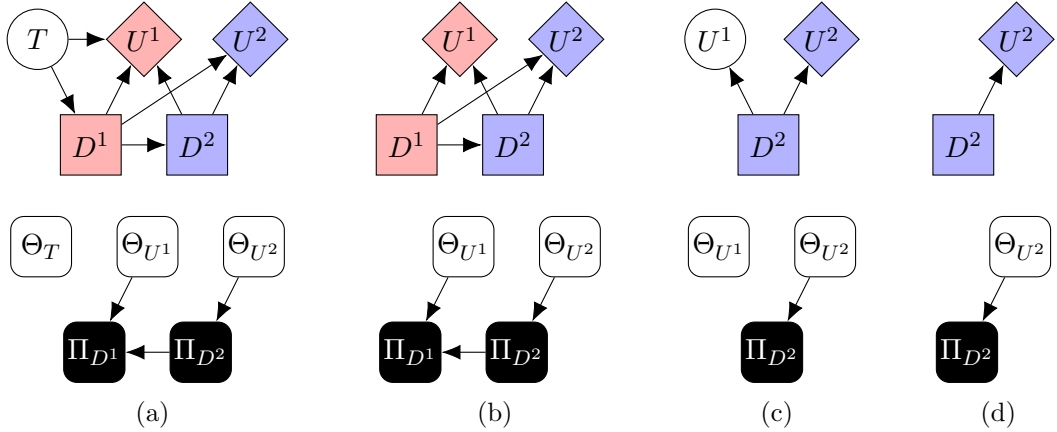

    \centering
    \begin{subfigure}[b]{0.3\linewidth}
        \centering
        \begin{influence-diagram}
            \node (T) [] {$T$};
            \node (U1) [utility, right = of T, player1] {$U^1$};
            \node (U2) [utility, right = of U1, player2] {$U^2$};
            \node (D1) [decision, below right = 1.4cm and 0.7cm of T, player1] {$D^1$};
            \node (D2) [decision, right = of D1, player2] {$D^2$};
            \edge [information] {T} {D1};
            \edge [information] {D1} {D2};
            \edge {T} {U1};
            \edge {D1} {U1,U2};
            \edge {D2} {U1,U2};
        \end{influence-diagram}
        
        \vspace{0.5cm}

        \begin{influence-diagram}
            \node (X_mec) [relevancew] {$\Theta_T$};
            \node (U1_mec) [relevancew, right = of X_mec] {$\Theta_{U^1}$};
            \node (U2_mec) [relevancew, right = of U1_mec] {$\Theta_{U^2}$};
            \node (D1_mec) [relevanceb, below right = 1.4cm and 0.7cm of X_mec] {$\Pi_{D^1}$};
            \node (D2_mec) [relevanceb, right = of D1_mec] {$\Pi_{D^2}$};
            \edge {D2_mec} {D1_mec};
            \edge {U1_mec} {D1_mec};
            \edge {U2_mec} {D2_mec};
        \end{influence-diagram}
        \caption{}
        \label{fig:subgames:1}
    \end{subfigure}
    \begin{subfigure}[b]{0.25\linewidth}
        \centering
        \begin{influence-diagram}
            \node (U1) [utility, player1] {$U^1$};
            \node (U2) [utility, right = of U1, player2] {$U^2$};
            \node (D1) [decision, below left = 1.4cm and 0.7cm of U1, player1] {$D^1$};
            \node (D2) [decision, right = of D1, player2] {$D^2$};
            \edge [information] {D1} {D2};
            \edge {D1} {U1,U2};
            \edge {D2} {U1,U2};
        \end{influence-diagram}

        \vspace{0.5cm}

        \begin{influence-diagram}
            \node (U1_mec) [relevancew] {$\Theta_{U^1}$};
            \node (U2_mec) [relevancew, right = of U1_mec] {$\Theta_{U^2}$};
            \node (D1_mec) [relevanceb, below left = 1.4cm and 0.7cm of U1_mec] {$\Pi_{D^1}$};
            \node (D2_mec) [relevanceb, right = of D1_mec] {$\Pi_{D^2}$};
            \edge {D2_mec} {D1_mec};
            \edge {U1_mec} {D1_mec};
            \edge {U2_mec} {D2_mec};
        \end{influence-diagram}
        \caption{}
        \label{fig:subgames:2}
    \end{subfigure}
    \begin{subfigure}[b]{0.2\linewidth}
        \centering
        \begin{influence-diagram}
            \node (U1) [] {$U^1$};
            \node (U2) [utility, right = of U1, player2] {$U^2$};
            \node (D2) [decision, below right = 1.4cm and 0.7cm of U1, player2] {$D^2$};
            \edge {D2} {U1,U2};
        \end{influence-diagram}
        
        \vspace{0.5cm}

        \begin{influence-diagram}
            \node (U1_mec) [relevancew] {$\Theta_{U^1}$};
            \node (U2_mec) [relevancew, right = of U1_mec] {$\Theta_{U^2}$};
            \node (D2_mec) [relevanceb, below right = 1.4cm and 0.7cm of U1_mec] {$\Pi_{D^2}$};
            \edge {U2_mec} {D2_mec};
        \end{influence-diagram}
        \caption{}
        \label{fig:subgames:3}
    \end{subfigure}
    \begin{subfigure}[b]{0.15\linewidth}
        \centering
        \begin{influence-diagram}
            \node (U2) [utility, player2] {$U^2$};
            \node (D2) [decision, below left = 1.4cm and 0.7cm of U2, player2] {$D^2$};
            \edge {D2} {U2};
        \end{influence-diagram}

        \vspace{0.5cm}

        \begin{influence-diagram}
            \node (U2_mec) [relevancew] {$\Theta_{U^2}$};
            \node (D2_mec) [relevanceb, below left = 1.4cm and 0.7cm of U2_mec] {$\Pi_{D^2}$};
            \edge {U2_mec} {D2_mec};
        \end{influence-diagram}

        \caption{}
        \label{fig:subgames:4}
    \end{subfigure}
    \caption{(a) A MAID $\model = (\graph, \theta)$ for the modified version of Example \ref{ex:job_market} -- in which the firm's profits are also function of the worker's decision but not of their temperament -- and resulting $s$-relevance graph. The graph $\graph$ is also an (improper) $s$-subdiagram and the full game an (improper) $s$-subgame. Figures (b), (c), and (d) illustrate the remaining (proper) $s$-subdiagrams of $\graph$ and their $s$-relevance graphs.}
    \label{fig:subgames}
\end{figure}

$\R$-subgames can be found for any $\R$ using only $\R$-reachability. In particular, $s$-reachability produces $s$-subgames, which in many ways are the most natural form of subgame in MAIDs (because of their connection to subgames in EFGs, as shown formally in Section \ref{sec:equivalences}). In the remainder of the paper, unless otherwise specified, we therefore focus our attention on this case.
Note also that any MAID is an $\R$-subgame of itself, and so an $\R$-subgame on a strictly smaller set of variables is called a \emph{proper} $\R$-subgame. For example, the MAID for Example \ref{ex:job_market} (shown in Figure \ref{fig:job_market_game:MAID}) has no proper $\R^{\BR}$-subgames because $\Pi_{D^1}$ and $\Pi_{D^2}$ are both $\R^{\BR}$-reachable from one another.

\subsubsection*{Identifying More Subgames in MAIDs}
Before continuing, we note that the conditional dependencies captured in MAIDs allow for a richer and stronger notion of subgames than in EFGs. Not only can different notions of $\R$-subgame be introduced for different rationality relations, but it is often possible to identify more subgames (and hence rule out more non-credible threats) in a MAID than in the corresponding EFG. This can be seen by considering a minor variation on Example \ref{ex:job_market}. Suppose that the firm has a new vacancy in which what is important is not whether the worker is hard-working or lazy, but whether they have studied at university. In other words, $U^2$ is a function of $D^1$ and $D^2$ but not $T$. 

The game graph and $s$-relevance graph for this example are shown in Figure \ref{fig:subgames:1}. Note that the \emph{structure} of the EFG (shown in Figure \ref{fig:job_market_game:EFG}) does not change, only the payoffs for the firm. As such, there are no proper subgames in the EFG because when the hiring system is making its decision $(D^2)$, it cannot observe the value of $T$ and so no proper subtree is closed under both descendants and information sets. In contrast, we can recognise three proper $s$-subdiagrams (shown in Figures \ref{fig:subgames:2}, \ref{fig:subgames:3}, and \ref{fig:subgames:4}) of the equivalent MAID. Each of these $s$-subdiagrams has two $s$-subgames associated with it owing to the two values that $T$ can take (for the $s$-subdiagram in Figure \ref{fig:subgames:2}) and the two values that $D^1$ can take (for the $s$-subdiagrams in Figures \ref{fig:subgames:3} and \ref{fig:subgames:4}).

\subsection{Equilibrium Refinements}
\label{sec:equilibrium_refinements}
    
{When more than one NE exists, it is useful to specify additional criteria to rule out less plausible outcomes}. This corresponds to making additional assumptions about the rationality of agents, which can be encoded as stricter rationality relations. Below, we provide definitions of two of the most important equilibrium refinements -- subgame perfect equilibria \cite{selten1965spieltheoretische} and trembling hand perfect equilibria \cite{selten1974reexamination} -- within MAIDs. Later, in Section \ref{sec:equivalences}, we provide proofs regarding various equivalences between these definitions and those in EFGs.

\subsubsection*{Subgame Perfect Equilibrium}
The concept of a subgame perfect equilibrium (SPE) was introduced into EFGs in order to eliminate NEs containing \emph{non-credible threats} -- choices made by an agent in a sequential game that would not be in their best interest to carry out if given the opportunity \cite{selten1965spieltheoretische, selten1974reexamination}. In MAIDs, we can rule out non-credible threats by ensuring that each agent plays a best response in every feasible $s$-subgame. In games with sufficient recall, an SPE (in behavioural policies) can always be constructed by performing backwards induction over the $s$-subgames and finding an NE in each; {it is this technique in Appendix \ref{app:subgamecompute} that allows an NE to be computed much more efficiently than in a corresponding EFG.} However, if even one agent in the game has insufficient recall, an SPE may not exist even when allowing for mixed policies {(see Appendix \ref{app:non-existence_proofs} for an example)}.

\begin{definition}
    \label{def:SPE}
    A policy profile ${\bm{\pi}}$ in a MAID $\model$ is a \textbf{subgame perfect equilibrium (SPE)} if ${\bm{\pi}}$ is an NE in every feasible $s$-subgame of $\model$, when restricted to that subgame.\footnote{Note that this notion of subgame perfectness can be generalised to other choices of rationality relations.}
\end{definition}
\begin{proposition}
    \label{prop:SPEexist}
    Any MAID $\model$ with sufficient recall has at least one SPE in behavioural policies.
\end{proposition}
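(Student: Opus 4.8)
The plan is to prove the statement by backwards induction over the $s$-subgames of $\model$, making precise the construction sketched informally just before Definition \ref{def:SPE}. The scaffolding is the condensed $s$-relevance graph: since $\model$ has sufficient recall, the $s$-relevance graph restricted to each agent is acyclic (Definition \ref{def:sufrecall}), so the strongly connected components (SCCs) of the full $s$-relevance graph admit a topological order $\bm{S}_1, \ldots, \bm{S}_k$ in which each $\bm{S}_j$ is processed only after every SCC $s$-relevant to it. We process the $\bm{S}_j$ in this order (``last'' decisions --- the source SCCs --- first), extending a behavioural policy profile ${\bm{\pi}}$ one SCC at a time.

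First I would isolate two facts that make the induction work. (i) Every $s$-subgame of $\model$ again has sufficient recall: by the construction of $s$-subdiagrams (Definition \ref{def:MAIDsubgame}) and the soundness/completeness of $s$-reachability, the $s$-relevance graph of an $s$-subgame is the subgraph of the $s$-relevance graph of $\model$ induced on the mechanism variables it retains, so each agent's part stays acyclic; hence Proposition \ref{prop:NEexistance} applies to every $s$-subgame. (ii) By the defining property of $s$-subdiagrams --- that the rational responses of the decisions they contain depend only on the mechanism variables they contain --- once a decision rule inside an $s$-subgame has been set to a best response there, it remains a best response no matter how the rest of ${\bm{\pi}}$ is chosen later.

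The inductive step runs as follows. At stage $j$ we have already fixed behavioural decision rules for all decisions in $\bm{S}_1 \cup \cdots \cup \bm{S}_{j-1}$; reinterpreting these as chance variables with the chosen CPDs yields a MAID $\model^{(j-1)}$ with fewer decisions that still has sufficient recall (turning a decision into a chance node leaves the object-level graph, and hence the $s$-reachability tests among the remaining decisions, unchanged, so the relevant $s$-relevance graph is just an induced subgraph of that of $\model$). The decisions of $\bm{S}_j$ now form a source SCC in $\model^{(j-1)}$; let $\graph'_j$ be the minimal $s$-subdiagram of $\model^{(j-1)}$ containing them, together with the variables whose mechanisms are $s$-relevant to them and the mediators demanded by Definition \ref{def:MAIDsubgame}. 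For every feasible instantiation $\bm{z}$ of the variables outside $\graph'_j$ that have a child inside it, the $s$-subgame $\model'_j(\bm{z})$ is a MAID with sufficient recall, hence has a behavioural NE by Proposition \ref{prop:NEexistance}; gathering these NEs across all feasible $\bm{z}$ defines behavioural decision rules for the decisions of $\bm{S}_j$, which we append to ${\bm{\pi}}$. After stage $k$, ${\bm{\pi}}$ is a full behavioural policy profile.

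It then remains to check that ${\bm{\pi}}$, restricted to any feasible $s$-subgame of $\model$, is an NE of that subgame. The decision set of any $s$-subdiagram is closed under $s$-relevance predecessors, so it is a union $\bm{S}_{i_1} \cup \cdots \cup \bm{S}_{i_r}$ of SCCs, and by fact (ii) the NE condition for ${\bm{\pi}}$ in such a subgame can be verified SCC by SCC. For each component $\bm{S}_{i_t}$ appearing there, the mechanism variables $s$-relevant to its decisions lie inside both this subgame and the stage-$i_t$ subgame $\model'_{i_t}(\bm{z})$ that was used to define the corresponding rules; one argues that ${\bm{\pi}}$ (together with the relevant context) induces the same values on these mechanism variables in both games, so that being a best response in $\model'_{i_t}(\bm{z})$ --- which it is, by construction --- forces being a best response at the $\bm{S}_{i_t}$-decisions inside the larger subgame. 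I expect this last step to be the main obstacle: making precise that the two induced mechanism-variable valuations (and the two contexts) coincide, and handling the feasibility bookkeeping cleanly --- notably, contexts that are feasible in $\model$ but become probability-zero under the constructed ${\bm{\pi}}$, where the NE property must still be produced from the $s$-subdiagram independence guarantee rather than from any global optimality consideration.
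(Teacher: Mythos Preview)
Your proposal is correct and follows essentially the same route as the paper: both arguments perform backwards induction over the strongly connected components of the $s$-relevance graph (equivalently, over the nested chain of $s$-subdiagrams), fixing behavioural decision rules SCC by SCC and appealing to the fact that sufficient recall guarantees an NE at each stage. The paper's proof is in fact briefer than yours---it largely defers to the proof of Theorem 6.2 in Koller and Milch---whereas you spell out the two key invariants (that $s$-subgames inherit sufficient recall, and that best responses chosen inside an $s$-subgame remain best responses globally) and correctly flag the feasibility and context-matching bookkeeping as the place where care is required.
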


Recall $\bm{\pi}^\NE$ in Example \ref{ex:warehouse}, introduced in Section \ref{sec:NE}, in which robot one chooses $D_1 = q$ and robot two chooses $D_2 = p$ whatever the value of $D_1$. We can immediately see that in the feasible $s$-subgame where $D_1 = \neg q$ -- the $s$-subdiagram for the smallest such $s$-subgame is shown in Figure \ref{fig:warehouse:b} -- then choosing $D_2 = p$ is a non-credible threat, resulting in expected utility $5$ instead of $6$ for robot two. Instead, an example of an SPE is the policy profile ${\bm{\pi}}^{\text{SPE}}$ in which robot one chooses $D_1 = \neg q$, and robot two chooses $D_2 = p$ if and only if $D_1 = q$. Under such a policy profile, robot two achieves its optimal expected utility in any of the feasible $s$-subgames, and given that robot two is following this policy, robot two receives expected utility $2$ regardless of whether they move quickly or not.

Before continuing, we note that rationality relations allow us to capture arbitrary sets of policy profiles as rational outcomes, including the equilibrium refinements in this section. For example, (when each agent has at most one decision) the rational outcomes $\R^\BR(\mec\model)$ are simply the NEs of $\model$, as can easily be seen via inspection of Equation (\ref{eq:BR_relation}) and Definition \ref{def:NE}. Similarly, for a MAID $\model$ let us denote by $\model(D)$ the set of all feasible $s$-subgames containing $D$, and define:
\begin{equation*}
    \label{eq:SP_relation}
    \pi_{D} \in r^{\SP}_{D}(\pa_{\Pi_{D}}) ~~\Leftrightarrow~~ \pi_{D} \in \argmax_{\hat{\pi}_D \in \dom(\Pi_{D})} \sum_{U \in \bm{U}^i \cap \bm{V}'} \expect_{(\hat{\pi}_{D}, \bm{\pi}'_{-D})} [ U ] ~\forall~ \model' \in \model(D),
\end{equation*} 
expressing that each agent plays a best response for their decision rule in every $s$-subgame containing that decision. In other words, $\bm{\pi} \in \R^\SP(\mec\model)$ if $\bm{\pi}' \in \R^\BR(\mec\model')$ for each $s$-subgame $\model'$ of $\model$, where $\bm{\pi}'$ is $\bm{\pi}$ restricted to the decision variables in $\model'$. If $\model$ has sufficient recall, $\R^{\SP}(\mec\model)$ are the SPEs of the game, as stated formally below.
While such representations may sometimes be slightly more cumbersome, encoding equilibria via rationality relations facilitates the use of $\R$-relevance and hence $\R$-reachability. This offers a principled way to identify independencies that can be useful both for causal and game-theoretic reasoning.

\begin{proposition}
    Suppose that a MAID $\model$ has sufficient recall. Then, the set of SPEs of $\model$ is equal to the set of rational outcomes $\R^\SP(\mec\model)$.
\end{proposition}

\subsubsection*{Trembling Hand Perfect Equilibrium}
In an SPE, agents make decisions on the assumption that an SPE will be played in all proper subgames. As a result, however, the optimality of their strategies may not be robust to events in which other agents make mistakes, or `tremble', with some small probability. To solve this problem, we can stipulate that each agent must play a best response (leading to an NE) in each \emph{perturbed game} \cite{selten1974reexamination}. Let $\zeta_k$ be a perturbation vector containing, for every $D \in \bm{D}$, $d \in \dom(D)$, and decision context $\pa_D$, a value $\epsilon^{\pa_D}_d \in (0,1)$ such that $\sum_{d \in \dom(D)} \epsilon^{\pa_D}_d \leq 1$. Then, given a game $\model$, the perturbed game $\model(\zeta_k)$\label{def:pertMAID} is defined such that each decision rule $\pi_D$ is forced to have $\pi_D(d \mid \pa_D) \geq \epsilon^{\pa_D}_d$.
    
\begin{definition}
    \label{def:THPE}
    A policy profile ${\bm{\pi}}$ is a \textbf{trembling hand perfect equilibrium (THPE)} in a MAID $\model$ if there is a sequence of perturbation vectors $\{\zeta_k\}_{k\in\mathbb{N}}$ such that $\lim_{k \rightarrow \infty}\Vert\zeta_k\Vert_\infty = 0$ and for each perturbed MAID $\model(\zeta_k)$ there is an NE ${\bm{\pi}}_k$ such that $\lim_{k \rightarrow \infty} {\bm{\pi}}_k = {\bm{\pi}}$.
\end{definition}

For example, the policy profile ${\bm{\pi}}^{\text{SPE}}$ is \emph{not} a THPE. To see this, suppose that robot two trembles with probability $\epsilon > 0$ when $D_1 = q$ and $\epsilon' > 0$ when $D_1 = \neg q$. Then, robot one's expected utility when playing $q$ is $2 + 2\epsilon$ and when playing $\neg q$ is $2 - \epsilon'$. Therefore in any NE of the perturbed game robot one will play $q$ with probability one. One THPE is given by the policy profile ${\bm{\pi}}^{\text{THPE}}$ in which robot one chooses $D_1 = q$ and robot two chooses $D_2 = p$ if and only if $D_1 = q$. Note that in any two-agent game, THPEs rule out all weakly dominated policies. In this example, robot one's policy of always choosing $D^1 = \neg q$ is weakly dominated by the policy of always choosing $D^1 = q$ (which itself is not weakly dominated by any other policy).

\section{Connections to EFGs}
\label{sec:EFG_connections}

EFGs are perhaps the most widely studied model of dynamic strategic decision-making. Despite their intuitive appeal, these tree-based models can be less concise and reveal less of the underlying structure of a game than the DAG-based models we use in this paper. With that said, a natural question is whether the game-theoretic definitions for MAIDs in Section \ref{sec:subgames_eqs} successfully capture the familiar concepts of their EFG counterparts, or whether we might lose something by working with MAIDs -- and hence (S)CGs -- instead. 

In this section, we show that such concerns are unwarranted: these game-theoretic concepts are preserved when we convert between representations. We begin by briefly describing two algorithms, \texttt{maid2efg} and \texttt{efg2maid}, that implement these conversions. Using these procedures, we then provide equivalence results for the definitions from Section \ref{sec:subgames_eqs}. Following this, we briefly discuss how tree-based models can also be used for causal reasoning in certain cases.

\subsection{Transformations}
\label{sec:transformations}

We briefly summarise two procedures for converting between MAIDs and EFGs. A more formal treatment of both can be found in Appendix \ref{app:transformations}.

\subsubsection*{From MAID to EFG}
There are many ways to convert a MAID $\model = (\graph, \bm{\theta})$ into an EFG $\efg$, but these differ in their computational costs \cite{koller2003multi, pearl1988probabilistic}. The basic idea, as employed by K\&M, is to use a topological ordering $\prec$ over the variables of $\graph$ to construct $\efg$'s game tree by splitting on each of the variables in order. This splitting is required due to the fact that a variable in $\graph$ defines what happens given any context (a setting of the parent variables), whereas a node in $\efg$ defines what happens only in one context (the path taken to reach that node). As such, we may end up with exponentially more nodes in $\efg$ than variables in $\graph$.

By querying the CPDs of each variable, branches from chance nodes are labelled with probabilities based on the path taken from the root of the tree to that node, and similarly for the utilities assigned to each leaf; branches from decision nodes are labelled with the possible actions available. Given two nodes corresponding to the same decision variable $D$ in $\graph$, we assign them to the same information set if and only if the values taken by their ancestors along the paths from the root of the tree to each node agree on all those variables in $\Pa_D$. Note that because there can be more than one topological ordering $\prec$\label{def:topo}, we regard the output of \texttt{maid2efg} as a \emph{set} of EFGs -- one for each possible ordering.

\begin{remark}
    This procedure can be made more efficient by marginalising out all variables not in  $\bm{U} \cup \Fa_{\bm{D}}$, such as in the reduced EFG for Example \ref{ex:warehouse} (shown in Figure \ref{fig:warehouse:c}), which only has $2^2$ leaves, as opposed to the $2^3$ leaves that would have resulted had we retained all the variables in the original MAID (in Figure \ref{fig:warehouse:a}). Importantly, the information in this reduced EFG is sufficient for computing its equilibria, and can thus offer significant efficiency gains (since the cost of solving an EFG depends on its size, which is exponential in the length of $\prec$). In our codebase \cite{pycid}, we therefore implement this more efficient transformation (originally described by K\&M), which can be used in conjunction with Gambit, a popular tool for solving EFGs \cite{mckelvey2006gambit}.
\end{remark}

\subsubsection*{From EFG to MAID}
By encoding the CPDs for each variable in the MAID using trees as opposed to tables, MAIDs can represent any game using at most the same (but often exponentially less) space than an EFG \cite{koller2003multi}. In general, there are many MAIDs that can represent a given EFG. For instance, upon converting the EFG representation (in Figure \ref{fig:job_market_game:EFG}) of Example \ref{ex:job_market} to a MAID, we could na\"{i}vely create a decision variable for each information set. Alternatively, we could recognise, for example, that $V^1_1$ and $V^1_2$ correspond to the same real-world variable -- the worker's choice to go to university or not -- and thus combine them (as shown in Figure \ref{fig:job_market_game:MAID}, in which the two information sets for the second agent have also been combined).

Whether two nodes represent the same variable in an EFG is a matter of domain knowledge external to the EFG, but this knowledge typically exists when creating a model of a game and is then lost by viewing the interaction as an EFG. By formalising this notion as a question of whether nodes belong to the same \emph{intervention set}, we provide a procedure \texttt{efg2maid} (described fully in Appendix \ref{app:EFG2MAID}), which maps an EFG to a \emph{unique}, canonical MAID.

\begin{definition}
\label{def:intervention_set}
    An \textbf{intervention set} $J$ \label{def:interset} in an EFG $\efg$ is either a set of chance nodes, information sets belonging to the same agent, or leaves, such that:
    \begin{itemize}
        \item Every node $V \in J$ or $V \in I^i_J \in J$ has the same number of children;
        \item No path from the root of $\efg$ to one of its leaves passes through $J$ more than once.
    \end{itemize}
    {Moreover, for a valid partition of information sets and chance variables into intervention sets, we require that} any path to a node $V \in J$ or $V \in I^i_J \in J$ passes through the same intervention sets {before reaching $J$}.
\end{definition}

In \texttt{efg2maid}, we assume that intervention sets are given; in the absence of such knowledge, one can simply choose each intervention set to be a singleton. The resulting MAID will remain equivalent to the EFG (in the sense described in the following subsection), but may not be as compact as it otherwise could be. The basic idea behind the conversion is to first view the game tree as a MAID, then to add missing edges to each node from its ancestors whilst merging nodes that are in the same intervention set into single variables. Incoming edges from any variable $V$ whose value is not observed by a decision $D$ are then removed from $\Pa_D$, along with any duplicate edges produced by merging nodes. Each leaf intervention set is split into one utility variable for each agent. 
The distributions over each variable $V$ are then formed for each context $\pa_V$ by summing the relevant probabilities of sets of merged edges.

\subsection{Equivalences}
\label{sec:equivalences}

Using these transformations, we derive equivalence results between EFGs and MAIDs to demonstrate that the fundamental game-theoretic notions of subgames and various equilibrium refinements in Section \ref{sec:subgames_eqs} are \emph{preserved} when converting between representations. We begin by defining what we mean for two game representations to be `equivalent'; the underlying idea is that the (behavioural) policy and strategy spaces in each game should be the same and each corresponding (behavioural) policy and strategy profile should lead to the same expected utility for each agent in both games. {An immediate consequence of this is that NEs are also preserved.}

\begin{definition}
    \label{def:equiv}
    We say that a MAID $\model$ is \textbf{equivalent} to an EFG $\efg$ (and vice versa) if there is a bijection {for each agent $f^i: \Sigma^i \rightarrow \dom(\bm{\Pi}^i) / \sim$} between {their} {strategies} in $\efg$ and a partition of {their} {policies} in $\model$ (the quotient set of {$\dom(\bm{\Pi}^i)$} by an equivalence relation $\sim$ where {${\bm{\pi}^i}\sim{\hat{\bm{\pi}}^i}$} if and only if {$\bm{\pi}^i$} and {$\hat{\bm{\pi}}^i$} differ only on infeasible decision contexts) such that for every ${\bm{\pi}} \in f(\sigma)$ and every agent $i$, we have $\expect_{\sigma} \big[ U(\rho[\bm{L}])[i] \big] = \sum_{U \in \bm{U}^i} \expect_{{\bm{\pi}}} [ U ]${, where $f(\sigma) \coloneqq \bigtimes_{i \in N} f^i(\sigma^i)$}. We refer to any such $f$ as a \textbf{natural mapping}\label{def:natmap} between $\efg$ and $\model$.
\end{definition}

\begin{lemma}
    \label{lem:NEpreservation}
    {Let $f$ be a natural mapping between $\efg$ and $\model$. Then $\sigma$ is an NE in $\efg$ if and only if every ${\bm{\pi}} \in f(\sigma)$ is an NE in $\model$.}
\end{lemma}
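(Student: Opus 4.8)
The plan is to exploit the two defining features of a natural mapping $f$: first, that each component map $f^i \colon \Sigma^i \to \dom(\bm{\Pi}^i)/{\sim}$ is a bijection, so that a unilateral deviation available to agent $i$ in $\efg$ corresponds to one available in $\model$ and vice versa; and second, that whenever ${\bm{\pi}} \in f(\sigma)$ we have $\expect_{\sigma}[U(\rho[\bm{L}])[i]] = \sum_{U \in \bm{U}^i} \expect_{{\bm{\pi}}}[U]$ for every agent $i$. Since an NE (in both $\efg$ and $\model$, cf.\ Definition \ref{def:NE}) is precisely a profile admitting no profitable unilateral deviation, the lemma follows once I show that the set of profitable deviations for agent $i$ against $\sigma^{-i}$ in $\efg$ is in bijective correspondence with the set of profitable deviations for agent $i$ against ${\bm{\pi}}^{-i}$ in $\model$, for any fixed ${\bm{\pi}} \in f(\sigma)$. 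Both directions of the biconditional then reduce to this single correspondence.

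To establish that correspondence I would fix $\sigma$, an agent $i$, and some ${\bm{\pi}} \in f(\sigma) = \bigtimes_{j \in N} f^j(\sigma^j)$, which is non-empty since each $f^j(\sigma^j)$ is an equivalence class. Given any $\hat{\sigma}^i \in \Sigma^i$, choose any $\hat{\bm{\pi}}^i \in f^i(\hat{\sigma}^i)$; conversely, since $f^i$ is onto the quotient, any $\hat{\bm{\pi}}^i \in \dom(\bm{\Pi}^i)$ lies in $f^i(\hat{\sigma}^i)$ for a unique class $\hat{\sigma}^i$. Either way, $(\hat{\bm{\pi}}^i, {\bm{\pi}}^{-i})$ lies in $f^i(\hat{\sigma}^i) \times \bigtimes_{j \neq i} f^j(\sigma^j) = f(\hat{\sigma}^i, \sigma^{-i})$, because ${\bm{\pi}}^j \in f^j(\sigma^j)$ for all $j \neq i$. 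Applying the expected-utility equality of a natural mapping both to $(\sigma, {\bm{\pi}})$ and to $\big((\hat{\sigma}^i, \sigma^{-i}), (\hat{\bm{\pi}}^i, {\bm{\pi}}^{-i})\big)$ then gives $\expect_{\sigma}[U(\rho[\bm{L}])[i]] = \sum_{U \in \bm{U}^i} \expect_{{\bm{\pi}}}[U]$ and $\expect_{(\hat{\sigma}^i, \sigma^{-i})}[U(\rho[\bm{L}])[i]] = \sum_{U \in \bm{U}^i} \expect_{(\hat{\bm{\pi}}^i, {\bm{\pi}}^{-i})}[U]$; subtracting shows that $\hat{\sigma}^i$ strictly improves on $\sigma^i$ against $\sigma^{-i}$ in $\efg$ if and only if $\hat{\bm{\pi}}^i$ strictly improves on ${\bm{\pi}}^i$ against ${\bm{\pi}}^{-i}$ in $\model$.

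It then remains to assemble the two directions. If $\sigma$ is an NE in $\efg$, then for every agent $i$ no $\hat{\sigma}^i$ is a profitable deviation, so by the correspondence no $\hat{\bm{\pi}}^i$ is either, for any ${\bm{\pi}} \in f(\sigma)$; hence every such ${\bm{\pi}}$ satisfies the $\argmax$ condition of Definition \ref{def:NE} and is an NE. Conversely, if every ${\bm{\pi}} \in f(\sigma)$ is an NE, fix one; then no agent has a profitable MAID deviation from it, so by the correspondence no agent has a profitable EFG deviation from $\sigma$, and $\sigma$ is an NE. The only point needing care is that $\sim$ identifies policies differing solely on infeasible decision contexts, so a priori different representatives ${\bm{\pi}} \in f(\sigma)$ might behave differently; but this is precisely why the natural-mapping equality is stipulated for \emph{every} ${\bm{\pi}} \in f(\sigma)$, which makes the argument independent of the choice of representative. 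I do not expect any substantive obstacle here — the proof is a direct unwinding of the definitions, with all the genuine content (verifying the expected-utility identity and the bijection) already discharged in the construction of the natural mappings produced by \texttt{maid2efg} and \texttt{efg2maid}.
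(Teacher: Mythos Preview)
Your proposal is correct and follows essentially the same approach as the paper: both arguments use the per-agent bijections $f^i$ together with the expected-utility identity in Definition~\ref{def:equiv} to show that a profitable unilateral deviation exists on one side if and only if one exists on the other. Your write-up is in fact more careful than the paper's, making explicit the product structure $f(\hat{\sigma}^i,\sigma^{-i}) = f^i(\hat{\sigma}^i)\times\bigtimes_{j\neq i}f^j(\sigma^j)$ and the role of $\sim$, whereas the paper gives one direction and then remarks that the other is symmetric.
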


The reason we use an equivalence relation on the space of policies is that \texttt{efg2maid} can introduce additional infeasible decision contexts: those $\pa_D$ such that $\Pr^{\bm{\pi}}(\pa_D) = 0$ for all $\bm{\pi}$, corresponding to paths in the EFG that do not exist. An agent's choice in such decision contexts has no bearing on the outcome of the game, meaning we may safely abstract away from them. Any natural mapping is therefore sufficient for preserving the essential game-theoretic features of each representation, as we show below, though it may not be unique. We begin with a supporting lemma that justifies the correctness of the procedures \texttt{maid2efg} and \texttt{efg2maid}, and forms the basis of our other results.

\begin{lemma}
    \label{lem:correspondence}
    If $\efg\in \mathtt{maid2efg} (\model)$ or
    $\model =\mathtt{efg2maid}(\efg)$, then $\efg$ and $\model$ are equivalent.
\end{lemma}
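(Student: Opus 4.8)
The plan is to prove Lemma \ref{lem:correspondence} by establishing that, in both directions of conversion, the algorithms produce games whose policy/strategy spaces coincide (up to the equivalence relation $\sim$ on infeasible decision contexts) and that matching policy and strategy profiles induce the same distribution over outcomes, and hence the same expected utilities. Since Definition \ref{def:equiv} requires exactly these two properties (a bijection $f^i$ between strategies and equivalence classes of policies, together with equality of expected utilities along every $\bm{\pi} \in f(\sigma)$), it suffices to exhibit the natural mapping $f$ explicitly in each case and verify these conditions. I would treat the two directions separately, but note that the core computation — showing the induced joint distributions agree — is essentially symmetric.

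First, for $\efg \in \mathtt{maid2efg}(\model)$: fix a topological ordering $\prec$ used by the algorithm. I would define $f^i$ by sending a strategy $\sigma^i = \{\sigma^i_j\}$ to the policy $\bm{\pi}^i$ whose decision rule $\pi_D(d \mid \pa_D)$ at each decision variable $D$ is read off from the information-set distribution $\sigma^i_j$ at any node corresponding to $D$ with context $\pa_D$ — this is well-defined precisely because the algorithm assigns two nodes for $D$ to the same information set iff their root-paths agree on $\Pa_D$. Conversely, a policy determines a strategy by the same correspondence on feasible contexts, and the choice on infeasible contexts is quotiented out by $\sim$. The key computational step is then an induction along $\prec$ over the tree: for any path $\rho$ from the root, the product of branch labels (chance probabilities and decision-rule probabilities) accumulated along $\rho$ equals $\prod_{V} \Pr^{\bm{\pi}}(v \mid \pa_V)$ restricted to the variables split so far, because each split on variable $V$ at depth matching $\prec$ multiplies in exactly the factor $\Pr^{\bm{\pi}}(v \mid \pa_V)$ for the context $\pa_V$ dictated by $\rho$ (the parents of $V$ precede $V$ in $\prec$, so their values are fixed along $\rho$ by the time $V$ is split). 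Hence $P^\sigma(\rho) = \Pr^{\bm{\pi}}(\bm{v})$ for the corresponding full instantiation $\bm{v}$, and summing the payoff vectors at the leaves gives $\expect_\sigma[U(\rho[\bm{L}])[i]] = \sum_{U \in \bm{U}^i}\expect_{\bm{\pi}}[U]$. The same argument handles the marginalised/reduced EFG, since marginalising variables not in $\bm{U} \cup \Fa_{\bm{D}}$ preserves the relevant marginals over $\Fa_{\bm{D}}$ and the utility variables.

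Second, for $\model = \mathtt{efg2maid}(\efg)$: here I would unwind the construction in Appendix \ref{app:EFG2MAID} — viewing the game tree as a MAID, adding missing edges, merging nodes in a common intervention set, pruning unobserved and duplicate edges, and splitting leaf intervention sets into per-agent utility variables. The natural mapping $f^i$ is again the obvious correspondence between an agent's information-set distributions and their decision rules; the edge-pruning steps are exactly what guarantees $\Pa_D$ captures the information available at the corresponding information set, and the merging of intervention-set nodes is why a single decision rule suffices. One then checks that the CPDs constructed by "summing the relevant probabilities of sets of merged edges" are precisely those making $\Pr^{\bm{\pi}}(\bm{v})$ agree with $P^\sigma$ on the events corresponding to root-to-leaf paths, and that the leaf-splitting into $U^i$ faithfully records $U(\rho[\bm{L}])[i]$. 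Infeasible decision contexts are the $\pa_D$ with $\Pr^{\bm{\pi}}(\pa_D) = 0$ for every $\bm{\pi}$ — corresponding to nonexistent paths in $\efg$ — and these are absorbed by $\sim$, so $f^i$ descends to a bijection $\Sigma^i \to \dom(\bm{\Pi}^i)/\sim$.

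The main obstacle I anticipate is bookkeeping rather than conceptual: making the "splitting on variables in order $\prec$" and the "merging of intervention sets / pruning of edges" correspondences fully rigorous, especially showing that the information-set partition produced by \texttt{maid2efg} matches up coherently with the $\Pa_D$ structure (and vice versa for \texttt{efg2maid}), and handling the infeasible-context quotient carefully so that $f^i$ is genuinely a bijection onto $\dom(\bm{\Pi}^i)/\sim$ rather than merely a surjection or injection. Once the distributional identity $P^\sigma(\rho) = \Pr^{\bm{\pi}}(\bm{v})$ is established by the inductive argument along the topological order, the equality of expected utilities — and hence equivalence in the sense of Definition \ref{def:equiv}, with Lemma \ref{lem:NEpreservation} following as an immediate corollary — is a routine summation. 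I would therefore structure the proof as: (i) define $f$ in each direction; (ii) verify $f^i$ is a bijection modulo $\sim$; (iii) prove the path-probability/joint-distribution identity by induction; (iv) conclude equality of expected utilities.
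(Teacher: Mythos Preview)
Your proposal is correct and follows essentially the same approach as the paper's proof: both define the natural mapping via the algorithmic correspondence between information sets and feasible decision contexts (setting $\sigma^i_j(d) \leftrightarrow \pi_D(d\mid\pa_D)$), verify this yields a bijection modulo $\sim$, and then argue that expected utilities agree by construction. The only real difference is granularity: the paper's proof is a high-level sketch that dispatches the distributional identity with phrases like ``by reasoning analogously about the chance and utility variables,'' whereas you propose to make this rigorous via an explicit induction along $\prec$ showing $P^\sigma(\rho) = \Pr^{\bm{\pi}}(\bm{v})$ --- a sound and arguably more careful execution of the same idea.
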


This lemma follows directly from the construction of a natural mapping $f$ using the two procedures, \texttt{maid2efg} and \texttt{efg2maid} respectively. The intuition is that the information sets in an EFG correspond to the feasible decision contexts in a MAID, and thus a behavioural strategy profile $\sigma$ in the EFG corresponds to a behavioural policy profile ${\bm{\pi}}$ in the MAID, and vice versa. {The following result is a direct corollary of Lemma \ref{lem:NEpreservation} and Lemma \ref{lem:correspondence}.}

\begin{corollary}
\label{prop:BE}
    If $\efg\in\mathtt{maid2efg}(\model)$ or $\model = \mathtt{efg2maid}(\efg)$, then there is a natural mapping $f$ between $\efg$ and $\model$ such that $\sigma$ is an NE in $\efg$ if and only if every ${\bm{\pi}} \in f(\sigma)$ is an NE in $\model$.
\end{corollary}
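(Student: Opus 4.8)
The plan is to observe that this corollary is nothing more than the composition of the two lemmas immediately preceding it, so the proof is a short chaining argument. First I would note that by hypothesis either $\efg \in \mathtt{maid2efg}(\model)$ or $\model = \mathtt{efg2maid}(\efg)$; in either case, Lemma \ref{lem:correspondence} tells us that $\efg$ and $\model$ are equivalent in the sense of Definition \ref{def:equiv}. By that definition, equivalence means precisely that there exists a natural mapping $f$ between $\efg$ and $\model$ — a bijection $f^i : \Sigma^i \to \dom(\bm{\Pi}^i)/\!\sim$ for each agent $i$, with $f(\sigma) = \bigtimes_{i \in N} f^i(\sigma^i)$, such that $\expect_{\sigma}\big[U(\rho[\bm{L}])[i]\big] = \sum_{U \in \bm{U}^i} \expect_{{\bm{\pi}}}[U]$ for every ${\bm{\pi}} \in f(\sigma)$ and every $i$. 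So I would fix one such $f$, produced by the relevant construction (via \texttt{maid2efg} or \texttt{efg2maid}).

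The second step is to apply Lemma \ref{lem:NEpreservation} to this particular $f$. Since $f$ is a natural mapping between $\efg$ and $\model$, the lemma states directly that $\sigma$ is an NE in $\efg$ if and only if every ${\bm{\pi}} \in f(\sigma)$ is an NE in $\model$. This is exactly the conclusion of the corollary, so the proof is complete once these two invocations are stitched together. I would phrase this as a one-paragraph argument: ``By Lemma \ref{lem:correspondence}, $\efg$ and $\model$ are equivalent, so by Definition \ref{def:equiv} there is a natural mapping $f$ between them. Applying Lemma \ref{lem:NEpreservation} to $f$ yields the claim.''

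There is essentially no obstacle here, since the corollary is explicitly flagged in the text as ``a direct corollary of Lemma \ref{lem:NEpreservation} and Lemma \ref{lem:correspondence}.'' The only subtlety worth a sentence of care is making sure the \emph{same} mapping $f$ is used in both invocations: Lemma \ref{lem:correspondence} guarantees equivalence (hence the \emph{existence} of some natural mapping arising from the construction), and Lemma \ref{lem:NEpreservation} is a statement about an \emph{arbitrary} natural mapping, so feeding the $f$ obtained from the first lemma into the second is legitimate and gives precisely the ``there is a natural mapping $f$ such that \ldots'' form demanded by the corollary. No new calculation, case analysis, or graphical argument is required — all the real work lives in the proofs of the two lemmas (and, underneath them, the correctness of \texttt{maid2efg} and \texttt{efg2maid}), which we are entitled to assume.
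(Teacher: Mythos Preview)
Your proposal is correct and matches the paper's own proof essentially verbatim: invoke Lemma~\ref{lem:correspondence} to obtain equivalence and hence a natural mapping $f$, then apply Lemma~\ref{lem:NEpreservation} to that $f$ to conclude. The only point you add beyond the paper is the (correct and harmless) observation that Lemma~\ref{lem:NEpreservation} applies to an arbitrary natural mapping, so feeding in the $f$ from Lemma~\ref{lem:correspondence} is legitimate.
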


For a subgame $\efg'$ in an EFG, it can be shown that the variables outside $\efg'$ are not $s$-relevant to those in the corresponding $s$-subgame $\model'$. This means that subgames in EFGs have equivalent counterparts in their equivalent MAID, as established by the following proposition. A minor caveat here is that some utility variables may not occur in $\model'$, and so we must add their value (under the setting of the variables outside $\graph'$ that defines $\model'$) to the payoff for the agents in order to equate their expected utilities with that of $\efg'$. For any subgame $\model'$, however, this change in value is constant for each agent under any policy profile, and so has no effect on their decisions.

\begin{proposition}
\label{prop:subgames}
    If $\efg\in\mathtt{maid2efg}(\model)$ or $\model = \mathtt{efg2maid}(\efg)$, then there is a natural mapping $f$ between $\efg$ and $\model$ such that, for every subgame $\efg'$ in $\efg$ there is an $s$-subgame $\model'$ in $\model$ that is equivalent (modulo a constant difference between the utilities for each {agent} under any policy in $\model'$) to $\efg'$ under the natural mapping $f$ restricted to the strategies of $\efg'$.
\end{proposition}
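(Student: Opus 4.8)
The plan is to work with whichever direction of the transformation we are given (the two cases are symmetric once we have a natural mapping $f$, which exists by Lemma \ref{lem:correspondence}), and to show that a subtree $T'$ that is closed under information sets and descendants in $\efg$ carves out an $s$-subdiagram $\graph'$ in $\model$, with matching parameterisation. First I would fix the subgame $\efg'$ of $\efg$ on subtree $T'$, and let $R'$ be its root. Using the correspondence underlying $f$ — information sets in $\efg$ correspond to feasible decision contexts in $\model$, and each node's ancestors along its root-path instantiate exactly the parents of the corresponding variable — I would define $\bm{V}'$ to be the set of object-level variables of $\model$ that have at least one copy appearing strictly below $R'$ in $T'$, together with all mediators on directed paths between them (to satisfy the second condition of an $s$-subdiagram). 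The setting $\bm{z}$ of the remaining variables $\bm{Z} = \bm{V}\setminus\bm{V}'$ is read off from the (common) values their copies take on the path from the true root of $\efg$ down to $R'$; closure of $T'$ under information sets guarantees this path is shared by all nodes below $R'$, so $\bm{z}$ is well-defined, and feasibility of $\efg'$ as a genuine subtree gives $\Pr^{\bm{\pi}}(\bm{z})>0$ for some $\bm{\pi}$, i.e. $\model'$ is feasible.

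Next I would verify the defining conditions of a Definition \ref{def:MAIDsubgame} $s$-subgame. The mediator-closure condition holds by construction. For the $s$-reachability closure condition I would argue that if $D \in \bm{V}'$ then every variable whose mechanism is $s$-reachable from $\Pi_D$ lies in $\bm{V}'$: by Proposition \ref{prop:s-reachability}, $\mecvar_V$ is $s$-reachable from $\Pi_D$ iff $\mecvar_V \not\perp_{\meczero{\graph}} \bm{U}^i \cap \Desc_D \mid D, \Pa_D$, so such a $V$ is connected to $D$ (or to a utility descendant of $D$) by an active path; since $D$ and its relevant utility descendants have copies below $R'$ and $T'$ is closed under descendants, the corresponding copies of the variables on this active path, and in particular of $V$, must also appear below $R'$, placing $V \in \bm{V}'$. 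Conversely, a variable outside $\graph'$ cannot be $s$-relevant to any decision inside it, which is what makes the $s$-subgame solvable independently — this is the content we need for the equivalence to go through. Then I would check that the induced parameterisation $\bm{\theta}'$ with $\Pr'(\bm{v}';\bm{\theta}') = \Pr(\bm{v}'\mid\bm{z};\bm{\theta})$ is exactly the one obtained by running \texttt{maid2efg} on $\efg'$ (resp. \texttt{efg2maid} on $\efg'$): conditioning on $\bm{z}$ reweights chance-node branch labels and leaf utilities in precisely the way that restricting to the subtree $T'$ does, because those weights are path-products of CPD entries and the path from root to $R'$ contributes a common constant factor.

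With $\model'$ identified as an $s$-subgame, the equivalence claim follows from Lemma \ref{lem:correspondence} applied to $\efg'$ and $\model'$: restricting the natural mapping $f$ to the strategies of $\efg'$ — which are exactly the decision rules for those $D$ with $\bm{D}^i \cap \bm{V}' \neq \varnothing$ — yields a natural mapping between $\efg'$ and $\model'$. The only subtlety is the stated caveat about utility variables: some $U \in \bm{U}^i$ may be a descendant of $\bm{Z}$ but not lie in $\bm{V}'$ (because it is not a mediator between variables of $\bm{V}'$), so $\efg'$'s leaf payoffs include a contribution $\expect_{\bm{\pi}}[U \mid \bm{z}]$ that is absent from $\sum_{U\in\bm{U}^i\cap\bm{V}'}\expect_{\bm{\pi}}[U]$ in $\model'$. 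I would observe that since such $U$ has no parents in $\bm{V}'$ (else it would be forced into $\bm{V}'$ by mediator-closure), its conditional expectation given $\bm{z}$ is a constant independent of the policy profile in $\model'$; hence it shifts every agent's utility by a fixed amount, proving equivalence modulo the constant difference and — crucially — leaving all best responses, and therefore NEs and all the refinements, unchanged.

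I expect the main obstacle to be the $s$-reachability closure step: pinning down rigorously that the active paths witnessing $s$-relevance (via Proposition \ref{prop:s-reachability}, phrased on $\meczero{\graph}$) are mirrored faithfully by the descendant-closure of the EFG subtree, rather than being shortcut through variables that were marginalised out or that sit above $R'$. Handling this carefully, together with the bookkeeping needed to confirm that $\bm{z}$ is the \emph{unique} relevant conditioning event (only settings of variables with a child in $\bm{V}'$ matter, per the footnote to Definition \ref{def:MAIDsubgame}), is where the real work lies; the parameterisation-matching and the constant-utility-shift arguments are then routine.
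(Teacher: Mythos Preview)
Your overall architecture matches the paper's, but the step you yourself flag as the crux --- the $s$-reachability closure condition --- is argued in the wrong direction and with the wrong tool. You try to show that if $\mecvar_V$ is $s$-reachable from $\Pi_D$ then $V$'s copies must lie below $R'$, invoking \emph{descendant}-closure of $T'$. But the active path witnessing $\mecvar_V \not\perp_{\meczero{\graph}} \bm{U}^i \cap \Desc_D \mid \Fa_D$ is not a directed path in $\graph$: it may run $\mecvar_V \to V \to \cdots$ with $V$ an ancestor of $D$, in which case $V$'s copies in the tree sit \emph{above} $R'$, and descendant-closure of the subtree says nothing. The paper instead argues the contrapositive: take $Z \notin \bm{V}'$ and show the path is \emph{blocked}. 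The key lever is closure of $\efg'$ under \emph{information sets}, not descendants: since every node of the information set for $D$ lies below $R'$, any $Z$ (or, in the backward case, the first fork $W$ on the path with $W \pathto Z$) that lives outside $\efg'$ is observed at $D$, i.e.\ $Z\in\Fa_D$ (resp.\ $W\in\Fa_D$), which blocks the chain/fork. You never invoke information-set-closure for this purpose, and that is the missing idea.

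Two smaller points. First, your justification that $\bm z$ is well-defined misattributes the reason: the shared prefix from $R$ to $R'$ is a consequence of $T'$ being a subtree with a single root, not of information-set-closure; moreover, variables in $\bm Z$ that do not appear on that prefix are not pinned down uniquely, which is why the paper takes $\bm z$ to be \emph{any} setting consistent with the path (and notes only the values of variables with a child in $\bm V'$ matter). Second, your constancy argument for the utility shift is incorrect as written: you claim that if $U\in\bm U^i\setminus\bm V'$ then $U$ has no parents in $\bm V'$ ``by mediator-closure'', but mediator-closure only forces $U$ into $\bm V'$ if there is some $Y\in\bm V'$ with $U$ on a directed path $X\pathto U\pathto Y$, and utility nodes have no children. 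The paper's reason the shift is constant is simpler: $U\in\bm Z$, so its value is literally fixed by $\bm z$, hence $\sum_{U\in\bm U^i\cap\bm Z} u$ is policy-independent by definition of conditioning on $\bm z$.
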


This restriction of $f$ to the strategies in $\efg'$ can be made precise by considering only those feasible decision contexts that correspond to the information sets contained in $\efg'$.
{By first applying Proposition \ref{prop:subgames} and then Lemma \ref{lem:NEpreservation} to each of the resulting subgames, we see that any SPE in $\model$ is carried over to $\efg$.} We note, however, that as there may be more $s$-subgames in a MAID than in its equivalent EFG, the criterion of subgame perfectness may be slightly stronger in the MAID. In other words, not all SPEs in an EFG may be SPEs in the equivalent MAID. This additional strength can be useful in ruling out non-credible threats even when they do not fall under a particular subgame in the EFG.

\begin{corollary}
\label{prop:SPE}
    If $\efg\in\mathtt{maid2efg}(\model)$ or $\model = \mathtt{efg2maid}(\efg)$, then there is a natural mapping $f$ between $\efg$ and $\model$ such that if every ${\bm{\pi}} \in f(\sigma)$ is an SPE in $\model$, then $\sigma$ is an SPE in $\efg$.
\end{corollary}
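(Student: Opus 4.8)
The plan is to leverage the two results that immediately precede this corollary in the development: Proposition \ref{prop:subgames}, which gives a natural mapping $f$ under which every subgame $\efg'$ in $\efg$ corresponds to an equivalent (modulo a constant offset to each agent's utility) $s$-subgame $\model'$ in $\model$; and Lemma \ref{lem:NEpreservation}, which says that under a natural mapping, NE are preserved in both directions. The key observation is that subgame perfectness is, by Definition \ref{def:SPE}, simply the requirement that the policy profile restricts to an NE in \emph{every} feasible $s$-subgame, and the EFG notion (via backwards induction / Selten's definition) is the analogous requirement over every subgame of $\efg$.

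First I would fix a natural mapping $f$ between $\efg$ and $\model$ as furnished by Proposition \ref{prop:subgames}; this is the same $f$ that witnesses the subgame correspondence. Suppose $\sigma \in \Sigma$ is such that every ${\bm{\pi}} \in f(\sigma)$ is an SPE in $\model$. Take an arbitrary subgame $\efg'$ of $\efg$. By Proposition \ref{prop:subgames}, there is an $s$-subgame $\model'$ of $\model$ equivalent to $\efg'$ (up to a constant per-agent utility shift) under $f$ restricted to the strategies of $\efg'$; write $f'$ for this restricted natural mapping and $\sigma'$, ${\bm{\pi}}'$ for the restrictions of $\sigma$ and ${\bm{\pi}}$ to $\efg'$ and $\model'$ respectively. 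Since ${\bm{\pi}}$ is an SPE in $\model$, its restriction ${\bm{\pi}}'$ is an NE in $\model'$ (this is exactly Definition \ref{def:SPE}). A constant additive shift in each agent's utility does not affect the $\argmax$ defining best responses, so ${\bm{\pi}}'$ is an NE in $\model'$ with the shifted utilities too, i.e.\ in the game genuinely equivalent to $\efg'$ under $f'$. Applying Lemma \ref{lem:NEpreservation} to $f'$ in the ``if'' direction (every ${\bm{\pi}}' \in f'(\sigma')$ being an NE implies $\sigma'$ is an NE) yields that $\sigma'$ is an NE in $\efg'$. As $\efg'$ was arbitrary, $\sigma$ is an NE in every subgame of $\efg$, hence an SPE in $\efg$.

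The one technical wrinkle to handle carefully — and the step I expect to be the main obstacle — is the bookkeeping around the constant utility offset and the restriction of the equivalence relation $\sim$ to each subgame. Concretely, I need to check that when $f'(\sigma')$ is computed, \emph{every} member of that equivalence class of policy profiles arises as a restriction of some member of $f(\sigma)$ that is an SPE of $\model$ — so that the hypothesis of Lemma \ref{lem:NEpreservation} is genuinely available for the restricted mapping. This should follow because the decision contexts that become infeasible inside $\model'$ are a superset of those infeasible in $\model$, so the restriction map on policy profiles is surjective onto $f'(\sigma')$; but it is worth stating explicitly. I would also remark that the converse fails — not every SPE of $\efg$ need be an SPE of $\model$ — precisely because $\model$ may admit strictly more $s$-subgames, so the corollary is deliberately one-directional, matching the discussion immediately preceding it.
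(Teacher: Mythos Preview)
Your proposal is correct and follows essentially the same approach as the paper's proof: fix the natural mapping $f$ from Proposition~\ref{prop:subgames}, use the subgame correspondence to match each EFG subgame $\efg'$ with an $s$-subgame $\model'$, observe that the constant utility offset is irrelevant for best responses, and then apply Lemma~\ref{lem:NEpreservation} on the restricted mapping to conclude $\sigma'$ is an NE in $\efg'$. Your explicit handling of the bookkeeping around $f'(\sigma')$ and the surjectivity of the restriction map is in fact more careful than the paper, which simply asserts the conclusion; your closing remark on why the converse fails also matches the paper's discussion.
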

 
Finally, we derive an equivalence between the THPEs in EFGs and those in MAIDs. In order to do so, it suffices to prove an equivalence between perturbed versions of the corresponding games $\efg(\zeta_k)$ and $\model(\zeta_k)$, which can easily be done by construction using \texttt{maid2efg} and \texttt{efg2maid}, and then by applying Lemma \ref{lem:correspondence} {and Lemma \ref{lem:NEpreservation} to each of these perturbed games}.
 
\begin{proposition}
\label{prop:THPE}
    If $\efg\in\mathtt{maid2efg}(\model)$ or $\model = \mathtt{efg2maid}(\efg)$, then there is a natural mapping $f$ between $\efg$ and $\model$ such that $\sigma$ is a THPE in $\efg$ if and only if every ${\bm{\pi}} \in f(\sigma)$ is a THPE in $\model$.
\end{proposition}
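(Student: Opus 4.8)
The plan is to reduce the statement about trembling hand perfection to the Nash-equilibrium correspondence already in hand, applied uniformly to the whole family of perturbed games. Suppose $\efg\in\mathtt{maid2efg}(\model)$ or $\model=\mathtt{efg2maid}(\efg)$, so that Lemma \ref{lem:correspondence} provides a natural mapping $f$ between $\efg$ and $\model$. The first observation is that perturbation vectors transfer between the two representations: a perturbation vector $\zeta$ for $\model$, which specifies a lower bound $\epsilon^{\pa_D}_d$ for every $D$, $d$ and decision context $\pa_D$, restricts -- via the identification of \emph{feasible} decision contexts with information sets that underlies any natural mapping -- to a perturbation vector $\zeta^{\efg}$ for $\efg$; conversely, any EFG perturbation extends to a MAID perturbation by assigning (arbitrarily small) bounds on the infeasible decision contexts, which remain infeasible even under perturbation since $\Pr^{\bm{\pi}}(\pa_D)=0$ holds there for \emph{all} $\bm{\pi}$ (perturbing only enlarges the support of decision rules, it cannot make a logically inconsistent context realisable).

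Next I would show that perturbing is compatible with the transformations, in the sense that $\model(\zeta)$ and $\efg(\zeta^{\efg})$ are again equivalent in the sense of Definition \ref{def:equiv}. This can be argued directly: both perturbed games have the same graph/tree as the originals, only with the policy/strategy space cut down by the perturbation constraints; the bijection $f$, restricted to these smaller spaces (and intersected with the perturbed policy space so that the infeasible-context bounds are respected), is still a bijection onto a partition of the perturbed policies; and expected utilities are computed by the same formula. Alternatively, one re-runs \texttt{maid2efg} (resp.\ \texttt{efg2maid}) on the perturbed games and checks that the output agrees with $\efg(\zeta^{\efg})$ (resp.\ $\model(\zeta)$) up to the treatment of infeasible contexts. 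Either way, Lemma \ref{lem:correspondence} and Lemma \ref{lem:NEpreservation} then yield a natural mapping $f_{\zeta}$ between $\efg(\zeta^{\efg})$ and $\model(\zeta)$ such that $\sigma_{\zeta}$ is an NE in $\efg(\zeta^{\efg})$ if and only if every $\bm{\pi}_{\zeta}\in f_{\zeta}(\sigma_{\zeta})$ is an NE in $\model(\zeta)$.

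The third step is the limit. For the direction ``$\sigma$ is a THPE in $\efg$ $\Rightarrow$ every $\bm{\pi}\in f(\sigma)$ is a THPE in $\model$'', fix $\bm{\pi}\in f(\sigma)$ and let $\{\zeta_k^{\efg}\}_k$, $\{\sigma_k\}_k$ witness THPE-ness of $\sigma$, with $\Vert\zeta_k^{\efg}\Vert_\infty\to 0$ and $\sigma_k\to\sigma$. Extend each $\zeta_k^{\efg}$ to a MAID perturbation $\zeta_k$ by putting $\epsilon^{\pa_D}_d\le 1/k$ on every infeasible context $\pa_D$, and define $\bm{\pi}_k$ to agree with (the transport of) $\sigma_k$ on the feasible contexts and to equal $(1-1/k)\,\bm{\pi}(\cdot\mid\pa_D)+(1/k)\cdot(\text{uniform on }\dom(D))$ on each infeasible context $\pa_D$. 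By Step~2, $\bm{\pi}_k$ is an NE in $\model(\zeta_k)$ -- its values on infeasible contexts are irrelevant to every expected utility, so any choice meeting the perturbation bound is a best response there -- and $\Vert\zeta_k\Vert_\infty\to 0$, $\bm{\pi}_k\to\bm{\pi}$, so $\bm{\pi}$ is a THPE by Definition \ref{def:THPE}. For the converse, given a witnessing sequence $\{\zeta_k\}_k$, $\{\bm{\pi}_k\}_k$ for some $\bm{\pi}\in f(\sigma)$, push it forward to $\zeta_k^{\efg}$ and to the strategy profiles $\sigma_k$ matching $\bm{\pi}_k$ on the feasible contexts; Step~2 makes each $\sigma_k$ an NE in $\efg(\zeta_k^{\efg})$, and $\sigma_k\to\sigma$, so $\sigma$ is a THPE (and since the ``$\Rightarrow$'' argument shows every member of $f(\sigma)$ is then a THPE, the biconditional closes).

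The main obstacle I anticipate is the bookkeeping around infeasible decision contexts: checking that they remain infeasible under perturbation, that the natural mapping and Lemma \ref{lem:NEpreservation} genuinely survive application to the perturbed games, and -- for the universally quantified ``every $\bm{\pi}\in f(\sigma)$'' -- arranging the infeasible-context part of the approximating policies so that the limit is exactly the chosen $\bm{\pi}$ rather than merely some other representative of its $\sim$-class. A secondary point needing care is the precise sense in which \texttt{maid2efg} and \texttt{efg2maid} commute with the perturbation construction; if this commutation is only approximate, one simply relies on the direct equivalence argument of Step~2 rather than on Lemma \ref{lem:correspondence} applied verbatim to $\model(\zeta_k)$ and $\efg(\zeta_k^{\efg})$.
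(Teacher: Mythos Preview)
Your proposal is correct and follows essentially the same approach as the paper: transfer perturbation vectors between representations, show the same natural mapping $f$ restricts to a natural mapping between the perturbed games, invoke Lemma~\ref{lem:NEpreservation} on each $\efg(\zeta_k^{\efg})$ and $\model(\zeta_k)$, and pass to the limit. If anything, you are more careful than the paper on two points it glosses over: explicitly constructing the approximating policies on infeasible contexts so that the limit hits the \emph{chosen} representative $\bm{\pi}\in f(\sigma)$ (thereby honouring the universal quantifier), and flagging that one can argue equivalence of the perturbed games directly rather than relying on the transformations commuting with perturbation.
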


This series of equivalence results serves to justify MAIDs as an appropriate choice of game representation. Not only do they provide computational advantages over EFGs, they preserve some of the most fundamental game-theoretic concepts commonly employed in EFGs.

\subsection{Causality in EFGs}
\label{sec:EFG_causality}

One alternative to causal games would be to define causal concepts directly via EFGs. In this section, we explain how this can be done for a limited set of causal queries by extending methods designed for probability trees \cite{Genewein2020}, and explain some shortcomings of this approach relative to the models we focus on in this paper.

\subsubsection*{Interventions}
Intuitively, interventions in EFGs correspond to replacing the probability distribution(s) governing some node(s) in the game tree.
More formally, consider an EFG $\efg$ with intervention sets (if each intervention set is a singleton, we recover the case where each EFG variable is treated separately). As in CGs, we can apply both pre- and post-strategy interventions, and both operations have the same form. We make an intervention $\mathcal{I}$ on an intervention set $J$ \label{def:intint}by replacing $P_j$ or $\sigma^i_j$ with an arbitrary probability distribution $P^*_j(V_j \mid \cap_{V_j \in J}\Anc_{V_j})$ or ${\sigma^i_j}^*(V_j \mid \cap_{V_j \in J}\Anc_{V_j})$, for each $V_j \in J$. Hard interventions represent the special case when this distribution is given by $\delta(V_j {,} v_j)$, where $v_j$ corresponds to an outgoing edge from each node $V_j \in J$.
Pre-strategy interventions result from performing an intervention on the game tree \emph{before} agents choose their strategies -- in essence, forming a new game $\efg_{\mathcal{I}}$ -- whereas post-strategy interventions result from intervening on the probability tree that results \emph{after} agents have chosen their strategies -- in other words, forming a new distribution $P^\sigma_{\mathcal{I}}$ \label{def:poststratefg} over paths in the tree.

While the graph of an EFG is typically viewed as encoding informational constraints upon the decisions of agents, as opposed to temporal or causal structure, it can also be given a causal interpretation in much the same way that a MAID or BN can, by restricting the EFG to be consistent with the set of all possible (hard) interventional distributions. This restriction produces a level two model in which the interventional queries described in the paragraph above are semantically meaningful. Due to the total ordering imposed by EFGs over the variables in a game, however, only a restricted subset of interventions are available (those whose distributions $\mathcal{I}$ condition only on \emph{ancestors} of the node in question). Similarly, some queries (be they probabilistic or causal) over EFGs are not well-defined, due to the fact that, in general, not all variables are assigned a value on a path $\rho$ from the root to a leaf.

\subsubsection*{Counterfactuals}
Answering counterfactual queries is slightly more complicated. We briefly describe an existing procedure for probability trees, and refer the interested reader to this work for further details \cite{Genewein2020}. Suppose that we wish to calculate $P^\sigma(\bm{x}_{\I} \mid z)$ -- the value that $\bm{X} = \bm{V} \setminus \{Y,Z\}$ would have taken if $Y$ were distributed according to $\I$, given that in fact $z$ is true. The first step is to condition on $z$ by setting $P_Z(z) = 1$ or $\sigma^i_Z(z) = 1$, which is equivalent to first performing a hard intervention $\Do(Z = z)$ and then renormalising the probability distributions over the branches \emph{upstream} of this intervention. 
The second step is to perform the soft intervention $\I$ on $Y$, without any renormalising. Thirdly and finally, we restore the probability distributions over the branches \emph{downstream} of this second intervention to the original settings given by $P^\sigma$. The resulting probability tree describes the \emph{post-strategy} counterfactual distribution $P^\sigma(\bm{x}_{\I} \mid z)$. 
By modifying this procedure such that the normalisation upstream of $Z$ and the restoring of distributions downstream of $Y$ takes place according to the actual strategy profile $\sigma$, then allowing agents to play a new counterfactual strategy profile $\sigma'$, we produce a \emph{pre-strategy} counterfactual distribution (though if conditioning is performed downstream of differences between $\sigma$ and $\sigma'$, the resulting distribution will not, in general, be correct).

Counterfactual queries in EFGs are \emph{not} invariant to the choice of tree representation, even if the distributions $P^\sigma$ are the same for any strategy profile $\sigma$. This difficulty arises because standard EFGs do not reside at level three of the causal hierarchy, but may be overcome (as in CBNs and CGs) by relegating all stochasticity to a set of exogenous chance nodes that appear at the top of the game tree, and allowing agents only to select deterministic strategies (over endogenous decision nodes) as a function of some exogenous noise. This also resolves the difficulties of pre-policy counterfactuals in EFGs, as updates made due to conditioning on $z$ will now be \emph{upstream} of differences between $\sigma$ and $\sigma'$. Existing work on probability trees \cite{Genewein2020}, on the other hand, treats stochasticity as endogenous, and only possible to learn about (in the counterfactual world) for those variables that are \emph{upstream} of the intervention $\I$. The ordering of variables in the tree therefore implicitly represents a substantive modelling assumption that resolves the subtle complexities surrounding counterfactuals in models with intrinsic stochasticity \cite{pearl2009causality,Dawid2000}.

\section{Applications}
\label{sec:applications}

Having described and analysed the theoretical features of causal games, we now consider potential domains of application. {In the first subsection, we provide a case study based on the UK home insurance market. In the second, we discuss how several existing concepts for the analysis of (artificial) agents can be embedded and further developed within our framework. We provide worked examples of these embeddings in Appendix \ref{app:previous_concepts}.} 
More substantive results{, however,} are beyond the scope of this paper{, whose fundamental aim is to lay the theoretical foundations for causal games, rather than to explore specific applications.}

\subsection{{Case Study: Insurance Pricing}}

{While our primary motivation is to use causal games to analyse AI systems, another natural domain of application is in economics, where both causal and game-theoretic modelling are common. For example, when a regulator in the UK proposes a policy intervention, they are required to conduct a cost-benefit analysis to justify it. This entails making a comparison between the consequences of the intervention and the consequences of no intervention and/or alternative interventions. Some regulators, such as the Financial Conduct Authority (FCA), include a `causal chain' in their analyses, though these are only heuristic diagrams rather than formal causal models. In this case study, we demonstrate the usefulness of causal games for economic analysis with an example based on the UK home insurance market, worth £6.31 billion in 2021 \cite{GlobalData2022}, which was the focus of a recent FCA report \cite{Authority2020}.}

\subsubsection*{{The Model}}

{In this setting: some customers are inert and do not change their insurance provider from year to year; firms charge a low price to attract customers in the first year, then increase their prices upon renewal; inert customers, therefore, end up paying more, while savvy customers spend time negotiating or switching provider each year. This allows firms to increase profits at customers' expense, either from increased prices (if inert), or switching costs (if savvy). These practices are especially troubling because it is often more vulnerable customers (such as older or less educated individuals) who are likely to be exploited \cite{LondonEconomics2019}. While this is not the focus of the aforementioned FCA report, it is conceivable that the use of sophisticated AI algorithms to set insurance prices based on customer details might potentially lead to an even greater degree of exploitation. A simplified model of this setting is as follows.}

\begin{example}[{Insurance Market}]
    \label{ex:insurance}
    {A customer in a duopolistic insurance market must choose an insurance provider, and aims to minimise their costs. They already have a contract with one of the two providers, and so must decide whether to renew or switch. The customer is either savvy (in which case they are modelled as having a low switching cost) or inert (in which case they are modelled as having a high switching cost). The two firms use internal pricing algorithms to offer their quotes simultaneously, with the knowledge of which firm the customer has an existing contract with, but without the knowledge of the customer's type (savvy or inert), in order to maximise their profits.}
\end{example}

{We represent this setting as a causal game $\model$ with three agents (in Figure \ref{fig:insurance_model}) and parameterise the model using real-world data on pricing strategies and consumer tendencies in the UK home insurance market within the period 2013-18 \cite{Authority2019,Authority2020,LondonEconomics2019}. The two firms (agents one and two) use their pricing algorithms to decide prices $d^1,d^2 \in \mathbb{N}$ for their insurance premiums based on which firm the customer (agent three) currently has a contract with, $c \in \{1,2\}$.\footnote{{In practice, the firms' prices will be bounded between the marginal cost of supplying the policy, and the point at which the customer exits the market due to prices being too high.}} The customer uses this information, along with their type $T$ -- savvy ($s$) or inert ($\neg s$) -- to choose a firm $d^3 \in \{1,2\}$ to purchase insurance from, or to exit the market ($\bot$). 

Firm $i$'s utility (their profit) is assumed to be given by their policy price $d^i$ minus the marginal cost of providing the policy, if selected by the customer, and zero otherwise. The customer's utility is defined as their valuation of the policy, minus the price they pay and switching costs, with an extra $-\infty$ term if the customer is inert (i.e., to model the searching and switching cost being prohibitively high). The customer's utility is zero if they exit the market. The chance variables $C$ (the customer's current firm) and $T$ (the customer's type) are parameterised following the values in Table \ref{tab:insurance_stats}.}

\begin{figure}[h]
    \centering
    \begin{subfigure}[b]{0.3\linewidth}
        \centering
        \begin{influence-diagram}
            \node (C) [] {$C$};
            \node (T) [below = of C] {$T$};
            \node (D1) [decision, above right = 0.7cm and 1.4cm of C, player1] {$D^1$};
            \node (D3) [decision, below = of D1, player3] {$D^3$};
            \node (D2) [decision, below = of D3, player2] {$D^2$};
            \node (U1) [utility, right = of D1, player1] {$U^1$};
            \node (U3) [utility, right = of D3, player3] {$U^3$};
            \node (U2) [utility, right = of D2, player2] {$U^2$};
            \draw (C) edge[dashed, ->] (D1);
            \draw (C) edge[dashed, ->] (D2);
            \edge {C} {D3};
            \edge {T} {D3};
            \edge {D1} {U1,D3};
            \edge {D2} {U2,D3};
            \edge {D3} {U1,U2,U3};
            \path (D1) edge[->, bend right=-13] (U3);
            \path (D2) edge[->, bend right=13] (U3);
            \path (C) edge[->, bend right=-20] (U3);
            \path (T) edge[->, bend right=20] (U3);
        \end{influence-diagram}
        \caption{}
        \label{fig:insurance_model}
    \end{subfigure}
    \begin{subfigure}[b]{0.6\linewidth}
        \centering
        \begin{tabular}{l l}
            \toprule
            \textbf{Average Quantity} & \textbf{Amount}\\
            \midrule
            Marginal cost of supplying policy\footnotemark{} & £206 \\
            Customer valuation of policy & £296 \\
            Switching cost for customer & £38 \\
            Savvy customers\footnotemark{} & 72\% \\
            Customers currently with first firm & p\% \\
            \bottomrule
        \end{tabular}
        \caption{}
        \label{tab:insurance_stats}
    \end{subfigure}
    \caption{{(a) A causal game representing Example \ref{ex:insurance}. (b) Data from the UK home insurance market within the period 2013-18 \cite{Authority2019,Authority2020,LondonEconomics2019}, used to parameterise the game.}}
    \label{fig:insurance_game}
\end{figure}

\footnotetext[21]{{This quantity is estimated as the average amount received per policy by firms (£231) minus the average profit per policy (£25) \cite{Authority2020}. In practice, there are other small fees that affect a firm's revenue.}}
\footnotetext[22]{{Taken as those who self-report as `actively' searching for new policies \cite{Authority2020}.}}

\subsubsection*{{The Proposed Intervention}}

{In the aforementioned report, the FCA proposes to ban insurance companies from setting different prices for new and existing customers \cite{Authority2020}. Because firms are made aware of the intervention before deploying their pricing algorithms, we represent this as a pre-policy intervention on $\Pi_{D^1}$ and $\Pi_{D^2}$. More concretely, this corresponds to changing the rationality relations $r_{D^1}$ and $r_{D^2}$ by restricting their codomains -- $\dom(\Pi_1)$ and $\dom(\Pi_2)$ -- to be such that $\pi_{D^1}(D^1 \mid C) = \pi_{D^1}(D^1)$ and $\pi_{D^2}(D^2 \mid C) = \pi_{D^2}(D^2)$, effectively removing the edges $C \to D^1$ and $C \to D^2$ (shown as dashed in Figure \ref{fig:insurance_model}). In what follows we denote this intervention by $\I$.}
{The FCA offers a number of hypotheses about the likely effects of this intervention, namely that:
\begin{itemize}
    \item[i)] The percentage of customers switching will reduce;
    \item[ii)] Switching customers will end up paying higher prices, and renewing customers will end up paying lower prices;
    \item[iii)] Overall, customers will end up better off on average.
\end{itemize}
Representing the intervention within a causal game $\model$ allows us to express these hypotheses formally by measuring and quantifying the causal effects of $\I$, which cannot, in general, be done without a formal causal model. For the purpose of this analysis, we set $p = 0.6$ and assume that agents play a pure THPE, which, while not truly realistic, illustrates the real-world applications of causal games (which can also be used to capture boundedly rational agents). In what follows, we denote the rationality relations describing this assumption as $\R$.}

{Before addressing the three hypotheses, it is a simple exercise to see that for each $\bm{\pi} \in \R(\model)$, firms do indeed end up setting minimal prices (i.e., equal to their marginal cost) for new customers and higher prices for renewing customers, up to the point of driving away sales due to switching costs from customers. In other words, we have that $\pi_{D^i}(D^i \mid C = 3-i) = \delta(D^i,206)$ and $\pi_{D^i}(D^i \mid C = i) = \delta(D^i,244)$ for each $i \in \{1,2\}$. For their part, customers always seek the lowest price, taking into account switching costs, and are indifferent when the price at an alternative firm is the same as the price at their current firm plus the switching cost. After the intervention $\I$, the customer's policy stays the same, but we have $\pi_{D^1}(D^1 \mid C) = \delta(D^1,288)$ and $\pi_{D^2}(D^2 \mid C) = \delta(D^2,250)$ for the two pricing algorithms, respectively, for any $\bm{\pi} \in \R(\model_{\I})$.}

{The rational outcomes of the original game correspond to whether savvy customers (at each of the two firms) switch to the other firm or not. Depending on whether customers with neither firm, the first firm, the second firm, or both firms switch, then we have 0\%, 43.2\%, 28.8\%, or 72\% of customers switching, respectively. After intervention $\I$, customers already with the second firm have no incentive to switch, and so the percentage of customers switching is either 0\% or 43.2\%. If we were to assume a uniform prior over rational outcomes, for instance, then we would see that the causal effect of $\I$ on the percentage of customers switching insurance provider is given by:
$$\frac{1}{\vert \R(\model_{\I}) \vert} \sum_{\bm{\pi} \in \R(\model_{\I})} \text{switch}(\bm{\pi}) - \frac{1}{\vert \R(\model) \vert} \sum_{\bm{\pi} \in \R(\model)} \text{switch}(\bm{\pi}) = 21.6\% - 36\% = -14.4\%,$$
where $\text{switch}(\bm{\pi}) \coloneqq \sum_i \Pr^{\bm{\pi}}(D^3 = i \mid C = 3 -i)$ is the probability that a customer switches provider. Thus, hypothesis i) is confirmed.}

{For the first part of hypothesis ii), we wish to measure the causal effect of $\I$ on the quantity $\expect_{\bm{\pi}} [ D^i \mid C = i, D^3= 3 - i]$ for each $i \in \{1,2\}$, in settings such that $\Pr^{\bm{\pi}}(C = i, D^3=3-i) > 0$. For $\bm{\pi} \in \R(\model)$, this quantity is equal to 206, which may occur when either $i = 1$ or $i = 2$. For $\bm{\pi} \in \R(\model_{\I})$ this quantity is equal to 250, which occurs when $i = 2$; the customer never switches to firm one. This confirms the first half of the hypothesis. For the second part, a similar analysis shows that renewing customers pay £244 before the intervention either £250 or £288 after the intervention, which is \emph{contrary} to the hypothesis. We discuss the reasons for this in the following paragraph, as well as how another proposal can be employed to achieve the desired result.}

{Hypothesis iii) concerns simply the causal effect of $\I$ on $\expect_{\bm{\pi}}[U^3]$. The customer's expected utility is the same under any of the rational outcomes and interventional rational outcomes respectively, so we can write simply:
$$\expect_{\bm{\pi}}[U^3_{\I}] - \expect_{\bm{\pi}}[U^3] = 23.12 - 52 = -28.88.$$
As in the case for the second half of hypothesis ii), this \emph{disconfirms} the hypothesis (in our model). The reason for both of these results is that the relatively high switching costs for customers (£38) and the relatively high percentage of inert customers (28\%) mean that the firms' pricing algorithms are better off sticking to higher prices to exploit customers who will not switch, rather than lower prices in order to attract customers away from other firms.}

{To achieve the desired effects, we can instead consider another intervention proposed by the FCA, which is to reduce switching costs for customers (for example, by allowing them to stop their insurance policy from auto-renewing more easily) \cite{Authority2020}. If we therefore consider a second intervention $\I'$ that makes the same change as $\I$ but also modifies $\Theta_{U^3}$ to set the switching costs of customers to £0, for example, then we have that:
$$\expect_{\bm{\pi}}[U^3_{\I'}] - \expect_{\bm{\pi}}[U^3] = 72 - 52 = 20,$$
as $\pi_{D^i}(D^i \mid C) = \delta(D^i,224)$ for $i \in \{1,2\}$ in any $\bm{\pi} \in \R(\model_{\I'})$. Returning to hypothesis ii), this implies that all customers now pay £224 when renewing as opposed to £244, thus completing the set of all desired outcomes.}

\subsubsection*{{Extensions}}

{As shown above, not only can our analysis be used to investigate the qualitative hypotheses put forwards by the FCA, it provides quantitative results too. Nevertheless, we emphasise that the model we present is highly simplified, and considers only a very simple intervention. More complex models and queries are beyond the scope of the present paper, but represent a natural avenue for further applied work. Before continuing, we briefly remark on possible extensions.}

{First, we could extend the model to make it more realistic. This might include: increasing the number of firms; different marginal costs for different firms; different customer valuations of insurance policies from different firms; small costs to firms from customers switching; including intermediary firms; more explicit modelling of the firms' pricing algorithms; not making savviness a binary variable; and modelling the market over time instead of as a one-shot game.
With these added complexities to the model come greater opportunities to exploit the richness of causal games.
}

{In particular, in the simple model above, the lack of more intricate causal dependencies between the agents' actions means that applying a pre-policy intervention essentially reduces to re-solving a slightly different game (where the dashed edges in Figure \ref{fig:insurance_model} are removed). In larger games, it is both easier and more important to avoid this cost, which cannot (in general) be avoided in models such as EFGs that do not explicitly represent the causal structure of a game. We could also use the model for more advanced analyses, such as answering counterfactuals, identifying subgames, and computing mixtures of pre- and post-policy queries.}

\subsection{{Blame, Intent, Incentives, and Fairness}}
\label{sec:biif}

{Many of the most important concepts for reasoning about the safe and ethical use of AI systems are implicitly causal in nature. Moreover, in recent years, substantial progress has been made on formalising these concepts using causal models \cite{Halpern2018,Everitt2021,Kusner2017,Miller2021,Richens2022}. Because causal games are the first causal framework to explicitly capture multi-agent and game-theoretic reasoning, they open up the possibility of further work in these directions. In what follows, we explain how causal games might be applied to existing concepts in order to arrive at richer and more general results. Concrete examples of how these concepts can be modelled using causal games are provided in Appendix \ref{app:previous_concepts}.}

\subsubsection*{Blame and Intent}

SCMs have been fruitfully employed in order to formally define the notion of \emph{actual causation} (i.e., what it means for some event $c$ to, in fact, cause some event $e$) \cite{Halpern2005,Halpern2015}. Such ideas are not only of philosophical interest, but have been argued as being crucial for building AI systems that can automatically generate \emph{explanations}, either of their own workings or that of other agents \cite{Halpern2005a,Miller2021}. In recent years, this underlying theory has been used to formalise the concepts of blameworthiness and intention \cite{Halpern2018,Friedenberg2019}. As these works have highlighted, issues of blame and intent become particularly acute in multi-agent settings. 

{While} these concepts {can be} accommodated within (structural) causal games {(see Appendix \ref{app:previous_concepts}), they also} allow for an even richer formalism of blame and intent in multi-agent settings. More specifically, there are at least four generalisations we might make. Firstly, the use of sets of utility variables that define \emph{multiple} utility functions, one for each agent, would mean that costs (and hence both blame and intent) can be viewed from the perspective of multiple agents. Secondly, we could consider blame and intent with respect to \emph{strategies} rather than single actions, which corresponds to considering soft (post-policy) interventions as opposed to hard (post-policy) interventions. Thirdly, we might wish to consider \emph{pre-policy} interventions by using mechanised games. For example, if a pre-policy change to one agent's decision means that the only rational response for the second agent is something that causes a bad outcome, we might think that the first agent is at least somewhat to blame for this turn of events (although these concepts become more nuanced when there are cyclic dependencies between decision rule variables). Finally, all of these extensions can also be viewed in terms of \emph{coalitions} of agents and their strategies{, which has already begun to be explored in prior work \cite{Friedenberg2019}}.

\subsubsection*{Incentives}

In order to build intelligent agents that do not behave in undesirable ways, it is useful to be able to reason about their \emph{incentives} \cite{Deletang2021,Everitt2021,Langlois2021} and \emph{reasoning patterns} \cite{pfeffer2007reasoning,antos2012identifying}. For example, given a (possibly multi-agent) decision-making scenario, we might be interested in asking whether an agent has an incentive to base its policy on a protected attribute, or to influence a variable that we would prefer to be left alone. Similarly, notions of fairness are often naturally expressed in terms of whether a different outcome would result in some counterfactual situation.

Recent work has sought to formalise incentives by modelling such scenarios as single-agent causal games (i.e., causal influence diagrams). This work has developed a number of sound and complete graphical criteria for identifying Value of Information (VoI) \cite{Howard1966} and Value of Control (VoC) \cite{Shachter1986}. Sound and complete graphical criteria have also been introduced for two novel concepts: \emph{response incentives}, which indicate when a decision-maker can benefit from responding to a variable, and \emph{instrumental control incentives}, which establish whether an agent can benefit from manipulating a variable \cite{Everitt2021}.

{Existing work, however, has been} limited to the single-agent, single-decision setting; it is our hope that causal games will provide the basis for generalising this work to the multi-agent, multi-decision setting and that the use of mechanised games may lead to additional insights. For instance, for some incentive concept $C$ we may wish to ask questions such as `is there any $\R$-rational outcome in which agent $i \in N$ has a $C$ incentive with respect to a particular variable $X$ in the game?'. Given the fact that many proposals for safe AI systems are multi-agent \cite{everitt2019modeling}, this generalisation to the multi-agent setting marks an important next step in analysing agent incentives. {We provide an example of such a proposal -- cooperative inverse reinforcement learning \cite{HadfieldMenell2016} -- modelled as a causal game, in Appendix \ref{app:previous_concepts}.}

\subsubsection*{Fairness}

Another important and useful application of incentives is for reasoning about fairness, a number of popular and influential definitions of which are based explicitly on causal frameworks \cite{Kilbertus2017,Kusner2017,Zhang2018a,Nabi2018,ashurst2022fair}. Indeed, it can be shown that all optimal policies ${\bm{\pi}}^*$ in a single-decision SCIM are \emph{counterfactually unfair} \cite{Kusner2017} with respect to a protected attribute $A$ (meaning that a change to the protected attribute would change the decision made) if and only if there is an RI on $A$ \cite{Everitt2021}. 

The question of fairness arguably becomes even more important and interesting in the multi-agent setting, in which one not only has to ask whether or not a process is fair, but fair to \emph{whom}, and whether we might be forced to trade off fairness with respect to different agents. Interestingly, despite the large number of existing works investigating fairness in games (see, e.g., the seminal papers of Rabin \cite{Rabin1993} or Fehr and Schmidt \cite{Fehr1999}) and the recent insights gained from using causal definitions of fairness, to the best of our knowledge there has been no application of these definitions to the multi-agent setting. One possible (partial) explanation for this is that researchers have, until now, been lacking precisely the kind of models that we introduce.

\section{Discussion}
\label{sec:discussion}

In this paper, we have introduced a framework that we argue provides a unifying formalism for reasoning about causality in games. As mentioned in Section \ref{sec:introduction}, combinations of causal and game-theoretic reasoning have long been considered by various research communities, and so we conclude with a brief {summary of the relative advantages and disadvantages of causal games compared to other models,} before offering some final thoughts about future directions.

\subsection{{Advantages and Disadvantages of Causal Games}}
\label{sec:DAG_advantages}

{The primary benefit of causal games compared to prior work is that no previous framework captured both game-theoretic and causal features in a general, principled way. Alongside our discussion of related work in Section \ref{sec:related}, we expand briefly upon this claim by considering other causal and game-theoretic models in turn.}

\subsubsection*{{Causal Models}}

{Standard causal models do not take into account the presence of rational, self-interested agents. Doing so requires more than a simple re-labelling of some of the variables as decisions and utilities, as the presence of strategic, decision-making agents violates the standard assumption of independent causal mechanisms \cite{peters2017elements}, represented by the edges between mechanism variables in mechanised games. Alternative causal models that do include agents, such as CIDs \cite{howard2005influence,dawid2002influence,Everitt2021}, typically only consider a single agent.}

{To the best of our knowledge, the only exception to this rule is settable systems \cite{White2009,White2014}, though these models do not capture the multiplicity of equilibria that arise in game-theoretic analyses, and thus do not naturally support the analysis we seek to do in this paper. The focus in these works is instead on capturing lower-level algorithmic details, which may make them more appropriate when reasoning about the data and attributes of, say, an optimisation or machine learning process.}

{Our use of \emph{non-deterministic} mechanisms (arising whenever agents are indifferent between alternatives) also serves to generalise existing work on cyclic causal models \cite{Halpern1998,Bongers2016} to the relational setting \cite{Getoor2007a,Maier2013,Ahsan2022}. However, the cyclic dependencies in causal games are of a specific form due to the fact that they represent game-theoretic equilibria, which means that causal games are not necessarily an appropriate model for analysing the more arbitrary dynamical systems and sets of simultaneous equations studied in these works.}

{The fact that we build on top of MAIDs means that causal games inherit some of the existing game-theoretic concepts introduced in these models, such as NEs, while being consistent with the more standard probabilistic graphical models upon which MAIDs are based. This further allows us to derive notions of subgames and other equilibrium refinements, which are critical for game-theoretic reasoning and yet are not supported by other causal models. Such concepts have been historically underexplored in the context of graphical games when compared to, say, EFGs or strategic-form games.}

\subsubsection*{Game-Theoretic Models}
The most natural game-theoretic model with which to compare causal games is EFGs, which are tree-based models as opposed to DAG-based. For our purposes, the most important benefit of DAG-based models is that they can be more readily used to natively support a wide range of causal queries. This is bolstered by the use of $\R$-relevance graphs, which form an explicit representation of the causal dependencies between agents' decision rules; there is no such representation for these dependencies in EFGs.\footnote{{This further implies that we cannot take advantage of the structure of these dependencies in order to answer pre-strategy queries more efficiently.}} Mechanised games can be used to define a wide range of both pre- and post-policy queries in games. In contrast, as explained in Section \ref{sec:EFG_causality}, there are many causal queries that \emph{cannot} be {natively} answered in EFGs.

{DAG-based models also more compactly and explicitly represent} dependencies between variables, which can often only be understood in an EFG through inspecting the parameterisation of the game. Moreover, a DAG-based representation of a game need never be bigger than the corresponding EFG and can be exponentially smaller. On the other hand, if a game is highly asymmetric in its play paths (such as when if play proceeds down one path the game stops immediately, and on the other path it continues for several more moves), then this structure is not immediately observable from a {causal game}, and may effectively make many valuations of the variables irrelevant due to their inconsistency with the shorter game path.

In addition to their transparency, MAIDs allow us to further exploit the conditional independencies between variables using d-separation. Using $\R$-reachability, we may construct the $\R$-relevance graph for any game and find more subgames in a MAID than in its equivalent EFG. This can {significantly} reduce the computational complexity of solving a game {(as shown by the example in Appendix \ref{app:subgamecompute})}, offers analytical benefits, and provides a way to define a stronger subgame perfectness condition. On the other hand, in the case of \emph{context-specific independencies} -- such as when $A$ sometimes depends on $B$, and $B$ sometimes depends on $A$ -- it is well-known that DAG-based models are a less natural choice than tree-based models \cite{Boutilier1996}.\footnote{Though it is possible to support said independencies in DAGs via tree-based representations of the CPDs, which can graphically capture different independencies on different branches.}

Finally, though we have significantly extended the number of standard game-theoretic concepts for MAIDs {(and hence causal games)} and proved their equivalence to their EFG counterparts, EFGs remain a more well-investigated representation. Thus, if one is interested in more exotic equilibrium refinements, for example, EFGs are likely to be a more suitable model. It is our hope that further research on MAIDs and causal games will reduce this last difference.

\subsection{Future Work}
\label{sec:future}

Our priority is to use causal games to further analyse incentives in multi-agent systems, which has important applications in ensuring that we build AI systems that are safe and fair. As mentioned in Appendix \ref{sec:biif}, existing work has already characterised some of these incentives using CIDs. Therefore, a natural next step is to extend these definitions to multi-agent (and multi-decision) scenarios, though this is by no means trivial. For example, in single agent-settings VoI is always non-negative, whereas in multi-agent settings this need not be the case (if other agents are aware of the information gain) \cite{Ponssard1976}. Further, we might wish to rule certain incentives in or out based on whether or not they occur under all or some policy profiles satisfying a particular equilibrium refinement, or more generally, falling within the set of rational outcomes.

Other specific applications for which causal games may prove fruitful are, for example: designing mechanisms for auctions and other multi-agent systems, or analysing possible interventions on those mechanisms; generalising counterfactual fairness to multi-agent settings; providing artificial agents with the means to more easily provide explanations and reason about qualitative concepts (such as blame and intent or reasoning patterns) that can be defined using causal models of games; and deriving new definitions for similar concepts. We might also hope to extend the framework presented here with: model variations that can more easily capture dynamic settings, fine-grained subjective beliefs, or optimisation; definitions capturing other classic equilibrium refinements such as perfect Bayesian equilibrium \cite{Fudenberg1991a} or sequential equilibrium \cite{Kreps1982a}; and methods of causal discovery for games.

Given that we propose this paper and our accompanying codebase as a robust foundation for reasoning about causality in games, we believe our work presents many other interesting avenues for further research. We hope that the advantages causal games confer based on their generality, explainability, and succinctness (not to mention their compatibility with existing mainstream models) make them an attractive choice for researchers and practitioners alike who are interested in the intersection of causality and game theory.

\section*{Acknowledgements}
This paper is a significantly expanded version of a previous publication \cite{Hammond2021}. We thank Zac Kenton, Jon Richens, Ilya Shpitser, Colin Rowat, Chris van Merwijk, Patrick Forré, David Reber, Joe Halpern, Paul Harrenstein, Will Lee, Vincent Conitzer{, and several anonymous reviewers} for their helpful comments and discussions while completing this work. Hammond was supported by an EPSRC Doctoral Training Partnership studentship (Reference: 2218880), Fox was supported by the EPSRC Centre for Doctoral Training in Autonomous Intelligent Machines and Systems (Reference: EP/S024050/1), and Wooldridge was supported by a UKRI Turing AI World Leading Researcher Fellowship (Reference: EP/W002949/1).

\printbibliography

\appendix
\setcounter{lemma}{1}
\setcounter{proposition}{2}
\setcounter{theorem}{0}
\setcounter{corollary}{0}

\section{Proofs}
\label{app:proofs}

\subsection{Transformations between Game Representations}
\label{app:transformations}

\subsubsection*{MAID to EFG}
\label{app:MAIDtoEFG}

In this section, we provide an encoding transforming a MAID, $\model = (\graph, \bm{\theta})$, into an EFG, $\efg = (N, T, P, A, \lambda, I, U)$. One can decide on whether one wants to keep record of the structure of the original MAID, or if one wants to optimise the encoding by splitting only only some of $\graph$'s variables. If one wants to be able to preserve the full structure of $\model$ then the set of splitting variables in the resulting tree is $\bm{S} = \bm{X} \cup \bm{D}$.\footnote{We assume that $\desc_{\bm{U}} = \varnothing$.} If instead, however, one wishes to minimise the complexity of computing equilibria, then one only needs to split on the MAID's decision variables and their parents, $\bm{S} = \Fa_{\bm{D}}$ \cite{pearl1988probabilistic}. This is because the EFG's tree size will be exponential in $\vert \bm{S} \vert$. The following procedure, which we refer to as \texttt{maid2efg}, is based on that of K\&M \cite{koller2003multi}. Given a MAID $\model = (\graph, \bm{\theta})$ we define an equivalent EFG $\efg = (N, T, P, A, \lambda, I, U)$ as follows:
\begin{itemize}
    \item Choose a topological ordering $S_1 \prec \dots \prec S_n$ over all variables in $\bm{S}$ such that if $S_k$ is a descendent of $S_j$, then $S_j \prec S_k$.\footnote{This topological ordering will, in general, be non-unique.} Further, define $\bm{S}_k^\prec = \{S' \in \bm{S} : S' \prec S_k\}$.
    
    \item The set of agents, $N$, in $\efg$ is the same as in ${\model}$.
    
    \item The tree $T$ is a symmetric tree with each path containing splits over all the variables in $\bm{S}$ in the order defined by $\prec$.
    
    \item For all variables $S \in \bm{S}$, if $S$ is a chance variable $S \in \bm{X}$, add it to $\efg$'s set of chance nodes $\bm{V}^0$. Else, if $S$ is a decision variable $S \in \bm{D}^i$, add $S$ to agent $i$'s set of nodes in $\efg$, $\bm{V}^i$.
    
    \item Label each node $V$ in $T$ with an instantiation $\mu(V)$ corresponding to the values taken by each EFG node (i.e., the branch followed from this node) on the path from the tree's root node $R$ to $V$.
    
    \item For every node $V$ in $T$ such that its corresponding variable $S_V \in \bm{S}$ is a chance variable in ${\model}$, we determine the probability distribution $P_V : \Ch_{V} \rightarrow [0,1]$ over its children $V' \in \Ch_V$ (each child corresponds to a value $s_v \in \dom(S_V)$) by querying that variable's CPD with the instantiation label at $V$. In other words, $P_V(V') \coloneqq \Pr\big(s_v \mid \mu(V)\big)$. This equation can be simplified because in a BN the CPD for each variable $S_V$ only depends on the values of its parents, $\pa_{S_V}$. Therefore, we have:
    $$P_V(V') \coloneqq \Pr\big(s_v \mid \mu(V)\big) = \Pr\big(s_v \mid \mu(V)[\Pa_{S_V}]\big),$$
    where $\mu(V)[\Pa_{S_V}])$ is the restriction of $\mu(V)$ to the values of $\Pa_{S_V}$. If $P_V(V') = 0$ for some child $V'$, we remove that child from the EFG.
    
    \item The set of available actions $A^i_j$ for agent $i$ at node $V^i_j$ in the EFG is given by the domain of the corresponding decision variable $D = S_{V^i_j}$ in the MAID, i.e. $A^i_j \coloneqq \dom(D)$.
    
    \item Define $\lambda: \bm{V} \times \bm{V} \rightarrow A$ for each decision node $V \in \bm{V}^1 \cup \dots \cup \bm{V}^n$ and $V' \in \Ch_V$ such that $\lambda(V,V') = \mu(V')[S_V]$ to label the outcome of the decision.
    
    \item Define an equivalence relation $\sim$ over  $\bm{V}^1 \cup \dots \cup \bm{V}^n$ such that $V \sim V'$ if and only if $S_V = S_{V'} = S$ and $\mu(V)[\Pa_{S_V}] = \mu(V')[\Pa_{S_{V'}}]$. Then the set of information sets $I^i$ for each agent $i \in N$ is simply the quotient set $\bm{V}^i/\sim$ -- the set of $\sim$ equivalence classes partitioning $\bm{V}^i$. In other words, two nodes $V, V' \in \bm{V}$ which correspond to the same decision variable $S$ in $\model$ are in the same information set $I^i_j$ for agent $i$ if and only if their instantiation labels, restricted to their parents in $\model$, are the same.
    
    \item The payoff $U: \bm{L} \rightarrow \mathbb{R}^n$ for each leaf $L\in \bm{L}$ of the EFG's tree is $U(L) = (u^1,\dots,u^n)$, where $u^i \coloneqq \sum_{U \in \bm{U}^i} \expect [ U \mid \mu(L) ]$ is the expected utility of agent $i$ in $\model$, given the instantiation $\mu(L)$ of the variables $\bm{S}$.
\end{itemize}

\subsubsection*{EFG to MAID}
\label{app:EFG2MAID}

We now detail the construction \texttt{efg2maid}. Unlike in the case for \texttt{maid2efg}, we show how every EFG can be converted to a \emph{unique}, canonical equivalent MAID. Given an EFG $\efg = (N, T, P, A, \lambda, I, U)$ (including intervention sets) we define an equivalent MAID $\model = (\graph, \bm{\theta})$ as follows:

\begin{itemize}
    \item The set of agents $N$ remains the same in $\model$ as in $\efg$.
    \item Initialise the MAID's graph $(\bm{V}, \mathscr{E})$ as $T$, where edges are directed from parents to children.
    \item For each of $\efg$'s chance nodes $V \in \bm{V}^0$, label each outgoing edge from $V$ with a value $v$. Every other node with an outgoing edge is labelled, by $\lambda$, with the decision corresponding to that edge. Thus, let $\dom(V)$ in $\model$ contain the labels of the outgoing edges from $V$.
    \item For each variable $V \in \bm{V}$ let $\rho_V$ be the unique path formed by the sequence of labels from the root $R$ of $\efg$ to the corresponding EFG node for $V$ and let $\rho_V[V']$ denote the label of the outgoing edge from node $V'$ on the path $\rho_V$.
    \item For each information set $I^i_j$ in $\efg$, let $\rho^\prec(I^i_j)$ be the set of paths from $R$ into the nodes of $I^i_j$.
    Next, define a function $\mu : \bigcup_{i \in N} I^i \rightarrow 2^{\bm{V}}$ that maps each information set $I^i_j$ to the set of variables in $\Anc_{I^i_j}$ whose outgoing label is the same in every path in $\rho^\prec(I^i_j)$. Note that by Definition \ref{def:intervention_set}, if information sets $I$ and $I'$ are in the same intervention set, then the nodes whose values are in $\mu(I)$ are the same as the nodes whose values are in $\mu(I')$.
    \item We then consider $\efg$'s chance, decision, and leaf nodes in turn:
    \begin{itemize}
        \item For every outgoing edge with some label $v_j$ from a chance node $V \in \bm{V}^0$, add the label $(v_j \mid \rho_{V_j}) : p$ where $p = P_j(v_j)$.
        \item For every information set $I^i_j$ and each variable $V \in I^i_j$, add the label $(v \mid \rho_V[\mu(I^i_j)] ) : \_$ where $\rho_V[\mu(I^i_j)]$ denotes the labels of $\rho_D$ restricted to those in $\mu(I^i_j)$ and $\_$ is a placeholder to be parameterised by a decision rule.
        \item For each leaf variable $L \in \bm{L}$ with payoff vector $U(L) = (u^1,\ldots,u^n)$, split $L$ into $n$ utility variables $U^1,\dots,U^n$ (duplicating incoming edges) with labels $(u^i \mid \rho_L) : 1$ respectively.
    \end{itemize}
    \item Given these labellings we proceed by merging variables according to the intervention sets:
    \begin{itemize}
         \item Merge each information set $I^i_j$ into a single variable $D_j \in \bm{D}^i$, collecting the labels $(v \mid \rho_V[\mu(I^i_j)] ) : \_$ for each $V \in I^i_j$ and retaining all incoming and outgoing edges.
        \item Begin by adding a directed edge from every $V' \in \Anc_V$ to $V$ for every variable $V$. Then for every variable $D_j$ corresponding to an information set $I^i_j$, remove all the incoming edges from variables that are not in $\mu(I^i_j)$.
        \item Merge each group of variables, collecting their incoming and outgoing labelled edges, that belong to the same intervention set.
        \item Merge any utility variables $U^i_j$ and $U^i_k$ belonging to the same agent $i$ that have the same sets of incoming edges, and collect their labels.
    \end{itemize}
    \item We then let $\bm{V} \coloneqq \bm{X} \cup \bm{D} \cup \bm{U}$ where $\bm{X} = \bm{V}^0$, $\bm{D}$ is the union of variable sets $\bm{D}^i$ defined above, and $\bm{U}$ is the collection of utility variables, also defined above.
    \item $\mathscr{E}$ is the set of edges in the graph defined above.
    \item We conclude by defining the CPDs $\Pr(V \mid \Pa_V)$, for every $V \in \bm{X} \cup \bm{U}$. 
    \begin{itemize}
        \item For certain instantiations, $\pa_V$, the CPD over $V$ is undefined because $\pa_V$ does not represent a path through the original game tree $T$ (as mentioned in Section \ref{sec:equivalences}). 
        For non-decision variables, this is dealt with by simply adding a null value $\perp$ for every variable in $\bm{X}$ and a value of 0 for every variable in $\bm{U}$.
        \item Recall the labels of the form $(v \mid \rho_V) : p$ and let $l(V)$ denote the set of $V$'s labels. 
        \begin{itemize}
            \item For each variable $V \in \bm{V} \setminus \bm{D}$, we define $\Pr(v \mid \pa_V) \coloneqq \sum_{(v \mid \pa_V) : p ~\in~ l(V)} p$.
            \item For any $\pa_V$ such that for all $v \in \dom(V)$, $\Pr(v\mid\pa_V) = 0$, we set $\Pr(\perp \mid \pa_V) = 1$ if $V \in \bm{X}$ and $\Pr( 0\mid \pa_V) = 1$ if $V \in \bm{U}$.
        \end{itemize}
    \end{itemize}
    \item By construction, $\prod_{V \in \bm{V} \setminus \bm{D}} \Pr(v \mid \pa_V)$ therefore forms a partial distribution over the non-decision variables in $\model$.
    \item For decision variables, we now see that the only changes in parameterisations of $\pi_D(D \mid \Pa_D)$ that can have any effect on any of the other variables are those that occur under settings $\pa_D$ such that there exists a strategy $\sigma$ and path $\rho$ in $\efg$ capturing all values in $\pa_D$ with $\Pr^\sigma(\rho) > 0$. In other words, those values of $\pi_D(d \mid \pa_D)$ corresponding to a parametrisation of $(d \mid \rho_D[\mu(I^i_j)] ) : \_$.
\end{itemize}

\subsection{Theoretical Results}
\label{app:theory}

\subsubsection*{Section 3}

\begin{proposition}
    $\mecvar_{V}$ is $\R^{\BR}$-relevant to $\Pi_{D}$ if and only if $\mecvar_{V} \not\perp_{\meczero{\graph}} \bm{U}^i \cap \Desc_{D} \mid D, \Pa_{D}$ or $\mecvar_{V} \not\perp_{\meczero{\graph}} \Pa_D$, where if $D \in \bm{D}^i$, then $\bm{U}^i \cap \Desc_{D} \neq \varnothing$.
\end{proposition}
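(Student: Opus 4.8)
The plan is to instantiate, for the specific choice $\R = \R^{\BR}$, the generic soundness-and-completeness template laid out at the end of Section~\ref{sec:relevance}, using the query set $\mathcal{Q}^{\BR}_D = \{\Pr^{\bm{\pi}}(\bm{u}^i \cap \desc_{D} \mid d, \pa_{D}),\ \Pr^{\bm{\pi}}(\pa_{D})\}$. The first task is to verify Equation~(\ref{eq:relation_restriction}) for $\R^{\BR}$, i.e.\ that $r^{\BR}_D(\mecvals_{\bm{V}\setminus\{D\}})$ is a function of the values of the queries in $\mathcal{Q}^{\BR}_D$ together with the domains $\dom(\bm{V})$. Here I would split $\sum_{U \in \bm{U}^i}\expect_{(\hat{\pi}_D,\bm{\pi}_{-D})}[U]$ into a sum over $\bm{U}^i \cap \Desc_D$ and a sum over $\bm{U}^i \setminus \Desc_D$; the latter is unaffected by $\hat{\pi}_D$ since those utilities are not downstream of $D$, and — using that no element of $\Pa_D$ is downstream of $D$, and that conditional on $(d,\pa_D)$ the distribution of any descendant of $D$ is independent of how $d$ was sampled — the former equals
\[
\sum_{\pa_D} \Pr^{\bm{\pi}}(\pa_D) \sum_{d \in \dom(D)} \hat{\pi}_D(d \mid \pa_D) \sum_{U \in \bm{U}^i \cap \Desc_D} \expect_{\bm{\pi}}[U \mid d, \pa_D],
\]
where the inner expectations are computable from $\Pr^{\bm{\pi}}(\bm{u}^i\cap\desc_D \mid d, \pa_D)$ and $\dom(\bm{V})$ (evaluated at any fully stochastic reference decision rule at $D$). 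Consequently $r^{\BR}_D(\pa_{\Pi_D})$ consists exactly of those $\pi_D$ that, at every context $\pa_D$ with $\Pr^{\bm{\pi}}(\pa_D) > 0$, place all mass on $\argmax_d \sum_{U \in \bm{U}^i \cap \Desc_D}\expect_{\bm{\pi}}[U\mid d,\pa_D]$, with no constraint at contexts of probability zero — manifestly determined by $\mathcal{Q}^{\BR}_D$ and $\dom(\bm{V})$.

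Given this, the $(\Rightarrow)$ direction is immediate: if $\mecvar_{V}$ is $\R^{\BR}$-relevant to $\Pi_D$, then by Definition~\ref{def:relevance} some change to $\mecvar_{V}$ changes $r^{\BR}_D$, hence (by the previous paragraph) changes the value of some query in $\mathcal{Q}^{\BR}_D$; so $V$ is a requisite probability node (Definition~\ref{def:requisite}) for $\Pr^{\bm{\pi}}(\bm{u}^i\cap\desc_D\mid d,\pa_D)$ (for some instantiation) or for $\Pr^{\bm{\pi}}(\pa_D)$, and Lemma~\ref{lem:reachability} yields, respectively, $\mecvar_{V} \not\perp_{\meczero{\graph}} \bm{U}^i \cap \Desc_{D} \mid D, \Pa_{D}$ or $\mecvar_{V} \not\perp_{\meczero{\graph}} \Pa_D$. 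The standing assumption $\bm{U}^i \cap \Desc_D \neq \varnothing$ (for $D \in \bm{D}^i$) is what guarantees that $r^{\BR}_D$ is a genuinely non-trivial object in the first place; without it, $r^{\BR}_D$ would be all of $\dom(\Pi_D)$ and nothing could be relevant.

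For the $(\Leftarrow)$ direction I would treat the two disjuncts separately and, in each, build an explicit parameterisation witnessing relevance — this is the step I expect to be the main obstacle, as it mirrors the gadget constructions underlying the completeness of $s$-reachability (Proposition~\ref{prop:s-reachability}). If $\mecvar_{V} \not\perp_{\meczero{\graph}} \bm{U}^i \cap \Desc_{D} \mid D, \Pa_{D}$, then by completeness of d-separation \cite{Verma1990} I can construct, along the active path in $\meczero{\graph}$, CPDs that make some $U \in \bm{U}^i\cap\Desc_D$ a scaled copy of the node whose conditional distribution shifts, choosing the remaining utility CPDs and $\bm{\pi}_{-D}$ so that (i) the context $\pa_D$ has positive probability and (ii) the $\argmax_d$ at $\pa_D$ flips between the two values of $\mecvar_{V}$, thereby changing $r^{\BR}_D$. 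If instead $\mecvar_{V} \not\perp_{\meczero{\graph}} \Pa_D$ (a marginal independence), a short argument shows that, since in $\meczero{\graph}$ the node $\mecvar_V$ has the single outgoing edge $\mecvar_V \to V$ and any further backward step would create a blocked collider, the only active paths are directed paths, so $V \in \Pa_D$ or $V$ is an ancestor of some $P \in \Pa_D$; I would then make the variables along this directed path deterministic copies so that the value of $\mecvar_{V}$ controls whether $\Pr^{\bm{\pi}}(\pa_D) > 0$, while choosing the utilities of $\bm{U}^i\cap\Desc_D$ so that the $\argmax$ at $\pa_D$ is non-degenerate — so toggling $\mecvar_{V}$ adds or removes the optimality constraint at context $\pa_D$ and hence changes $r^{\BR}_D$. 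Combining both cases with the chain of equivalences from Section~\ref{sec:relevance} completes the proof.
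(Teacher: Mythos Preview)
Your proposal is correct and follows essentially the same route as the paper's proof: you identify the query set $\mathcal{Q}^{\BR}_D$, establish the decomposition of $\sum_{U\in\bm{U}^i}\expect_{(\hat\pi_D,\bm{\pi}_{-D})}[U]$ into a $\hat\pi_D$-irrelevant part and a part determined by $\Pr^{\bm{\pi}}(\bm{u}^i\cap\desc_D\mid d,\pa_D)$ and $\Pr^{\bm{\pi}}(\pa_D)$, invoke Lemma~\ref{lem:reachability} for soundness, and for completeness split on the two disjuncts, deferring the first to the K\&M construction behind Proposition~\ref{prop:s-reachability} and handling the second via a directed-path copy gadget that toggles the positivity of a decision context. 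The paper's proof does exactly this, with the only real addition being an explicit binary gadget for the $\mecvar_V\not\perp_{\meczero{\graph}}\Pa_D$ case (copy along $\mecvar_V\to V\pathto Z\in\Pa_D$ and along $D\pathto U$, then compare $\mecval_V$ forcing $Z=1$ deterministically versus $\mecval_V'$ giving $Z=0$ positive probability), which matches your sketch precisely.
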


\begin{proof}
First, recall from Definition \ref{def:relevance}, that $\mecvar_{V} \in \Pa_{\Pi_D}$ is $\R^{\BR}$-\textbf{relevant} to $\Pi_D$ if there exists $\pa_{\Pi_D} \neq \pa_{\Pi_D}'$ such that $r^{\BR}_D(\pa_{\Pi_D}) \neq r^{\BR}_D(\pa_{\Pi_D}')$, where $\pa_{\Pi_D}$ and $\pa_{\Pi_D}'$ differ only on $\mecvar_{V}$. Moreover, recall that for each $D \in \bm{D}^i$, $\pi_{D} \in r^{\BR}_{D}(\pa_{\Pi_{D}})$ if and only if: 
$$\pi_{D} \in \argmax_{\hat{\pi}_{D} \in \dom(\Pi_{D})} \sum_{U \in \bm{U}^i} \expect_{(\hat{\pi}_{D}, \bm{\pi}_{-D})} [ U ].$$ 
Given some setting $\mecvals_{-D}$ of all mechanism variables apart from $\Pi_D$, then for each $U \in \bm{U}^i$ and any $\hat{\pi}_{D} \in \dom(\Pi_D)$ we have that:
\begin{align*}
    \Pr^{(\hat{\pi}_{D}, \bm{\pi}_{-D})}(u)
    &= \sum_{d \in \dom(D)} \sum_{\pa_{D} \in \dom(\Pa_{D})} \Pr^{(\hat{\pi}_{D}, \bm{\pi}_{-D})}(u, d, \pa_{D})\\
    &= \sum_{d \in \dom(D)} \sum_{\pa_{D} \in \dom(\Pa_{D})} \Pr^{\bm{\pi}_{-D}}(u \mid d, \pa_{D}) \hat{\pi}_D(d \mid \pa_{D}) \Pr^{\bm{\pi}_{-D}}(\pa_{D})\\
    &= \sum_{d \in \dom(D)} \sum_{\pa_{D} \in \dom(\Pa_{D})} \Pr^{\bm{\pi}}(u \mid d, \pa_{D}) \hat{\pi}_{D}(d \mid \pa_{D}) \Pr^{\bm{\pi}}(\pa_{D})
\end{align*}
The first equality simply rewrites the marginal distribution as a sum over a joint distribution; the second factorises this joint distribution; and the third results from the observations that the distribution over $\Pa_{D}$ cannot depend on the CPD $\pi_{D}(D \mid \Pa_{D})$, and nor can the distribution over $U$ \emph{given} some $d, \pa_{D}$. Next, note that in the definition of $r^{\BR}$ we quantify over $\hat{\pi}_{D}$ via the $\argmax$ operation, and that the choice of $\hat{\pi}_{D}$ (given $\bm{\pi}_{-D}$) can only impact the expected value of those utility variables $U \in \bm{U}^i$ that are also in $\Desc_{D}$. Therefore, any decision rule that is in $r^{\BR}_{D}(\pa_{\Pi_{D}})$ satisfies:
\begin{equation}
\label{eq:argmax}
    \argmax_{\hat{\pi}_{D} \in \dom(\Pi_{D})}\bigg[ \sum_{U \in \bm{U}^i \cap \Desc_{D}}  \sum_{d \in \dom(D)} \sum_{\pa_{D} \in \dom(\Pa_{D})} \Pr^{\bm{\pi}}(u \mid d, \pa_{D}) \hat{\pi}_{D}(d \mid \pa_{D}) \Pr^{\bm{\pi}}(\pa_{D}) \cdot  u \bigg].
\end{equation}
Therefore, if the expression (\ref{eq:argmax}) is independent of the value $\mecval_{V}$, then $\mecvar_{V}$ is not $\R^{\BR}$-\textbf{relevant} to $\Pi_D$. As explained in Section \ref{sec:relevance}, this is equivalent to asking whether $V$ is a requisite probability node for any distribution in $\mathcal{Q}^\BR_D = \{ \Pr^{{\bm{\pi}}}(\bm{u}^i \cap \desc_{D} \mid d, \pa_{D}), \Pr^{\bm{\pi}}(\pa_{D}) \}$. The graphical criterion for checking whether something is a requisite probability node is given in Lemma \ref{lem:reachability}, and results in the criteria (which we refer to as $\R^{\BR}$-reachability) $\mecvar_{V} \not\perp_{\meczero{\graph}} \bm{U}^i \cap \Desc_{D} \mid D, \Pa_{D}$ or $\mecvar_{V} \not\perp_{\meczero{\graph}} \Pa_D$, where if $D \in \bm{D}^i$, then $\bm{U}^i \cap \Desc_{D} \neq \varnothing$. 

We begin by proving soundness, i.e., that if $\mecvar_{V}$ is not $\R^{\BR}$-reachable from $\Pi_D$, then $\mecvar_{V}$ is not $\R^{\BR}$-relevant to $\Pi_{D}$. Let $\mecvals \neq \mecvals'$ differ only on the value of $\mecvar_V \in \Pa_{\Pi_D}$, and let $\pa_{\Pi_D} \neq \pa_{\Pi_D}'$ be the respective settings of $\Pa_{\Pi_D}$. Additionally, let us write $\mecvals = (\bm{\pi}, \bm{\theta})$ and $\mecvals' = (\bm{\pi}', \bm{\theta}')$. If $\mecvar_{V}$ is not $\R^{\BR}$-reachable from $\Pi_D$ then we have that both $\mecvar_{V} \perp_{\meczero{\graph}} \bm{U}^i \cap \Desc_{D} \mid D, \Pa_{D}$ and $\mecvar_{V} \perp_{\meczero{\graph}} \Pa_D$, implying that both $\Pr^{\bm{\pi}}(u \mid d, \pa_{D}; \bm{\theta}) = \Pr^{\bm{\pi}'}(u \mid d, \pa_{D} ; \bm{\theta}')$ and $\Pr^{\bm{\pi}}(\pa_{D}; \bm{\theta}) = \Pr^{\bm{\pi}'}(\pa_{D} ; \bm{\theta}')$. Thus, the value of expression (\ref{eq:argmax}) is independent of the value of $\mecvar_{V}$ and so $r^{\BR}_D(\pa_{\Pi_D}) = r^{\BR}_D(\pa_{\Pi_D}')$, meaning that $\mecvar_{V}$ is not $\R^{\BR}$-relevant to $\Pi_{D}$, as required.

We now turn to completeness, i.e., that if $\mecvar_{V}$ is $\R^{\BR}$-reachable from $\Pi_D$ in some mechanised graph $\mec{\graph}$, where $D \in \bm{D}^i$ and $\bm{U}^i \cap \Desc_{D} \neq \varnothing$, then for some mechanised MAID $\mec{\model} = (\mec{\graph}, \bm{\theta}, \R^{\BR})$, $\mecvar_{V}$ is $\R^{\BR}$-relevant to $\Pi_{D}$. Our goal is therefore to find some parameterisation of the mechanised graph $\mec{\graph}$ such that if the parameterisation of $\Pi_{D}$'s parents $\pa_{\Pi_D}$ is changed to $\pa_{\Pi_D}'$, where these only differ on $\mecvar_V$, then $r_D(\pa_{\Pi_D}) \neq r_D(\pa_{\Pi_D}')$.

If $\mecvar_{V} \not\perp_{\meczero{\graph}} \bm{U}^i \cap \Desc_{D} \mid D, \Pa_{D}$, then we can simply follow K\&M's proof of completeness for $s$-reachability to construct a parameterisation of the mechanised graph such that $\mecvar_{V}$ is $s$-relevant, and hence $\R^{\BR}$-relevant, to $\Pi_{D}$ in the mechanised MAID. If, instead, we only have $\mecvar_{V} \not\perp_{\meczero{\graph}} \Pa_{D}$, then we proceed as follows. Below, we restrict our attention to those decision nodes that satisfy $\bm{U}^i \cap \Desc_{D} \neq \varnothing$ for $D \in \bm{D}^i$. This is because if $\bm{U}^i \cap \Desc_{D} = \varnothing$, then agent $i$ will be indifferent between all of its decision rules for $D$ and so trivially we will have $r^{\BR}_D(\pa_{\Pi_D}) = r^{\BR}_D(\pa_{\Pi_D}')$ for any $\pa_{\Pi_D}$ and $\pa_{\Pi_D}'$; in this case, $\mecvar_{V}$ is never $\R^{\BR}$-relevant to $\Pi_{D}$.

Let us assume that $\mecvar_{V} \not\perp_{\meczero{\graph}} Z$ for some $Z \in \Pa_{D}$. Then, as $\mecvar_{V}$ has no parents in $\meczero{\graph}$ and all paths between $\mecvar_{V}$ and $Z$ that contain colliders are blocked, there must be a directed path $\mecvar_{V} \pathto Z$ in $\meczero{\graph}$. Let us denote the variables along this directed path by $X_0 \pathto X_{n+1}$, where $X_0 = \mecvar_V$ and $X_{n+1} = Z$. Next, following our remarks above, we know that $\bm{U}^i \cap \Desc_{D} \neq \varnothing$, and so there is a directed path $D \pathto U$ for some $U \in \bm{U}^i$. We denote the variables along this path by $Y_0 \pathto Y_{m+1}$, where $Y_0 = D$ and $Y_{m+1} = U$. Thus, in $\meczero{\graph}$ we have a directed path
$X_0 \pathto X_{n+1} \to Y_0 \pathto Y_{m+1}$
where the segments $X_1 \pathto X_{n}$ and $Y_1 \pathto Y_{m}$ might be empty.

We now give two parameterisations of $\mec{\graph}$ that are identical except for the two settings $\pa_{\Pi_D} \neq \pa_{\Pi_D}'$ that differ only on $\mecvar_V$. We begin with what the two parameterisations have in common. First, we let all non-utility variables be binary, although we will show how this assumption can easily be relaxed at the end. Next, we set the CPDs for all nodes along the paths $X_2 \pathto X_{n+1}$ and $Y_1 \pathto Y_{m}$ to copy the value of their parent in this path. That is, for $i \in \{2,n\}$, $x_{i} = x_{i-1}$ with probability 1, and for $i \in \{1,m\}$, $y_{i} = y_{i-1}$ with probability 1. Note that the value of $X_1 = V$ is determined according to the value of $\mecvar_V$. Finally, we set the distributions of the utility functions so that $U$ copies the value of $Y_m$ and any other $U' \in \bm{U}^i$ takes value 0 regardless of the value of $\Pa_{U'}$.

For $\pa_{\Pi_D}$ we let $\mecval_V$ set the distribution over $V$ to $\delta(V {,} 1)$, implying that $X_{n+1} = Z$ takes value 1. Given some decision rule $\pi_D \in r^{\BR}_D(\pa_{\Pi_D})$, note that the decision rule $\bar{\pi}_D \in r^{\BR}_D(\pa_{\Pi_D})$ too, where $\bar{\pi}_D$ is identical to $\pi_D$ apart from the fact that $\bar{\pi}_D(D=0 \mid Z=0, \pa'_D) = 1$ where $\pa'_D$ is any setting of the variables $\Pa'_D = \Pa_D \setminus \{Z\}$. When $D = 0$ then, by the construction above, $Y_{m + 1} = U = 0$, but as $Z=0$ with probability zero given $\mecval_V$, then this outcome never occurs, and so $\bar{\pi}_D$ remains a rational response to $\pa_{\Pi_D}$.

For $\pa_{\Pi_D}'$, we let $\mecval'_V$ set the distribution over $V$ to be such that $\Pr^{\bm{\pi}}(V=0 \mid \pa_V) > 0$ for any $\pa_V$. In this case, $\bar{\pi}_D$ is \emph{not} a rational response, as the context $Z=0$ occurs with non-zero probability, and thus so to does $U=0$, due to the construction above and the fact that $\bar{\pi}_D(D=0 \mid Z=0, \pa'_D) = 1$. {Agent} $i$'s expected utility could instead be strictly improved by selecting the decision rule $\delta(D {,} 1)$. Thus, $\bar{\pi}_D \notin r^{\BR}_D(\pa'_{\Pi_D})$, and hence $r^{\BR}_D(\pa_{\Pi_D}) \neq r^{\BR}_D(\pa'_{\Pi_D})$, i.e., $\mecvar_{V}$ is $\R^{\BR}$-relevant to $\Pi_{D}$, as required.
To extend the above proof to the case where variables have arbitrary (as opposed to binary) domains, we simply label one value in the domain of each variable by `1' and another by `0', making sure that for the utility variables the value labelled as `1' is strictly greater than the value labelled as `0'.
\end{proof}

\subsubsection*{Section 4}

\begin{proposition}
    For distributions over $\exovar_D$ and $D$ as governed by equations (\ref{eq:dist_def}), there is a one-to-one correspondence between the set of stochastic decision rules $\Delta(\dom(D) \mid \dom(\overline{\Pa}_D))$ and the set of deterministic decision rules $\dom(\dot{\Pi}_D) \subset \Delta(\dom(D) \mid \dom(\Pa_D))$. Moreover, given two such corresponding decision rules $\pi_D$ and $\dot{\pi}_D$, then $\int_{\dom(\exovar_D)} \dot{\pi}_D(d \mid \overline{\pa}_D, \exoval_D)\Pr(\exoval_D) \,d \exoval_D = \pi_D(d \mid \overline{\pa}_D)$.
\end{proposition}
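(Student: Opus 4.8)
The plan is to let $\Phi$ be the map sending a stochastic decision rule $\pi_D \in \Delta(\dom(D) \mid \dom(\overline{\Pa}_D))$ to the deterministic decision rule $\dot\pi_D$ defined by the second line of (\ref{eq:dist_def}), i.e. $\dot\pi_D(D = d \mid \overline{\pa}_D, \exoval_D) \coloneqq \delta(D {,} \exoval^{\pi_D, \overline{\pa}_D}_D)$, and to show $\Phi$ is a bijection onto $\dom(\dot\Pi_D)$ by first proving the integral identity and then reading injectivity off from it. Since $\dom(\dot\Pi_D)$ consists precisely of the deterministic decision rules obtained from this construction, $\Phi$ is surjective by definition; the substantive claims are the integral identity and injectivity.

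The engine is the integral identity. Fixing $\pi_D$, a decision context $\overline{\pa}_D$, and a value $d \in \dom(D)$, the integrand $\dot\pi_D(d \mid \overline{\pa}_D, \exoval_D) = \delta(d {,} \exoval^{\pi_D, \overline{\pa}_D}_D)$ is the indicator of the event $\{\exoval^{\pi_D, \overline{\pa}_D}_D = d\}$, which depends on $\exoval_D$ only through its $(\pi_D, \overline{\pa}_D)$-coordinate. I would then factor the joint distribution $\Pr(\exovar_D)$ -- which, by construction, is the product of the distributions of the independent fields $\exovar^{\pi'_D, \overline{\pa}'_D}_D$ (cf. \cite{Bauer1996}) -- into the marginal over that one coordinate and the marginal over all the others, integrate out the others to obtain a factor of $1$, and be left with $\Pr(\exovar^{\pi_D, \overline{\pa}_D}_D = d)$. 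By the first line of (\ref{eq:dist_def}) this equals $\pi_D(d \mid \overline{\pa}_D)$, giving $\int_{\dom(\exovar_D)} \dot\pi_D(d \mid \overline{\pa}_D, \exoval_D)\Pr(\exoval_D)\, d\exoval_D = \pi_D(d \mid \overline{\pa}_D)$.

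Injectivity of $\Phi$ is then immediate: if $\Phi(\pi_D) = \Phi(\pi'_D)$, applying the identity to each side yields $\pi_D(d \mid \overline{\pa}_D) = \pi'_D(d \mid \overline{\pa}_D)$ for all $d$ and $\overline{\pa}_D$, so $\pi_D = \pi'_D$. Together with surjectivity this establishes the one-to-one correspondence, and since the integral identity exhibits the inverse of $\Phi$ explicitly, the ``moreover'' clause is nothing more than that same identity.

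I expect the only real obstacle to be the measure-theoretic step of integrating a function of a single coordinate against the (in general uncountably-indexed) product measure $\Pr(\exovar_D)$, since the index set $\Delta(\dom(D) \mid \dom(\overline{\Pa}_D)) \times \dom(\overline{\Pa}_D)$ has the cardinality of the continuum when $|\dom(D)| \geq 2$. This is exactly the standard fact that a coordinate marginal of a product measure is the corresponding factor, so I would invoke \cite{Bauer1996} rather than reprove it; the case $|\dom(D)| = 1$ is degenerate (a unique decision rule) and can be dispatched in a line.
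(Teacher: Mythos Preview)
Your proposal is correct and follows essentially the same approach as the paper: define the map $\Phi$ (the paper calls it $f$), observe surjectivity is by construction of $\dom(\dot{\Pi}_D)$, and compute the integral by factoring the product measure into the single relevant coordinate $\exovar^{\pi_D,\overline{\pa}_D}_D$ and the rest (which integrates to $1$). The only minor difference is in the injectivity step: the paper argues directly that distinct $\pi_D \neq \pi'_D$ index distinct coordinates $\exovar^{\pi_D,\overline{\pa}_D}_D \neq \exovar^{\pi'_D,\overline{\pa}_D}_D$ (distinguished by their marginals), whereas you derive injectivity as a corollary of the integral identity by reading it as a left inverse $\Psi \circ \Phi = \mathrm{id}$ --- a slightly cleaner packaging of the same content.
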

\begin{proof}
    Recall from Section \ref{sec:counterfactuals} that we define $\overline{\Pa}_D \coloneqq \Pa_D \setminus \{\exovar_D\}$. Let us first examine the correspondence between the stochastic and non-stochastic decision rules. Let $f : \Delta(\dom(D) \mid \dom(\overline{\Pa}_D)) \to \dom(\dot{\Pi}_D)$ be a function where for $\pi_D \in \Delta(\dom(D) \mid \dom(\overline{\Pa}_D))$ we define $f(\pi_D) = \dot{\pi}_D$ such that $\dot{\pi}_D({d} \mid \overline{\pa}_D, \exoval_D) = \delta(d {,} \exoval^{\pi_D, \overline{\pa}_D}_D)$. Given our definition of $\exovar_D$ then for any $\pi_D \neq \pi'_D$ then we have that $f(\pi_D) = f(\pi'_D)$ if and only if $\exovar^{\pi_D, \overline{\pa}_D}_D = \exovar^{\pi'_D, \overline{\pa}_D}_D$ for all $\overline{\pa}_D$. Clearly, however, for $\pi_D \neq \pi'_D$ then there exists some $\overline{\pa}_D$ and $d \in \dom(\exovar^{\pi_D, \overline{\pa}_D}_D) = \dom(\exovar^{\pi'_D, \overline{\pa}_D}_D)$ such that:
    $$\Pr(\exovar^{\pi_D, \overline{\pa}_D}_D = d) \coloneqq \pi_D(d \mid \overline{\pa}_D) \neq \pi'_D(d \mid \overline{\pa}_D) \eqqcolon \Pr(\exovar^{\pi'_D, \overline{\pa}_D}_D = d),$$
    and so $f(\pi_D) \neq f(\pi'_D)$, hence $f$ is an injection. Moreover, any $\dot{\pi}_D$ of this form can be identified with the set $\{ \exovar^{\pi_D, \overline{\pa}_D}_D \}_{\overline{\pa}_D \in \dom(\overline{\Pa}_D)} \subseteq \exovar_D$ and hence some $\pi_D \in \Delta(\dom(D) \mid \dom(\overline{\Pa}_D))$ such that $f(\pi_D) = \dot{\pi}_D$, hence $f$ is a surjection, and therefore a bijection, giving us our one-to-one correspondence. 
    
    Now take some $\pi_D \in \Delta(\dom(D) \mid \dom(\overline{\Pa}_D))$ and $\dot{\pi}_D = f(\pi_D)$. Let $\exovar^{- \pi_D, \overline{\pa}_D}_D \coloneqq \exovar_D \setminus \{ \exovar^{\pi_D, \overline{\pa}_D}_D \}$. Then, by our definition of $\exovar_D$ and its joint product of probability distributions \cite{Bauer1996}, we have that:
    \begin{align*}
        &\int_{\dom(\exovar_D)} \dot{\pi}_D(d \mid \overline{\pa}_D, \exoval_D)\Pr(\exoval_D) \,d \exoval_D\\
        = &\int_{\dom(\exovar_D)} \delta(d {,} \exoval^{\pi_D, \overline{\pa}_D}_D) \Pr(\exoval_D) \,d \exoval_D\\
        = &\sum_{\exoval^{\pi_D, \overline{\pa}_D}_D \in \dom(\exovar^{\pi_D, \overline{\pa}_D}_D)} \delta(d {,} \exoval^{\pi_D, \overline{\pa}_D}_D) \Pr(\exoval^{\pi_D, \overline{\pa}_D}_D) \cdot \int_{\dom(\exovar^{- \pi_D, \overline{\pa}_D}_D)} \Pr(\exoval^{- \pi_D, \overline{\pa}_D}_D) \,d \exoval^{- \pi_D, \overline{\pa}_D}_D\\
        = &\sum_{\exoval^{\pi_D, \overline{\pa}_D}_D \in \dom(\exovar^{\pi_D, \overline{\pa}_D}_D)} \delta(d {,} \exoval^{\pi_D, \overline{\pa}_D}_D) \pi_D(\exoval^{\pi_D, \overline{\pa}_D}_D \mid \overline{\pa}_D) \cdot 1\\
        = &\pi_D(d \mid \overline{\pa}_D).
    \end{align*}
\end{proof}

\subsubsection*{Section 5}
\begin{proposition}
    {A (behavioural policy) NE is not guaranteed to exist in a MAID.}
\end{proposition}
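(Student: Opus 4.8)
The plan is to prove the proposition by exhibiting a counterexample: a MAID that admits no Nash equilibrium in behavioural policies. Two observations shape the construction. First, a single-agent MAID always has an optimal behavioural policy, since $\dom(\bm{\Pi}^i)$ is a product of simplices (hence compact) and expected utility is continuous on it; and for a collection of agents each controlling a single decision, behavioural and mixed policies coincide, so Nash's theorem \cite{nash1950equilibrium} applies. The counterexample must therefore involve at least two agents, one of whom controls several mutually unobserved decisions (i.e., has insufficient recall in the sense of Definition \ref{def:sufrecall}). Second, the reason a Kakutani-style fixed-point argument breaks down there is that such an agent's best-response \emph{set} over behavioural policies need not be convex, so the existence proof underlying Proposition \ref{prop:NEexistance} cannot be run.

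Concretely, I would take a MAID with agents $\{1,2\}$ in which agent $1$ controls two parentless binary decisions $D^1_1, D^1_2 \in \{0,1\}$ (so agent $1$ forgets everything, and its restricted relevance graph contains the cycle $\Pi_{D^1_1} \leftrightarrow \Pi_{D^1_2}$, witnessing insufficient recall), agent $2$ controls a single parentless binary decision $D^2 \in \{0,1\}$, and the utilities are defined by $U^1 = 1$ exactly when $D^1_1 = D^1_2 = D^2$ (and $U^1 = 0$ otherwise) and $U^2 = 1$ exactly when $D^2 \neq D^1_1$ (and $U^2 = 0$ otherwise). Writing $x = \pi_{D^1_1}(1)$, $y = \pi_{D^1_2}(1)$, $q = \pi_{D^2}(1)$, one computes $\expect[U^1] = (1-x)(1-y)(1-q) + xyq$, whose maximiser over $[0,1]^2$ for fixed $q$ is the singleton $\{(1,1)\}$ if $q > \frac{1}{2}$, the singleton $\{(0,0)\}$ if $q < \frac{1}{2}$, and the two-element set $\{(0,0),(1,1)\}$ if $q = \frac{1}{2}$ --- in every case with $x \in \{0,1\}$, and never containing the averaged behavioural policy $(x,y) = (\frac{1}{2},\frac{1}{2})$. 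Agent $2$'s best response is $q = 0$ if $x > \frac{1}{2}$, $q = 1$ if $x < \frac{1}{2}$, and unconstrained if $x = \frac{1}{2}$. A short case split then finishes the argument: in any behavioural NE we would need $x \in \{0,1\}$; if $x = 0$ then $q = 1$, forcing agent $1$'s best response to have $x = 1$, a contradiction; if $x = 1$ then $q = 0$, forcing $x = 0$, again a contradiction; and $x = \frac{1}{2}$ never arises. Hence no behavioural-policy NE exists.

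To round the argument off --- and to make clear that it is specifically behavioural policies that fail --- I would note that a mixed-policy NE does exist: agent $1$ placing probability $\frac{1}{2}$ on each of the pure policies $(D^1_1,D^1_2) = (0,0)$ and $(1,1)$, together with $q = \frac{1}{2}$, is a Nash equilibrium in mixed policies (consistent with the discussion around Lemma \ref{lem:prequiv} and Proposition \ref{prop:kuhn}; the MAID has neither perfect nor sufficient recall, so Proposition \ref{prop:NEexistance} does not apply).

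The main obstacle is the careful verification that agent $1$'s behavioural best-response set is exactly the two ``coordinated corners'' and contains no interior or mixed point: one must check that $(1-x)(1-y) + xy = 1$ has only the solutions $(0,0)$ and $(1,1)$ in $[0,1]^2$ (the condition for attaining the optimal value $\frac{1}{2}$ when $q = \frac{1}{2}$), and that for $q \neq \frac{1}{2}$ the multilinear objective is uniquely maximised at a vertex of the unit square. Everything else reduces to a routine finite case analysis. I would also add a one-line remark that this non-convexity is precisely the relevance-graph cycle $\Pi_{D^1_1} \leftrightarrow \Pi_{D^1_2}$ manifesting at the level of best responses, which is exactly what defeats the fixed-point argument that proves Proposition \ref{prop:NEexistance} under sufficient recall.
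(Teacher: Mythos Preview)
Your proposal is correct and takes essentially the same approach as the paper: both construct a two-agent MAID in which agent~1 controls two parentless binary decisions (hence has insufficient recall via the cycle $\Pi_{D^1_1}\leftrightarrow\Pi_{D^1_2}$) and agent~2 controls one, with a matching-pennies-like interaction that forces agent~1 to want to correlate its two decisions---something behavioural policies cannot achieve---yielding no behavioural NE while a mixed NE exists. Your concrete payoffs are a bit simpler than the paper's (which adapts an example of Wichardt and is zero-sum with explicit penalties for mismatched $(a,\neg b)$/$(\neg a,b)$), but the argument, the case split, and the verification that the behavioural best-response set is exactly $\{(0,0),(1,1)\}$ mirror the paper's reasoning almost step for step.
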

\begin{proof}
    {See Appendix \ref{app:non-existence_proofs} for an example of a MAID that does not have a (behavioural policy) NE.}
\end{proof}
\begin{lemma}
    Let ${\bm{\pi}}^{-i} \in \Delta(\dom(\bm{D}^{-i}) \mid \dom(\Pa_{\bm{D}^{-i}}))$ be a partial (behavioural or mixed) policy profile for agents $N \setminus \{i\}$ in a MAID $\model$. If agent $i$ has perfect recall in $\model$, then for every mixed policy $\mu^i$ there exists a behavioural policy ${\bm{\pi}}^i$ such that $\Pr^{(\mu^i, {\bm{\pi}}^{-i})}(\bm{v}) = \Pr^{({\bm{\pi}}^i, {\bm{\pi}}^{-i})}(\bm{v})$.
\end{lemma}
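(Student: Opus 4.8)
The plan is to adapt the classical proof of Kuhn's theorem to the MAID setting. I will write the argument for a behavioural partial profile ${\bm{\pi}}^{-i}$; the mixed case follows by additionally averaging over the (fixed) pure profiles of $N \setminus \{i\}$, which plays no role in the construction of ${\bm{\pi}}^i$. Fix a topological ordering $D_1 \prec \cdots \prec D_m$ of $\bm{D}^i$ witnessing perfect recall, so that $\Fa_{D_s} \subseteq \Pa_{D_k}$ whenever $s < k$; the structural fact I will use repeatedly is that an assignment $\pa_{D_k}$ to $\Pa_{D_k}$ already pins down the values $d_s$ of $D_s$ and $\pa_{D_s}$ of $\Pa_{D_s}$ for every $s < k$.

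For the construction, I would set $\pi_{D_j}(d_j \mid \pa_{D_j}) \coloneqq \Pr^{(\mu^i, {\bm{\pi}}^{-i})}(d_j \mid \pa_{D_j})$ whenever $\Pr^{(\mu^i, {\bm{\pi}}^{-i})}(\pa_{D_j}) > 0$, and an arbitrary distribution otherwise; this is well-defined because $\Pr^{(\mu^i, {\bm{\pi}}^{-i})} = \sum_{\dot{\bm{\pi}}^i} \mu^i(\dot{\bm{\pi}}^i)\,\Pr^{(\dot{\bm{\pi}}^i, {\bm{\pi}}^{-i})}$ is a genuine mixture distribution. The first reduction is to note that for any $i$-policy $\sigma^i$ (behavioural or pure), $\Pr^{(\sigma^i, {\bm{\pi}}^{-i})}(\bm{v})$ factors as $C(\bm{v}) \cdot \prod_{j=1}^m [\,D_j\text{'s CPD factor}\,]$, where $C(\bm{v}) \coloneqq \prod_{V \in \bm{V} \setminus \bm{D}} \Pr(v \mid \pa_V) \prod_{D \in \bm{D}^{-i}} \pi_D(d \mid \pa_D)$ is common to $\Pr^{(\mu^i, {\bm{\pi}}^{-i})}$ and $\Pr^{({\bm{\pi}}^i, {\bm{\pi}}^{-i})}$. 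Hence both sides of the Lemma vanish when $C(\bm{v}) = 0$, and for $\bm{v}$ with $C(\bm{v}) > 0$ it suffices to establish the identity
\begin{equation*}
    \prod_{j=1}^m \pi_{D_j}(d_j \mid \pa_{D_j}) \;=\; \sum_{\dot{\bm{\pi}}^i} \mu^i(\dot{\bm{\pi}}^i) \prod_{j=1}^m \dot{\pi}_{D_j}(d_j \mid \pa_{D_j}). \tag{$\star$}
\end{equation*}
Since each pure decision rule is $\{0,1\}$-valued, the right-hand side of $(\star)$ equals $\mu^i(A_m)$, where $A_k \coloneqq \{\, \dot{\bm{\pi}}^i \in \dom(\dot{\bm{\Pi}}_{\bm{D}^i}) : \dot{\pi}_{D_s}(d_s \mid \pa_{D_s}) = 1 \text{ for all } s \le k \,\}$, so that $A_0$ is the whole pure-policy space and each $A_k$ constrains only the components at $D_1, \dots, D_k$.

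The heart of the argument is to evaluate the left-hand side of $(\star)$, and this is where perfect recall enters. Writing $q^{\dot{\bm{\pi}}^i}(\pa_{D_j}) \coloneqq \Pr^{(\dot{\bm{\pi}}^i, {\bm{\pi}}^{-i})}(\pa_{D_j})$, I would prove: (i) $q^{\dot{\bm{\pi}}^i}(\pa_{D_j}) = 0$ unless $\dot{\bm{\pi}}^i \in A_{j-1}$ — because, by perfect recall, the event $\{\Pa_{D_j} = \pa_{D_j}\}$ forces $D_s = d_s$ in context $\pa_{D_s}$ for each $s < j$; and (ii) for $\dot{\bm{\pi}}^i \in A_{j-1}$, $q^{\dot{\bm{\pi}}^i}(\pa_{D_j})$ equals a constant $\bar q(\pa_{D_j})$ independent of $\dot{\bm{\pi}}^i$, with $\bar q(\pa_{D_j}) > 0$ when $C(\bm{v}) > 0$ — because, using acyclicity together with the perfect-recall ordering, the only $i$-decision CPD factors occurring in the ancestrally-closed product computing $q^{\dot{\bm{\pi}}^i}(\pa_{D_j})$ are those at $D_1, \dots, D_{j-1}$, evaluated at contexts already determined by $\pa_{D_j}$, hence equal to $1$ on $A_{j-1}$, while all remaining factors are chance/other-agent CPDs that form a sub-product of $C(\bm{v})$ (so choosing the ancestor assignment matching $\bm{v}$ gives positivity). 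Substituting into the definition of $\pi_{D_j}$, the factor $\bar q(\pa_{D_j})$ cancels between numerator and denominator, yielding $\pi_{D_j}(d_j \mid \pa_{D_j}) = \mu^i(A_j)/\mu^i(A_{j-1})$ whenever $\mu^i(A_{j-1}) > 0$; the product then telescopes to $\mu^i(A_m)/\mu^i(A_0) = \mu^i(A_m)$, which is the right-hand side of $(\star)$. The degenerate case, where $\mu^i(A_{j-1}) = 0$ for some $j$, is handled by observing that the least index $j$ with $\mu^i(A_j) = 0$ has $\mu^i(A_{j-1}) > 0$, so $\pi_{D_j}(d_j \mid \pa_{D_j}) = \mu^i(A_j)/\mu^i(A_{j-1}) = 0$ and both sides of $(\star)$ vanish. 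Combining everything gives $\Pr^{(\mu^i, {\bm{\pi}}^{-i})}(\bm{v}) = C(\bm{v})\cdot\mu^i(A_m) = C(\bm{v})\cdot\prod_j \pi_{D_j}(d_j\mid\pa_{D_j}) = \Pr^{({\bm{\pi}}^i, {\bm{\pi}}^{-i})}(\bm{v})$.

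The main obstacle is claim (ii): making rigorous, via the perfect-recall ordering and the DAG structure of $\graph$, that $q^{\dot{\bm{\pi}}^i}(\pa_{D_j})$ is insensitive to the portion of $\dot{\bm{\pi}}^i$ not already constrained by the context $\pa_{D_j}$. This is exactly the content of Kuhn's theorem and precisely the place where imperfect recall would break the equivalence (since then $\pa_{D_j}$ no longer records $i$'s earlier moves, and $q^{\dot{\bm{\pi}}^i}(\pa_{D_j})$ can depend on them). The factorisation reduction, the telescoping, and the zero-probability bookkeeping are then routine.
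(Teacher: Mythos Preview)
Your proof is correct and follows essentially the same route as the paper's: both factor out the common part $C(\bm{v})$ to reduce to your identity $(\star)$, both arrive at the ratio formula $\pi_{D_j}(d_j\mid\pa_{D_j})=\mu^i(A_j)/\mu^i(A_{j-1})$ (with the paper's $Z_j$ being your $\mu^i(A_{j-1})$), and both finish by telescoping, handling the zero case identically. The only cosmetic difference is that you \emph{define} $\pi_{D_j}$ by conditioning in the full joint $\Pr^{(\mu^i,\bm{\pi}^{-i})}$ and then derive the ratio formula via your claims (i)--(ii), whereas the paper takes the ratio formula---expressed purely in terms of $\mu^i$---as the definition, which makes independence from $\bm{\pi}^{-i}$ immediate and bypasses those intermediate claims.
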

\begin{proof}
    For the purposes of this proof, let us abuse notation by viewing each pure decision rule $\dot{\pi}_D$ as a function, so that $\dot{\pi}_D(\pa_D) = d$ just in case $\dot{\pi}_D(d \mid \pa_D) = 1$. Further, for some pure policy $\dot{{\bm{\pi}}}^i \in \bigtimes_{D\in\bm{D}^i}\dot{\Pi}_D$, let us write $\dot{{\bm{\pi}}}^i(\pa_{\bm{D}^i}) = \bm{d}^i$ just in case $\dot{\pi}_D(\pa_D) = d$ for every $D \in \bm{D}^i$. Then, given a fixed MAID $\model$ and a partial (behavioural or mixed) policy profile ${\bm{\pi}}^{-i}$, we have the partial distribution $\Pr^{{\bm{\pi}}^{-i}}(\bm{v}, \bm{d}^{-i} : \bm{d}^i) = \prod_{V \in \bm{V} \setminus \bm{D}^i} \Pr^{{\bm{\pi}}^{-i}}(v \mid \pa_V)$. It therefore suffices to show that for any $\mu^i \in \Delta(\bigtimes_{D\in\bm{D}^i} \dom(\dot{\Pi}_D))$, there exists some ${\bm{\pi}}^i \in \bigtimes_{D\in\bm{D}^i}\Pi_D$ such that:
    \begin{align*}
        \mu^i(\bm{d}^i \mid \pa_{\bm{D}^i}) 
        \coloneqq \sum_{ \dot{{\bm{\pi}}}^i \in \bigtimes_{D \in\bm{D}^i} \dot{\Pi}_D ~:~ \dot{{\bm{\pi}}}^i(\pa_{\bm{D}^i}) = \bm{d}^i} \mu^i(\dot{{\bm{\pi}}}^i)
        = \prod_{D \in \bm{D}^i} \pi_D(d \mid \pa_{D})
        \eqqcolon {\bm{\pi}}^i(\bm{d}^i \mid \pa_{\bm{D}^i}),
    \end{align*}
    where again we abuse notation somewhat by using $\mu^i$ and ${\bm{\pi}}^i$ to denote both policies and the resulting partial distributions over variables in $\model$. In what follows, we also suppress the qualification that $\dot{{\bm{\pi}}}^i \in \bigtimes_{D \in\bm{D}^i} \dot{\Pi}_D$, which we take to be implicit when considering $\dot{{\bm{\pi}}}^i$. Suppose that $\vert \bm{D}^i \vert = m$
    and let us assume, without loss of generality, that the indices $1,\dots,m$ of these decision rules reflect the unique (in virtue of the fact that each agent has perfect recall) topological ordering of variables $\bm{D}^i$ in $\model$; i.e., $D_1 \prec \dots \prec D_{m}$. In the remainder of the proof, we write $\bm{D}^i_{\leq j}$ to denote the set of variables $\{D_1, \ldots, D_j\} \subseteq \bm{D}^i$. Then for each $D_j \in \bm{D}^i$ we have that $\Fa_{\bm{D}_{< j}} \subseteq \Pa_{D_j}$. We now define each decision rule $\pi^i_{D_j}$ as follows: 
    $$\pi^i_{D_j}(d_j \mid \pa_{D_j})  \coloneqq
    \begin{cases}
        \frac{1}{\vert \dom(D_j) \vert} & \text{ if } Z_j = 0\\
        \frac{1}{Z_j} \sum_{\dot{{\bm{\pi}}}^i_{\bm{D}_{\leq j}} ~:~  \dot{{\bm{\pi}}}^i_{\bm{D}_{\leq j}}(\pa_{\bm{D}^i_{\leq j}}) = \bm{d}^i_{\leq j}} \mu^i(\dot{{\bm{\pi}}}_{\bm{D}_{\leq j}}) & \text{ otherwise},
    \end{cases}
    $$
    where:
    $$Z_j  \coloneqq
    \begin{cases}
        1 & \text{ if } j = 1\\
        \sum_{\dot{{\bm{\pi}}}^i_{\bm{D}_{< j}} ~:~  \dot{{\bm{\pi}}}^i_{\bm{D}_{< j}}(\pa_{\bm{D}^i_{< j}}) = \bm{d}^i_{< j}} \mu^i(\dot{{\bm{\pi}}}_{\bm{D}_{< j}}) & \text{ otherwise}.
    \end{cases}
    $$
    Note that this behavioural decision rule is well-formed. Concretely, $\pi^i_{D_j}(d_j \mid \pa_{D_j}) \geq 0$ as $\mu^i(\dot{{\bm{\pi}}}_{\bm{D}_{\leq j}}) \geq 0$ and thus $Z_j > 0$ whenever used to normalise $\pi^i_{D_j}(d_j \mid \pa_{D_j})$, and $\sum_{d_j} \pi^i_{D_j}(d_j \mid \pa_{D_j}) = 1$ as:
    $$\sum_{d_j} \Bigg( \sum_{\dot{{\bm{\pi}}}^i_{\bm{D}_{\leq j}} ~:~  \dot{{\bm{\pi}}}^i_{\bm{D}_{\leq j}}(\pa_{\bm{D}^i_{\leq j}}) = \bm{d}^i_{\leq j}} \mu^i(\dot{{\bm{\pi}}}_{\bm{D}_{\leq j}}) \Bigg) = \sum_{\dot{{\bm{\pi}}}^i_{\bm{D}_{< j}} ~:~  \dot{{\bm{\pi}}}^i_{\bm{D}_{< j}}(\pa_{\bm{D}^i_{< j}}) = \bm{d}^i_{< j}} \mu^i(\dot{{\bm{\pi}}}_{\bm{D}_{< j}}).$$
    Using this definition, we may now conclude the proof, proceeding by cases. First, suppose that $Z_j = 0$ for some $j \geq 2$. Then:
    \begin{align*}
        \sum_{\dot{{\bm{\pi}}}^i_{\bm{D}_{< j}} ~:~  \dot{{\bm{\pi}}}^i_{\bm{D}_{< j}}(\pa_{\bm{D}^i_{< j}}) = \bm{d}^i_{< j}} \mu^i(\dot{{\bm{\pi}}}_{\bm{D}_{< j}}) = 0
        \Rightarrow \pi_{D_{j-1}}(d_{j-1} \mid \pa_{D_{j-1}}) = 0
        \Rightarrow {\bm{\pi}}^i(\bm{d}^i \mid \pa_{\bm{D}^i}) = 0.
    \end{align*}
    But the first antecedent above also implies that $\mu^i(\bm{d}^i \mid \pa_{\bm{D}^i}) \coloneqq \sum_{ \dot{{\bm{\pi}}}^i ~:~ \dot{{\bm{\pi}}}^i(\pa_{\bm{D}^i}) = \bm{d}^i} \mu^i(\dot{{\bm{\pi}}}^i) = 0$, and so we have ${\bm{\pi}}^i(\bm{d}^i \mid \pa_{\bm{D}^i}) = \mu^i(\bm{d}^i \mid \pa_{\bm{D}^i})$, as required. Now suppose that there is no $j$ such that $Z_j = 0$. Then, by telescoping terms we have the following:
    \begin{align*}
        {\bm{\pi}}^i(\bm{d}^i \mid \pa_{\bm{D}^i})
        \coloneqq &\prod_{j = 1}^{m} \pi_{D_j}(d_j \mid \pa_{D_j})\\
        = &\prod_{j = 1}^{m} \frac{1}{Z_j} \sum_{\dot{{\bm{\pi}}}^i_{\bm{D}_{\leq j}} ~:~  \dot{{\bm{\pi}}}^i_{\bm{D}_{\leq j}}(\pa_{\bm{D}^i_{\leq j}}) = \bm{d}^i_{\leq j}} \mu^i(\dot{{\bm{\pi}}}_{\bm{D}_{\leq j}})\\ 
        = &\sum_{\dot{{\bm{\pi}}}^i_{D_{\leq m}} ~:~  \dot{{\bm{\pi}}}^i_{D_{\leq m}}(\pa_{\bm{D}^i_{\leq m}}) = \bm{d}^i_{\leq m}} \mu^i(\dot{{\bm{\pi}}}_{D_{\leq m}})\\
        \eqqcolon &\mu^i(\bm{d}^i \mid \pa_{\bm{D}^i}).
    \end{align*}
\end{proof}
\begin{proposition}
    In any MAID $\model$ with perfect recall, there exists a {(behavioural)} policy profile ${\bm{\pi}}$ that is an NE.
\end{proposition}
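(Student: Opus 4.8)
The plan is to obtain a mixed-policy Nash equilibrium from Nash's theorem and then convert it, agent by agent, into an equivalent behavioural profile using the Kuhn-type result of Lemma \ref{lem:prequiv}. First I would note that $\model$ is a finite game: since every variable has a finite domain, each agent $i$ has only finitely many pure policies (one choice of $d \in \dom(D)$ for each $D \in \bm{D}^i$ and each $\pa_D \in \dom(\Pa_D)$), and the expected utility $\sum_{U \in \bm{U}^i} \expect_{(\mu^i, \mu^{-i})}[U]$ is multilinear in the mixing weights. Hence Nash's theorem \cite{nash1950equilibrium} applies and gives a mixed-policy Nash equilibrium $\mu = (\mu^1, \dots, \mu^n)$, meaning $\mu^i \in \argmax_{\hat{\mu}^i} \sum_{U \in \bm{U}^i} \expect_{(\hat{\mu}^i, \mu^{-i})}[U]$ for each $i$.

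Next I would record a ``universal equivalence'' between mixed and behavioural policies. Inspecting the proof of Lemma \ref{lem:prequiv}, the behavioural policy ${\bm{\pi}}^i$ it produces from $\mu^i$ depends only on $\mu^i$ and satisfies the policy-level identity ${\bm{\pi}}^i(\bm{d}^i \mid \pa_{\bm{D}^i}) = \mu^i(\bm{d}^i \mid \pa_{\bm{D}^i})$ for all $\bm{d}^i$ and $\pa_{\bm{D}^i}$; since the joint distribution over $\bm{V}$ induced by any policy profile factorises through exactly these conditional decision rules, it follows that $\Pr^{(\mu^i, \bm{\rho}^{-i})}(\bm{v}) = \Pr^{({\bm{\pi}}^i, \bm{\rho}^{-i})}(\bm{v})$ for every partial profile $\bm{\rho}^{-i}$ of the remaining agents, behavioural or mixed. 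The converse direction holds with no recall assumption: any behavioural policy ${\bm{\pi}}^i$ is matched by the product-form mixed policy $\mu^i\big((\dot{\pi}_D)_D\big) = \prod_{D \in \bm{D}^i}\prod_{\pa_D} \pi_D\big(\dot{\pi}_D(\pa_D) \mid \pa_D\big)$, which is an easy computation. Using perfect recall of each agent $i$ (guaranteed since $\model$ has perfect recall), apply Lemma \ref{lem:prequiv} to get such a behavioural ${\bm{\pi}}^i$ for each $i$ and set ${\bm{\pi}} = ({\bm{\pi}}^1, \dots, {\bm{\pi}}^n)$. Replacing the agents' policies one at a time and invoking universal equivalence at each step yields $\Pr^{\bm{\pi}}(\bm{v}) = \Pr^{\mu}(\bm{v})$, and therefore $\sum_{U \in \bm{U}^i} \expect_{\bm{\pi}}[U] = \sum_{U \in \bm{U}^i} \expect_{\mu}[U]$ for every $i$.

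Finally I would verify that ${\bm{\pi}}$ is an NE. Fix an agent $i$ and an arbitrary behavioural deviation $\hat{{\bm{\pi}}}^i$. Replacing each ${\bm{\pi}}^j$ with $j \neq i$ back by its equivalent $\mu^j$, and then replacing $\hat{{\bm{\pi}}}^i$ by its equivalent mixed policy $\hat{\mu}^i$ — each step justified by universal equivalence, since the swap is transparent to the agent whose policy is being swapped and to everyone else — gives $\Pr^{(\hat{{\bm{\pi}}}^i, {\bm{\pi}}^{-i})}(\bm{v}) = \Pr^{(\hat{\mu}^i, \mu^{-i})}(\bm{v})$. Hence $\sum_{U \in \bm{U}^i} \expect_{(\hat{{\bm{\pi}}}^i, {\bm{\pi}}^{-i})}[U] = \sum_{U \in \bm{U}^i} \expect_{(\hat{\mu}^i, \mu^{-i})}[U] \le \sum_{U \in \bm{U}^i} \expect_{(\mu^i, \mu^{-i})}[U] = \sum_{U \in \bm{U}^i} \expect_{({\bm{\pi}}^i, {\bm{\pi}}^{-i})}[U]$, where the inequality holds because $\mu$ is a mixed-policy NE. Since $i$ and $\hat{{\bm{\pi}}}^i$ were arbitrary, no agent can profit by unilaterally deviating, so ${\bm{\pi}}$ is an NE.

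The main obstacle is purely one of bookkeeping: Lemma \ref{lem:prequiv} is stated with the other agents' policies held fixed, so I must extract from its proof that the behavioural policy it returns is ``universal'', i.e. depends only on the deviating agent's mixed policy. Without this, the step in the last paragraph where several agents are simultaneously moved between their mixed and behavioural representations cannot be justified by the lemma as stated. Everything else — finiteness of the game, the product-form mixed policy, and the final chain of (in)equalities — is routine.
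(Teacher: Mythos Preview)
Your proposal is correct and follows essentially the same approach as the paper: obtain a mixed NE via Nash's theorem, convert it agent-by-agent to a behavioural profile using Lemma~\ref{lem:prequiv}, and rule out a profitable behavioural deviation by converting it to an equivalent mixed deviation (via the product-form construction you give) that would contradict the mixed NE. Your explicit identification of the ``universal equivalence'' bookkeeping issue is exactly the point the paper handles by repeatedly invoking Lemma~\ref{lem:prequiv} and by writing out the product-form mixed policy; the arguments are the same in substance.
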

\begin{proof}
    First, note that an immediate consequence of Lemma \ref{lem:prequiv} is that, for a given partial policy ${\bm{\pi}}^{-i}$ (behavioural or mixed) in a MAID $\model$ with perfect recall, then under any mixed policy $\mu^i$ and equivalent behavioural policy ${\bm{\pi}}^i$ we have:
    $$\sum_{U \in \bm{U}^k} \expect_{(\mu^i, {\bm{\pi}}^{-i})} [ U ] = \sum_{U \in \bm{U}^k} \expect_{({\bm{\pi}}^i, {\bm{\pi}}^{-i})} [ U ],$$
    for all agents $k \in N$.
    Next, by a straightforward application of Nash's theorem \cite{nash1950equilibrium}, we know that every MAID is guaranteed to have an NE $\mu$ in mixed policies. Given $\mu = (\mu^1,\ldots,\mu^n)$, we can therefore apply Lemma \ref{lem:prequiv} $n$ times to result in a behavioural policy ${\bm{\pi}} = ({\bm{\pi}}^1,\ldots,{\bm{\pi}}^n)$ such that $\sum_{U \in \bm{U}^k} \expect_{\mu} [ U ] = \sum_{U \in \bm{U}^k} \expect_{{\bm{\pi}}} [ U ]$ for all agents $k \in N$.
    Finally, let us assume (for a contradiction) that there is some agent $i \in N$ that may deviate from ${\bm{\pi}}^i$ in order to increase their expected utility. Let this deviation be denoted by $\hat{{\bm{\pi}}}^i$. Then we can construct a mixed policy $\hat{\mu}^i \in \Delta(\bigtimes_{D\in\bm{D}^i} \dom(\dot{\Pi}_D))$ that results in the same joint distribution as $\hat{{\bm{\pi}}}^i$ in the following manner. We define $\hat{\mu}^i(\dot{{\bm{\pi}}}^i) \coloneqq \prod_{j=1}^{m} \hat{\mu}^i (\dot{\pi}^i_{D_j})$ where we set:
    $$\hat{\mu}^i (\dot{\pi}^i_{D_j}) \coloneqq \prod_{\pa'_{D_j} \in \dom(\Pa_{D_j})} \hat{\pi}^i_j \big( \dot{\pi}^i_{D_j}(\pa'_{D_j}) \mid \pa'_{D_j} \big).$$
    Again, note that this results in a well-formed mixed policy. Concretely, $\hat{\mu}^i(\dot{{\bm{\pi}}}^i) \geq 0$ for any $\dot{{\bm{\pi}}}^i$ as $\hat{\pi}^i_j \big( \dot{\pi}^i_{D_j}(\pa'_{D_j}) \mid \pa'_{D_j} \big) \geq 0$, and we have:
    \begin{align*}
        \sum_{\dot{{\bm{\pi}}}^i} \hat{\mu}^i(\dot{{\bm{\pi}}}^i) 
        = &\sum_{\dot{\pi}^i_{D_1}} \ldots \sum_{\dot{\pi}^i_{D_m}} \prod_{j=1}^{m} \hat{\mu}^i (\dot{\pi}^i_{D_j})\\
        = &\prod_{j=1}^{m} \sum_{\dot{\pi}^i_{D_j}} \hat{\mu}^i (\dot{\pi}^i_{D_j})\\
        = &\prod_{j=1}^{m} \sum_{\dot{\pi}^i_{D_j}} \Bigg( \prod_{\pa'_{D_j} \in \dom(\Pa_{D_j})} \hat{\pi}^i_j \big( \dot{\pi}^i_{D_j}(\pa'_{D_j}) \mid \pa'_{D_j} \big) \Bigg)\\
        = &\prod_{j=1}^{m} \Bigg( \prod_{\pa'_{D_j} \in \dom(\Pa_{D_j})} \Bigg( \sum_{d'_j \in \dom(D_j)} \Bigg( \sum_{\dot{\pi}^i_{D_j} ~:~ \dot{\pi}^i_{D_j}(\pa'_{D_j}) = d'_j} \hat{\pi}^i_j \big( \dot{\pi}^i_{D_j}(\pa'_{D_j}) \mid \pa'_{D_j} \big) \Bigg) \Bigg) \Bigg)\\
        = &\prod_{j=1}^{m} \Bigg(  \prod_{\pa'_{D_j} \in \dom(\Pa_{D_j})} 1 \Bigg)\\
        = &1.
    \end{align*}
    In order to show that policy profiles $(\hat{\mu}^i, {\bm{\pi}}^{-i})$ and $(\hat{{\bm{\pi}}}^i, {\bm{\pi}}^{-i})$ result in the same joint distribution over all variables (and hence the same expected payoffs for each agent), it suffices to show the following, which holds for any $j \in \{1,\dots,m \}$:
    \begin{align*}
        \hat{\mu}^i_j(d_j \mid \pa_{D_j})
        &\coloneqq \sum_{\dot{\pi}^i_{D_j} ~:~ \dot{\pi}^i_{D_j}(\pa_{D_j}) = d_j} \hat{\mu}^i (\dot{\pi}^i_{D_j})\\
        &= \sum_{\dot{\pi}^i_{D_j} ~:~ \dot{\pi}^i_{D_j}(\pa_{D_j}) = d_j} \Bigg( \prod_{\pa'_{D_j} \in \dom(\Pa_{D_j})} \hat{\pi}^i_j \big( \dot{\pi}^i_{D_j}(\pa'_{D_j}) \mid \pa'_{D_j} \big) \Bigg)\\
        &= \sum_{\dot{\pi}^i_{D_j} ~:~ \dot{\pi}^i_{D_j}(\pa_{D_j}) = d_j} \hat{\pi}^i_j(d_j \mid \pa_{D_j}) \Bigg( \prod_{\pa'_{D_j} \in \dom(\Pa_{D_j}) \setminus \{ \pa_{D_j} \}} \hat{\pi}^i_j \big( \dot{\pi}^i_{D_j}(\pa'_{D_j}) \mid \pa'_{D_j} \big) \Bigg)\\
        &= \hat{\pi}^i_j(d_j \mid \pa_{D_j}) \cdot \prod_{\pa'_{D_j} \in \dom(\Pa_{D_j}) \setminus \{ \pa_{D_j} \}} \Bigg( \sum_{\dot{\pi}^i_{D_j} ~:~ \dot{\pi}^i_{D_j}(\pa_{D_j}) = d_j} \hat{\pi}^i_j \big( \dot{\pi}^i_{D_j}(\pa'_{D_j}) \mid \pa'_{D_j} \big) \Bigg)\\
        &= \hat{\pi}^i_j(d_j \mid \pa_{D_j}) \cdot \prod_{\pa'_{D_j} \in \dom(\Pa_{D_j}) \setminus \{ \pa_{D_j} \}} 1\\
        &= \hat{\pi}^i_j(d_j \mid \pa_{D_j}).
    \end{align*}
    Thus, if $\hat{{\bm{\pi}}}^i$ is a beneficial deviation against ${\bm{\pi}}^{-i}$, $\hat{\mu}^i$ is also a beneficial deviation against ${\bm{\pi}}^{-i}$. Further, by again appealing to Lemma \ref{lem:prequiv} we can see that:
    $$\sum_{U \in \bm{U}^i} \expect_{(\hat{\mu}^i, {\bm{\pi}}^{-i})} [ U ] = \sum_{U \in \bm{U}^i} \expect_{(\hat{\mu}^i, \mu^{-i})} [ U ],$$
    and so $\hat{\mu}^i$ is also a beneficial deviation against $\mu^{-i}$. In which case, $\mu$ cannot be an NE, giving us our contradiction.
\end{proof}

\begin{proposition}
    If an agent $i$ in a MAID $\model$ has perfect recall, then they also have sufficient recall. However, if an agent has sufficient recall, then they do not always have perfect recall.
\end{proposition}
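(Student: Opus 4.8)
The plan is to prove the two implications separately. For the forward direction (perfect recall $\Rightarrow$ sufficient recall), I will fix a topological ordering $D_1 \prec \cdots \prec D_m$ of $\bm{D}^i$ witnessing perfect recall, so that $\Fa_{D_j} \subseteq \Pa_{D_k}$ whenever $j < k$, and show that every edge of the $s$-relevance graph restricted to $\{\Pi_{D_1}, \ldots, \Pi_{D_m}\}$ runs from a later decision to an earlier one; since this forbids cycles (a cycle would force indices $a_1 > a_2 > \cdots > a_r > a_1$), the restricted graph is acyclic, i.e.\ agent $i$ has sufficient recall. Concretely it suffices to show that for $j < k$ the variable $\Pi_{D_j}$ is \emph{not} $s$-relevant to $\Pi_{D_k}$, which by Proposition \ref{prop:s-reachability} amounts to the d-separation statement $\Pi_{D_j} \perp_{\graph'} \bm{U}^i \cap \Desc_{D_k} \mid D_k, \Pa_{D_k}$, where $\graph'$ adds to $\graph$ the node $\Pi_{D_j}$ and the single edge $\Pi_{D_j} \to D_j$. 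I will also note that $\Pi_D$ is never $s$-relevant to itself, since an optimal decision rule for $D$ depends only on $\Pr^{\bm{\pi}}(u \mid d, \pa_D)$ and $\Pr^{\bm{\pi}}(\pa_D)$, neither of which is affected by $\pi_D$ (cf.\ the proof of Proposition \ref{prop:R^BR-reachability}), so the restricted graph has no self-loops.

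To prove the d-separation, I will trace an arbitrary path $p$ leaving $\Pi_{D_j}$. Its only neighbour is its child $D_j$, so $p$ begins with the edge $\Pi_{D_j} \to D_j$; and since $j < k$, perfect recall gives $D_j \in \Fa_{D_j} \subseteq \Pa_{D_k}$, so $D_j$ is conditioned on. If $p$ continues $\Pi_{D_j} \to D_j \to W$, it is blocked at the conditioned chain vertex $D_j$. The remaining case is the collider $\Pi_{D_j} \to D_j \leftarrow X$, which is \emph{active} precisely because $D_j$ is conditioned on. But then $X \in \Pa_{D_j} \subseteq \Fa_{D_j} \subseteq \Pa_{D_k}$ is also conditioned on, and $X$ cannot be a target since, by the convention $\Desc_{\bm{U}} = \varnothing$, a parent of a decision is not a utility variable; hence $p$ meets $X$ as a non-collider and is blocked there. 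No path survives, giving the required d-separation and therefore the claimed non-relevance.

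For the converse I will exhibit a MAID with an agent having sufficient but not perfect recall: one agent $i$ with $\bm{D}^i = \{D_1, D_2\}$, $\bm{U}^i = \{U_1, U_2\}$, no chance variables, and edges $D_1 \to U_1$ and $D_2 \to U_2$ only (so $\Pa_{D_1} = \Pa_{D_2} = \varnothing$), with CPDs parameterised so that the decisions matter. Perfect recall fails, since neither $D_1 \prec D_2$ nor $D_2 \prec D_1$ can satisfy $\Fa_{D_1} \subseteq \Pa_{D_2}$ or $\Fa_{D_2} \subseteq \Pa_{D_1}$ (the parent sets are empty, the families are not). Sufficient recall holds: in $\graph'$ the component containing $\Pi_{D_1}$ is $\{\Pi_{D_1}, D_1, U_1\}$, disjoint from $\{D_2, U_2\}$, so $\Pi_{D_1} \perp_{\graph'} \bm{U}^i \cap \Desc_{D_2} \mid D_2, \Pa_{D_2}$ and symmetrically for $\Pi_{D_2}$; the restricted $s$-relevance graph thus has no edges and is acyclic. (A richer example, carrying the additional properties discussed in Section \ref{sec:NE}, appears in Appendix \ref{app:non-existence_proofs}.)

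The main obstacle is the d-separation argument in the forward direction — specifically that conditioning on $\Pa_{D_k}$ turns $D_j$ into an \emph{active} collider rather than a blocking one, which is overcome by invoking perfect recall a second time to see that every parent of $D_j$ is also in the conditioning set and hence blocks any path through that collider.
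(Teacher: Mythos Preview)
Your proof is correct and follows essentially the same approach as the paper: for the forward direction you show, via Proposition~\ref{prop:s-reachability}, that $\Pi_{D_j} \perp \bm{U}^i \cap \Desc_{D_k} \mid D_k, \Pa_{D_k}$ whenever $j<k$, which forces all edges in the restricted $s$-relevance graph to run backwards in the ordering, and for the converse you give an explicit counterexample. The paper's proof is terser --- it simply asserts that $\Fa_{D_j}\subseteq\Pa_{D_k}$ ``implies'' the d-separation and defers the counterexample to Appendix~\ref{app:non-existence_proofs} --- whereas you spell out the path-blocking argument (including the collider case at $D_j$) and supply a minimal self-contained counterexample. One small remark: your appeal to the convention $\Desc_{\bm{U}}=\varnothing$ to rule out $X$ being a target is fine, but you could equally note that $X\in\Pa_{D_j}$ together with $D_j\in\Pa_{D_k}$ already forces $X\notin\Desc_{D_k}$ by acyclicity of $\graph$, avoiding reliance on that convention.
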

\begin{proof}
    For the first direction, we recognise that if agent $i$ has perfect recall in $\model$, then this means there exists a unique topological ordering of variables $\bm{D}^i$ in $\model$; i.e., $D_1 \prec \dots \prec D_{m}$. Furthermore, for any $j<k$ we have that $\Fa_{D_{j}} \subseteq \Pa_{D_k}$. This implies that: $$\Pi_{D_j} \perp_{\meczero{\graph}} \bm{U}^i \cap \Desc_{D_k} \mid D_k, \Pa_{D_k},$$
    and so $\Pi_{D_j}$ is not $s$-reachable from $\Pi_{D_k}$. Therefore, there is no edge $\Pi_{D_j} \rightarrow \Pi_{D_k}$ in the $s$-relevance graph $\mathsf{r}_s{\graph}$. This implies that $\mathsf{r}_s{\graph}$, when restricted to just agent $i$'s decision rule variables, must be acyclic (i.e., there can \emph{only} exist edges $\Pi_{D_j} \rightarrow \Pi_{D_k}$ when $j > k$), as required. For the second direction, we provide {an example} in Appendix \ref{app:non-existence_proofs} of a MAID where an agent has sufficient but imperfect recall.
\end{proof}

\begin{proposition}
    Any MAID $\model$ with sufficient recall has at least one SPE in behavioural policies.
\end{proposition}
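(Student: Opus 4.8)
The plan is to construct an SPE by backwards induction over the $s$-subgames, using Proposition~\ref{prop:NEexistance} to solve each subproblem. First I would pass to the $s$-relevance graph of $\model$ and its condensation into strongly connected components (SCCs); since this condensation is acyclic, fix a topological ordering $C_1 \prec \cdots \prec C_m$ of the SCCs with every edge running from a lower- to a higher-indexed component. For each $j$, write $\bm{D}_j$ for the decision variables in $C_j$ and $\bm{A}_j$ for those in the ancestral closure of $C_j$ (the union of $C_j$ with every SCC having a directed path to it); by the choice of ordering, $\bm{A}_j \setminus \bm{D}_j \subseteq \bm{D}_1 \cup \cdots \cup \bm{D}_{j-1}$. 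Because an $s$-subdiagram's decision set is exactly an ancestrally closed set in the condensation (every decision whose mechanism is $s$-reachable from a decision inside must be included), closing $\bm{A}_j$ under mediators yields an $s$-subdiagram, and hence, for each feasible setting of its boundary variables, an $s$-subgame $\model_j$.

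Two auxiliary facts make the induction go through. First, $s$-relevance can only shrink when passing to an $s$-subgame: if $\Pi_{D'}$ is $s$-relevant to $\Pi_{D}$ within an $s$-subgame of $\model$, then it is $s$-relevant within $\model$ itself (via Proposition~\ref{prop:s-reachability}, since the subdiagram is ancestral-plus-mediator closed and so preserves the relevant $d$-connections; substituting fixed CPDs for some decision rules only deletes decision-rule nodes). Consequently every feasible $s$-subgame of a MAID with sufficient recall again has sufficient recall, and this persists after substituting fixed CPDs for the rules of any subset of its decisions, so Proposition~\ref{prop:NEexistance} applies within each such residual game. Second, by Definition~\ref{def:SPE} it suffices to produce a single behavioural profile ${\bm{\pi}}$ that restricts to an NE in every feasible $s$-subgame; and since a decision $D$'s best-response set depends, by $s$-reachability, only on the mechanisms of the decisions in its ancestral closure, it is enough to guarantee that ${\bm{\pi}}$ restricted to $\bm{A}_j$ is an NE in the $\model_j$-subgames for every $j$.

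The construction proceeds inductively on $j = 1, \dots, m$, maintaining the invariant that after stage $j$ the partially specified profile restricts to an NE in every feasible $s$-subgame whose decisions lie in $\bm{D}_1 \cup \cdots \cup \bm{D}_j$. At stage $j$: take the $s$-subgames $\model_j$ generated by $\bm{A}_j$; substitute the already-chosen rules for the decisions in $\bm{A}_j \setminus \bm{D}_j$ as fixed CPDs, obtaining a MAID on the remaining decisions $\bm{D}_j$ that still has sufficient recall; apply Proposition~\ref{prop:NEexistance} to obtain an NE in behavioural policies and read off rules for the decisions in $\bm{D}_j$. Choosing these consistently across the finitely many feasible boundary settings — the values of the parents of decisions in $\bm{D}_j$ are themselves recorded in those decisions' conditioning sets, so per-setting best responses assemble into a single rule $\pi_D(D \mid \Pa_D)$ — extends the profile. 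The invariant is preserved by exactly the backwards-induction argument: the newly fixed rules are best responses given the (now fixed) mechanisms of their ancestors, which are all the mechanisms $s$-relevant to them, hence remain best responses in every larger feasible $s$-subgame; and the rules fixed at earlier stages were inductively NEs in their own $s$-subgames, which are $s$-subgames of the current ones. After stage $m$ we obtain a behavioural profile that restricts to an NE in every feasible $s$-subgame of $\model$, i.e., an SPE, as required.

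The main obstacle is the bookkeeping around decision contexts: an $s$-subgame is defined relative to a fixed instantiation $\bm{z}$ of the variables outside its subdiagram (Definition~\ref{def:MAIDsubgame}), so one must verify that the per-context NEs supplied by Proposition~\ref{prop:NEexistance} genuinely glue into a single set of CPDs over the full parent sets, and that zero-probability decision contexts — which Definition~\ref{def:strategic-relevance} already has to guard against — do not break the claim that a locally chosen best response stays a best response in every enclosing subgame. This is precisely the delicate part of K\&M's proof that their NE-finding algorithm converges under sufficient recall; indeed, an alternative route to the proposition is to observe that the profile produced by that algorithm, being computed subgame-by-subgame, is by construction an NE in every $s$-subgame.
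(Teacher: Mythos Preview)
Your proposal is correct and takes essentially the same approach as the paper: both perform backwards induction over the strongly connected components of the $s$-relevance graph (equivalently, over the nested chain of $s$-subdiagrams), invoking Proposition~\ref{prop:NEexistance} at each stage and appealing to K\&M's convergence argument for the delicate bookkeeping around decision contexts. The paper's proof is terser---it simply notes that under sufficient recall each successive $s$-subdiagram adds at most one decision per agent and then defers to K\&M's Theorem~6.2---whereas you spell out the ancestral-closure construction, the preservation of sufficient recall in subgames, and the gluing of per-context best responses more explicitly, but the underlying strategy is the same.
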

\begin{proof}
    {Let $\graph_1 \prec \dots \prec \graph_m$ be an ordering of MAID $\model$'s $s$-subdiagrams such that $\graph_j \prec \graph_k$ implies that $\graph_k$ is \emph{not} an $s$-subdiagram of $\graph_j$. If all agents have sufficient recall, then $\graph_1$ contains at most one decision variable for each agent, and for each $s$-subdiagram $\graph_j$ where $1 \leq j < m$, $\graph_{j+1}$ contains \emph{at most} one additional decision variable for each agent. The newly added decision nodes in each subsequent $s$-subdiagram for such an ordering correspond exactly with a topological ordering of the strong connected components in $\model$'s $s$-relevance graph. This means that we can employ the same method as in a similar proof of Theorem 6.2 in K\&M \cite{koller2003multi} to show that if we use a backwards induction procedure to optimise the decision rules of each subgame in this order, then the resulting policy profile is guaranteed to be an NE in every feasible $s$-subgame of $\model$, and hence an SPE.}
\end{proof}

\begin{proposition}
    Suppose that a MAID $\model$ has sufficient recall. Then, the set of SPEs of $\model$ is equal to the set of rational outcomes $\R^\SP(\mec\model)$.
\end{proposition}
\begin{proof}
    As in the previous proof, let $\graph_1 \prec \dots \prec \graph_m$ be an ordering of $\model$'s $s$-subdiagrams such that $\graph_j \prec \graph_k$ implies that $\graph_k$ is \emph{not} an $s$-subdiagram of $\graph_j$. If all agents have sufficient recall, then $\graph_1$ contains at most one decision variable for each agent, and for each $s$-subdiagram $\graph_j$ where $1 \leq j < m$, $\graph_{j+1}$ contains \emph{at most} one additional decision variable for each agent. The fact that $\graph_1$ contains at most one decision variable for each agent means that $\bm{\pi}_1$ is an NE in any (feasible) $s$-subgame $\model_1$ over $\graph_1$ just in case:
    $$\pi_{D} \in \argmax_{\hat{\pi}_D \in \dom(\Pi_{D})} \sum_{U \in \bm{U}^i \cap \bm{V}_1} \expect_{(\hat{\pi}_{D}, \bm{\pi}_{1,-D})} [ U ],$$
    for each $\model_1$, i.e., just in case $\pi_{D} \in r^{\SP}_{D}(\pa_{\Pi_{D}})$. This is akin to the result that the $\R^{\BR}$-rational outcomes of a game where each agent has only one decision variable are simply the NEs of the game.

    Similarly, if we reason analogously to the backwards induction procedure over $s$-subgames as described in the proof of Proposition \ref{prop:SPEexist} and assume that $\bm{\pi}_j$ forms an NE in every feasible $s$-subgame of every $\model_j$, then by fixing $\bm{\pi}_j$ in each (feasible) $s$-subgame $\model_{j+1}$ we induce a new MAID where each agent has at most one decision variable. A partial policy profile $\bm{\pi}'_{j+1}$ is an NE in this game if and only if:
    $$\pi_{D} \in \argmax_{\hat{\pi}_D \in \dom(\Pi_{D})} \sum_{U \in \bm{U}^i \cap \bm{V}_{j+1}} \expect_{(\hat{\pi}_{D}, \bm{\pi}'_{j,-D})} [ U ],$$ 
    for each $D \in \bm{D}_{j+1} \setminus \bigcup_{1\leq k \leq j} \bm{D}_{k}$. Moreover, by our inductive hypothesis the resulting policy profile $\bm{\pi}_{j+1} = (\bm{\pi}'_{j+1}, \bm{\pi}_j)$ forms an NE (and in fact an SPE) in each (feasible) $\model_{j+1}$, and so we have that each such $\pi_{D} \in r^{\SP}_{D}(\pa_{\Pi_{D}})$.
    At the conclusion of this line of argument we see that any (full) policy profile $\bm{\pi} = \bm{\pi}_m$ is an NE in every feasible $s$-subgame of $\model$, when restricted to that subgame, just in case each $\pi_{D} \in r^{\SP}_{D}(\pa_{\Pi_{D}})$, i.e., just in case $\bm{\pi} \in \R^\SP(\mec\model)$.
\end{proof}

\subsubsection*{Section 6}

\begin{lemma}
    {Let $f$ be a natural mapping between $\efg$ and $\model$. Then $\sigma$ is an NE in $\efg$ if and only if every ${\bm{\pi}} \in f(\sigma)$ is an NE in $\model$.}
\end{lemma}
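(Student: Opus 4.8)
The plan is to exploit the two defining properties of a natural mapping $f$ from Definition \ref{def:equiv}: (a) for each agent $i$, $f^i$ is a \emph{bijection} between $\Sigma^i$ and the set $\dom(\bm{\Pi}^i)/\sim$ of $\sim$-equivalence classes of policies; and (b) expected utilities are preserved, i.e.\ $\expect_\sigma[U(\rho[\bm{L}])[i]] = \sum_{U \in \bm{U}^i}\expect_{\bm{\pi}}[U]$ for every ${\bm{\pi}} \in f(\sigma) = \bigtimes_{j \in N} f^j(\sigma^j)$ and every $i$. Since being an NE is exactly the absence of a beneficial unilateral deviation, and since $f$ identifies strategy profiles with ($\sim$-classes of) policy profiles in a way that respects both unilateral deviations and payoffs, the equivalence should follow directly. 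As a preliminary observation I would record that if ${\bm{\pi}}^i \sim \hat{\bm{\pi}}^i$ then $\Pr^{({\bm{\pi}}^i,{\bm{\pi}}^{-i})} = \Pr^{(\hat{\bm{\pi}}^i,{\bm{\pi}}^{-i})}$ for any ${\bm{\pi}}^{-i}$, because the two policies differ only on decision contexts that have probability zero under \emph{every} policy profile; hence all agents get the same expected utility, so no deviation confined to infeasible decision contexts is ever strictly beneficial, and moreover $f(\sigma)$ is nonempty since each $f^i(\sigma^i)$ is a nonempty equivalence class.

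For the "if" direction, I would suppose every ${\bm{\pi}} \in f(\sigma)$ is an NE in $\model$ and fix one such ${\bm{\pi}}$. Arguing by contradiction, if $\sigma$ were not an NE in $\efg$ then some agent $i$ would have $\expect_{(\hat\sigma^i,\sigma^{-i})}[U(\rho[\bm{L}])[i]] > \expect_\sigma[U(\rho[\bm{L}])[i]]$ for some $\hat\sigma^i \in \Sigma^i$. Picking any $\hat{\bm{\pi}}^i \in f^i(\hat\sigma^i)$ and using that $f$ is a product of the per-agent maps, we get $(\hat{\bm{\pi}}^i, {\bm{\pi}}^{-i}) \in f(\hat\sigma^i, \sigma^{-i})$, so property (b) yields $\sum_{U\in\bm{U}^i}\expect_{(\hat{\bm{\pi}}^i,{\bm{\pi}}^{-i})}[U] = \expect_{(\hat\sigma^i,\sigma^{-i})}[U(\rho[\bm{L}])[i]] > \expect_\sigma[U(\rho[\bm{L}])[i]] = \sum_{U\in\bm{U}^i}\expect_{\bm{\pi}}[U]$, contradicting that ${\bm{\pi}}$ is an NE.

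For the "only if" direction, I would suppose $\sigma$ is an NE in $\efg$ and take ${\bm{\pi}} \in f(\sigma)$ arbitrary. If ${\bm{\pi}}$ were not an NE in $\model$, some agent $i$ would have a policy $\hat{\bm{\pi}}^i$ with $\sum_{U\in\bm{U}^i}\expect_{(\hat{\bm{\pi}}^i,{\bm{\pi}}^{-i})}[U] > \sum_{U\in\bm{U}^i}\expect_{\bm{\pi}}[U]$; by the preliminary observation $\hat{\bm{\pi}}^i \not\sim {\bm{\pi}}^i$, and by (a) there is a (unique) $\hat\sigma^i \in \Sigma^i$ with $\hat{\bm{\pi}}^i \in f^i(\hat\sigma^i)$. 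Then $(\hat{\bm{\pi}}^i,{\bm{\pi}}^{-i}) \in f(\hat\sigma^i,\sigma^{-i})$ and property (b) gives $\expect_{(\hat\sigma^i,\sigma^{-i})}[U(\rho[\bm{L}])[i]] > \expect_\sigma[U(\rho[\bm{L}])[i]]$, contradicting that $\sigma$ is an NE. The only genuine subtlety — and the step I would be most careful with — is the bookkeeping around the equivalence relation $\sim$: one must verify that deviations in $\model$ confined to infeasible decision contexts are never strictly beneficial (so that quotienting by $\sim$ loses nothing), and that it is precisely the \emph{product} structure of $f = \bigtimes_i f^i$ that makes unilateral deviations on the EFG side correspond to unilateral deviations on the MAID side.
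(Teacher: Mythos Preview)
Your proposal is correct and follows essentially the same approach as the paper: both directions are proved by contradiction, pulling a profitable deviation across the bijection $f^i$ and using the utility-preservation property of the natural mapping to derive a profitable deviation on the other side. Your version is more explicit than the paper's (which is quite terse) about the product structure $f = \bigtimes_i f^i$ and about why deviations within a $\sim$-class are never strictly beneficial, but these are exactly the bookkeeping points the paper's proof takes for granted.
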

\begin{proof}
    {This result follows straightforwardly from Definition \ref{def:equiv}. Suppose that $\sigma$ is an NE. Then if $f$ is a natural mapping, we have that $\sum_{U \in \bm{U}^i} \expect_{{\bm{\pi}}} [ U ] = \expect_{\sigma} \big[ U(\rho[\bm{L}])[i] \big]$ for every ${\bm{\pi}} \in f(\sigma)$ and every agent $i$. If $\hat{\bm{\pi}}^i$ is a profitable deviation for player $i$ in $\model$ then there must exist some $f^i(\hat{\sigma}^i) \ni {\bm{\pi}^i}$ such that $\hat{\sigma}^i$ is a profitable deviation for player $i$ in $\efg$. I.e., we must have: 
    $$\sum_{U \in \bm{U}^i} \expect_{{\bm{\pi}}} [ U ]
    <
    \sum_{U \in \bm{U}^i} \expect_{(\bm{\pi}^{-i}, \hat{\bm{\pi}}^i)} [ U ]
    =
    \expect_{(\sigma^{-i},\hat{\sigma}^i)} \big[ U(\rho[\bm{L}])[i] \big]
    >
    \expect_{\sigma} \big[ U(\rho[\bm{L}])[i] \big],
    $$
    which contradicts our assumption that $\sigma$ is an NE. The same argument also applies in the opposite direction, thus concluding the proof.}
\end{proof}
\begin{lemma}
    If $\efg\in \mathtt{maid2efg} (\model)$ or
    $\model =\mathtt{efg2maid}(\efg)$, then $\efg$ and $\model$ are equivalent.
\end{lemma}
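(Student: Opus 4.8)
The plan is to exhibit, in each direction, an explicit natural mapping $f$ with $f(\sigma) = \bigtimes_{i\in N} f^i(\sigma^i)$ and to verify the two defining properties of equivalence from Definition \ref{def:equiv}: that each $f^i$ is a bijection from $\Sigma^i$ onto $\dom(\bm{\Pi}^i)/\sim$, and that $\expect_\sigma\big[U(\rho[\bm{L}])[i]\big] = \sum_{U \in \bm{U}^i}\expect_{\bm{\pi}}[U]$ for every $\bm{\pi} \in f(\sigma)$ and every agent $i$. The backbone of both directions is the observation — implicit in the constructions \texttt{maid2efg} and \texttt{efg2maid} of Appendix \ref{app:transformations} — that the information sets of $\efg$ are in bijection with the \emph{feasible} decision contexts of $\model$: a node $V$ of the game tree lies in information set $I^i_j$ exactly when the labels along the root-to-$V$ path, restricted to $\Pa_D$ for the associated decision variable $D$, take a fixed value $\pa_D$, and such $\pa_D$ range precisely over the contexts realisable with positive probability under some policy.

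First I would treat the direction $\efg \in \mathtt{maid2efg}(\model)$, fixing the topological ordering $\prec$ used in the construction (so the argument applies to every EFG produced by the procedure). Define $f^i(\sigma^i)$ to be the set of policies $\bm{\pi}^i \in \dom(\bm{\Pi}^i)$ such that, for every $D \in \bm{D}^i$ and every feasible context $\pa_D$ corresponding to some information set $I^i_j$, we have $\pi_D(\cdot \mid \pa_D) = \sigma^i_j$, the values on infeasible contexts being unconstrained. By the information-set/feasible-context bijection this set is exactly one $\sim$-class, so $f^i$ is well-defined and injective; it is surjective because restricting any policy to feasible contexts yields a behavioural strategy. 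For the utility identity I would argue by induction on the length of $\prec$ that for any leaf $L$ with instantiation label $\mu(L) = \bm{s}$ over the split variables $\bm{S}$, the path probability satisfies $P^\sigma(\rho_L) = \sum_{\bm{v}\,:\,\bm{v}[\bm{S}]=\bm{s}} \Pr^{\bm{\pi}}(\bm{v})$, using that each branch probability is the corresponding CPD queried at $\mu(V)$ and that variables not in $\bm{S}$ are simply marginalised; multiplying by the payoff $U(L)[i] = \sum_{U \in \bm{U}^i}\expect_{\bm{\pi}}[U \mid \bm{s}]$ assigned in the construction (policy-independent, since the parents of utility variables relevant here are fixed by $\bm{s}$ and $\bm{\theta}$) and summing over leaves yields the claim. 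The choice of $\bm{\pi} \in f(\sigma)$ is immaterial, as infeasible contexts carry zero probability.

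Next I would handle $\model = \mathtt{efg2maid}(\efg)$ in the same fashion, the extra subtlety being the merging of nodes and information sets into single variables and the splitting of each leaf intervention set into one utility variable per agent. Here $f^i$ sends $\sigma^i$ to the class of policies agreeing with $\sigma^i$ on each feasible context $\rho_D[\mu(I^i_j)]$; using Definition \ref{def:intervention_set} one checks that merging preserves the information-set/feasible-context correspondence, and that the summed CPDs $\Pr(v \mid \pa_V) = \sum_{(v\mid\pa_V):p\,\in\,l(V)} p$ reproduce, along any realisable path, exactly the product of branch probabilities of $\efg$, while unrealisable contexts receive the null/zero values that affect no expectation. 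The utility identity then follows since the leaf payoff vector is distributed unchanged across the agents' utility variables.

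The main obstacle I anticipate is the bookkeeping in this second direction: verifying that the edge-deletion step (removing incoming edges to $D_j$ from variables outside $\mu(I^i_j)$) together with the node-merging steps genuinely yields a graph whose induced distribution, for \emph{every} behavioural policy, matches the tree's path distribution — in particular that no spurious dependence is introduced and that the duplicate-edge and ``same incoming edges'' merges for utility variables are consistent. This is essentially a careful induction over tree depth matching partial paths with partial variable assignments, and is where the routine-but-delicate work lies; the bijection claims and the final summation over leaves are comparatively mechanical once this distributional equivalence is established.
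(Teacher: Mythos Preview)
Your proposal is correct and takes essentially the same approach as the paper: both define the natural mapping via the bijection between information sets and feasible decision contexts, setting $\pi_D(d\mid\pa_D)=\sigma^i_j(d)$ on feasible contexts and leaving infeasible contexts free, and then argue that expected utilities coincide because infeasible contexts carry zero probability. Your version is in fact more detailed than the paper's own proof, which dispatches the utility identity with ``by reasoning analogously about the chance and utility variables'' rather than the explicit induction over $\prec$ and the distributional bookkeeping for \texttt{efg2maid} that you outline; so the obstacle you flag in your final paragraph is one the paper simply does not engage with.
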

\begin{proof}
    In what follows we make the trivial assumption that any non-leaf node $V$ in $\efg$ has more than one child (otherwise we could simply remove such nodes), and that if $V$ is a chance node then all of its children are reached with positive probability (otherwise we could delete the subtrees rooted at such children). The proof follows directly by construction using the procedures \texttt{maid2efg} and \texttt{efg2maid} respectively. 
    
    To see this, first suppose we have a MAID, $\model$, and $\efg$ is an EFG resulting from $\text{\texttt{maid2efg}}(\model)$. A decision rule $\pi_D$ defines a probability distribution over $\dom(D)$ conditional on each decision context, $\pa_D$. Following the \texttt{maid2efg} procedure, each feasible decision context is an instantiation of a set of variables which corresponds to one information set $I_j^i$ where $S_V = D$ for all $V \in I_j^i$, and for each $d \in \dom(D)$ there exists precisely one node $V' \in \Ch_V$ such that $\lambda(V,V') = d$. In particular, if $\pa_D$ is infeasible then the probability of reaching $I_j^i$ is zero and it would be removed from $\efg$ (as per our assumption above). 
    
    For a policy profile ${\bm{\pi}}$, let us define $\sigma$ such that $\sigma^i_j(d) \coloneqq \pi_D(d\mid\pa_D)$ for each feasible decision context $\pa_D$. Note that for a given $\pa_D$ then this construction results in a one-to-one correspondence between $\sigma^i_j(D), \pi_D( D \mid\pa_D) \in \Delta(\dom(D))$, and that for any two ${\bm{\pi}} \sim {\bm{\pi}}'$ (i.e., ${\bm{\pi}}$ and ${\bm{\pi}}'$ differ only on infeasible decision contexts) then the same strategy $\sigma$ will be constructed. This construction therefore results in a bijection $f: \Sigma \rightarrow \dom(\bm{\Pi}) / \sim$. By reasoning analogously about the chance and utility variables, and observing that the difference between chosen actions in infeasible decision contexts has no bearing on the expected utility of each agent in $\model$ (because they all occur with probability zero) it follows that the expected utility for each agent $i$, $\expect_{\sigma} \big[ U(\rho[\bm{L}])[i] \big] = \sum_{U \in \bm{U}^i} \expect_{{\bm{\pi}}} [ U ]$ for any $\sigma$ such that $\bm{\pi} \in f(\sigma)$, hence $f$ is a natural mapping and $\model$ and $\efg$ are equivalent.
    
    For the second part of the proof, note first that the deterministic nature of the $\text{\texttt{efg2maid}}$ procedure guarantees uniqueness of the resulting MAID, and so suppose we have some $\model = \text{\texttt{efg2maid}}(\efg)$. In our construction above, the incoming edges for each decision variable $D \in \bm{D}^i$ are precisely those that originate from the variables whose values agent $i$ can determine when in the corresponding information set(s) $I^i_j$. Hence the strategy $\sigma^i_j$, which assigns a probability distribution over the set of available decisions $A^i_j$ at node $V^i_j$, is determined as a function of $\pa_D$, where $\pa_D$ corresponds to a particular information set $I^i_j$ from which $D$ was created. 
    
    Thus, given a strategy profile $\sigma$ in $\efg$ we fix the corresponding policy profiles ${\bm{\pi}}$ in $\model$ to have $\pi_D(d\mid\pa_D) \coloneqq \sigma^i_j(d)$ when $\pa_D$ is feasible, and let $\pi_D(d\mid\pa_D)$ vary otherwise. As before, we therefore have a bijection $f: \Sigma \rightarrow \dom(\bm{\Pi}) / \sim$. The same form of correspondence (i.e., ignoring the infeasible settings of the parents) can analogously be seen to hold between the distributions $P$ of $\efg$ and the CPDs for $\bm{X}$ and $\bm{U}$. Thus, for any ${\bm{\pi}} \in f(\sigma)$, we have that $\sum_{U \in \bm{U}^i} \expect_{{\bm{\pi}}} [ U ] = \expect_{\sigma} \big[ U(\rho[\bm{L}])[i] \big]$ for each agent $i \in N$, and so $f$ is a natural mapping, as required.
\end{proof}
\begin{definition}
    \label{def:EFGNE}
    A strategy profile $\sigma$ is a \textbf{Nash equilibrium (NE)} in an EFG if, for every agent $i \in N$, $\sigma^i \in \argmax_{\hat{\sigma}^i \in \Sigma^i} \expect_{(\hat{\sigma}^i, \sigma^{-i})} \big[ U(\rho[\bm{L}])[i] \big]$.
\end{definition} 
\begin{corollary}
    If $\efg\in\mathtt{maid2efg}(\model)$ or $\model = \mathtt{efg2maid}(\efg)$, then there is a natural mapping $f$ between $\efg$ and $\model$ such that $\sigma$ is an NE in $\efg$ if and only if every ${\bm{\pi}} \in f(\sigma)$ is an NE in $\model$.
\end{corollary}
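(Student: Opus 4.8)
The plan is to derive the corollary by chaining the two preceding lemmas, since it adds nothing beyond their conjunction. First I would invoke Lemma \ref{lem:correspondence}: under the hypothesis that $\efg \in \mathtt{maid2efg}(\model)$ or $\model = \mathtt{efg2maid}(\efg)$, it guarantees that $\efg$ and $\model$ are equivalent in the sense of Definition \ref{def:equiv}. Unpacking that definition yields a concrete natural mapping $f = \bigtimes_{i \in N} f^i$ between $\efg$ and $\model$, built (as in the proof of Lemma \ref{lem:correspondence}) by matching each feasible decision context $\pa_D$ with an information set and setting $\sigma^i_j(d) \coloneqq \pi_D(d \mid \pa_D)$, and reasoning analogously for the chance and utility variables. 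This is exactly the object the corollary asks us to exhibit.

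Second, I would apply Lemma \ref{lem:NEpreservation} to this particular $f$. That lemma is stated for an \emph{arbitrary} natural mapping, so it applies without modification: $\sigma$ is an NE in $\efg$ if and only if every ${\bm{\pi}} \in f(\sigma)$ is an NE in $\model$. Composing the two steps establishes the corollary.

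The only point warranting a sentence of care is that no extra hypothesis creeps in across the hand-off — but Lemma \ref{lem:correspondence} produces $f$ under precisely the stated hypothesis, and Lemma \ref{lem:NEpreservation} requires only that $f$ be natural, which it is by construction; hence there is genuinely no obstacle and the result is immediate. If a reviewer preferred a self-contained argument, I would instead inline the reasoning of Lemma \ref{lem:NEpreservation}: given a profitable unilateral deviation $\hat{\bm{\pi}}^i$ for agent $i$ in $\model$, bijectivity of $f^i$ on the quotiented policy space $\dom(\bm{\Pi}^i)/\!\sim$ lets us pull it back to a strategy $\hat{\sigma}^i \in \Sigma^i$ inducing the same expected-utility vector (via the utility-preservation clause of Definition \ref{def:equiv}), contradicting that $\sigma$ is an NE, and symmetrically in the reverse direction; deviations confined to infeasible decision contexts are harmless since those contexts have zero probability under every policy profile, which is also why $\sim$ is the right equivalence to quotient by.
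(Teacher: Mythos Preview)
Your proposal is correct and follows essentially the same approach as the paper's own proof: invoke Lemma~\ref{lem:correspondence} to obtain equivalence (and hence a natural mapping $f$), then apply Lemma~\ref{lem:NEpreservation} to that $f$ to conclude. The additional unpacking of how $f$ is constructed and the optional inlined argument are accurate elaborations, but the paper's proof is even terser, consisting of precisely the two-lemma chain you identify.
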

\begin{proof}
    By Lemma \ref{lem:correspondence} we have that $\efg$ and $\model$ are equivalent. Thus, let $f : \Sigma \rightarrow \dom({\bm{\Pi}}) / \sim$ be a natural mapping  between $\efg$ and $\model$ as defined in Definition \ref{def:equiv}. {By applying Lemma \ref{lem:NEpreservation}, we have that $\sigma$ is an NE in $\efg$ if and only if every ${\bm{\pi}} \in f(\sigma)$ is an NE in $\model$.}
\end{proof}
\begin{proposition}
    If $\efg\in\mathtt{maid2efg}(\model)$ or $\model = \mathtt{efg2maid}(\efg)$, then there is a natural mapping $f$ between $\efg$ and $\model$ such that, for every subgame $\efg'$ in $\efg$ there is an $s$-subgame $\model'$ in $\model$ that is equivalent (modulo a constant difference between the utilities for each agent under any policy in $\model'$) to $\efg'$ under the natural mapping $f$ restricted to the strategies of $\efg'$.
\end{proposition}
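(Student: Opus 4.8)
The plan is to start from the natural mapping $f$ guaranteed by Lemma \ref{lem:correspondence} and then, for an arbitrary subgame $\efg'$ of $\efg$, read off a corresponding $s$-subgame $\model'$ of $\model$ and check that $f$ restricted to the strategies of $\efg'$ witnesses their equivalence (up to a constant). First I would fix $f$ as the natural mapping built in the proof of Lemma \ref{lem:correspondence} from whichever of \texttt{maid2efg} or \texttt{efg2maid} applies, recalling that under this construction the feasible decision contexts $\pa_D$ of a decision variable $D$ of $\model$ are in bijection with the information sets of $D$ in $\efg$, and that every node of $\efg$ records an instantiation of the variables of $\model$ that precede it.

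Given a subgame $\efg'$, which by Definition \ref{def:EFGsubgame} is the subtree of $\efg$ rooted at some node $R'$ and is closed under descendants and under information sets, I would let $\bm{z}$ be the instantiation recorded on the (unique) path from the root of $\efg$ to $R'$, and let $\bm{V}' \subseteq \bm{V}$ consist of the variables of $\model$ that are not assigned a value by $\bm{z}$ (those split on strictly inside $\efg'$), together with the utility variables whose value is determined inside $\efg'$. The central claim is that $\graph' \coloneqq (N', \bm{V}', \mathscr{E}')$, with $\mathscr{E}'$ the restriction of $\mathscr{E}$ and $N'$ the agents owning decisions in $\bm{V}'$, is an $s$-subdiagram of $\graph$. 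For the first defining condition I would show that for no variable $Z \notin \bm{V}'$ is $\mecvar_Z$ $s$-reachable from any $\Pi_D$ with $D \in \bm{V}'$: by (the all-mechanism-variable generalisation of) Proposition \ref{prop:s-reachability} this amounts to $\mecvar_Z \perp_{\meczero{\graph}} \bm{U}^i \cap \Desc_D \mid D, \Pa_D$, and the EFG closure properties translate into exactly what is needed. Closure under descendants forces $\bm{U}^i \cap \Desc_D \subseteq \bm{V}'$ (descendants in $\graph$ occur later in the topological ordering, hence deeper in the tree), and closure under information sets forces the conditioning set $\{D\} \cup \Pa_D$ to lie in $\bm{V}'$ together with the variables fixed by $\bm{z}$; one then argues that every path in $\meczero{\graph}$ from $\mecvar_Z$ into $\bm{U}^i \cap \Desc_D$ not already blocked by a collider must pass through a variable fixed by $\bm{z}$ (a parent of some variable of $\efg'$) and is therefore blocked by the conditioning set. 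The second defining condition (every variable on a directed path in $\graph$ between two members of $\bm{V}'$ lies in $\bm{V}'$) follows immediately from $\efg'$ being a subtree, hence closed under descendants.

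Next I would take $\model'$ to be the $s$-subgame of Definition \ref{def:MAIDsubgame} obtained from $\graph'$ by setting $\bm{\theta}'$ via $\Pr'(\bm{v}'; \bm{\theta}') \coloneqq \Pr(\bm{v}' \mid \bm{z}; \bm{\theta})$; it is feasible because $\efg'$ is a genuine subtree, so $\Pr^{\bm{\pi}}(\bm{z}) > 0$ for some $\bm{\pi}$. Restricting $f$ to the decision contexts associated with the information sets lying inside $\efg'$ gives a bijection between the strategies of $\efg'$ and the appropriate quotient of the policies of $\model'$, and the construction matches the chance-node probabilities of $\efg'$ with the CPDs of $\model'$ (both being $\Pr(\cdot \mid \bm{z})$). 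Finally, to compare expected utilities, for each agent $i$ I would decompose $\bm{U}^i$ into the part $\bm{U}^i \cap \bm{V}'$ and the part fixed by $\bm{z}$: under any $\bm{\pi}' \in f(\sigma')$ the former contributes $\sum_{U \in \bm{U}^i \cap \bm{V}'} \expect_{\bm{\pi}'}[U]$ while the latter contributes a quantity $c^i$ depending only on $\bm{z}$ (those utility variables being non-descendants of the decisions of $\model'$, their expectations cannot depend on the policy played inside $\model'$), so that $\expect_{\sigma'}\big[ U(\rho[\bm{L}])[i] \big] = \sum_{U \in \bm{U}^i \cap \bm{V}'} \expect_{\bm{\pi}'}[U] + c^i$, which is equivalence modulo the constant $c^i$.

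I expect the main obstacle to be the d-separation bookkeeping in the middle step: precisely translating ``closed under information sets'' into a statement about which variables are parents of the decisions of $\model'$, and ``closed under descendants'' into a statement about $\Desc_D$, and then verifying that the conditioning set $\{D\} \cup \Pa_D$ blocks every path from an outside mechanism variable into $\bm{U}^i \cap \Desc_D$. Care is also needed with the extra infeasible decision contexts and merged variables that \texttt{efg2maid} can introduce, and with keeping the edge directions and the quotient by the equivalence relation $\sim$ straight; but once the $s$-subdiagram claim is in place, the remainder is a routine application of Lemma \ref{lem:correspondence} and the definitions.
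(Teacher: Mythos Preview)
Your plan is essentially the paper's: identify $\bm{V}'$ from the subtree, verify the two $s$-subdiagram conditions via the EFG closure properties, instantiate $\model'$ by conditioning on $\bm{z}$, and split each $\bm{U}^i$ into the part in $\bm{V}'$ and a constant fixed by $\bm{z}$. The one step that needs tightening is the d-separation argument. As written, you argue that an active path from $\mecvar_Z$ to $\bm{U}^i\cap\Desc_D$ ``must pass through a variable fixed by $\bm{z}$ \ldots\ and is therefore blocked by the conditioning set''; but the conditioning set is $\Fa_D$, not all of $\bm{z}$, so the ``therefore'' does not follow without more. What you must show is that the relevant outside variable lies in $\Pa_D$ specifically.

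The paper does this by a two-case split on the direction of the edge leaving $Z$: if the path begins $\mecvar_Z\to Z\to\cdots$, then $Z$ itself is the candidate blocker; if it begins $\mecvar_Z\to Z\gets\cdots$, then the first fork $W$ on the path is the candidate (and $W$ lies outside $\efg'$ because there is a directed path $W\pathto Z$). In both cases the punchline is the same use of closure under information sets that you allude to: since the information set of $D$ lies entirely inside $\efg'$, every variable split before $R'$ takes the same value at every node of that information set, and under either \texttt{maid2efg} or \texttt{efg2maid} this forces that variable to be a parent of $D$ in $\model$. With this refinement your sketch goes through; the rest (feasibility of $\model'$, the restricted bijection, and the utility decomposition) matches the paper exactly.
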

\begin{proof}
    We begin by proving existence. Recall that a subgame $\efg'$ in an EFG $\efg$ is a subtree that is closed under information sets and descendants. Let $\bm{V}' \subseteq \bm{V}$ be the set of variables in $\model$ corresponding to the intervention sets overlapping with or contained in $\efg'$. We first show that $\bm{V}'$ forms an $s$-subdiagram, i.e., that $\bm{V}'$ contains every variable $Z$ in $\model$ whose mechanism node $\mecvar_Z$ is $s$-reachable from $\Pi_{D}$ for some $D \in \bm{V}'$, and for every $X,Y \in \bm{V}'$, $\bm{V}'$ contains every variable that lies on a directed path $X \pathto Y$ in $\graph$. 
    
    Beginning with the first condition, it suffices to show that there exists no variable $Z \in \bm{Z} = \bm{V} \setminus \bm{V}'$ such that $\mecvar_Z$ is $s$-reachable from the decision rule $\Pi_{D}$ of some variable $D \in \bm{D}^i \cap \bm{V}'$ for $i \in N'$. Recall that this means we would have $\mecvar_{Z} \not\perp_{\meczero{\graph}} \bm{U}^i \cap \Desc_{D} \mid \Fa_{D}$, where $\meczero{\graph}$ is the mechanised graph with no edges between mechanism variables. Any path supporting such a dependency must have one of the following two forms:
    \begin{itemize}
        \item $\mecvar_Z \rightarrow Z \rightarrow \cdots~ \bm{U}^i \cap \Desc_{D}$. In this case, as $Z \notin \bm{V'}$ then, by either construction \texttt{efg2maid} or \texttt{maid2efg}, any node in $\efg$ corresponding to $Z$ must lie outside $\efg'$.
        Further, as $\efg'$ is closed under information sets, then the value of $Z$ must be observed by any decision node in $\efg'$ corresponding to $D$, and thus there is an information link $Z \rightarrow D$ in $\model$ and so $Z \in \Fa_{D}$. Hence, by conditioning on $\Fa_{D}$ we block the path above, meaning there is no dependency.
        \item $\mecvar_Z \rightarrow Z \leftarrow \cdots~ \bm{U}^i \cap \Desc_{D}$. In this case, let $W$ be the first variable in the path (from left to right) that is a fork variable, i.e., we have $\cdots \leftarrow W \rightarrow \cdots$. Such a variable must exist because we assume that utility variables do not have children. 
        As before, notice that by either construction \texttt{efg2maid} or \texttt{maid2efg} we have that $Z \notin \bm{V'}$ must lie outside $\efg'$, and as there is a directed path $W \pathto Z$ in $\model$ then $W$ must also lie outside $\efg'$. But then the fact that $\efg'$ is closed under information sets means that $W$ is observed by $D$ and so conditioning on $\Fa_{D}$ blocks the path above, meaning there is no dependency.
    \end{itemize}
    We next consider the second condition. Suppose that $X,Y\in \bm{V}'$ and there is a directed path $X \pathto Z \pathto Y$ in $\model$. Then if $\efg \in \texttt{maid2efg}(\model)$ or $\model = \texttt{efg2maid}(\efg)$, any topological ordering $\prec$ over the variables in $\model$ must have $X \prec Z \prec Y$ and hence if there are nodes in $\efg'$ corresponding to both $X$ and $Y$ then as $\efg'$ is closed under descendants we must have a node corresponding to $Z$ in $\efg'$. This means that the intervention set containing $Z$ overlaps with $\efg'$ and so $Z \in \bm{V}'$ by the definition of $\bm{V}'$. 

    For the second part of the existence proof, we show that there is a setting $\bm{z}$ of $\bm{Z}$ which, when combined with $\bm{V'}$, leads to an $s$-subgame $\model'$ of $\model$ that is equivalent to $\efg'$. First, note that any node passed through on the path $\rho_{R'}$ from the root $R$ of $\efg$ to the root $R'$ of $\efg'$ must correspond to a variable in $\bm{Z}$. Let $\bm{z}$ be any setting of $\bm{Z}$ that is consistent with the path from $R$ to $R'$, and let $\model'$ be the resulting $s$-subgame that is obtained by combining $\bm{V'}$ with $\bm{z}$. Observe that by the argument above, the decision variables in $\bm{D'} \subseteq \bm{V'}$ are precisely those corresponding to the information sets in $\efg'$, and moreover that any decision context of any $D \in \bm{D'}$ that does not correspond to some information set in $\efg'$ is, by definition, not feasible. 
    
    Thus, the natural mapping $f$ between $\efg$ and $\model$, when restricted to $\efg'$ and $\model'$, leads to a bijection between the strategies in $\efg'$  and the quotient set of behavioural policy profiles in $\model'$ by $\sim$ (where two policies are in the same equivalence class if and only if they differ only on those decision contexts that are infeasible), as per the arguments in the proof of Lemma \ref{lem:correspondence}. Furthermore, for any equivalent strategy $\bm{\pi} \in f(\sigma)$, it follows directly by construction using \texttt{maid2efg} or \texttt{efg2maid} that if $\expect'_{\sigma} \big[ U(\rho[\bm{L}])[i] \big]$ and $\sum_{U' \in \bm{U}^i \cap \bm{V}'} \expect'_{\bm{\pi}} [ U']$ denote the expected utility of agent $i$ when $\sigma$ and $\bm{\pi}$ are restricted to $\efg'$ and $\model'$ respectively, then:
    \begin{align*}
        \expect'_{\sigma} \big[ U(\rho[\bm{L}'])[i] \big]
        &\coloneqq \sum_\rho {P'}^\sigma(\rho) U(\rho[\bm{L}])[i]\\
        &= \sum_\rho {P}^\sigma(\rho \mid \rho_{R'}) U(\rho[\bm{L}])[i]\\
        &= \sum_{U \in \bm{U}^i}\sum_{u \in \dom(U)} \!\! u \cdot {\Pr}^{\bm{\pi}}(u \mid \bm{z}; \theta)\\
        &= \sum_{U \in \bm{U}^i \cap \bm{Z}} u + \sum_{U' \in \bm{U}^i \cap \bm{V}'}\sum_{u' \in \dom(U')} \!\! u' \cdot {\Pr}^{\bm{\pi}}(u' \mid \bm{z}; \theta)\\
        &= \sum_{U \in \bm{U}^i \cap \bm{Z}} u + \sum_{U' \in \bm{U}^i \cap \bm{V}'}\sum_{u' \in \dom(U')} \!\! u' \cdot {\Pr}^{\bm{\pi}}(u' ; \theta')
        \eqqcolon \sum_{U \in \bm{U}^i \cap \bm{Z}} u + \sum_{U' \in \bm{U}^i \cap \bm{V}'} \expect'_{\bm{\pi}} [ U' ]
    \end{align*}
    for every agent $i \in N'$, where ${P}^\sigma(\rho \mid \rho_{R'})$ is the distribution over paths $\rho$ in $\efg$ conditional on $\rho$ containing $\rho_{R'}$ as a sub-path. Note that because some utility variables $U \in \bm{U}^i$ may not occur in $\bm{V}'$ then we must add their value $u$ according to $\bm{z}$ to the payoff for each agent in order to equate the expected utilities for each agent in both games. Given $\bm{z}$, however, this difference between the utilities for each agent is constant under any policy in $\model'$, and so has no effect on the optimality of policies in $\model'$. Modulo this small difference, $f$ forms a natural mapping between $\efg'$ and $\model'$, as required.
\end{proof}
\begin{definition}
    A strategy profile $\sigma$ in an EFG $\efg$ is a \textbf{subgame perfect equilibrium (SPE)} if $\sigma$ is an NE in every subgame of $\efg$, when restricted to that subgame.
\end{definition}
\begin{corollary}
    If $\efg\in\mathtt{maid2efg}(\model)$ or $\model = \mathtt{efg2maid}(\efg)$, then there is a natural mapping $f$ between $\efg$ and $\model$ such that if every ${\bm{\pi}} \in f(\sigma)$ is an SPE in $\model$, then $\sigma$ is an SPE in $\efg$.
\end{corollary}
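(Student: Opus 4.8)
The plan is to obtain this corollary as an immediate consequence of Proposition \ref{prop:subgames} (the subgame correspondence) together with Lemma \ref{lem:NEpreservation} (preservation of Nash equilibria under natural mappings), with no new machinery required. First I would fix $f$ to be the natural mapping between $\efg$ and $\model$ furnished by Proposition \ref{prop:subgames}: the one for which every subgame $\efg'$ of $\efg$ has an equivalent $s$-subgame $\model'$ of $\model$ — equivalent modulo a constant, per-agent additive difference $c^i$ between utilities that does not depend on the policy profile played in $\model'$ — under the restriction of $f$ to the strategies appearing in $\efg'$.

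Now suppose every ${\bm{\pi}} \in f(\sigma)$ is an SPE in $\model$, and let $\efg'$ be an arbitrary subgame of $\efg$. By Proposition \ref{prop:subgames} we have the $s$-subgame $\model'$ and the restricted natural mapping $f'$ between $\efg'$ and (the constant-shifted copy of) $\model'$. Pick any ${\bm{\pi}} \in f(\sigma)$ and let ${\bm{\pi}}'$ be its restriction to the decision variables of $\model'$; since ${\bm{\pi}}$ is an SPE of $\model$, ${\bm{\pi}}'$ is an NE of $\model'$ by Definition \ref{def:SPE}. Because adding a fixed constant $c^i$ to agent $i$'s utility changes neither the $\argmax$ defining their best response in Definition \ref{def:NE} nor, therefore, the set of NEs, ${\bm{\pi}}'$ is still an NE of the shifted game that $f'$ maps onto $\efg'$. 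Applying Lemma \ref{lem:NEpreservation} to $\efg'$ and that shifted $\model'$ under $f'$, and using that every element of $f'(\sigma')$ — in particular ${\bm{\pi}}'$ — is an NE in $\model'$, we conclude that the restriction $\sigma'$ of $\sigma$ to $\efg'$ is an NE in $\efg'$. Since $\efg'$ was an arbitrary subgame, $\sigma$ is an NE in every subgame of $\efg$, i.e., $\sigma$ is an SPE in $\efg$ by the EFG definition of subgame perfectness.

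The step I would be most careful about is the bookkeeping around the constant utility shift and the restriction of $f$: one must check that the restricted map $f'$ really is a natural mapping between $\efg'$ and the shifted $\model'$ (this is exactly what Proposition \ref{prop:subgames} delivers, since it already accounts for the utility variables of $\model$ lying outside $\model'$), and that NE-ness is invariant under a per-agent additive constant, which is immediate from Definition \ref{def:NE} as $\sum_{U \in \bm{U}^i} \expect_{(\hat{\bm{\pi}}^i, \bm{\pi}^{-i})} [U] + c^i$ and $\sum_{U \in \bm{U}^i} \expect_{(\hat{\bm{\pi}}^i, \bm{\pi}^{-i})} [U]$ share the same maximisers over $\hat{\bm{\pi}}^i$. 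It is also worth stating explicitly why only this direction holds: a MAID may have strictly more $s$-subgames than its equivalent EFG has subgames, so being an NE in every $s$-subgame of $\model$ can be a strictly stronger condition than being an NE in every subgame of $\efg$ — which is precisely why SPEs transfer from $\model$ to $\efg$ but not necessarily conversely.
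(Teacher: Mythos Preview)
Your proposal is correct and follows essentially the same route as the paper's own proof: fix the natural mapping $f$ from Proposition \ref{prop:subgames}, use that an SPE of $\model$ is an NE in every (feasible) $s$-subgame, note that the per-agent additive utility constant does not affect NE-ness, and apply Lemma \ref{lem:NEpreservation} subgame-by-subgame. Your write-up is in fact more explicit than the paper's about the restriction $f'$ and the constant-shift invariance, and your closing remark on why only one direction holds mirrors the paper's discussion.
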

\begin{proof}
    The corollary can be seen to follow immediately from combining the definitions of SPEs in EFGs and MAIDs with Proposition \ref{prop:subgames} and {Lemma \ref{lem:NEpreservation}}. Concretely, if $\efg \in \texttt{maid2efg}(\model)$ or $\model = \texttt{efg2maid}(\efg)$, and if any ${\bm{\pi}} \in f(\sigma)$ is an SPE in $\model$ (for some natural mapping $f$ between $\efg$ and $\model$ satisfying Proposition \ref{prop:subgames}) then ${\bm{\pi}}$ is an NE in any $s$-subgame of $\model$, and as every subgame of $\efg$ is equivalent to an $s$-subgame of $\model$ under $f$ (modulo a constant difference between the utilities for each agent under any policy in $\model'$, which does not affect whether a given policy profile in $\model'$ is an NE or not), then we must have that $\sigma$ is an NE in each subgame of $\efg$ (by {Lemma \ref{lem:NEpreservation}}).
\end{proof}
\begin{definition}
    A perturbation vector $\eta_k$ contains perturbations $\epsilon^{i,j}_a \in (0,1)$ with $\sum_{a \in A^i_j} \epsilon^{i,j}_a \leq 1$ for every information set $I^i_j$ such that in a perturbed game $\efg(\eta_k)$, each strategy is forced to have $\sigma^i_j(a) \geq \epsilon^{i,j}_a$.
    A strategy profile $\sigma$ is a \textbf{trembling hand perfect equilibrium (THPE)} in an EFG $\efg$ if there is a sequence of perturbation vectors $\{\eta_k\}_{k\in\mathbb{N}}$ such that $\lim_{k \rightarrow \infty}\Vert\eta_k\Vert_\infty = 0$ and for each perturbed EFG $\efg(\eta_k)$ there is an NE $\sigma_k$ such that $\lim_{k \rightarrow \infty} \sigma_k = \sigma$.
\end{definition}
\begin{proposition}
    If $\efg\in\mathtt{maid2efg}(\model)$ or $\model = \mathtt{efg2maid}(\efg)$, then there is a natural mapping $f$ between $\efg$ and $\model$ such that $\sigma$ is a THPE in $\efg$ if and only if every ${\bm{\pi}} \in f(\sigma)$ is a THPE in $\model$.
\end{proposition}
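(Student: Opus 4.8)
The plan is to lift the equivalence machinery of Section~\ref{sec:equivalences} through the perturbation operator. The key observation is that a perturbation vector only restricts the space of available decision rules (resp.\ strategies), without altering the CPDs of the chance and utility variables, so the natural mapping $f$ guaranteed by Lemma~\ref{lem:correspondence} should descend to a natural mapping between the perturbed games. Concretely, fix a natural mapping $f$ between $\efg$ and $\model$, which exists by Lemma~\ref{lem:correspondence} whenever $\efg \in \texttt{maid2efg}(\model)$ or $\model = \texttt{efg2maid}(\efg)$. As in the proof of Lemma~\ref{lem:correspondence}, $f$ identifies each information set $I^i_j$ of $\efg$ with a feasible decision context $\pa_D$ of the corresponding decision variable $D \in \bm{D}^i$, and each action $a \in A^i_j$ with a value $d \in \dom(D)$, in such a way that $\sigma^i_j(d) = \pi_D(d \mid \pa_D)$ for every feasible $\pa_D$ and every ${\bm{\pi}} \in f(\sigma)$.

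First I would set up a correspondence between perturbation vectors. Given an EFG perturbation vector $\eta_k$ with entries $\epsilon^{i,j}_a$, define a MAID perturbation vector $\zeta_k$ by setting $\epsilon^{\pa_D}_d \coloneqq \epsilon^{i,j}_a$ whenever $\pa_D$ is the feasible decision context corresponding to $I^i_j$ and $d$ corresponds to $a$, and setting $\epsilon^{\pa_D}_d$ on the (finitely many) infeasible decision contexts to an arbitrary value that is positive, small enough to be feasible, and no larger than $\Vert \eta_k \Vert_\infty$; the reverse map simply forgets the infeasible-context entries. Under these correspondences $\Vert \zeta_k \Vert_\infty = \Vert \eta_k \Vert_\infty$, and the constraint $\sigma^i_j(a) \ge \epsilon^{i,j}_a$ of $\efg(\eta_k)$ holds if and only if $\pi_D(d \mid \pa_D) \ge \epsilon^{\pa_D}_d$ holds for every feasible $\pa_D$ in $\model(\zeta_k)$. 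Hence $f$ restricts to a bijection between the strategy profiles of $\efg(\eta_k)$ and the $\sim$-classes of policy profiles of $\model(\zeta_k)$, and it still preserves each agent's expected utility (that part of Lemma~\ref{lem:correspondence} is untouched by perturbations). So $f$ is a natural mapping between $\efg(\eta_k)$ and $\model(\zeta_k)$, and applying Lemma~\ref{lem:NEpreservation} to it shows that $\sigma_k$ is an NE in $\efg(\eta_k)$ if and only if every ${\bm{\pi}}_k \in f(\sigma_k)$ is an NE in $\model(\zeta_k)$.

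It then remains to transfer the limiting condition of Definition~\ref{def:THPE}. Suppose $\sigma$ is a THPE in $\efg$, witnessed by $\{\eta_k\}$ with $\Vert \eta_k \Vert_\infty \to 0$ and NEs $\sigma_k$ of $\efg(\eta_k)$ with $\sigma_k \to \sigma$. Take the corresponding $\{\zeta_k\}$; then $\Vert \zeta_k \Vert_\infty \to 0$, and by the previous paragraph each $\model(\zeta_k)$ has an NE ${\bm{\pi}}_k$. On feasible decision contexts the entries of ${\bm{\pi}}_k$ agree with those of $\sigma_k$ and hence converge to those of any fixed ${\bm{\pi}} \in f(\sigma)$; on the infeasible decision contexts we are free to choose ${\bm{\pi}}_k$ within its $\sim$-class (e.g.\ by mixing the corresponding component of ${\bm{\pi}}$ with a vanishing amount of the uniform distribution) so that it respects the perturbation lower bounds, which tend to $0$, and still converges to that component of ${\bm{\pi}}$. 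Thus ${\bm{\pi}}_k \to {\bm{\pi}}$, so every ${\bm{\pi}} \in f(\sigma)$ is a THPE in $\model$. The converse direction is symmetric: a witnessing sequence $\{\zeta_k\}$, ${\bm{\pi}}_k$ for some ${\bm{\pi}} \in f(\sigma)$ maps to a witnessing sequence $\{\eta_k\}$, $\sigma_k$ for $\sigma$, where $\sigma_k$ is the unique strategy profile determined by the feasible-context entries of ${\bm{\pi}}_k$.

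The main obstacle I expect is the bookkeeping around the infeasible decision contexts introduced by \texttt{efg2maid}: one must verify that the two families of perturbation vectors can be matched so that both $\Vert\cdot\Vert_\infty$ norms vanish together, and that the approximating policy profiles can be chosen inside the relevant $\sim$-classes so that they converge \emph{everywhere}, not merely on feasible contexts. Everything else is a routine consequence of Lemma~\ref{lem:correspondence} and Lemma~\ref{lem:NEpreservation} applied to each perturbed pair $\efg(\eta_k)$, $\model(\zeta_k)$, in exact parallel with how the NE, subgame, and SPE equivalences were derived.
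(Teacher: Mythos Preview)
Your proposal is correct and takes essentially the same approach as the paper: establish a correspondence between perturbation vectors so that the natural mapping $f$ from Lemma~\ref{lem:correspondence} remains a natural mapping between each pair $\efg(\eta_k)$, $\model(\zeta_k)$, then apply Lemma~\ref{lem:NEpreservation} to transfer NEs and pass to the limit. The paper's proof is terser (it uses the specific choice $\epsilon^{\pa_D}_d \coloneqq \min_{i,j,a}\epsilon^{i,j}_a$ on infeasible contexts and leaves the limiting argument implicit), whereas you spell out more carefully both the infeasible-context bookkeeping and the construction of convergent approximants $\bm{\pi}_k$, but the underlying idea is identical.
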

\begin{proof}
    Given Lemma \ref{lem:correspondence} and {Lemma \ref{lem:NEpreservation}}, it suffices to show that the perturbed MAID resulting from $\text{\texttt{efg2maid}}(\efg(\eta_k))$ is equivalent to $\efg(\eta_k)$, and similarly that any perturbed EFG resulting from $\text{\texttt{maid2efg}}(\model(\zeta_k))$ is equivalent to $\model(\zeta_k)$. For the first part, given $\eta_k$ we define the entries of $\zeta_k$ for each decision $d$ and decision context $\pa_D$ as:
    $$\epsilon^{\pa_D}_d \coloneqq
    \begin{cases}
        \epsilon^{i,j}_a & \text{ if $d = a$ and $\pa_D$ is feasible}\\
        \min_{i,j,a} \epsilon^{i,j}_a & \text{ otherwise,}
    \end{cases}$$
    where $\pa_D$ corresponds to the information set $I^i_j$ as per our $\texttt{efg2maid}$ construction. Clearly for any $\bm{\pi} \in f(\sigma)$ where $f$ is a natural mapping between $\efg$ and $\model$ then $\sigma^i_j(a) \geq \epsilon^{i,j}_a$ if and only if $\pi_D(d \mid \pa_D) \geq \epsilon^{\pa_D}_d$. Based on this, it can easily be seen that any such $f$ is also a natural mapping between the perturbed games $\efg(\eta_k)$ and $\model(\zeta_k)$, and hence they are equivalent.
    For the second part, given $\zeta_k$ we construct $\eta_k$ such that $\epsilon^{i,j}_a \coloneqq \epsilon^{\pa_D}_d$ if $a = d$ and $\pa_D$ is feasible. A similar argument to the above shows that any natural mapping $f$ between $\model$ and $\efg$ induces a natural mapping between the perturbed games $\model(\zeta_k)$ and $\efg(\eta_k)$, as required.
\end{proof}

\section{Further Examples}
\label{app:examples}

\subsection{Counterfactuals Using the Closest Possible World Principle}
\label{app:closest_possible_world}

When computing counterfactuals in SCGs under the `closest possible world' principle, our basic desideratum is to consider those counterfactual rational outcomes ${\bm{\pi}}' \in \R(\mec{\model}_{\I})$ that are consistent with the decision rules used in some actual rational outcome $\bm{\pi} \in \R(\mec{\model} \mid \bm{z})$ whenever those decision rules are \emph{not} affected by $\I$. In other words, we wish to keep the set of invariant decision rules ${\bm{\Pi}}(\I)$ as large as possible while propagating changes due to $\I$.

As a first attempt, we might set ${\bm{\Pi}}(\I)$ to ${\bm{\Pi}} \setminus (\bm{Y} \cup \Desc_{\bm{Y}})$. It can easily be seen, however, that this choice is flawed. Consider an $\R$-relevance graph with two variables $\Pi_D, \Pi_{D'}$, with $\Pa_{\Pi_D'} = \{\Pi_{D}\}$, to which we apply an intervention $\Do(\Pi_D = \pi_D)$. Then it is perfectly possible that $r_{D'}(\pi_D) = \bigcup_{\hat{\pi}_D \in r_D()} r_{D'}(\hat{\pi}_D)$ -- i.e., the set of rational responses for decision rule $\Pi_D'$ does not change -- and in which case there is no sense in which $\Pi_D'$ is affected by $\I$, even though $\Pi_D' \in \{\Pi_D\} \cup \Desc_{\Pi_D}$. 

Instead, we can compute ${\bm{\Pi}}(\I)$ by propagating the effects of $\I$ through the \emph{maximal strongly connected components} (MSCCs) of the $\R$-relevance graph when restricted to decision rule variables.\footnote{Recall that a strongly connected component (SCC) is a subgraph containing a directed path between every pair of nodes and a maximal SCC is an SCC that is not a strict subset of any other SCC.} Let $\conr\graph$\label{def:con-graph} denote the condensation of the $\R$-relevance graph $\relr\graph$ when restricted to the decision rule variables -- called the `component graph' by K\&M \cite{koller2003multi} -- with topological ordering $C_1 \prec \dots \prec C_m$\label{topo} over the vertices in $\conr \graph$ (where each $C_j \subseteq {\bm{\Pi}}$ is an MSCC of $\relr\graph$ restricted to the decision rule variables), and let us write:
$$r_{C_j}(\pa_{C_j}) \coloneqq \big\{ c_j : \pi_{D} \in r_D(\pa_{\Pi_D}) ~\forall~ {\Pi}_{D} \in C_j \big\},$$ to denote the rational responses $c_j = (\pi_D)_{\Pi_D \in C_j}$. Then we may compute ${\bm{\Pi}}(\I)$ using Algorithm \ref{algo:Pi_I}.

\begin{algorithm}[h]
    \caption{}
    \label{algo:Pi_I}
    \begin{algorithmic}[1]
        \Statex \textbf{Input:} $\mec{\model}$, $\bm{z}$, $\I$
        \Statex \textbf{Output:} ${\bm{\Pi}}(\I)$
        \State $\Pr(\bm{\theta}) \gets \Pr(\bm{\theta}_{\I})$
        \State ${\bm{\Pi}}(\I) \gets {\bm{\Pi}} \setminus (\bm{Y} \cup \Desc_{\bm{Y}})$
        \State form $\conr \graph$ from $\relr \graph$ with topological ordering $C_1 \prec \dots \prec C_m$
        \For{$j = 1, \dots, m$}
            \If{$C_j \subseteq {\bm{\Pi}}(\I)$}{ continue}
            \Else{}
                \State $\texttt{actual} \gets \R(\mec{\model} \mid \bm{z})$
                \State $\texttt{counterfactual} \gets \bigcup_{{\bm{\pi}} \in \texttt{actual} } \big\{ {\bm{\pi}}' \in \R(\mec{\model}_{\I}) : {\bm{\pi}}(\I) = {\bm{\pi}}'(\I) \big\}$
                \State $\bm{\Theta}_j \gets \Pa_{C_j} \setminus {\bm{\Pi}}$
                \State ${\bm{\Pi}}_j \gets \Pa_{C_j} \cap {\bm{\Pi}}$
                \If{$\bigcup_{{\bm{\pi}} \in \texttt{actual}} r_{C_j}(\bm{\theta}_j, {\bm{\pi}}_j) = \bigcup_{{\bm{\pi}'} \in \texttt{counterfactual}} r_{C_j}(\bm{\theta}_j, {\bm{\pi}'}_j) $}{ ${\bm{\Pi}}(\I) \gets {\bm{\Pi}}(\I) \cup C_j$}
                \EndIf
            \EndIf
        \EndFor
        \State \textbf{return} ${\bm{\Pi}}(\I)$
    \end{algorithmic}
\end{algorithm}

Algorithm \ref{algo:Pi_I} functions by incrementally expanding the set ${\bm{\Pi}}(\I)$ -- which is at first initialised to ${\bm{\Pi}} \setminus (\bm{Y} \cup \Desc_{\bm{Y}})$ -- by computing the rational responses for each MSCC to both the actual and counterfactual rational outcomes, adding the variables in the component to ${\bm{\Pi}}(\I)$ if and only if the set of such responses remains the same. It can therefore be seen as generalising interventions to models containing both cycles and non-determinism, while maintaining a maximally justifiable similarity to an observed outcome. In other words, we maintain our updated beliefs (due to observation $\bm{z}$) about the values of a decision rule variable $\Pi_D$ if and only if the set of all values $\pi_D$ takes in any rational outcome $\bm{\pi}$ remains the same after the intervention $\I$ is performed, i.e., if and only if $\Pi_D \in {\bm{\Pi}}(\I)$. 
We may then use this set ${\bm{\Pi}}(\I)$ in Definition \ref{def:counterfactual_query} to define counterfactual queries in games under this principle.

\subsection{Non-Existence Results}
\label{app:non-existence_proofs}

\subsubsection*{No Nash Equilibrium in Behavioural Policies}

Proposition \ref{prop:NEexistance} in Section \ref{sec:NE} says that in a MAID with sufficient recall, there must exist an NE in behavioural policies. The example presented here, adapted from \cite{Wichardt2008}, demonstrates that if even one agent in a MAID does \emph{not} have sufficient recall, then although there will be an NE in mixed policies, the same may not hold true for behavioural policies.

\begin{figure}[h]
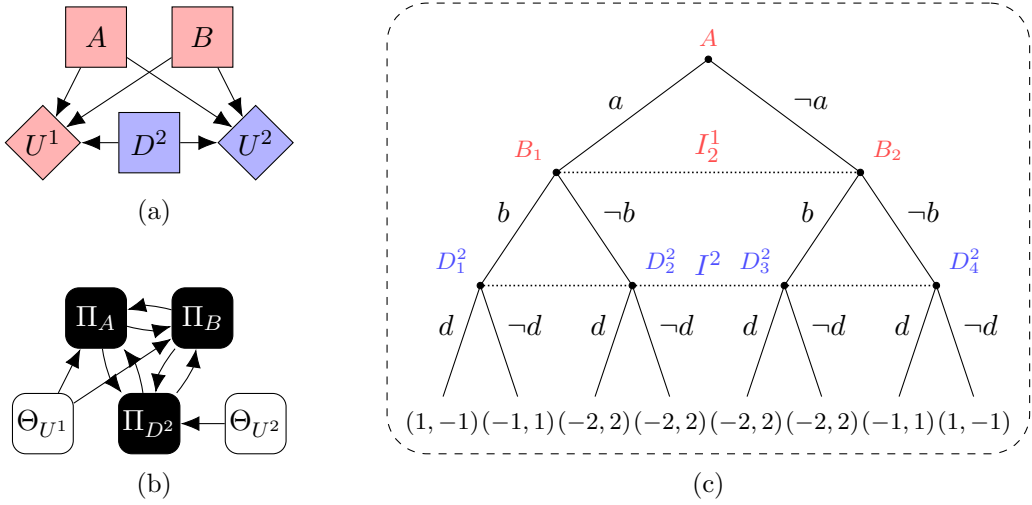

    \centering
    \begin{subfigure}[b]{0.3\linewidth}
        \vspace{0pt}
        \centering
        \begin{influence-diagram}
            \node (A) [decision, player1] {$A$};
            \node (B) [decision, right = of A, player1] {$B$};
            \node (D) [decision, below right = 1.4cm and 0.7cm of A, player2] {$D^2$};
            \node (U1) [utility, left = of D, player1] {$U^1$};
            \node (U2) [utility, right = of D, player2] {$U^2$};
            \edge {A, B, D} {U1};
            \edge {A, B, D} {U2};

        \end{influence-diagram}
        \caption{}
        \label{fig:noeq:a}
        \vspace{0.75cm}
        \begin{influence-diagram}
            \node (A_mec) [relevanceb] {$\Pi_{A}$};
            \node (B_mec) [relevanceb, right = of A_mec] {$\Pi_{B}$};
            \node (D_mec) [relevanceb, below right = 1.4cm and 0.7cm of A_mec] {$\Pi_{D^2}$};
            \node (U1_mec) [relevancew, left = of D_mec] {$\Theta_{U^1}$};
            \node (U2_mec) [relevancew, right = of D_mec] {$\Theta_{U^2}$};
            \path (A_mec) edge[->, bend right=15] (B_mec);
            \path (A_mec) edge[->, bend right=15] (D_mec);
            \path (B_mec) edge[->, bend right=15] (A_mec);
            \path (B_mec) edge[->, bend right=15] (D_mec);
            \path (D_mec) edge[->, bend right=15] (A_mec);
            \path (D_mec) edge[->, bend right=15] (B_mec);
            \edge {U2_mec} {D_mec};
            \edge {U1_mec} {A_mec, B_mec};
        \end{influence-diagram}
        \caption{}
        \label{fig:noeq:b}
    \end{subfigure}
    \begin{subfigure}[b]{0.6\linewidth}
        \vspace{0pt}
        \centering
        \begin{istgame}[]
            \xtdistance{15mm}{40mm}
            \istroot(0)<90, red!70>{$A$}
            \istb{a}[al]
            \istb{\neg a}[ar] 
            \endist
            \xtdistance{15mm}{20mm}
            \istroot(1)(0-1)<135, red!70>{$B_1$}
            \istb{b}[al]
            \istb{\neg b}[ar] 
            \endist
            \istroot(2)(0-2)<45, red!70>{$B_2$}
            \istb{b}[al]
            \istb{\neg b}[ar] 
            \endist
            \xtdistance{15mm}{10mm}
            \istroot(3)(1-1)<135, blue!70>{$D^2_1$}
            \istb{d}[al]{(1,-1)}
            \istb{\neg d}[ar]{(-1,1)} 
            \endist
            \istroot(4)(1-2)<45, blue!70>{$D^2_2$}
            \istb{d}[al]{(-2,2)}
            \istb{\neg d}[ar]{(-2,2)} 
            \endist
            \istroot(5)(2-1)<135,blue!70>{$D^2_3$}
            \istb{d}[al]{(-2,2)}
            \istb{\neg d}[ar]{(-2,2)} 
            \endist
            \istroot(6)(2-2)<45,blue!70>{$D^2_4$}
            \istb{d}[al]{(-1,1)}
            \istb{\neg d}[ar]{(1,-1)} 
            \endist
            \xtInfoset(3)(4)
            \xtInfoset(4)(5){\textcolor{blue!70}{$I^2$}}[above]
            \xtInfoset(5)(6)
            \xtInfoset(1)(2){\textcolor{red!70}{$I^1_2$}}[above]
            \xtSubgameBox(0){(0)(1)(3-1)(3-2)(4-1)(4-2)(2)(5-1)(5-2)(6-1)(6-2)}[black,inner sep = 20pt, xshift=0pt, yshift=0pt]
        \end{istgame}
        \caption{}
        \label{fig:noeq:c}
    \end{subfigure}
    \caption{(a) A MAID representing the {NE non-existence example} and (b) its $s$-relevance graph. (c) An equivalent EFG representing the same game.
    }
    \label{fig:noeq}    
\end{figure}

The {example} is shown as both a MAID $\model$ and an EFG $\efg$ in Figure \ref{fig:noeq}, in which agent 1 has two binary decision variables ($A$ and $B$), agent 2 has one binary decision variable ($D$), and the utilities are assigned according to the leaves in $\efg$. 
For the sake of simplicity, we identify each pure policy with the decisions chosen, i.e., $\dom(\dot{{\bm{\Pi}}}^1) = \{\neg a \neg b, \neg a b, a \neg b, ab\}$ and $\dom(\dot{{\bm{\Pi}}}^2) = \{\neg d, d\}$. Note that agent 1 has insufficient recall as the $s$-relevance graph restricted to contain just $\Pi_{A}$ and $\Pi_{B}$ is not acyclic.

In order to show that $\model$ has no NE in behavioural policies, we first show that there exists no NE where agent $1$ uses a pure policy. Suppose, by contradiction, that there exists some policy profile $\dot{{\bm{\pi}}}=(\dot{{\bm{\pi}}}^1, \dot{{\bm{\pi}}}^2)$ where $\dot{{\bm{\pi}}}^1$ is pure and $\dot{{\bm{\pi}}}$ is an NE. 
First, $\dot{{\bm{\pi}}}^1$ cannot be $\neg ab$ or $a \neg b$ because under any choice of $\dot{\bm{\pi}}^2$ agent 1 receives expected utility $-2$ and so could improve their expected utility by playing $\neg a \neg b$ or $ab$. 
Second, if $\dot{{\bm{\pi}}}^1 = \neg a \neg b$, then agent 2's best response is $\dot{{\bm{\pi}}}^2 = d$, and if $\dot{{\bm{\pi}}}^1 = ab$, then agent 2's best response is $\dot{{\bm{\pi}}}^2 = \neg d$. However, agent 1's best response to $\dot{{\bm{\pi}}}^2=d$ is $\dot{{\bm{\pi}}}^1=ab$ and agent $1$'s best response to $\dot{{\bm{\pi}}}^2= \neg d$ is $\dot{{\bm{\pi}}}^1=\neg a \neg b$. 
Therefore, there exists no choice of pure policy $\dot{{\bm{\pi}}}^1$ for any $\dot{{\bm{\pi}}}^2 \in \dot{{\bm{\Pi}}}^2$ such that both agents are simultaneously playing a best response. Hence, no NE in pure policies exists in $\model$.

We now consider the case where agent 1 is using a behavioural policy in which $\pi^1_{A}$ and/or $\pi^1_{B}$ is stochastic. Suppose, by contradiction, that there exists some policy profile ${\bm{\pi}}=({\bm{\pi}}^1, {\bm{\pi}}^2)$ that is an NE in behavioural policies. We let agent 1's decision rules be parameterised by $p, q \in [0,1]$ such that $\pi^1_{A}(a)=p$ and $\pi^1_{B}(b)=q$. 
First, consider the case where agent 2 plays $d$ or $\neg d$ with probability $1$, then agent $1$'s best response is to play the pure policy $\dot{{\bm{\pi}}}^1 = ab$ or $\dot{{\bm{\pi}}}^1 = \neg a \neg b$ with probability 1 respectively. Since we know that no NE in pure policies exists, agent 2 must instead select a stochastic decision rule $\pi^2_{D^2}$ and so agent 2 must be indifferent between $d$ and $\neg d$. We thus obtain two constraints on $p$ and $q$. On the one hand, agent 1's behavioural policy ${\bm{\pi}}^1$ must result in $\pi^1(\neg a, \neg b) = \pi^1(a, b)$, and hence we have:
$$(1-p)(1-q) = pq \Rightarrow p + q = 1.$$
On the other hand, agent 1 receives utility $-2$ if its policy ${\bm{\pi}}^1$ selects $\neg a$ and $b$ or $a$ and $\neg b$ (whatever the choice of ${\bm{\pi}}^2$).
Therefore, we must have that $\pi^1(\neg a, b) + \pi^1(a, \neg b)  < \pi^1(a, b) + \pi^1(\neg a, \neg b)$ and thus (by substituting in the result that $p + q = 1$):
$$(1-p)q + p(1-q) < pq + (1-p)(1-q) \Rightarrow (2p - 1)^2 < 0.$$
This contradiction implies that $\model$ has no NE in behavioural policies.
However, as expected by Nash's theorem \cite{nash1950equilibrium}, there does exist an NE of $\model$ in mixed policies (i.e., where both agents randomise over their pure policies). To find this mixed policy NE, we know from the principle of indifference that the expected utility from all of an agent's pure policies in the support of their mixed policy must be the same. This means that we can immediately rule out the pure policies $\neg a b$ and $a \neg b$ being played with any positive probability. Assuming that agent $1$ plays mixed policy $\mu^1_r$, which we define as $ab$ with probability $r$ and $\neg a \neg b$ with probability $1-r$, then 
$\expect_{(\mu^1_r, d)} [ U^2 ] = \expect_{(\mu^1_r, \neg d)} [ U^2 ]$ implies $r = \frac{1}{2}$. Furthermore, assuming that agent $2$ plays mixed policy $\mu^2_s$, which we define as $d$ with probability $s$ and $\neg d$ with probability $1-s$, then 
$\expect_{(ab, \mu^2_s)} [ U^1 ] = \expect_{(\neg a \neg b, \mu^2_s)} [ U^1 ]$ implies $s = \frac{1}{2}$. Thus, the policy profile $\mu = (\mu^1_{\frac{1}{2}}, \mu^2_{\frac{1}{2}})$ is an NE of $\model$ in mixed policies.

\subsubsection*{No Subgame Perfect Equilibrium}

We {now} extend the MAID from the previous {example} (in Figure \ref{fig:noeq:a}) to construct a MAID that has no SPE. Proposition \ref{prop:SPEexist} in Section \ref{sec:equilibrium_refinements} says that in a MAID with sufficient recall, there must exist an SPE in behavioural policies. The {example} presented here demonstrates that if even one agent in a MAID does \emph{not} have sufficient recall, then there may not exist an SPE, even if we allow mixed policies.

Figure \ref{fig:nospe:a} shows the graph $\graph$ of the MAID $\model$ for our {example} and Figure \ref{fig:nospe:d} shows its $s$-relevance graph (where we can observe that agent 1 has insufficient recall). The proper $s$-subdiagrams -- $\graph_1$, $\graph_2$, and $\graph_3$ -- of $\graph$ are shown in Figures \ref{fig:nospe:b}, \ref{fig:nospe:c}, and \ref{fig:nospe:e} respectively. To parameterise $\model$, all non-utility variables are given a Boolean domain (which we interpret as integers, e.g., $a=1$ and $\neg a = 0$) and $U^1$ and $U^2$ inherit the same parameterisation as the previous {example} (shown for reference via the EFG in Figure \ref{fig:noeq:c}). We let $U^3 = D^3$ and let $U^4=1$ if and only if $D^4=B$, otherwise $U^4=0$. Finally, we set the CPD of $X$ such that $X=A$ if $D^3=1$ and $X=1-A$ if $D^3=0$.

\begin{figure}[h]
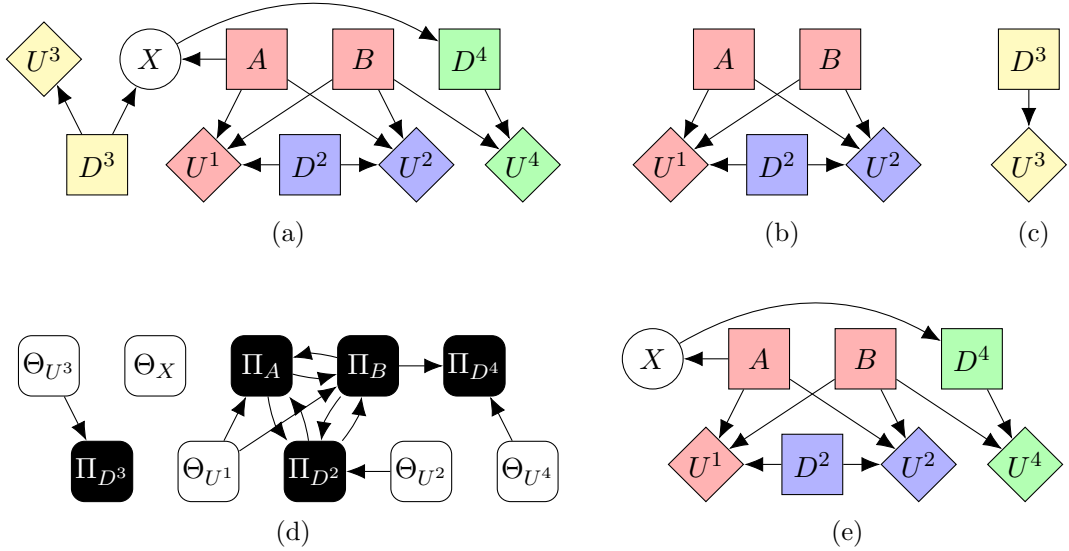

    \centering
    \begin{subfigure}[b]{0.5\linewidth}
        \vspace{0pt}
        \centering
        \begin{influence-diagram}
            \node (A) [decision, player1] {$A$};
            \node (B) [decision, right = of A, player1] {$B$};
            \node (D2) [decision, below right = 1.4cm and 0.7cm of A, player2] {$D^2$};
            \node (U1) [utility, left = of D2, player1] {$U^1$};
            \node (U2) [utility, right = of D2, player2] {$U^2$};
            \edge {A, B, D2} {U1};
            \edge {A, B, D} {U2};
            \node (X) [left = of A] {$X$};
            \node (D3) [decision, left = of U1, player3] {$D^3$};
            \node (U3) [utility, left = of X, player3] {$U^3$};
            \node (U4) [utility, right = of U2, player4] {$U^4$};
            \node (D4) [decision, right = of B, player4] {$D^4$};
            \edge {D3, A} {X};
            \edge {D3} {U3};
            \edge {D4, B} {U4}; 
            \path (X) edge[->, bend left=30] (D4);
        \end{influence-diagram}
        \caption{}
        \label{fig:nospe:a}
    \end{subfigure}
    \begin{subfigure}[b]{0.3\linewidth}
        \vspace{0pt}
        \centering
        \begin{influence-diagram}
            \node (A) [decision, player1] {$A$};
            \node (B) [decision, right = of A, player1] {$B$};
            \node (D2) [decision, below right = 1.4cm and 0.7cm of A, player2] {$D^2$};
            \node (U1) [utility, left = of D2, player1] {$U^1$};
            \node (U2) [utility, right = of D2, player2] {$U^2$};
            \edge {A, B, D2} {U1};
            \edge {A, B, D2} {U2};
        \end{influence-diagram}
        \caption{}
        \label{fig:nospe:b}
    \end{subfigure}
    \begin{subfigure}[b]{0.1\linewidth}
        \vspace{0pt}
        \centering
        \begin{influence-diagram}
            \node (D3) [decision, player3] {$D^3$};
            \node (U3) [utility, below = of D3, player3] {$U^3$};
            \edge {D3} {U3};
        \end{influence-diagram}
        \caption{}
        \label{fig:nospe:c}
    \end{subfigure}

    \vspace{0.5cm}

    \begin{subfigure}[b]{0.5\linewidth}
        \vspace{0pt}
        \centering
        \begin{influence-diagram}
            \node (A_mec) [relevanceb] {$\Pi_{A}$};
            \node (B_mec) [relevanceb, right = of A_mec] {$\Pi_{B}$};
            \node (D2_mec) [relevanceb, below right = 1.4cm and 0.7cm of A_mec] {$\Pi_{D^2}$};
            \node (U1_mec) [relevancew, left = of D2_mec] {$\Theta_{U^1}$};
            \node (U2_mec) [relevancew, right = of D2_mec] {$\Theta_{U^2}$};
            \path (A_mec) edge[->, bend right=15] (B_mec);
            \path (A_mec) edge[->, bend right=15] (D2_mec);
            \path (B_mec) edge[->, bend right=15] (A_mec);
            \path (B_mec) edge[->, bend right=15] (D2_mec);
            \path (D2_mec) edge[->, bend right=15] (A_mec);
            \path (D2_mec) edge[->, bend right=15] (B_mec);
            \edge {U2_mec} {D2_mec};
            \edge {U1_mec} {A_mec, B_mec};
            \node (Y_mec) [relevancew, left = of A_mec] {$\Theta_{X}$};
            \node (D_mec) [relevanceb, left = of U1_mec] {$\Pi_{D^3}$};
            \node (U3_mec) [relevancew, left = of Y_mec] {$\Theta_{U^3}$};
            \node (U4_mec) [relevancew, right = of U2_mec] {$\Theta_{U^4}$};
            \node (E_mec) [relevanceb, right = of B_mec] {$\Pi_{D^4}$};
            \edge {B_mec} {E_mec};
            \edge {U4_mec} {E_mec};
            \edge {U3_mec} {D_mec};
        \end{influence-diagram}
        \caption{}
        \label{fig:nospe:d}
    \end{subfigure}
    \begin{subfigure}[b]{0.4\linewidth}
        \vspace{0pt}
        \centering
        \begin{influence-diagram}
            \node (A) [decision, player1] {$A$};
            \node (B) [decision, right = of A, player1] {$B$};
            \node (D2) [decision, below right = 1.4cm and 0.7cm of A, player2] {$D^2$};
            \node (U1) [utility, left = of D2, player1] {$U^1$};
            \node (U2) [utility, right = of D2, player2] {$U^2$};
            \edge {A, B, D2} {U1};
            \edge {A, B, D2} {U2};
            \node (X) [left = of A] {$X$};
            \node (U4) [utility, right = of U2, player4] {$U^4$};
            \node (E) [decision, right = of B, player4] {$D^4$};
            \edge {A} {X};
            \edge {E, B} {U4}; 
            \path (X) edge[->, bend left=30] (E);
        \end{influence-diagram}
        \caption{}
        \label{fig:nospe:e}
    \end{subfigure}
    
    \caption{(a) A MAID $\model$ representing the {SPE non-existence example}.
    (b), (c), and (e) show the three proper $s$-subdiagrams of this MAID, $\graph_1$, $\graph_2$, and $\graph_3$ respectively.
    (d) The $s$-relevance graph of $\model$.}
    \label{fig:nospe}    
\end{figure}

From the previous {example} we know that any $s$-subgame defined over $\graph_1$ has no behavioural NEs, and that the only mixed NE is given by $\mu_1 = (\mu^1_{\frac{1}{2}}, \mu^2_{\frac{1}{2}})$ where $\mu^1_{\frac{1}{2}}$ plays the pure strategies $ab$ and $\neg a \neg b$ each with probability $\frac{1}{2}$, and $\mu^2_{\frac{1}{2}}$ plays pure strategy $d^2$ with probability $\frac{1}{2}$. Thus, any mixed SPE $\mu = (\mu^1, \mu^2, \mu^3, \mu^4)$ in $\model$ must be such that $\mu^1 = \mu^1_{\frac{1}{2}}$ and $\mu^2 = \mu^2_{\frac{1}{2}}$.

With $\mu^1$ and $\mu^2$ fixed, we now show that there is no choice of $\mu^4$ that leads to an NE in every feasible subgame defined over $\graph_3$. First, note that there are effectively only two such subgames, $\model_3$ and $\model'_3$ (defined by setting $D^3 = d^3$ and $D^3 = \neg d^3$ respectively), both of which are clearly feasible. In $\model_3$ the only policy {$\mu^4$ that forms an NE with $\mu^1$ and $\mu^2$ is one that assigns all probability mass to the pure policy that plays $d^4$ if and only if $X=1$. However, in $\model'_3$, the only best response for agent 4 is to play $\neg d^4$ if and only if $X=1$}. As such, there is no policy profile that forms an NE in \emph{every} $s$-subgame of $\model$, and hence no SPE in $\model$.

    The reason for this result is that the use of a mixed policy by agent 1 means that variables $A$ and $B$ become correlated. This can be modelled graphically by the introduction of a shared parent $C$ of $A$ and $B$. Given this correlation, $\Pi_{D^3}$ is now $s$-relevant to $\Pi_{D^4}$ due to the new path:
    {$$\Pi_{D^3} \to D^3 \to X \gets A \gets C \to B \to U^4,$$} in the independent mechanised graph, which is active given $\Fa_{D^4} = \{X, D^4\}$.

\subsubsection*{No Equivalent Mixed and Behavioural Policies}

{Finally,} we provide an example to demonstrate why sufficient recall is not a sufficient condition for every mixed policy to have an equivalent behavioural policy that results in the same probability distribution over game outcomes. It therefore serves as {a proof for the second part of} Proposition \ref{prop:sr_pr}.

\begin{figure}[h]
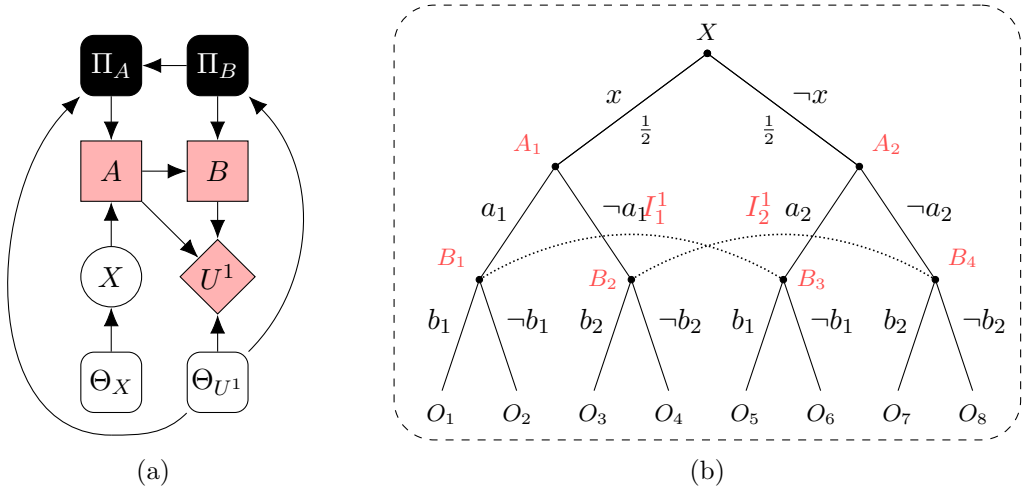

    \centering
    \begin{subfigure}[b]{0.3\linewidth}
        \vspace{0pt}
        \centering
        \begin{influence-diagram}
            \node (A1) [decision, player1] {$A$};
            \node (B1) [decision, right = of A1, player1] {$B$};
            \node (X) [below = of A1] {$X$};
            \node (U1) [utility, right = of X, player1] {$U^1$};
            \edge {A1, B1} {U1};
            \edge {X} {A1};
            \edge {A1} {B1};

            \node (U1_mec) [relevancew, below = of U1] {$\Theta_{U^1}$};
            \node (X_mec) [relevancew, below = of X] {$\Theta_{X}$};
            \node (A1_mec) [relevanceb, above = of A1] {$\Pi_{A}$};
            \node (B1_mec) [relevanceb, above = of B1] {$\Pi_{B}$};

            \edge {U1_mec} {U1};
            \edge {X_mec} {X};
            \edge {A1_mec} {A1};
            \edge {B1_mec} {B1};
            \edge {B1_mec} {A1_mec};
            
            \path (U1_mec) edge[->, bend right=45] (B1_mec);
            
            \node (space1) [minimum size=0mm, node distance=2mm, below = 0.7cm of X_mec, draw=none] {};
            \draw (U1_mec) edge[in=0,out=-135] (space1.center)
            (space1.center) edge[->,out=180,in=-135] (A1_mec);
        \end{influence-diagram}
        \caption{}
        \label{fig:sr_noeq:a}
    \end{subfigure}
    \begin{subfigure}[b]{0.6\linewidth}
        \vspace{0pt}
        \centering
        \begin{istgame}
        \xtdistance{15mm}{40mm}
        \istroot(0)<90>{$X$}
        \istb{x}[al]
        \istb{\neg x}[ar]

        \endist
        \istroot(0)
        \istb{\text{\footnotesize $\frac{1}{2}$}}[br]
        \istb{\text{\footnotesize $\frac{1}{2}$}}[bl]
        \endist
        \xtdistance{15mm}{20mm}
        \istroot(1)(0-1)<135, red!70>{$A_1$}
        \istb{a_1}[al]
        \istb{\neg a_1}[ar] 
        \endist
        \istroot(2)(0-2)<45, red!70>{$A_2$}
        \istb{a_2}[al]
        \istb{\neg a_2}[ar] 
        \endist
        \xtdistance{15mm}{10mm}
        \istroot(3)(1-1)<135, red!70>{$B_1$}
        \istb{b_1}[al]{O_1}
        \istb{\neg b_1}[ar]{O_2} 
        \endist
        \istroot(4)(1-2)<180, red!70>{$B_2$}
        \istb{b_2}[al]{O_3}
        \istb{\neg b_2}[ar]{O_4} 
        \endist
        \istroot(5)(2-1)<0,red!70>{$B_3$}
        \istb{b_1}[al]{O_5}
        \istb{\neg b_1}[ar]{O_6} 
        \endist
        \istroot(6)(2-2)<45,red!70>{$B_4$}
        \istb{b_2}[al]{O_7}
        \istb{\neg b_2}[ar]{O_8} 
        \endist
        \xtCInfoset(3)(5){\textcolor{red!70}{$I^1_1$}}[above right]
        \xtCInfoset(4)(6){\textcolor{red!70}{$I^1_2$}}[above left]
        
        \xtSubgameBox(0){(0)(1)(3-1)(3-2)(4-1)(4-2)(2)(5-1)(5-2)(6-1)(6-2)}[black,inner sep = 17pt, xshift=0pt, yshift=0pt]
        \end{istgame}
        \caption{}
        \label{fig:sr_noeq:b}
    \end{subfigure}
    \caption{(a) An $s$-minimal mechanised MAID representing the {equivalent behavioural and mixed policies non-existence example}. (b) An equivalent EFG representing the same game.
    }
    \label{fig:sr_noeq}
\end{figure}

Figure \ref{fig:sr_noeq:a} shows the mechanised MAID for this {example}, $\model$, in which the single agent has sufficient, but imperfect recall. We assume that all variables are binary, and further that $\Pr(x) = \frac{1}{2}$. 
To aid our reasoning, Figure \ref{fig:sr_noeq:b} shows the corresponding EFG for $\model$. The outcomes of the game (equivalent to a setting of all the variables in $\model$) are denoted by $\{O_1,\ldots,O_8\}$, e.g., $O_3$ represents $(x,\neg a, b, u^1_3)$.
We use the same notation as the previous {example} to denote a pure policy. For example, $\dot{{\bm{\pi}}}^1 = a_1a_2b_1\neg b_2$ is the pure policy where the agent selects $a$ whatever the value of $X$, and $b$ if $a$, and $\neg b$ if $\neg a$. Because there are two decision contexts for $A$ and $B$ respectively (corresponding to the four information sets in the EFG), there are $2^4$ pure policies. A mixed policy is then a distribution over these pure policies.

We will now show that there is no behavioural policy equivalent to the mixed policy $\mu^i = [\frac{1}{2}(a_1\neg a_2b_1\neg b_2), \frac{1}{2}(a_1a_2\neg b_1\neg b_2)]$. Let use begin by parameterising a general behavioural policy as: 
\begin{align*}
    &\pi_A(a \mid x) = p & &\pi_B(b \mid a) = r\\
    &\pi_A(a \mid \neg x) = q & &\pi_B(b \mid \neg a) = s
\end{align*}
where $p,q,r,s \in [0,1]$. Suppose, for a contradiction, that a behavioural policy equivalent to $\mu^i$ exists. This behavioural policy must then induce the same probability distribution over the outcomes of the game $\{O_1, \ldots O_8\}$ as $\mu^i$, and so the following equalities must hold:

\begin{center}
    \begin{minipage}{0.49\textwidth}
        \begin{itemize}
            \setlength\itemsep{1em}
            \item[$(O_1)$] $\frac{1}{2} \cdot p \cdot r = \frac{1}{4}$
            \item[$(O_2)$] $\frac{1}{2} \cdot p \cdot (1-r) = \frac{1}{4}$
            \item[$(O_3)$] $\frac{1}{2} \cdot (1-p) \cdot s = 0$
            \item[$(O_4)$] $\frac{1}{2} \cdot (1-p) \cdot (1-s) = 0$
        \end{itemize}
    \end{minipage}
    \begin{minipage}{0.49\textwidth}
        \begin{itemize}
            \setlength\itemsep{1em}
            \item[$(O_5)$] $\frac{1}{2} \cdot q \cdot r = 0$
            \item[$(O_5)$] $\frac{1}{2} \cdot q \cdot (1-r) = \frac{1}{4}$
            \item[$(O_7)$] $\frac{1}{2} \cdot (1-q) \cdot s = 0$
            \item[$(O_8)$] $\frac{1}{2} \cdot (1-q) \cdot (1-s) = \frac{1}{4}$
        \end{itemize}
    \end{minipage}\\
\end{center}

From $(O_5)$, we must have $q=0$ or $r=0$, but $q=0$ is ruled out by $(O_6)$. Taking $r=0$ implies that $q=\frac{1}{2}$ using $(O_6)$, and so $s=0$ using $(O_7)$. If $s=0$, then $p=1$ using $(O_4)$; however, this means that $r=\frac{1}{2}$ from $(O_1)$ or $(O_2)$ and yet we were forced to set $r=0$. This contradiction implies that there is no behavioural policy equivalent to $\mu^i$ in $\model$.

\subsection{{Reasoning about Existing Concepts using Causal Games}}
\label{app:previous_concepts}

\subsubsection*{{Blame}}

The following definition is adapted from \cite{Halpern2018}, and formalises the extent to which an agent is blameworthy for causing an event.

\begin{definition}[\citenum{Halpern2018}]
    \label{def:blame}
    Let $\model$ be an SCG, ${\bm{\pi}}$ a policy profile, $D \in \bm{D}^i$ a decision variable with $d, d' \in \dom(D)$, and $\varphi$ a Boolean combination of terms $V = v$. The \textbf{degree of blameworthiness} of $d$ for $\varphi$ relative to $d'$ is denoted $db_{S}(d, d', \varphi)$, and is defined as:
    $$db_{S}(d, d', \varphi)\coloneqq \delta_{d,d',\varphi} \cdot \frac{S - \max \big(c^i(d') - c^i(d), 0 \big)}{S},$$
    where:
    \begin{itemize}
        \item $\delta_{d,d',\varphi} \coloneqq \max\big(0, \Pr^{\bm{\pi}}(\varphi_d) - \Pr^{\bm{\pi}}(\varphi_{d'})\big)$ captures how much more likely it is that $\varphi$ will result from decision $d$ than from decision $d'$;
        \item $c^i(d) \coloneqq \sum_{U \in \bm{U}^i} \expect_{{\bm{\pi}}} [ U ] - \sum_{U \in \bm{U}^i} \expect_{{\bm{\pi}}} [ U_d ]$ captures the cost to agent $i$ of performing $d$;
        \item $S > \max_{d \in \dom(D)}c^i(d)$ is a given measure of cost-sensitivity.
    \end{itemize}
	The \textbf{overall degree of blameworthiness} of $d$ for $\varphi$ is $db_{S}(d, \varphi) \coloneqq \max_{d' \in \dom(D)} db_{S}(d, d', \varphi)$.
\end{definition}

The factor $\delta_{d,d',\varphi}$ captures the extent to which the agent's decision causes the likelihood of $\varphi$ to increase. The $\max$ operation ensures that no blame is possible if the likelihood \emph{decreases}. The second factor of $db_{S}(d, d', \varphi)$ captures the costs of actions -- the idea being that an agent is less liable to be blamed for $\varphi$ if taking an alternative action would have been particularly costly. The extent to which these costs are taken into account is determined by $S$, where note that $\lim_{S \rightarrow \infty} db_{S}(d, d', \varphi) = \delta_{d,d',\varphi}$.

In Example \ref{ex:warehouse}, for instance, one might wish to compute $db_{S}(q, \neg q, B = b)$  -- the degree of blame we should assign to robot one for moving quickly (as opposed to not moving quickly) with respect to breaking an item. Let us assume the policy profile in question is ${\bm{\pi}}^{\text{THPE}}$ as described in Section \ref{sec:equilibrium_refinements} (in which robot one moves quickly and robot two patrols if and only if it observes robot one moving quickly). Then, we have the following:
\begin{align*}
    db_{S}(q, \neg q, B = b)
    &= \delta_{q, \neg q, B = b} \cdot \frac{S - \max\big(c^1(\neg q) - c^1(q), 0\big)}{S}\\
    &= \max\big(0, \Pr^{\bm{\pi}}(b_q) - \Pr^{\bm{\pi}}(b_{\neg q})\big) \cdot \frac{S - \max\big(0, \expect_{\bm{\pi}} [U^1_{q}] - \expect_{\bm{\pi}} [U^1_{\neg q}]\big)}{S}\\
    &= \max(0, \frac{1}{3} - 0) \cdot \frac{S - \max(0, 2 - 2)}{S}\\
    &= \frac{1}{3}.
\end{align*}

Before continuing, we note that definitions {(and the definitions of intent below)} were originally formalised in SCMs with a single `action' variable $A$ and a single utility function $\mathcal{U} : \dom(\bm{V}) \rightarrow \mathbb{R}$, resulting in some small differences. Firstly, in an SCM, every variable must have a default value and actions are viewed as interventions on said variables, whereas, in SCGs, the values of the endogenous variables are typically undefined before actions are chosen. This is because SCMs model sequences of events and their causal connections in a broad sense, whereas SCGs represent a game-theoretic model to be analysed. These two framings can easily be reconciled if one posits a default value for every decision variable in an SCG, such as `do nothing'. Relatedly, the use of utility variables in SCGs strictly generalises $\mathcal{U}$, and can be used to describe a more fine-grained structure. Finally, previous work views SCMs as representing a subjective, epistemic state possessed by the intervening agent.\footnote{{The philosophical motivation here is that arguably it only makes sense to judge an agent's, e.g. intent, relative to that agent's own beliefs. In our work we make a common prior assumption and assume that all agents are aware of the structure of the game, though relaxations of this assumption ought to support more subjective definitions of intent.}} In contrast, SCGs represent a more objective view of a sequential strategic interaction that is assumed to be common knowledge among all agents in the game. This perspective may be more appropriate when, say, assessing blame and intention from the point of view of an arbiter, system designer, or other third party.

\subsubsection*{{Intent}} 
In the same work as above, the authors also formalise the question of whether or not an agent intends to bring about an event, which we adapt below for use with SCGs.

\begin{definition}[\citenum{Halpern2018}]
    \label{def:intent}
    Let $\model$ be an SCG, ${\bm{\pi}}$ a policy profile, $D \in \bm{D}^i$ a decision variable with $d \in \dom(D)$ and $\text{alt}(d) \subseteq \dom(D)$ be a set of alternative decisions under consideration.
    An agent \textbf{intends to bring about} $\bm{Y} = \bm{y}$ by performing $d$ if and only if:
    \begin{itemize}
        \item There exists some $\bm{Z} \supseteq \bm{Y}$ such that $\sum_{U \in \bm{U}^i} \expect_{{\bm{\pi}}} [ U_d ] \leq \max_{d' \in \text{alt}(d)} \sum_{U \in \bm{U}^i} \expect_{{\bm{\pi}}} [ U_{d', \bm{z}_d} ]$, and $\bm{Z}$ is minimal with respect to this inequality;
        \item $\Pr^{\bm{\pi}}(\bm{y}_d) > 0$, where recall that $\Pr^{\bm{\pi}}(\bm{y}_d)$ denotes $\Pr^{\bm{\pi}}(\bm{Y}_d = \bm{y})$;
        \item For all $\bm{y}' \in \dom(\bm{Y})$ such that $\Pr^{\bm{\pi}}(\bm{y}'_d) > 0$ we have $\sum_{U \in \bm{U}^i} \expect_{{\bm{\pi}}} [ U_{\bm{y}'} ] \leq \sum_{U \in \bm{U}^i} \expect_{{\bm{\pi}}} [ U_{\bm{y}} ]$.
    \end{itemize}
    Here $\bm{z}_d$ is used in a \emph{nested counterfactual} to denote the value that $\bm{Z}$ would take under the intervention $\Do(D = d)$ (evaluated with respect to some marginalised setting $\exovals$ of the exogenous variables).
\end{definition}
The notion of intent here addresses the problem of differentiating desirable, intended effects from undesirable, unintended effects. The first condition says that if the variables $\bm{Z}$ were set to the values that they would take under $d$, then another decision $d' \in \text{alt}(d)$ becomes at least as good, and further that $\bm{Z}$ is the minimal set of variables that the agent is intending to affect in this way. The second condition says that it is possible to bring about $\bm{y}$ by doing $d$ (where it is assumed that the agent knows this). Finally, the third condition says that $\bm{y}$ is an optimal value of $\bm{Y}$ -- among those possible under the intervention $\Do(D = d)$ -- for agent $i$.

For instance, we can see that robot two does not intend to obstruct robot one (meaning that it receives utility $U^1 = 0$) by patrolling. This is because the first property in Definition \ref{def:intent} fails to hold. Taking $\text{alt}(p) = \{\neg p\}$, then the minimal set of variables $\bm{Z}$ such that:
$$c^2(p) \leq \max_{p' \in \text{alt}(p)} \sum_{U \in \bm{U}^1} \expect_{{\bm{\pi}}} [ U_{p', \bm{z}_p} ],$$
is $\{U^2\} \not \ni U^1$. In other words, given that robot one is moving quickly, $\{U^2\}$ is the minimal set of outcomes that robot two is trying to affect by performing $p$. If the values of $U^2$ were restricted to those that occur when $D^2 = p$, then there would be no incentive for robot two to patrol.

\subsubsection*{{Incentives}} 

In what follows, structural causal influence models (SCIMs) refer to SCGs with only one {agent}.

\begin{definition}[\citenum{Everitt2021}]
    \label{def:RI}
    A policy ${\bm{\pi}}$ in a single-decision SCIM $\model$ \textbf{responds} to a variable $X$ if there exists $x \in \dom(X)$ and $\exovals \in \dom(\exovars)$ such that $\Pr^{\bm{\pi}}(D_x \mid \exovals) \neq \Pr^{\bm{\pi}}(D \mid \exovals)$. $X$ has a \textbf{response incentive} (RI) if all optimal policies ${\bm{\pi}}^*$ respond to $X$, and a graph $\graph$ admits a RI on $X$ if there is a SCIM over $\graph$ that has a RI on $X$.
\end{definition}

\begin{definition}[\citenum{Everitt2021}]
    \label{def:ICI}
    In a single-decision SCIM $\model$, there is an \textbf{instrumental control incentive} (ICI) on a variable $X$ in decision context $\pa_D$ {if for all optimal policies ${\bm{\pi}}^*$, there exists some $d \in \dom(D)$ such that:} 
    $$\sum_{U \in \bm{U}} \expect_{{\bm{\pi}}^*} [ U_{x_d} \mid \pa_D ] \neq \sum_{U \in \bm{U}} \expect_{{\bm{\pi}}^*} [ U \mid \pa_D ].$$
    A graph $\graph$ admits an ICI on $X$ if there is a SCIM over $\graph$ that has an ICI on $X$ for some $\pa_D$.
\end{definition}

{These definitions can help us to understand what algorithmic frameworks lead to agents having undesirable incentives, and thus to build safer AI systems. One such proposal for building safe AI systems} is cooperative inverse reinforcement learning (CIRL) \cite{HadfieldMenell2016}, which seeks to formalise the alignment problem \cite{Bostrom2014,Russell2019} as an \emph{assistance game} in which a human $H$ and a robot $R$ attempt to cooperatively perform a task in an unknown environment, but where the robot is uncertain about the human's true reward function (parameterised by the value of $P^H$) and so must infer it through observing the human's actions. This setup is formalised using a decentralised partially observable Markov decision process (Dec-POMDP) and is shown as a MAID in Figure \ref{fig:cirl} for a finite horizon game of three timesteps. At each timestep $t$, both the human and the robot perform an action $A^H_t$ and $A^R_t$ from their shared state $S_t$, after which they transition to a new state $S_{t+1}$ and receive a reward $R_{t+1}$, which is only observable by the human.

\begin{figure}[h]
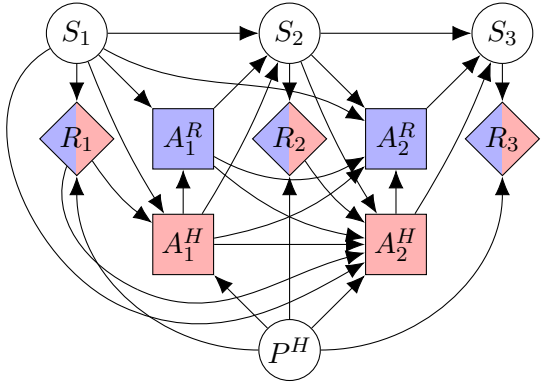

    \centering
    \begin{influence-diagram}
        \node (R1) [utility, vertical fill={red!30}{blue!30}] {$R_1$};
        \node (S1) [above = of R1] {$S_1$};
        \node (A1) [right = of R1, decision, player2] {$A^R_1$};
        \node (R2) [right = of A1, utility, vertical fill={red!30}{blue!30}] {$R_2$};
        \node (S2) [above = of R2] {$S_2$};
        \node (A2) [right = of R2, decision, player2] {$A^R_2$};
        \node (R3) [right = of A2, utility, vertical fill={red!30}{blue!30}] {$R_3$};
        \node (S3) [above = of R3] {$S_3$};
        
        \edge {S1} {R1};
        \edge {S2} {R2};
        \edge {S3} {R3};
        \edge {A1,S1} {S2};
        \edge {A2,S2} {S3};
        \edge[information] {S1} {A1};
        \edge[information] {S2} {A2};

        \draw[information,->] (S1) edge[out=330,in=150] (A2);
        
        \node (A1h) [below = of A1, decision, player1] {$A^H_1$};
        \node (A2h) [below = of A2, decision, player1] {$A^H_2$};
        
        \node (help) [below = of A1h, draw=none] {};
        
        \node (theta) at (R2 |- help) [] {$P^H$};
        
        \node (space) [minimum size=0mm, node distance=2mm, below left = 3em of A1h, draw=none] {};
        \draw (S1) edge[information,in=155,out=-150] (space.center)
        (space.center) edge[information,->,out=-25,in=-150] (A2h);

        \node (space2) [minimum size=0mm, node distance=2mm, below left = 2.2em of A1h, draw=none] {};
        \draw (R1) edge[information,in=155,out=-110] (space2.center)
        (space2.center) edge[information,->,out=-25,in=-165] (A2h);
          
        \path 
        (A1h) edge[->, bend right=5] (S2)
        (A1h) edge[->, information] (A1)
        (A1h) edge[->, information, bend right=15] (A2)        
        (A2h) edge[->, bend right=5] (S3)
        (A2h) edge[->, information] (A2)

        (R1) edge[->, information, bend right=10] (A1h)
        (R2) edge[->, information, bend right=10] (A2h)
        
        (A1) edge[->, information, bend right] (A2)
        
        (S1) edge[->, information, bend right=5] (A1h)
        (S2) edge[->, information, bend right=5] (A2h)
        (A1) edge[->, bend right=15, information] (A2h)
        (A1h) edge[->, information] (A2h)
        (theta) edge[->, information] (A1h)
        (theta) edge[->, information] (A2h)        
        
        (theta) edge[->, out=180, in=270] (R1)
        (theta) edge[->] (R2)
        (theta) edge[->, out=0, in=-90] (R3)
        ;

    \end{influence-diagram}
    \caption{A MAID representing an assistance game \cite{everitt2019modeling}, played by agents $H$ (a human) and $R$ (a robot); both want to maximise the same rewards, indicated by the shared utility variables.}
    \label{fig:cirl}
\end{figure}

If we assume that, in the example given by Figure \ref{fig:cirl}, actions $A^H_1$, $A^R_1$, and $A^H_2$ have already been taken (and thus that the MAID reduces to a single-decision ID with decision variable $A^R_2$), then we may apply the sound and graphical criteria derived in previous work to detect RIs and ICIs in CIRL \cite{Everitt2021}. A variable $X$ in a graph admits an RI (with respect to decision variable $D$) if and only if there is a directed path $X \pathto D$ in the graph that results when all information links to $D$ from variables $Y$ satisfying $Y \not\perp_\graph \bm{U} \cap \Desc_{D} \mid D, \Pa_{D}$ are removed.\footnote{The observations available at $D$ satisfying this criterion are known as \emph{non-requisite} observations \cite{Lauritzen2001}. Note the similarity to $s$-reachability, defined in Proposition \ref{prop:s-reachability}.} As $A^H_2$ does not satisfy this criterion, the path $P^H \rightarrow A^H_2 \rightarrow A^R_2$ exists in this new graph, and hence we see that the robot $R$ has an RI to act according to $P^H$, as we would hope. A variable $X$ admits an ICI (with respect to decision variable $D$) if and only if there exists a directed path $D \pathto X \pathto U$ for some utility variable $U$. Hence we see, due to the path $A^R_2 \rightarrow S_3 \rightarrow R_3$, that the robot has an ICI to influence $S_3$ as, again, we would expect.
\section{Codebase}
\label{app:code}

In this section, we briefly describe PyCID \cite{pycid}, our open source Python library that implements MAIDs (and their causal variants). PyCID has a range of classes, methods, and functions for handling IDs at level one and level two of the causal hierarchy. For our work in this paper, the \texttt{MAID} class is of primary interest; however, note that PyCID also has significant other functionality including functions for computing incentives in single-agent causal IDs \cite{Everitt2021} and reasoning patterns in MAIDs \cite{pfeffer2007reasoning}. This makes the codebase well-suited as a testbed for future research and applications. 

{We begin by showing} how to instantiate MAIDs for Examples \ref{ex:job_market} and \ref{ex:warehouse} in PyCID. {We then provide empirical results demonstrating how MAIDs can be used to compute NEs faster than in the equivalent EFGs. We refer the reader to an existing tool paper} \cite{pycid} and our online codebase for further details, including several tutorials.

\subsection{Creating MAIDs}

Listings \ref{lst:job_market} and \ref{lst:warehouse} show how to instantiate a MAID for Examples \ref{ex:job_market} and \ref{ex:warehouse} as instances of PyCID's \texttt{MAID} class, which inherits from pgmpy's \texttt{BayesianModel} class \citep{ankan2015pgmpy}. A \texttt{MAID} is initialised using a list of edges as its first argument, and then dictionaries to specify each agent's decision and utility variables. The method \texttt{draw} plots the graphs of these MAIDs (shown in Figure \ref{fig:pycidjob_market} and \ref{fig:pycidwarehouse}) with chance variables as grey circles, decision variables as rectangles, utility variables as diamonds, and colouring to denote different agents (each agent is assigned a unique colour). Recall that MAIDs are syntactically the same as CGs. Therefore, both can be defined as \texttt{MAID} objects using PyCID. In the case of MAIDs, a level one model, all class methods are permitted except those that involve causal interventions.

A MAID is parameterised by assigning domains to the decision variables and CPDs to every chance and utility variable. CPDs in PyCID are \texttt{StochasticFunctionCPD} objects, and there are multiple ways to define them. Listing \ref{lst:job_market} shows how to instantiate a MAID for Example \ref{ex:job_market}. The CPD for $T$ follows a Bernoulli distribution with success probability $\frac{1}{2}$; $D^1$ and $D^2$ are decision variables for the worker and firm's hiring system respectively with binary domains; and $U^1$ and $U^2$ are defined as described in Section \ref{sec:EFGs}.

{\centering
\begin{lstlisting}[xleftmargin=0.05\textwidth, language=Python, label={lst:job_market}, caption=An instantiation of Example \ref{ex:job_market} in PyCID.]
import pycid

job_market = pycid.MAID(
    [
        ("T", "D1"),
        ("T", "U1"),
        ("T", "U2"),
        ("D1", "D2"),
        ("D1", "U1"),
        ("D2", "U1"),
        ("D2", "U2"),
    ],
    agent_decisions={
        1: ["D1"],
        2: ["D2"],
    },
    agent_utilities={
        1: ["U1"],
        2: ["U2"],
    },
)

job_market.draw()

prob = 1/2

job_market.add_cpds(
    T = pycid.bernoulli(prob), # T = 1 corresponds to T = h (hard-working)
    D1 = [0,1], # D1 = 1 corresponds to D1 = g (going to university)
    D2 = [0,1], # D2 = 1 corresponds to D2 = j (offering a job)
    U1 = lambda d1, d2, t: 5*d2 - t*d1 - 2*d1*(1-t),
    U2 = lambda d2, t: 3*t*d2 - 2*(1-t)*d2 - (1-d2)*t,
)
\end{lstlisting}
}

Listing \ref{lst:warehouse} shows how to instantiate a MAID for Example \ref{ex:warehouse}. $D^1$ and $D^2$ are decision variables for the two robots with binary domains; $U^1$ and $U^2$ are defined as described in Section \ref{sec:subgames_eqs}; and $B$ is a chance variable defined as a function of its parent $D^1$ -- it takes value $\neg b$ with probability 1 if $D^1=\neg q$ and probability $\frac{2}{3}$ if $D^1=q$.

\begin{lstlisting}[xleftmargin=0.05\textwidth, language=Python, label={lst:warehouse}, caption=An instantiation of Example \ref{ex:warehouse} in PyCID.,]
warehouse_robots = pycid.MAID(
    [
        ("D1", "D2"),
        ("D1", "U1"),
        ("D1", "B"),
        ("B", "U2"),
        ("B", "U1"),
        ("D2", "U2"),
        ("D2", "U1"),
    ],
    agent_decisions={
        1: ["D1"],
        2: ["D2"],
    },
    agent_utilities={
        1: ["U1"],
        2: ["U2"],
    },
)

warehouse_robots.draw()
    
warehouse_robots.add_cpds(
    B = lambda d1 : {0: 1 if d1==0 else 2/3, 1: None}, # B = 1 corresponds to B = b (breaking something)
    D1 = [0,1], # D1 = 1 corresponds to D1 = q (moving quickly)
    D2 = [0,1], # D2 = 1 corresponds to D2 = p (patrolling)
    U1 = lambda d1, d2, b: (1 - 0.5*d2)*((1-d1)*2 + d1*5 - 3*b),
    U2 = lambda d2, b: 6*(1 - (1-d2)*b) - d2,
)
\end{lstlisting}

\begin{figure}[h]
    \centering
    \begin{subfigure}[b]{0.45\linewidth}
        \includegraphics[scale=.6]{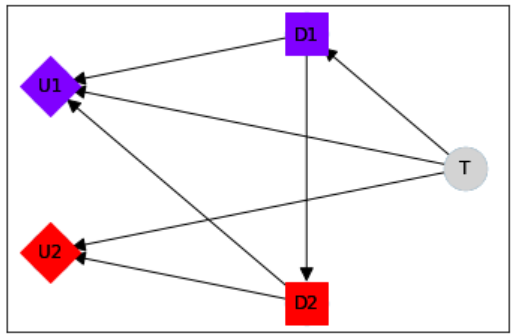}
        \centering
        \caption{}
        \label{fig:pycidjob_market}
    \end{subfigure}
    \begin{subfigure}[b]{0.45\linewidth}
        \includegraphics[scale=.6]{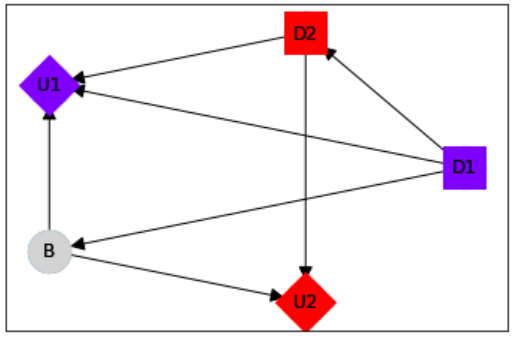}
        \centering
        \caption{}
        \label{fig:pycidwarehouse}
    \end{subfigure}
    \caption{MAIDs for (a) Example \ref{ex:job_market} and (b) Example \ref{ex:warehouse} drawn in PyCID.}
\end{figure}

\subsection{Computing Equilibria}
\label{app:subgamecompute}

PyCID finds all pure NEs and SPEs in a MAID natively, and finds all behavioural NEs and SPEs in two agent games by converting the MAID into a normal form game and interfacing with Nashpy.\footnote{Available at \href{https://github.com/drvinceknight/nashpy}{\texttt{https://github.com/drvinceknight/nashpy}}.} We refer the interested reader to our online codebase for up-to-date syntax showing how to compute NEs. All SPEs in a MAID are found by adapting Algorithm 6.2 from \cite{koller2003multi} to iterate backwards through a topological ordering of the MAID's $s$-subdiagrams, finding NEs in every $s$-subgame in turn. More specifically, the method follows the procedure outlined in the proof of Proposition \ref{prop:SPEexist}. In contrast to K\&M's algorithm, our implementation finds all (pure) SPEs, as opposed to just one.

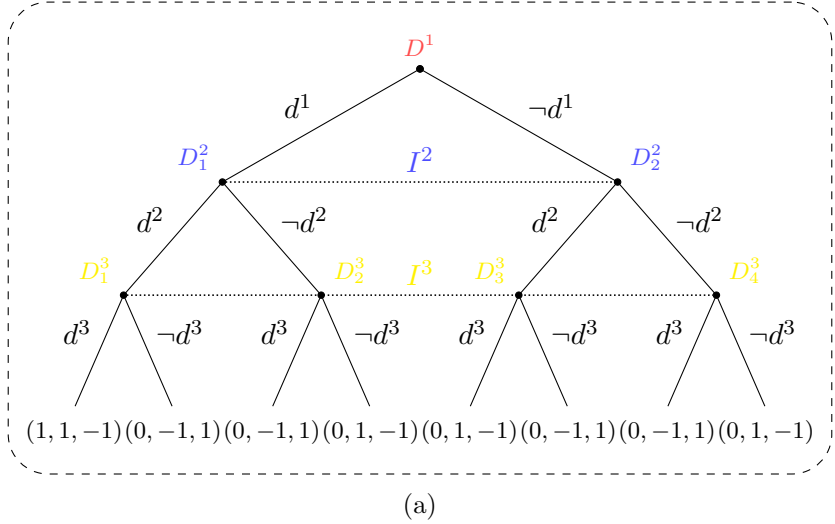
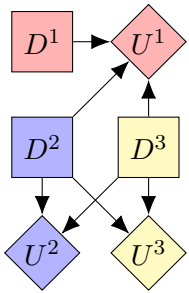
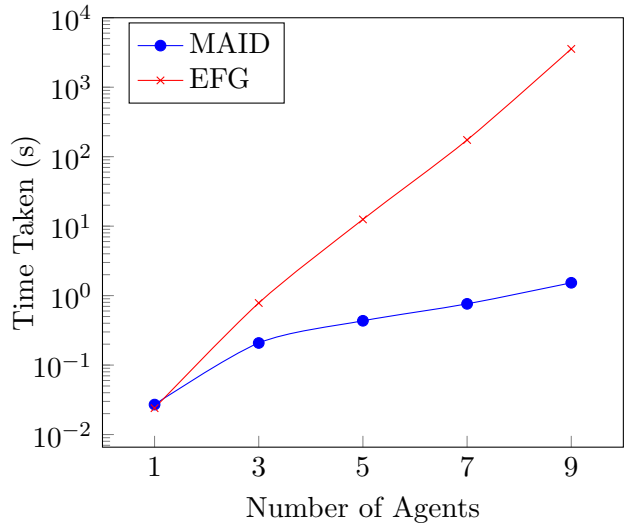
\begin{figure}[h]
    \centering
    \begin{subfigure}[b]{0.9\linewidth}
        \centering
        \begin{istgame}
            \xtdistance{15mm}{52mm}
            \istroot(0)<90, red!70>{$D^1$}
            \istb{d^1}[al]
            \istb{\neg d^1}[ar]

            \endist
            \istroot(0)
            \endist
            \xtdistance{15mm}{26mm}
            \istroot(1)(0-1)<135, blue!70>{$D^2_1$}
            \istb{d^2}[al]
            \istb{\neg d^2}[ar] 
            \endist
            \istroot(2)(0-2)<45, blue!70>{$D^2_2$}
            \istb{d^2}[al]
            \istb{\neg d^2}[ar] 
            \endist
            \xtdistance{15mm}{13mm}
            \istroot(3)(1-1)<135, yellow!100>{$D^3_1$}
            \istb{d^3}[al]{(1,1,-1)}
            \istb{\neg d^3}[ar]{(0,-1,1)} 
            \endist
            \istroot(4)(1-2)<45, yellow!100>{$D^3_2$}
            \istb{d^3}[al]{(0,-1,1)}
            \istb{\neg d^3}[ar]{(0,1,-1)} 
            \endist
            \istroot(5)(2-1)<135,yellow!100>{$D^3_3$}
            \istb{d^3}[al]{(0,1,-1)}
            \istb{\neg d^3}[ar]{(0,-1,1)} 
            \endist
            \istroot(6)(2-2)<45,yellow!100>{$D^3_4$}
            \istb{d^3}[al]{(0,-1,1)}
            \istb{\neg d^3}[ar]{(0,1,-1)} 
            \endist
            \xtInfoset(1)(2){\textcolor{blue!70}{$I^2$}}[above]
            \xtInfoset(3)(6){\textcolor{yellow!100}{$I^3$}}[above]
            
            \xtSubgameBox(0){(0)(1)(3-1)(3-2)(4-1)(4-2)(2)(5-1)(5-2)(6-1)(6-2)}[black,inner sep = 24pt, xshift=0pt, yshift=0pt]
        \end{istgame}
        \caption{}
        \label{fig:NEcompute:EFG}
    \end{subfigure}

    \vspace{0.2cm}

    \begin{subfigure}[b]{0.3\linewidth}
        \centering
        \begin{influence-diagram}
            \node (D2) [decision, player2] {$D^2$};
            \node (D3) [decision, right = of D2, player3] {$D^3$};
            \node (D1) [decision, above = of D2, player1] {$D^1$};
            \node (U1) [utility, above = of D3, player1] {$U^1$};
            \node (U2) [utility, below = of D2, player2] {$U^2$};
            \node (U3) [utility, below = of D3, player3] {$U^3$};
            \edge {D1} {U1};
            \edge {D2} {U1,U2,U3};
            \edge {D3} {U1,U2,U3};
        \end{influence-diagram}
        \vspace{1.1cm}
        \caption{}
        \label{fig:NEcompute:MAID}
    \end{subfigure}
    \begin{subfigure}[b]{0.6\linewidth}
        \begin{tikzpicture}
            \begin{axis}[
                ymode=log,
                ylabel style={at={(axis description cs:0.08,.5)},anchor=south},
                xlabel=Number of Agents,
                ylabel=Time Taken (s),
                xmin=0, xmax=10,
                ymin=0, ymax=10000,
                ytick pos=left,
                xtick pos=bottom,
                xtick={1,3,5,7,9,11},
                xticklabels={1,3,5,7,9,11},
                legend style={at={(0.2,0.8)},anchor=south,legend cell align=left}
                ]
            \addplot[smooth,mark=*,blue] plot coordinates {
                (1,0.027)
                (3,0.208)
                (5,0.434)
                (7,0.761)
                (9,1.53)
            };
            \addlegendentry{MAID}
            
            \addplot[smooth,color=red,mark=x]
                plot coordinates {
                    (1,0.024)
                    (3,0.783)
                    (5,12.5)
                    (7,174)
                    (9,3561)
                };
            \addlegendentry{EFG}
            \end{axis}
        \end{tikzpicture}
        \caption{}
        \label{fig:NEcompute:graph}
    \end{subfigure}

    \caption{{The (a) MAID and (b) EFG for the 3-agent version of the matching-pennies-like game described above. (d) A plot of the time taken to find an NE in the MAID and EFG representations of this game for varying numbers of agents.}}
    \label{fig:NEcompute}
\end{figure}

{Finally, we demonstrate the computational usefulness of subgames (and SPEs) in MAIDs. Consider a class of games with agents simultaneously choosing whether to place a coin heads or tails face up. The first agent gets utility 1 if and only if all agents in the game choose heads, otherwise their utility equals zero. All other agents are added to the game in pairs, and receive utility according a standard game of matching pennies with their partner: 1 (or -1) utility for matching (heads or tails), and -1 (or 1) for mismatching, respectively. Matching pennies has no pure NEs, only a mixed NE in which both agents randomise equally between choosing heads or tails. A MAID for the 3-agent variant of this game is shown in Figure \ref{fig:NEcompute:MAID}.}

{By the above construction, in any game belonging to this class with more than one agent, there must be no pure NE, as one agent in each pair would always have an incentive to deviate. Moreover, the EFG representation of the game (shown for the 3-agent case in Figure \ref{fig:NEcompute:EFG}) has no proper subgames.\footnote{{Technically there are six possible corresponding EFGs because of the six permutations of $D^1$, $D^2$, and $D^3$, but none of them have any proper subgames.}} Since the complexity of finding even an approximate mixed NE is PPAD-complete \cite{daskalakis2009complexity}, finding an NE in this game is hard for any EFG solver, and soon becomes intractable as the number of agent pairs playing matching pennies increases.}

{In the MAID representation, however, every matching-pennies-playing pair of agents is identified as a proper $s$-subgame. This means that the unique behavioural SPE (agents randomising equally between choosing heads or tails) can be found in these subgames (with fewer agents) before returning to the full game where, finally, the best response of agent one will be to always play heads. Figure \ref{fig:NEcompute:graph} compares the time taken to find an NE in the MAID and EFG representations of games in this game class with between one and nine agents; for the nine-agent game, the difference is three orders of magnitude.}\footnote{{These calculations were performed using PyCID \cite{pycid} and Gambit \cite{mckelvey2006gambit} on an NVIDIA Tesla K80 GPU and the time taken is the mean of seven runs.}} 

{There are games in which neither a MAID nor a corresponding EFG has any proper subgames. In these games, the time taken to compute an NE will be comparable. Nevertheless, many games will have more subgames in the MAID than the EFG (as explained in Section \ref{sec:equivalences}) and every subgame in an EFG is guaranteed to be a subgame in the corresponding MAID (as proven in Corollary \ref{prop:SPE}). Thus, subgames may often allow for the more efficient computation of equilibria in MAIDs (and hence causal games) compared to EFGs.}

\end{document}